\definecolor{darkblue}{rgb}{0.0,0.0,0.55}
\newtheorem{conjecture}{Conjecture}
\newtheorem{theorem}{Theorem}
\title{The break-even point on optimization trajectories of deep neural networks}
\begin{document}

\author{%
  Stanisław Jastrzębski$^{1}$ , Maciej Szymczak$^{2}$, Stanislav Fort$^{3}$, Devansh Arpit$^{4}$, Jacek Tabor$^{2}$, \\ \textbf{Kyunghyun Cho}$^{1,5,6}$\thanks{Equal contribution.}\,\,, \textbf{Krzysztof Geras}$^{1}$\footnotemark[1] \\
$^1$New York University, USA \\
$^2$Jagiellonian University, Poland \\
$^3$Stanford University, USA \\
$^4$Salesforce Research, USA \\
$^5$Facebook AI Research, USA \\
$^6$CIFAR Azrieli Global Scholar\\
}

\maketitle

\begin{abstract}

The early phase of training of deep neural networks is critical for their final performance. In this work, we study how the hyperparameters of stochastic gradient descent (SGD) used in the early phase of training affect the rest of the optimization trajectory. We argue for the existence of the ``break-even" point on this trajectory, beyond which the curvature of the loss surface and noise in the gradient are implicitly regularized by SGD. In particular, we demonstrate on multiple classification tasks that using a large learning rate in the initial phase of training reduces the variance of the gradient, and improves the conditioning of the covariance of gradients. These effects are beneficial from the optimization perspective and become visible after the break-even point. Complementing prior work, we also show that using a low learning rate results in bad conditioning of the loss surface even for a neural network with batch normalization layers. In short, our work shows that key properties of the loss surface are strongly influenced by SGD in the early phase of training. We argue that studying the impact of the identified effects on generalization is a promising future direction.

\end{abstract}

\section{Introduction}

The connection between optimization and generalization of deep neural networks (DNNs) is not fully understood. For instance, using a large initial learning rate often improves generalization, which can come at the expense of the initial training loss reduction~\citep{Goodfellow2016,Li2019,jiang2020fantastic}. In contrast, using batch normalization layers typically improves both generalization and convergence speed of deep neural networks~\citep{luo2018towards,Bjorck2018}. These simple examples illustrate limitations of our understanding of DNNs.

Understanding the early phase of training has recently emerged as a promising avenue for studying the link between optimization and generalization of DNNs. It has been observed that applying regularization in the early phase of training is necessary to arrive at a well generalizing final solution~\citep{keskar2016large,sagun2017,soatto2017}. Another observed phenomenon is that the local shape of the loss surface changes rapidly in the beginning of training~\citep{lecun2012,keskar2016large,soatto2017,jastrzebski2018,fort2019emergent}. Theoretical approaches to understanding deep networks also increasingly focus on the early part of the optimization trajectory~\citep{Li2019,arora2018a}.  

In this work, we study the dependence of the entire optimization trajectory on the early phase of training. We investigate noise in the mini-batch gradients using the covariance of gradients,\footnote{We define it as $\mathbf{K} = \frac{1}{N} \sum_{i=1}^N (g_i - g)^T (g_i - g)$, where $g_i=g(\mathbf{x_i},y_i; \mathbf{\theta})$ is the gradient of the training loss $\mathcal{L}$ with respect to $\mathbf{\theta}$ on $x_i$, $N$ is the number of training examples, and $g$ is the full-batch gradient.} and the local curvature of the loss surface using the Hessian. These two matrices capture important and complementary aspects of optimization~\citep{roux2008, ghorbani2019} and generalization performance of DNNs~\citep{jiang2020fantastic,keskar2016large,Bjorck2018,fort2019stiffness}. We include a more detailed discussion in Sec.~\ref{sec:rel}. 

\begin{figure}[h!]
\centering
\begin{minipage}[b]{0.49\linewidth}
\centering
\includegraphics[height=5.8cm]{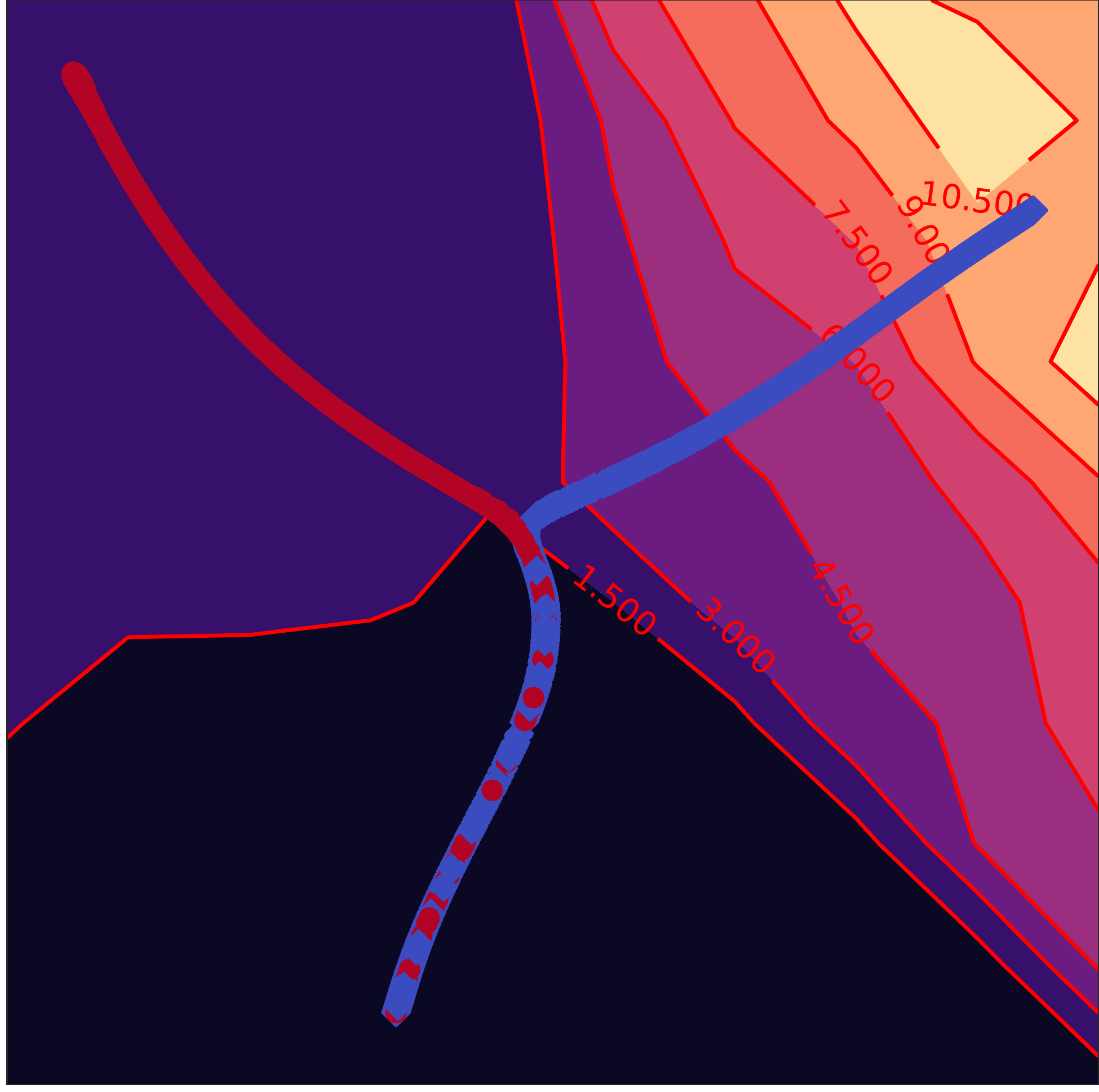} 
\vspace{0.1cm}
\includegraphics[width=5cm]{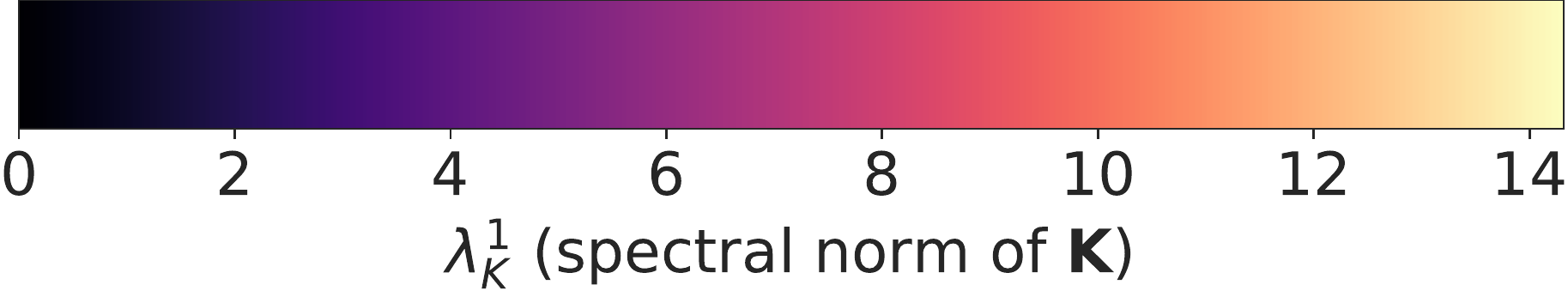}
\end{minipage} 
\begin{minipage}[b]{0.49\linewidth}
\centering
\includegraphics[height=5.8cm]{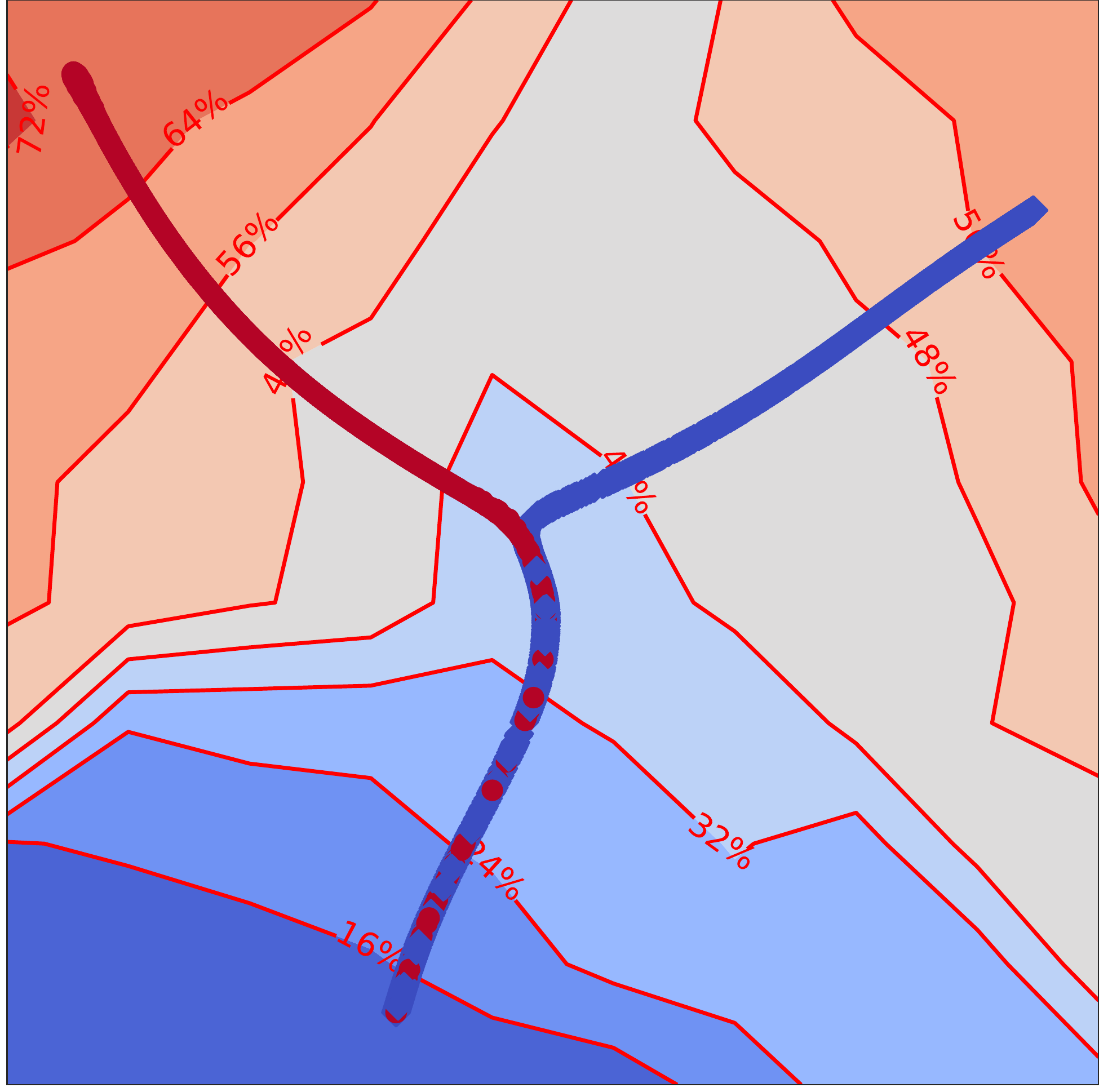} 
\vspace{0.1cm}
\hspace{0.1cm}
\includegraphics[width=5cm]{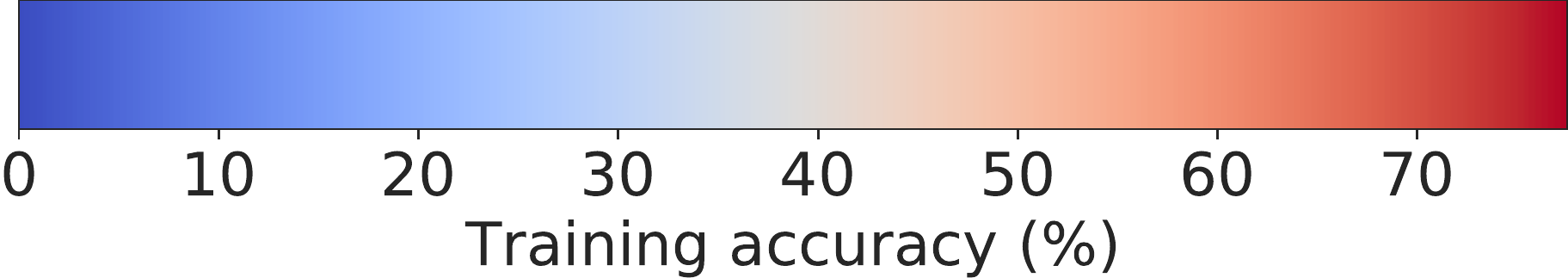}
\vspace{0.05cm} 
\end{minipage}
        \caption{Visualization of the early part of the training trajectories on CIFAR-10 (before reaching $65\%$ training accuracy) of a simple CNN model optimized using SGD with learning rates $\eta=0.01$ (red) and $\eta=0.001$ (blue). Each model on the training trajectory, shown as a point, is represented by its test predictions embedded into a two-dimensional space using UMAP. The background color indicates the spectral norm of the covariance of gradients $\mathbf{K}$ ($\lambda_K^1$, left) and the training accuracy (right). For lower $\eta$, after reaching what we call the break-even point, the trajectory is steered towards a region characterized by larger $\lambda_K^1$ (left) for the same training accuracy (right). See Sec.~\ref{sec:earlyphase} for details. We also include an analogous figure for other quantities that we study in App.~\ref{app:fig_1}. }
    \label{fig:eyecandy}
\end{figure}

Our first contribution is a simplified model of the early part of the training trajectory of DNNs. Based on prior empirical work~\citep{sagun2017}, we assume that the local curvature of the loss surface (the spectral norm of the Hessian) increases or decreases monotonically along the optimization trajectory. Under this model, gradient descent reaches a point in the early phase of training at which it oscillates along the most curved direction of the loss surface. We call this point the \emph{break-even} point and show empirical evidence of its existence in the training of actual DNNs.

Our main contribution is to state and present empirical evidence for two conjectures about the dependence of the entire optimization trajectory on the early phase of training. Specifically, we conjecture that the hyperparameters of stochastic gradient descent (SGD) used before reaching the break-even point control: (1) the spectral norms of $\mathbf{K}$ and $\mathbf{H}$, and (2) the conditioning of $\mathbf{K}$ and $\mathbf{H}$. In particular, using a larger learning rate prior to reaching the break-even point reduces the spectral norm of $\mathbf{K}$ along the optimization trajectory (see Fig.~\ref{fig:eyecandy} for an illustration of this phenomenon). Reducing the spectral norm of $\mathbf{K}$ decreases the variance of the mini-batch gradient, which has been linked to improved convergence speed~\citep{johnson2013}.

Finally, we apply our analysis to a network with batch normalization (BN) layers and find that our predictions are valid in this case as well. Delving deeper in this line of investigation, we show that using a large learning rate is necessary to reach better-conditioned (relatively to a network without BN layers) regions of the loss surface, which was previously attributed to BN alone~\citep{Bjorck2018,ghorbani2019,Page2019}.

\section{Related work}
\label{sec:rel}

\paragraph{Implicit regularization induced by the optimization method.} 

The choice of the optimization method implicitly affects generalization performance of deep neural networks~\citep{neyshabur2017}. In particular, using a large initial learning rate is known to improve generalization~\citep{Goodfellow2016,Li2019}. A classical approach to study these questions is to bound the generalization error using measures such as the norm of the parameters at the final minimum~\citep{bartlett2017,jiang2020fantastic}.

An emerging approach is to study the properties of the whole optimization trajectory. \citet{arora2018a} suggest it is necessary to study the optimization trajectory to understand optimization and generalization of deep networks. In a related work, \citet{Erhan2010,soatto2017} show the existence of a critical period of learning. \citet{Erhan2010} argue that training, unless pre-training is used, is sensitive to shuffling of examples in the first epochs of training. \citet{soatto2017,aditya2019,sagun2017,keskar2016large} demonstrate that adding regularization in the beginning of training affects the final generalization disproportionately more compared to doing so later. We continue research in this direction and study how the choice of hyperparameters in SGD in the early phase of training affects the optimization trajectory in terms of the covariance of gradients, and the Hessian.

\paragraph{The covariance of gradients and the Hessian.} 

 The Hessian quantifies the local curvature of the loss surface. Recent work has shown that the largest eigenvalues of $\mathbf{H}$ can grow quickly in the early phase of training~\citep{keskar2016large,sagun2017,fort2019goldilocks,jastrzebski2018}. \citet{keskar2016large,jastrzebski2017} studied the dependence of the Hessian (at the final minimum) on the optimization hyperparameters. The Hessian can be decomposed into two terms, where the dominant term (at least at the end of training) is the uncentered covariance of gradients $\mathbf{G}$~\citep{sagun2017,papayan2019}.

The covariance of gradients, which we denote by $\mathbf{K}$, encapsulates the geometry and the magnitude of variation in gradients across different samples. The matrix $\mathbf{K}$ was related to the generalization error in \citet{roux2008,jiang2020fantastic}. Closely related quantities, such as the cosine alignment between gradients computed on different examples, were recently shown to explain some aspects of deep networks generalization~\citep{fort2019stiffness,Liu2020Understanding,he2020the}. \citet{Zhang2019} argues that in DNNs the Hessian and the covariance of gradients are close in terms of the largest eigenvalues. 

\paragraph{Learning dynamics of deep neural networks.}  

Our theoretical model is motivated by recent work on learning dynamics of neural networks~\citep{goodfellow2014,masters2018,Wu2018,yao2018,xing2018,jastrzebski2018,Yosinski2019}. We are directly inspired by \citet{xing2018} who show that for popular classification benchmarks, the cosine of the angle between consecutive optimization steps in SGD is negative. Similar observations can be found in \citet{Yosinski2019}. Our theoretical analysis is inspired by \citet{Wu2018} who study how SGD selects the final minimum from a stability perspective. We apply their methodology to the early phase of training, and make predictions about the entire training trajectory.

\section{The break-even point and the two conjectures about SGD trajectory}
\label{sec:theory}

Our overall motivation is to better understand the connection between optimization and generalization of DNNs. In this section we study how the covariance of gradients ($\mathbf{K}$) and the Hessian ($\mathbf{H}$) depend on the early phase of training. We are inspired by recent empirical observations showing their importance for optimization and generalization of DNNs (see Sec.~\ref{sec:rel} for a detailed discussion). 

Recent work has shown that in the early phase of training the gradient norm~\citep{Goodfellow2016,fort2019emergent,Liu2020Understanding} and the local curvature of the loss surface~\citep{jastrzebski2018,fort2019emergent} can rapidly increase. Informally speaking, one scenario we study here is when this initial growth is rapid enough to destabilize training. Inspired by \citet{Wu2018}, we formalize this intuition using concepts from dynamical stability. Based on the developed analysis, we  state two conjectures about the dependence of $\mathbf{K}$ and $\mathbf{H}$ on hyperparameters of SGD, which we investigate empirically in Sec.~\ref{sec:exp}.

\paragraph{Definitions.} 

We begin by introducing the notation. Let us denote the loss on an example $(\mathbf{x},y)$ by $\mathcal{L}(\mathbf{x}, y; \mathbf{\theta})$, where $\mathbf{\theta}$ is a $D$-dimensional parameter vector. The two key objects we study are the Hessian of the training loss ($\mathbf{H}$), and the covariance of gradients $\mathbf{K} = \frac{1}{N} \sum_{i=1}^N (g_i - g)^T (g_i - g)$, where $g_i=g(\mathbf{x_i},y_i; \mathbf{\theta})$ is the gradient of $\mathcal{L}$ with respect to $\mathbf{\theta}$ calculated on $i$-th example, $N$ is the number of training examples, and $g$ is the full-batch gradient. We denote the $i$-th normalized eigenvector and eigenvalue of a matrix $\mathbf{A}$ by $e_A^i$ and $\lambda_A^i$. Both $\mathbf{H}$ and $\mathbf{K}$ are computed at a given $\mathbf{\theta}$, but we omit this dependence in the notation. Let $t$ index steps of optimization, and let $\theta(t)$ denote the parameter vector at optimization step $t$.

Inspired by \citet{Wu2018} we introduce the following condition to quantify \emph{stability} at a given $\theta(t)$. Let us denote the projection of parameters $\theta$ onto $e_H^1$ by $\psi = \langle \theta, e_H^1 \rangle$. With a slight abuse of notation let $g(\psi) = \langle g(\theta), e_H^1 \rangle$. We say SGD is \emph{unstable along} $e_H^1$ at $\theta(t)$ if the norm of elements of sequence $\psi(\tau+1) = \psi(\tau) - \eta g(\psi(\tau))$ diverges when $\tau \rightarrow \infty$, where $\psi(0)=\theta(t)$. The sequence $\psi(\tau)$ represents optimization trajectory in which every step $t'>t$ is projected onto $e_H^1$.

\paragraph{Assumptions.} 

Based on recent empirical studies, we make the following assumptions. 

\begin{enumerate}

\item The loss surface projected onto $e_H^1$ is a quadratic one-dimensional function of the form $f(\psi) = \sum_{i=1}^N (\psi - \psi^*)^2 H_i$. The same assumption was made in \citet{Wu2018}, but for all directions in the weight space. \citet{alain2019} show empirically that the loss averaged over all training examples is well approximated by a quadratic function along $e_H^1$.

\item The eigenvectors $e_H^1$ and $e_K^1$ are co-linear, i.e. $e_H^1=\pm e_K^1$, and  $\lambda_K^1=\alpha \lambda_H^1$ for some $\alpha \in \mathbb{R}$. This is inspired by the fact that the top eigenvalues of $\mathbf{H}$ can be well approximated using $\mathbf{G}$ (non-centered $\mathbf{K}$) \citep{papayan2019,sagun2017}. \citet{Zhang2019} shows empirical evidence for co-linearity of the largest eigenvalues of $\mathbf{K}$ and $\mathbf{H}$.

\item If optimization is not stable along $e_H^1$ at a given $\theta(t)$, $\lambda_H^1$ decreases in the next step, and the distance to the minimum along $e_H^1$ increases in the next step. This is inspired by recent work showing training can escape a region with too large curvature compared to the learning rate~\citep{zhu2018,Wu2018,jastrzebski2018}. 

\item The spectral norm of $\mathbf{H}$, $\lambda_H^1$, increases during training and the distance to the minimum along $e_H^1$ decreases, unless increasing $\lambda_H^1$ would lead to entering a region where training is not stable along $e_H^1$. This is inspired by~\citep{keskar2016large,Goodfellow2016,sagun2017,jastrzebski2018,fort2019goldilocks,fort2019emergent} who show that in many settings $\lambda_H^1$ or gradient norm increases in the beginning of training, while at the same time the overall training loss decreases.

\end{enumerate}

Finally, we also assume that $S \gg N$, i.e. that the batch size is small compared to the number of training examples. These assumptions are only used to build a theoretical model for the early phase of training. Its main purpose is to make predictions about the training procedure that we test empirically in Sec.~\ref{sec:exp}.

\paragraph{Reaching the break-even point earlier for a larger learning rate or a smaller batch size.} 

Let us restrict ourselves to the case when training is initialized at $\mathbf{\theta}(0)$ at which SGD is stable along $e_H^1(0)$.\footnote{We include a similar argument for the opposite case in App.~\ref{app:proofs}.} We aim to show that the learning rate ($\eta$) and the batch size ($S$) determine $\mathbf{H}$ and $\mathbf{K}$ in our model, and conjecture that the same holds empirically for realistic neural networks.

Consider two optimization trajectories for $\eta_1$ and $\eta_2$, where $\eta_1 > \eta_2$, that are initialized at the same $\mathbf{\theta}_0$, where optimization is stable along $e_H^1(t)$ and $\lambda_H^1(t) > 0$. Under Assumption 1 the loss surface along $e_H^1(t)$ can be expressed as $f(\psi) = \sum_{i=1}^N (\psi - \psi^*)^2 H_i(t)$, where $H_i(t) \in \mathbb{R}$. It can be shown that at any iteration $t$ the necessary and sufficient condition for SGD to be stable along $e_H^1(t)$ is:
\begin{equation}
\label{eq:stab}
(1 - \eta \lambda_H^1(t))^2 + s(t)^2 \frac{\eta^2 (N - S)}{S(N - 1)} \leq 1, 
\end{equation}
where $N$ is the training set size and $s(t)^2=\mathrm{Var}[H_i(t)]$ over the training examples. A proof can be found in \citep{Wu2018}. We call this point on the trajectory on which the LHS of Eq.~\ref{eq:stab} becomes equal to $1$ for the first time the \emph{break-even point}. By definition, there exists only a single break-even point on the training trajectory.

Under Assumption 3, $\lambda_H^1(t)$ and $\lambda_K^1(t)$ increase over time. If $S=N$, the break-even point is reached at $\lambda_H^1(t) = \frac{2}{\eta}$. More generally, it can be shown that for $\eta_1$, the break-even point is reached for a lower magnitude of $\lambda_H^1(t)$ than for $\eta_2$. The same reasoning can be repeated for $S$ (in which case we assume $N \gg S$). We state this formally and prove in App.~\ref{app:proofs}. 

Under Assumption 4, after passing the break-even point on the training trajectory, SGD does not enter regions where either $\lambda_H^1$ or $\lambda_K^1$ is larger than at the break-even point, as otherwise it would lead to increasing one of the terms in LHS of Eq.~\ref{eq:stab}, and hence losing stability along $e_H^1$. 

\paragraph{Two conjectures about real DNNs.}

Assuming that real DNNs reach the break-even point, we make the following two conjectures about their optimization trajectory. 

The most direct implication of reaching the break-even point is that $\lambda_K^1$ and $\lambda_H^1$ at the break-even point depend on $\eta$ and $S$, which we formalize as: 
\begin{conjecture}[Variance reduction effect of SGD]
\label{prop:conj1}
Along the SGD trajectory, the maximum attained values of $\lambda_H^1$ and $\lambda_K^1$ are smaller for a larger learning rate or a smaller batch size. 
\end{conjecture}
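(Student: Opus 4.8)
The plan is to prove the statement inside the theoretical model, where it is an essentially exact consequence of the stability analysis, and then to flag where the model idealizes reality. The key observation is that, under Assumptions 3 and 4, $\lambda_H^1(t)$ increases monotonically up to the break-even point and never exceeds its break-even value afterward; hence the maximum attained value of $\lambda_H^1$ along the trajectory is exactly its value at the break-even point. It therefore suffices to (i) solve for $\lambda_H^1$ at the break-even point as a function of $\eta$ and $S$, (ii) establish the claimed monotonicity of that value, and (iii) transfer the conclusion to $\lambda_K^1$ via Assumption 2.

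For step (i), I would impose equality in Eq.~\ref{eq:stab}, which is the defining property of the break-even point. Writing $c = s(t)^2 \frac{N-S}{S(N-1)}$ and $\lambda = \lambda_H^1$, the condition $(1-\eta\lambda)^2 + c\eta^2 = 1$ reduces to the quadratic $\eta\lambda^2 - 2\lambda + c\eta = 0$, whose roots are $\lambda_{\pm} = (1 \pm \sqrt{1 - c\eta^2})/\eta$. Since $\lambda_H^1$ grows from inside the stable interval $[\lambda_-, \lambda_+]$ and the break-even point is the first upward crossing of the value $1$, the relevant quantity is the upper root $\lambda^{\star} = \lambda_+ = (1 + \sqrt{1 - c\eta^2})/\eta$. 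As a sanity check, setting $S=N$ gives $c=0$ and $\lambda^{\star} = 2/\eta$, recovering the value stated in the text.

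For step (ii), I would show $\lambda^{\star}$ is strictly decreasing in $\eta$ and strictly smaller for smaller $S$. Differentiating $\lambda^{\star}(\eta) = 1/\eta + \sqrt{1 - c\eta^2}/\eta$ in $\eta$ yields $-1/\eta^2$ from the first term and $-1/(\eta^2\sqrt{1-c\eta^2})$ from the second, so both contributions are negative and a larger learning rate gives a smaller maximum. For the batch size, $c$ depends on $S$ only through $(N-S)/S$, which is decreasing in $S$; hence $c$ grows as $S$ shrinks, and since $\lambda^{\star}$ is decreasing in $c$ (the square root shrinks), a smaller batch size also gives a smaller maximum. Step (iii) is then immediate from Assumption 2: with $\lambda_K^1 = \alpha\lambda_H^1$ and $\alpha>0$, the maximum of $\lambda_K^1$ inherits exactly the monotonicity of the maximum of $\lambda_H^1$.

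The main obstacle is not the algebra but the two points where the model abstracts away from reality. First, $s(t)^2 = \mathrm{Var}[H_i(t)]$ and the proportionality constant $\alpha$ are themselves trajectory-dependent, and the two learning rates reach break-even at different iterations; to convert the monotonicity of $\lambda^{\star}$ into a genuine comparison one must argue that $s^2$ and $\alpha$ are (approximately) intrinsic properties of the loss landscape, or at least that their variation is dominated by the explicit $\eta$- and $S$-dependence above. Second, and more fundamentally, the statement is a conjecture about real DNNs, for which Assumptions 1--4 hold only approximately; the honest status of the claim is therefore that it is exact within the model and must be corroborated empirically, which is precisely the role of Sec.~\ref{sec:exp}.
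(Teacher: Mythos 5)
Your derivation is correct within the model and reaches the same conclusion as the paper's formal counterpart of this conjecture (Theorem~\ref{th:breakeven1} in App.~\ref{app:proofs}), but by a genuinely different route. You hold the noise coefficient $c = s(t)^2\frac{N-S}{S(N-1)}$ fixed and solve the stability boundary of Eq.~\ref{eq:stab} as a quadratic in $\lambda_H^1$, identifying the break-even value with the upper root $(1+\sqrt{1-c\eta^2})/\eta$ and checking monotonicity in $\eta$ and $c$ directly; this gives a clean closed form and an explicit sanity check at $S=N$. The paper instead uses Assumptions 1--2 to substitute $s(t)^2 = \lambda_K^1(t)/\psi(t)^2 = \alpha\lambda_H^1(t)/\psi(t)^2$ \emph{before} solving, which makes the boundary equation linear in $\lambda_H^1$ (the quadratic factors, with $\lambda=0$ as the spurious root) and yields $\lambda_H^1(t^*) = \bigl(2 - \tfrac{\alpha}{\psi(t^*)}\tfrac{(n-S)}{(n-1)}\tfrac{\eta}{S}\bigr)/\eta$; the residual trajectory dependence then sits entirely in $\psi(t^*)$, which the paper controls with Assumption 4's monotonicity of $\psi$ via a short contradiction argument. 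The obstacle you flag at the end --- that $s^2$ and $\alpha$ are trajectory-dependent and the two runs reach break-even at different iterations --- is precisely what the paper's substitution-plus-Assumption-4 machinery is designed to absorb, so you could close your own gap by making that substitution rather than positing that $c$ is intrinsic. Two further points: the paper's argument covers only the case where training starts stable (the unstable start is Theorem~\ref{th} there), matching your restriction; and you are right that the statement itself is a conjecture about real DNNs, so both your argument and the paper's establish only the model-internal version, with Sec.~\ref{sec:exp} carrying the empirical burden.
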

We refer to Conjecture \ref{prop:conj1} as \emph{variance reduction effect of SGD} because reducing $\lambda_K^1$ can be shown to reduce the $L_2$ distance between the full-batch gradient, and the mini-batch gradient. We expect that similar effects exist for other optimization or regularization methods. We leave investigating them for future work.

Next, we make another, stronger, conjecture. It is plausible to assume that reaching the break-even point affects to a lesser degree $\lambda_H^i$ and $\lambda_K^i$ for $i\neq1$ because increasing their values does not impact stability along $e_H^1$. Based on this we conjecture that:

\begin{conjecture}[Pre-conditioning effect of SGD]
\label{prop:conj2}
Along the SGD trajectory, the maximum attained values of $\frac{{\lambda_K^*}}{\lambda_{K}^1}$ and $\frac{{\lambda_{H}^*}}{\lambda_{H}^1}$ are larger for a larger learning rate or a smaller batch size, where ${\lambda_K}^*$ and ${\lambda_H}^*$ are the smallest non-zero eigenvalues of $\mathbf{K}$ and $\mathbf{H}$, respectively. Furthermore, the maximum attained values of $\mathrm{Tr}(\mathbf{K})$ and $\mathrm{Tr}(\mathbf{H})$ are smaller for a larger learning rate or a smaller batch size. 
\end{conjecture}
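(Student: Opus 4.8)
The plan is to derive both claims within the theoretical model of Assumptions~1--4, treating them as consequences of the fact that the stability condition in Eq.~\ref{eq:stab} binds only the leading eigendirection. First I would reuse the break-even analysis already established for Conjecture~\ref{prop:conj1}: since the left-hand side of Eq.~\ref{eq:stab} is monotonically increasing in $\lambda_H^1(t)$, and $\lambda_H^1(t)$ grows under Assumption~4 until stability is lost, the maximum attained $\lambda_H^1$ equals the largest value for which Eq.~\ref{eq:stab} holds. A direct monotonicity argument in $\eta$ (and, with $N \gg S$, in $S$) shows this cap is smaller for larger $\eta$ or smaller $S$, and by the co-linearity of Assumption~2, $\lambda_K^1 = \alpha \lambda_H^1$ inherits the same cap. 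This controls the denominators in the ratios of Conjecture~\ref{prop:conj2}.

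For the conditioning ratios, the key observation is that Eq.~\ref{eq:stab} and the escape dynamics of Assumption~3 are stated only along $e_H^1$; increasing any $\lambda_H^i$ or $\lambda_K^i$ with $i \neq 1$ leaves the left-hand side of Eq.~\ref{eq:stab} unchanged and therefore does not trigger the break-even stopping. I would formalize this by positing that each eigendirection follows its own one-dimensional growth under Assumption~4, so that only the top direction is regularized at the break-even point while the smallest non-zero eigenvalues $\lambda_K^*$ and $\lambda_H^*$ keep increasing (or are, to leading order, independent of $\eta$). Since the numerator $\lambda^*$ is then insensitive to $\eta$ while the denominator $\lambda^1$ is capped lower for larger $\eta$, the maximum attained $\frac{\lambda_K^*}{\lambda_K^1}$ and $\frac{\lambda_H^*}{\lambda_H^1}$ increase with $\eta$ and decrease with $S$; Assumption~2 again transfers the statement between $\mathbf{H}$ and $\mathbf{K}$.

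For the trace claim I would decompose $\mathrm{Tr}(\mathbf{K}) = \lambda_K^1 + \sum_{i \geq 2} \lambda_K^i$ and argue that the spectrum is dominated by its leading part. Using the outer-product structure of the dominant term $\mathbf{G}$ (the non-centered $\mathbf{K}$) noted in Sec.~\ref{sec:rel}, the trace is controlled up to a slowly varying factor by the largest eigenvalue, which is capped lower for larger $\eta$ or smaller $S$ by the argument above. Hence the maximum attained $\mathrm{Tr}(\mathbf{K})$, and by Assumption~2 also $\mathrm{Tr}(\mathbf{H})$, is smaller in this regime.

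The hard part will be justifying the claim that subdominant eigenvalues are ``less affected'' by the break-even point, since the model of Assumptions~1--4 only constrains the single direction $e_H^1$ through a strictly one-dimensional projection. A rigorous proof would require extending the dynamical-stability analysis to the full Hessian spectrum and specifying how the eigenvalue dynamics in different directions are coupled; without such an extension one cannot rule out that growth in the subdominant directions overturns the trace inequality or that $\lambda^*$ itself depends on $\eta$. This gap is exactly why the statement is phrased as a conjecture rather than a theorem, and I would expect the empirical study in Sec.~\ref{sec:exp} to stand in for the missing multi-dimensional argument.
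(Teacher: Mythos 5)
The paper never proves this statement: it is offered strictly as a conjecture, justified only by the one-sentence heuristic that increasing $\lambda_H^i$ and $\lambda_K^i$ for $i \neq 1$ does not affect stability along $e_H^1$ (so only the top eigendirection is capped at the break-even point), and then supported empirically in Sec.~\ref{sec:exp}; the appendix theorems cover only the break-even values of $\lambda_H^1$ and $\lambda_K^1$. Your proposal reproduces exactly that heuristic and correctly diagnoses why it cannot be upgraded to a theorem within Assumptions 1--4, so you match the paper's approach; the one addition is your trace argument, which the paper does not attempt and which sits in mild tension with your ratio argument (the ratio claim needs the subdominant spectrum to be insensitive to $\eta$, while the trace claim needs the top eigenvalue to dominate the trace), a tension you already partly acknowledge.
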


We consider non-zero eigenvalues in the conjecture, because $\mathbf{K}$ has at most $N - 1$ non-zero eigenvalues, where $N$ is the number of training points, which can be much smaller than $D$ in overparametrized DNNs. Both conjectures are valid only for learning rates and batch sizes that guarantee that training converges. 

From the optimization perspective, the effects discussed above are desirable. Many papers in the optimization literature underline the importance of reducing the variance of the mini-batch gradient~\citep{johnson2013} and the conditioning of the covariance of gradients~\citep{roux2008}. There also exists a connection between these effects and generalization~\citep{jiang2020fantastic}, which we discuss towards the end of the paper.

\section{Experiments}
\label{sec:exp}

In this section we first analyse learning dynamics in the early phase of training. Next, we empirically investigate the two conjectures. In the final part we extend our analysis to a neural network with batch normalization layers. 

We run experiments on the following datasets: CIFAR-10~\citep{cifar}, IMDB dataset~\citep{Maas2011}, ImageNet~\citep{Deng2009}, and MNLI~\citep{Williams2017}. We apply to these datasets the following architectures: a vanilla CNN (SimpleCNN) following Keras example~\citep{chollet2015}, ResNet-32~\citep{He2016}, LSTM~\citep{Hochreiter1997}, DenseNet~\citep{Huang2016}, and BERT~\citep{devlin2018}. We also include experiments using a multi-layer perceptron trained on the FashionMNIST dataset~\citep{Xiao2017} in the Appendix. All experimental details are described in App.~\ref{app:sec:expdetails}.

Following \citet{Dauphin2014,alain2019}, we estimate the top eigenvalues and eigenvectors of $\mathbf{H}$ on a small subset of the training set (e.g. $5\%$ in the case of CIFAR-10) using the Lanczos algorithm~\citep{Lanczos1950}. As computing the full eigenspace of $\mathbf{K}$ is infeasible for real DNNs, we compute the covariance using mini-batch gradients. In App.~\ref{app:approx} we show empirically that (after normalization) this approximates well the largest eigenvalue, and we include other details on computing the eigenspaces.
 
\subsection{A closer look at the early phase of training}
\label{sec:earlyphase}

\begin{figure}
    \centering
    \includegraphics[height=0.26\textwidth]{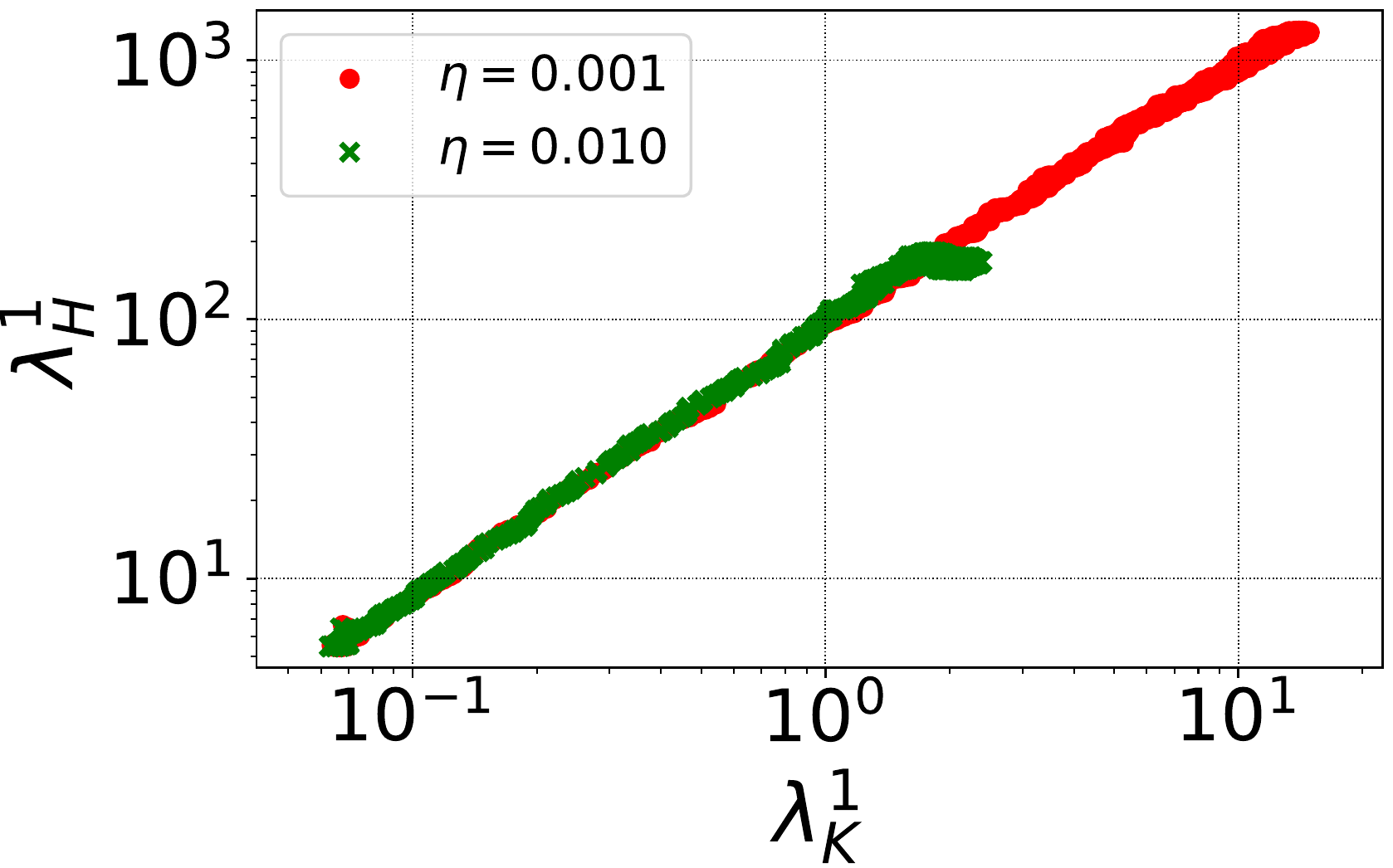}
    \includegraphics[height=0.26\textwidth]{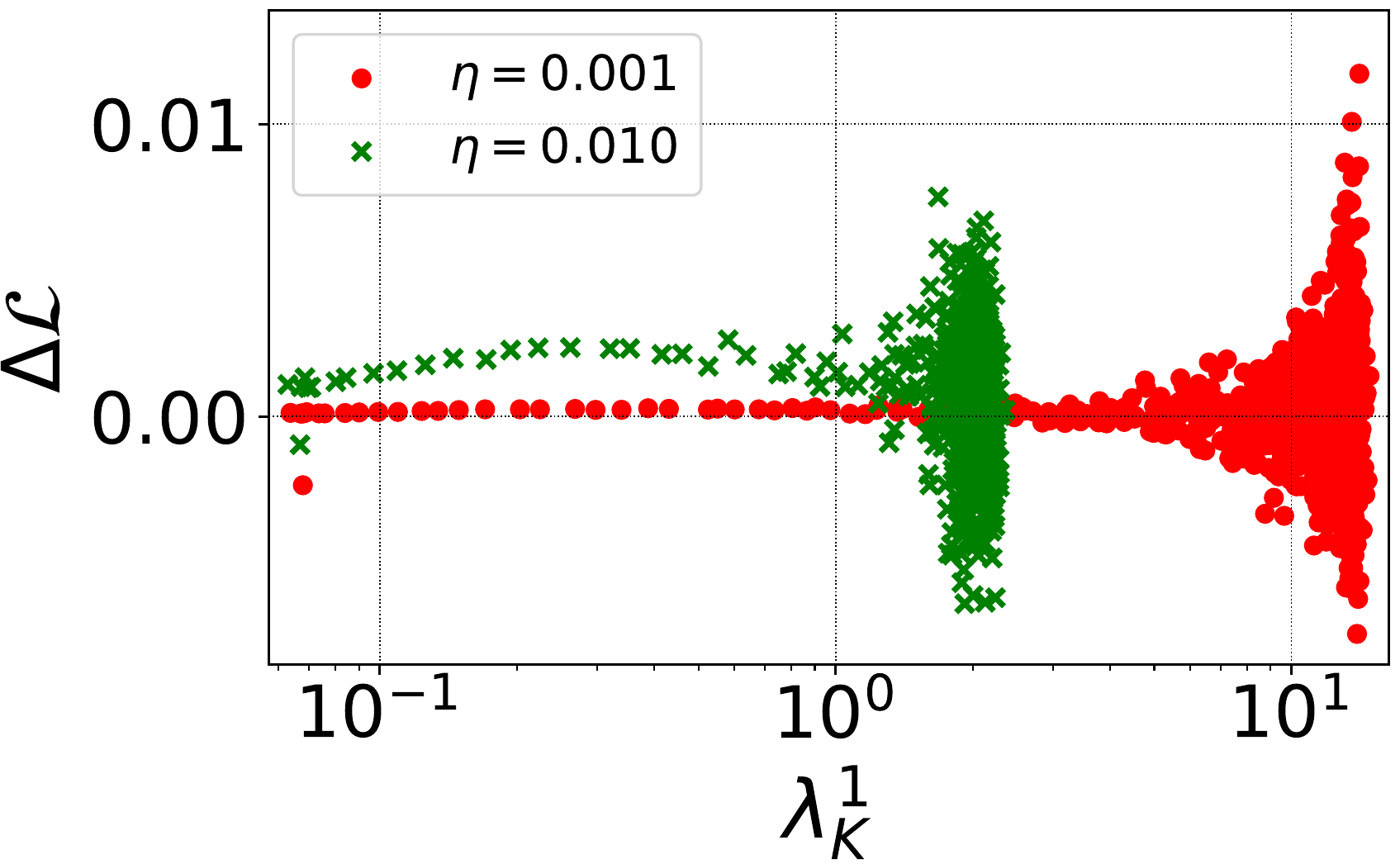}
        \caption{The spectral norm of $\mathbf{H}$ ($\lambda_H^1$, left) and $\Delta \mathcal{L}$ (difference in the training loss computed between two consecutive steps, right) versus $\lambda_K^1$ at different training iterations. Experiment was performed with SimpleCNN on the CIFAR-10 dataset with two different learning rates (color). Consistently with our theoretical model, $\lambda_K^1$ is correlated initially with $\lambda_H^1$, and training is generally stable ($\Delta \mathcal{L} > 0$) prior to achieving the maximum value of $\lambda_K^1$. }
    \label{fig:stabphasescnnc10}
\end{figure}
\begin{figure}
    \centering
    \includegraphics[height=0.16\textwidth]{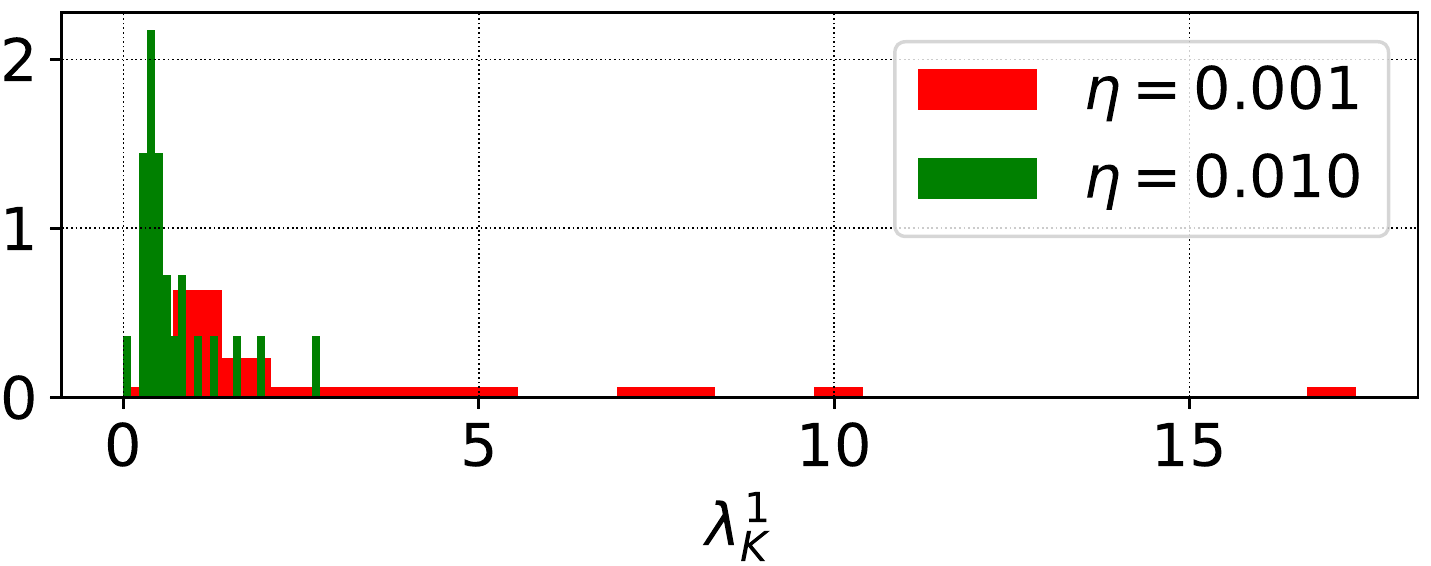}\quad
    \includegraphics[height=0.16\textwidth]{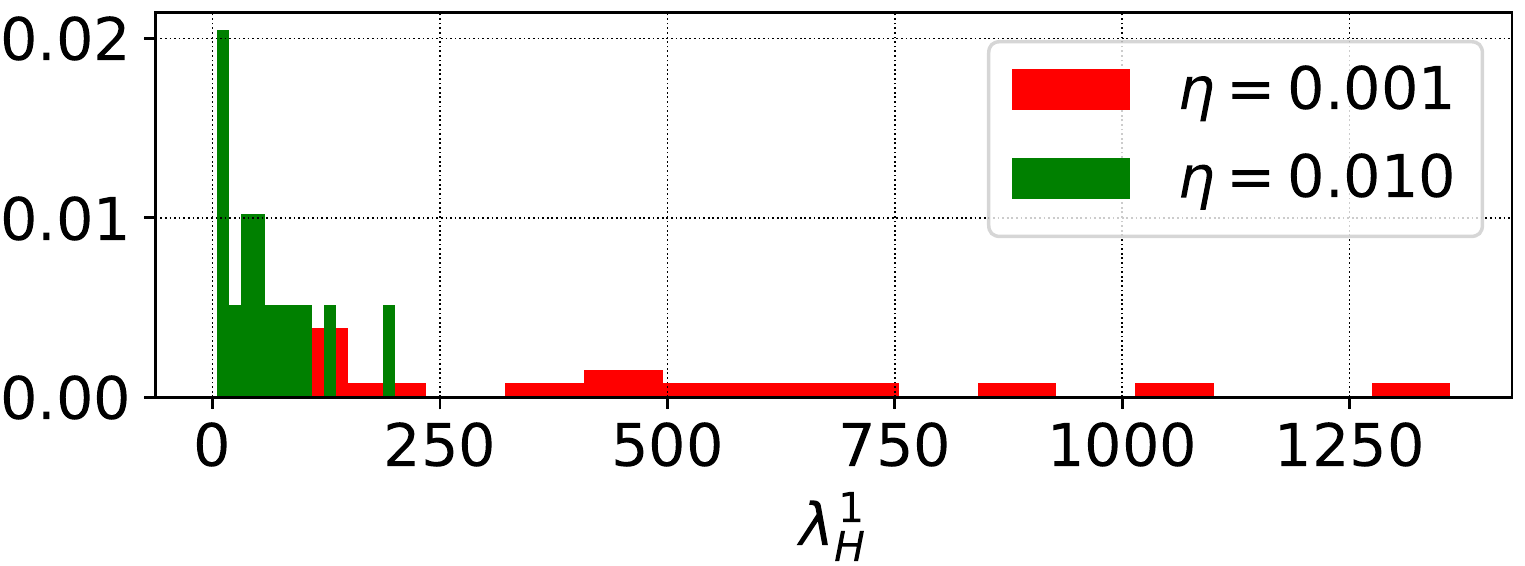}
        \caption{The spectrum of $\mathbf{K}$ (left) and $\mathbf{H}$ (right) at the training iteration corresponding to the largest value of $\lambda_K^1$ and $\lambda_H^1$, respectively. Experiment was performed with SimpleCNN on the CIFAR-10 dataset with two different learning rates (color). Consistently with Conjecture~\ref{prop:conj2}, training with lower learning rate results in finding a region of the loss surface characterized by worse conditioning of $\mathbf{K}$ and $\mathbf{H}$ (visible in terms of the large number of ``spikes'' in the spectrum, see also Fig.~\ref{fig:validationexperiments}).}
    \label{fig:spectrumscnn10}
\end{figure}

First, we examine the learning dynamics in the early phase of training. Our goal is to verify some of the assumptions made in Sec.~\ref{sec:theory}. We analyse the evolution of $\lambda_H^1$ and $\lambda_K^1$ when using $\eta=0.01$ and $\eta=0.001$ to train SimpleCNN on the CIFAR-10 dataset. We repeat this experiment $5$ times using different random initializations of network parameters. 

\paragraph{Visualizing the break-even point.} 

We visualize the early part of the optimization trajectory in Fig.~\ref{fig:eyecandy}. Following \citet{Erhan2010}, we embed the test set predictions at each step of training of SimpleCNN using UMAP~\citep{mcinnes2018}. The background color indicates $\lambda_K^1$ (left) and the training accuracy (right) at the iteration with the closest embedding in Euclidean distance. 

We observe that the trajectory corresponding to the lower learning rate reaches regions of the loss surface characterized by larger $\lambda_K^1$, compared to regions reached at the same training accuracy in the second trajectory. Additionally, in Fig.~\ref{fig:spectrumscnn10} we plot the spectrum of $\mathbf{K}$ (left) and $\mathbf{H}$ (right) at the iterations when $\lambda_K$ and $\lambda_H$ respectively reach the highest values. We observe more outliers for the lower learning rate in the distributions of both $\lambda_K$ and $\lambda_H$.

\paragraph{Are $\lambda_{K}^1$ and $\lambda_{H}^1$ correlated in the beginning of training?} 

The key assumption behind our theoretical model is that $\lambda_K^1$ and  $\lambda_H^1$ are correlated, at least prior to reaching the break-even point. We confirm this in Fig.~\ref{fig:stabphasescnnc10}. The highest achieved $\lambda_K^1$ and $\lambda_H^1$ are larger for the smaller $\eta$. Additionally, we observe that after achieving the highest value of $\lambda_H^1$, further growth of $\lambda_K^1$ does not translate to an increase of $\lambda_H^1$. This is expected as $\lambda_H^1$ decays to $0$ when the mean loss decays to $0$ for cross entropy loss~\citep{martens2016second}.

\paragraph{Does training become increasingly unstable in the early phase of training?} 

According to Assumption 3, an increase of $\lambda_{K}^1$ and $\lambda_{H}^1$ translates into a decrease in stability, which we formalized as stability along $e_H^1$. Computing stability along $e_H^1$ directly is computationally expensive. Instead, we measure a more tractable proxy. At each iteration we measure the loss on the training set before and after taking the step, which we denote as $\Delta \mathcal{L}$ (a positive value indicates a reduction of the training loss). In Fig.~\ref{fig:stabphasescnnc10} we observe that training becomes increasingly unstable ($\Delta \mathcal{L}$ starts to take negative values) as $\lambda_K^1$ reaches the maximum value.

\paragraph{Summary.} 

We have shown that the early phase of training is consistent with the assumptions made in our theoretical model. That is, $\lambda_K^1$ and $\lambda_H^1$ increase approximately proportionally to each other, which is also generally correlated with a decrease of a proxy of stability. Finally, we have shown qualitatively reaching the break-even point.

\subsection{The variance reduction and the pre-conditioning effect of SGD}
\label{sec:expvalidate}

\begin{figure}
    \begin{subfigure}[SimpleCNN trained on the CIFAR-10 dataset.]{
       \begin{tikzpicture}
     \node[inner sep=0pt] (g1) at (0,0)
    {  \includegraphics[height=0.14\textwidth]{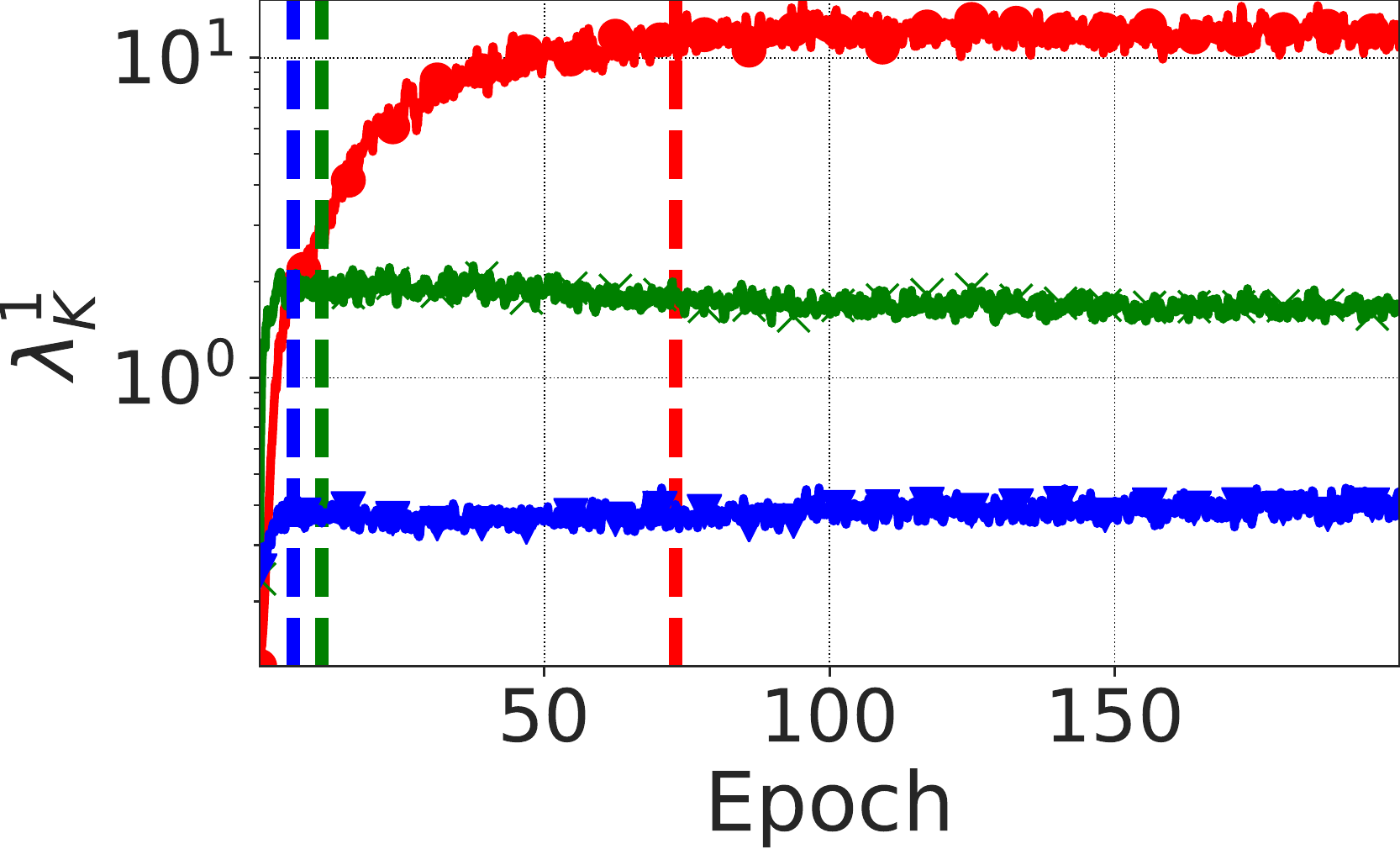}};
     \node[inner sep=0pt] (g2) at (3.4,0)
    {\includegraphics[height=0.14\textwidth]{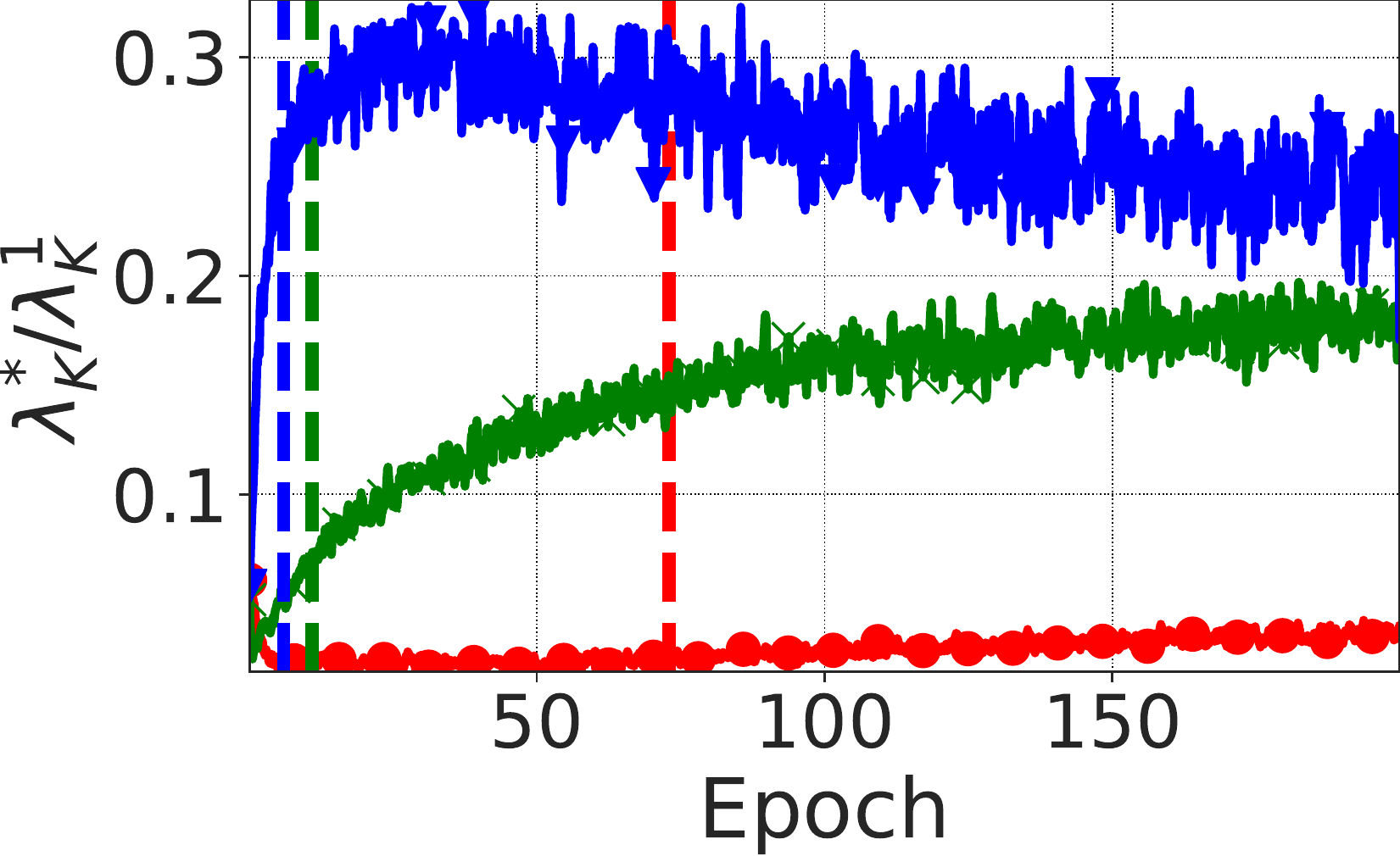}};
     \node[inner sep=0pt] (g3) at (2,-1.2)
   { \includegraphics[height=0.035\textwidth]{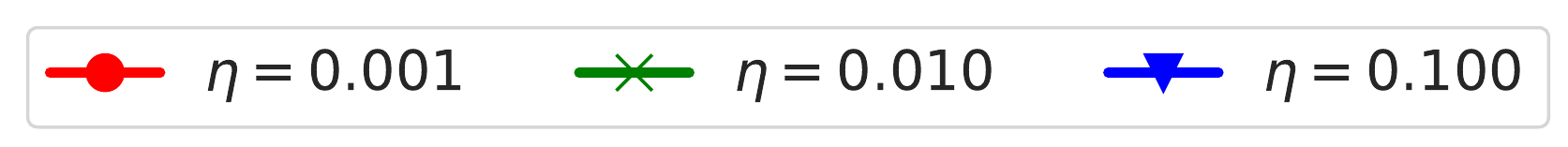}};
    \end{tikzpicture}
       \begin{tikzpicture}
     \node[inner sep=0pt] (g1) at (0,0)
    {\includegraphics[height=0.14\textwidth]{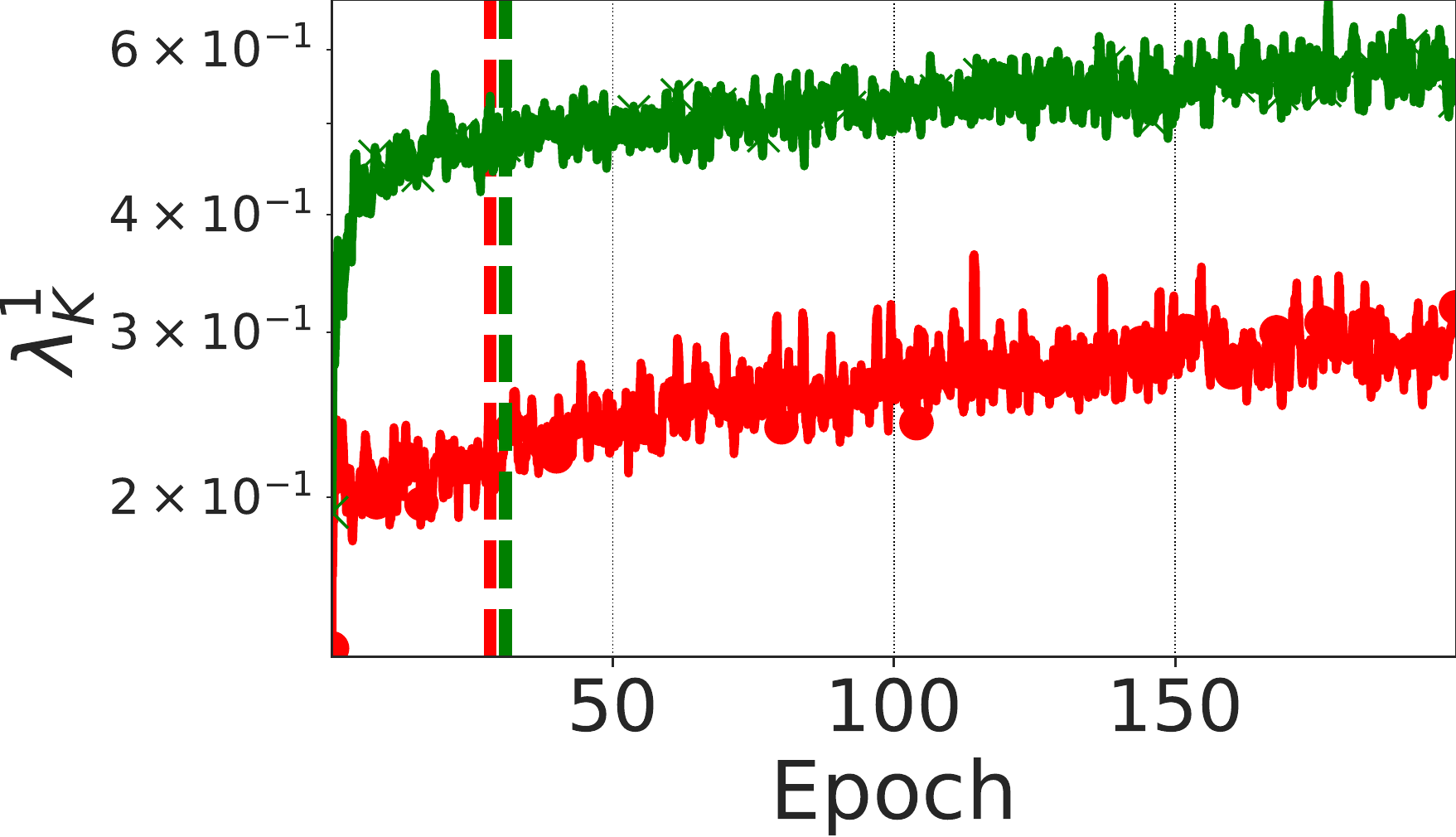}};
     \node[inner sep=0pt] (g2) at (3.55,0)
    {\includegraphics[height=0.14\textwidth]{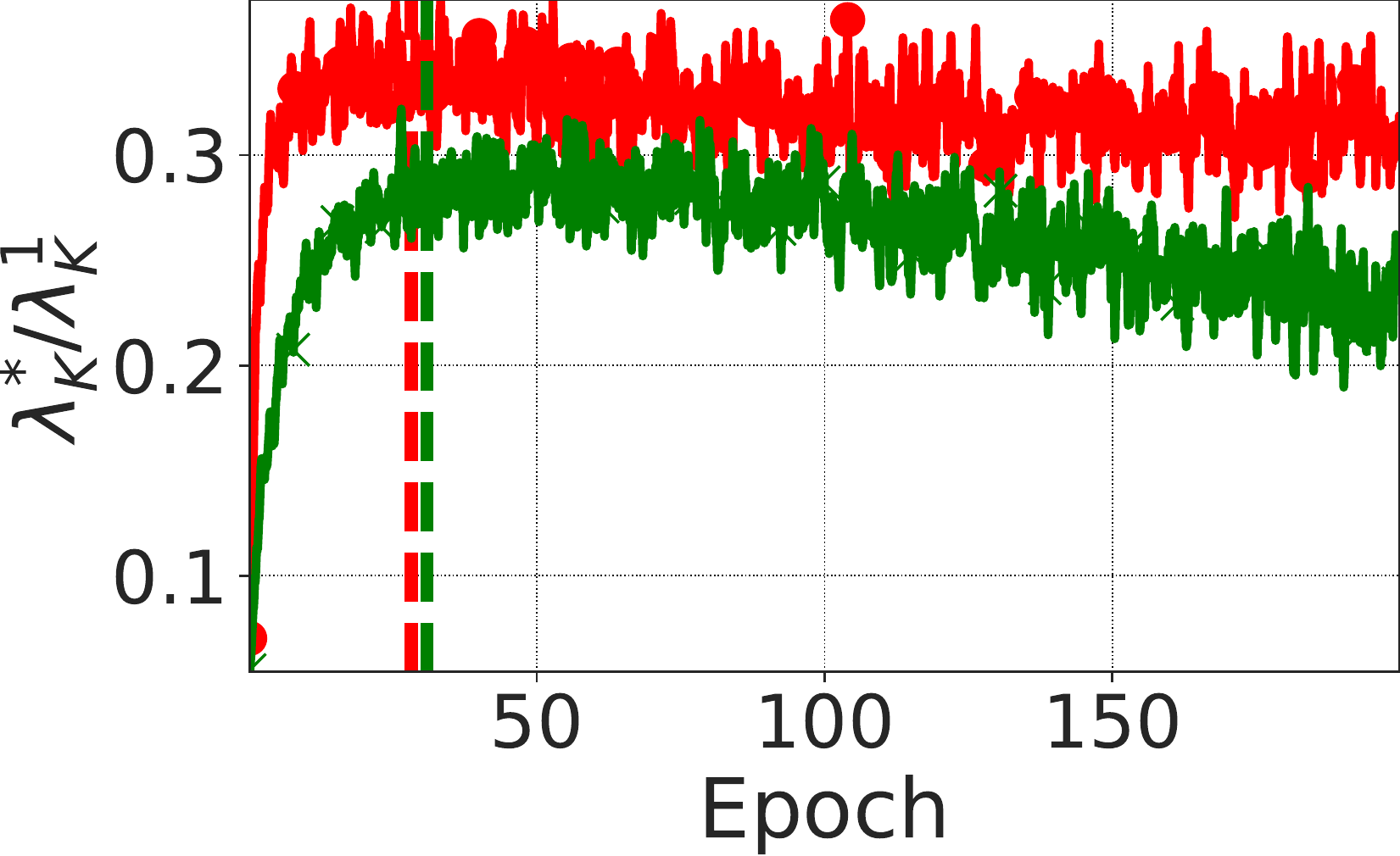}};
     \node[inner sep=0pt] (g3) at (2,-1.2)
   { \includegraphics[height=0.035\textwidth]{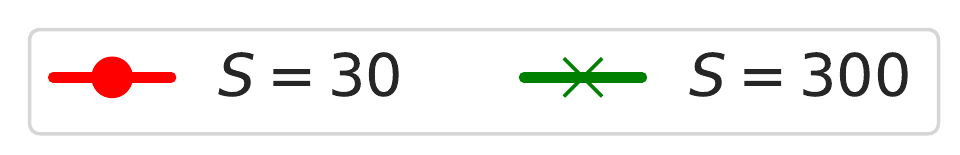}};
    \end{tikzpicture}
    }
    \end{subfigure}
    \begin{subfigure}[ResNet-32 trained on the CIFAR-10 dataset.]{

            \begin{tikzpicture}
     \node[inner sep=0pt] (g1) at (0,0)
    {  \includegraphics[height=0.14\textwidth]{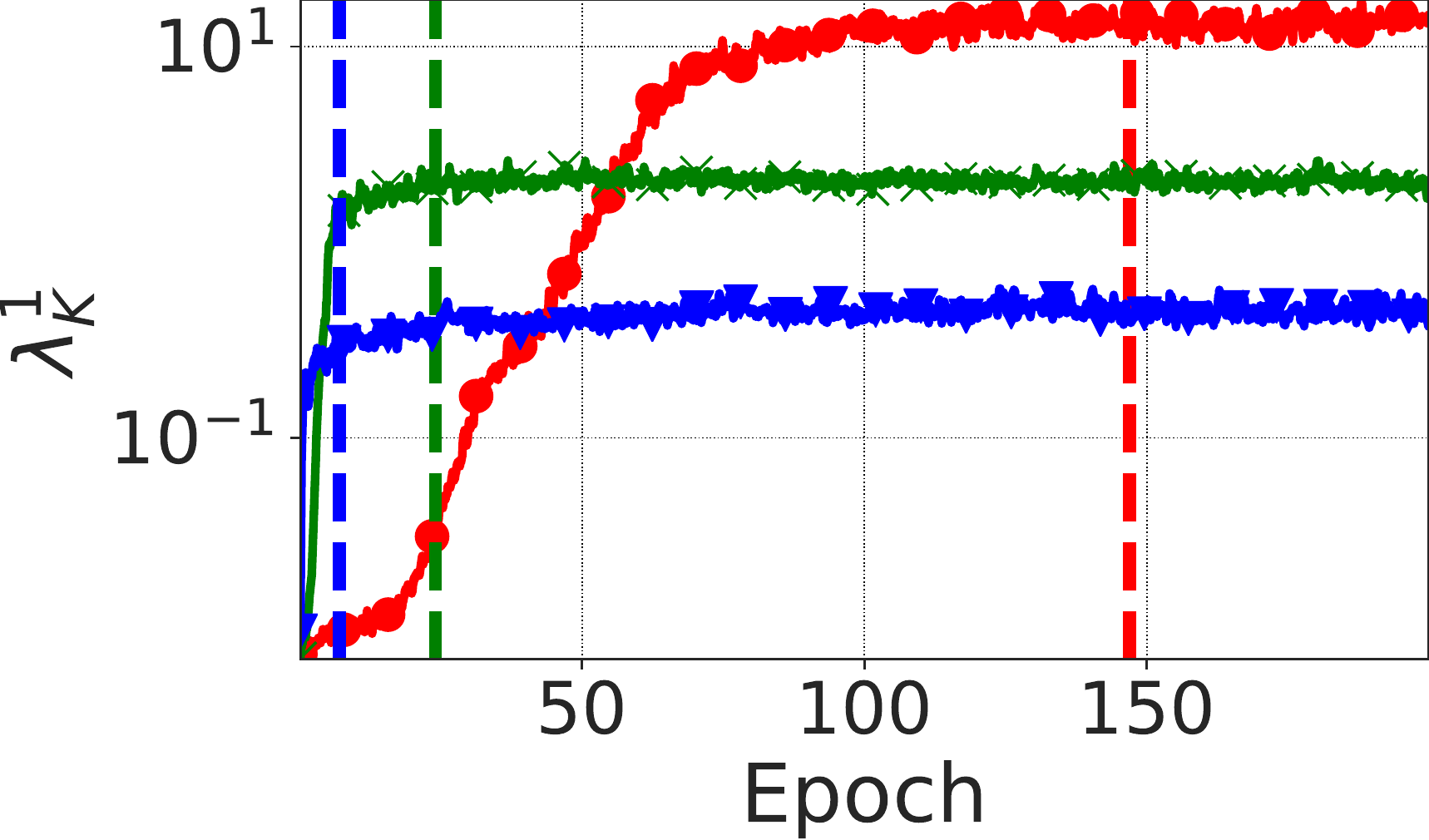}};
     \node[inner sep=0pt] (g2) at (3.45,0)
    {\includegraphics[height=0.14\textwidth]{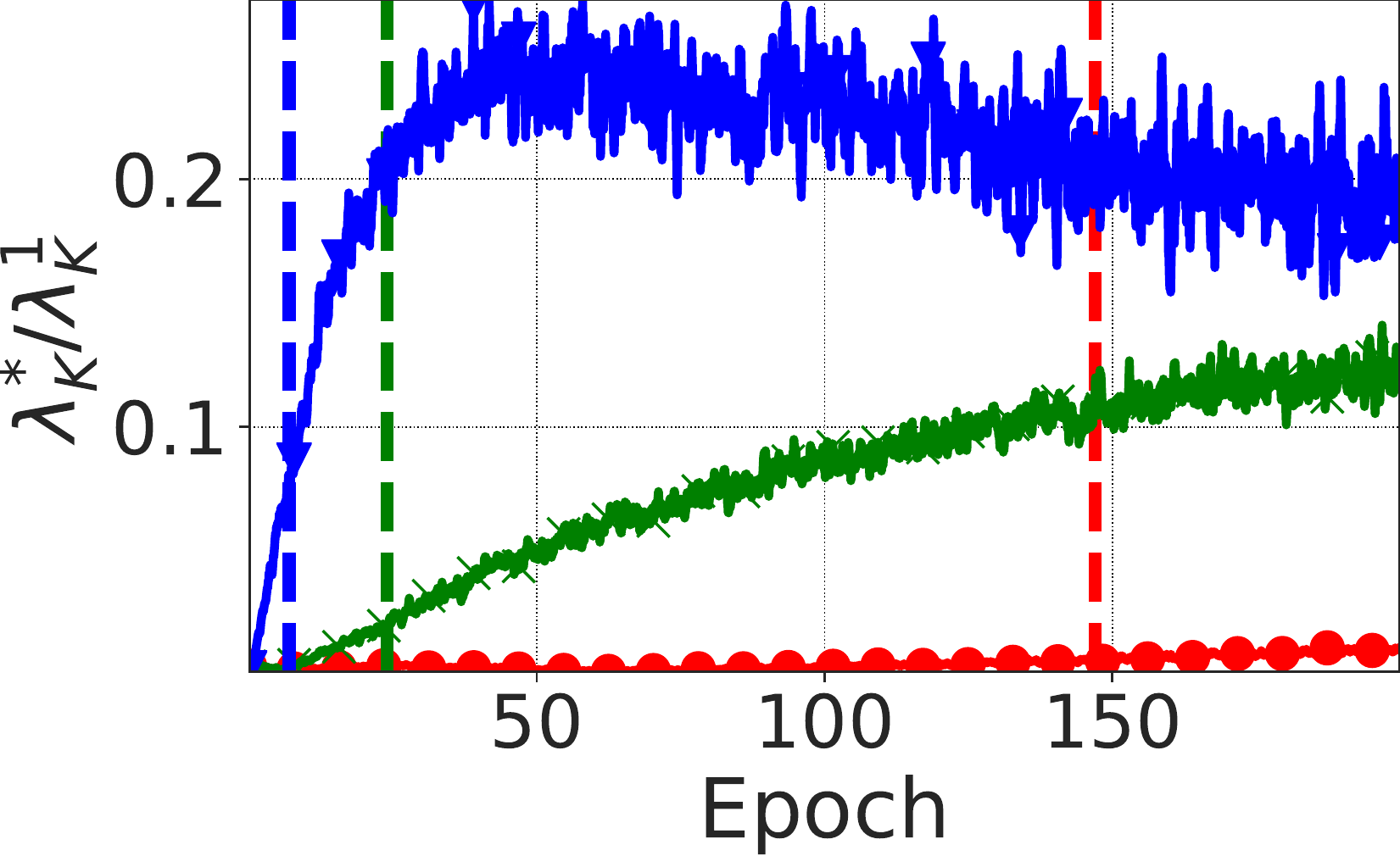}};
     \node[inner sep=0pt] (g3) at (2,-1.2)
   { \includegraphics[height=0.035\textwidth]{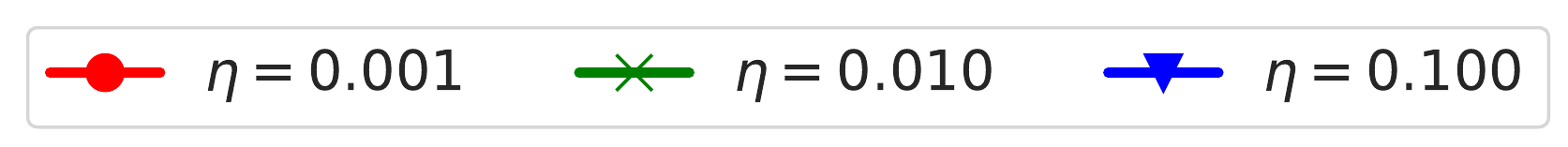}};
    \end{tikzpicture}
    \hspace{0.1mm}
       \begin{tikzpicture}
     \node[inner sep=0pt] (g1) at (0,0)
    {\includegraphics[height=0.14\textwidth]{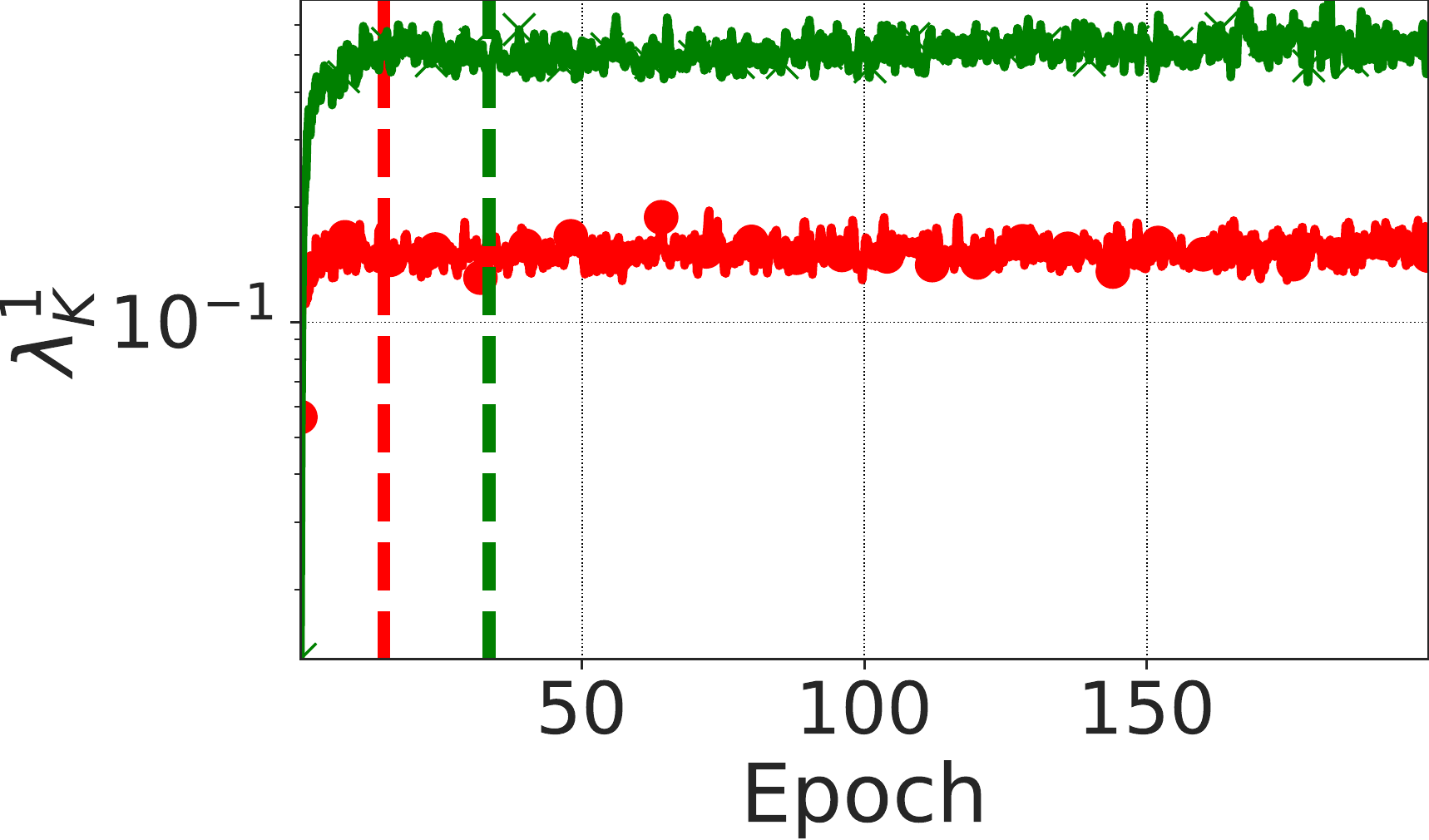}};
     \node[inner sep=0pt] (g2) at (3.5,0)
    {\includegraphics[height=0.14\textwidth]{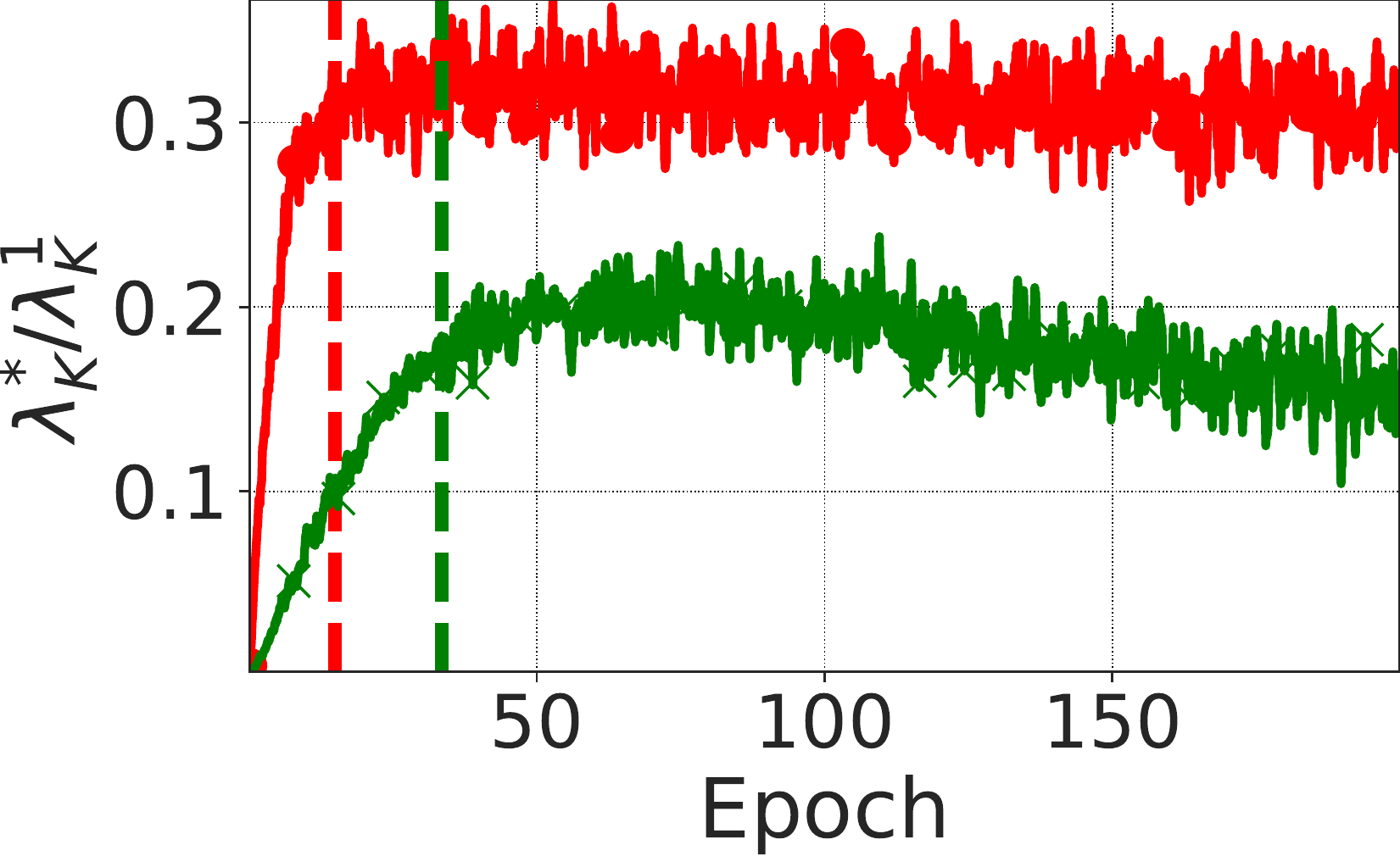}};
     \node[inner sep=0pt] (g3) at (2,-1.2)
   { \includegraphics[height=0.035\textwidth]{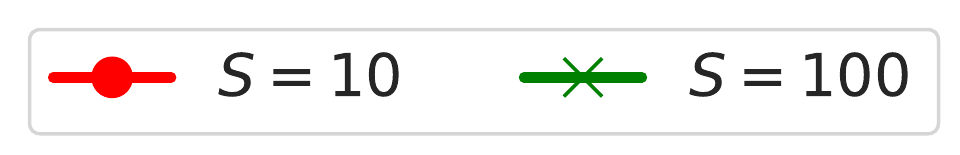}};
    \end{tikzpicture}
    }
    \end{subfigure}
    \begin{subfigure}[LSTM trained on the IMDB dataset.]{
    
    \begin{tikzpicture}
     \node[inner sep=0pt] (g1) at (0,0)
    {  \includegraphics[height=0.14\textwidth]{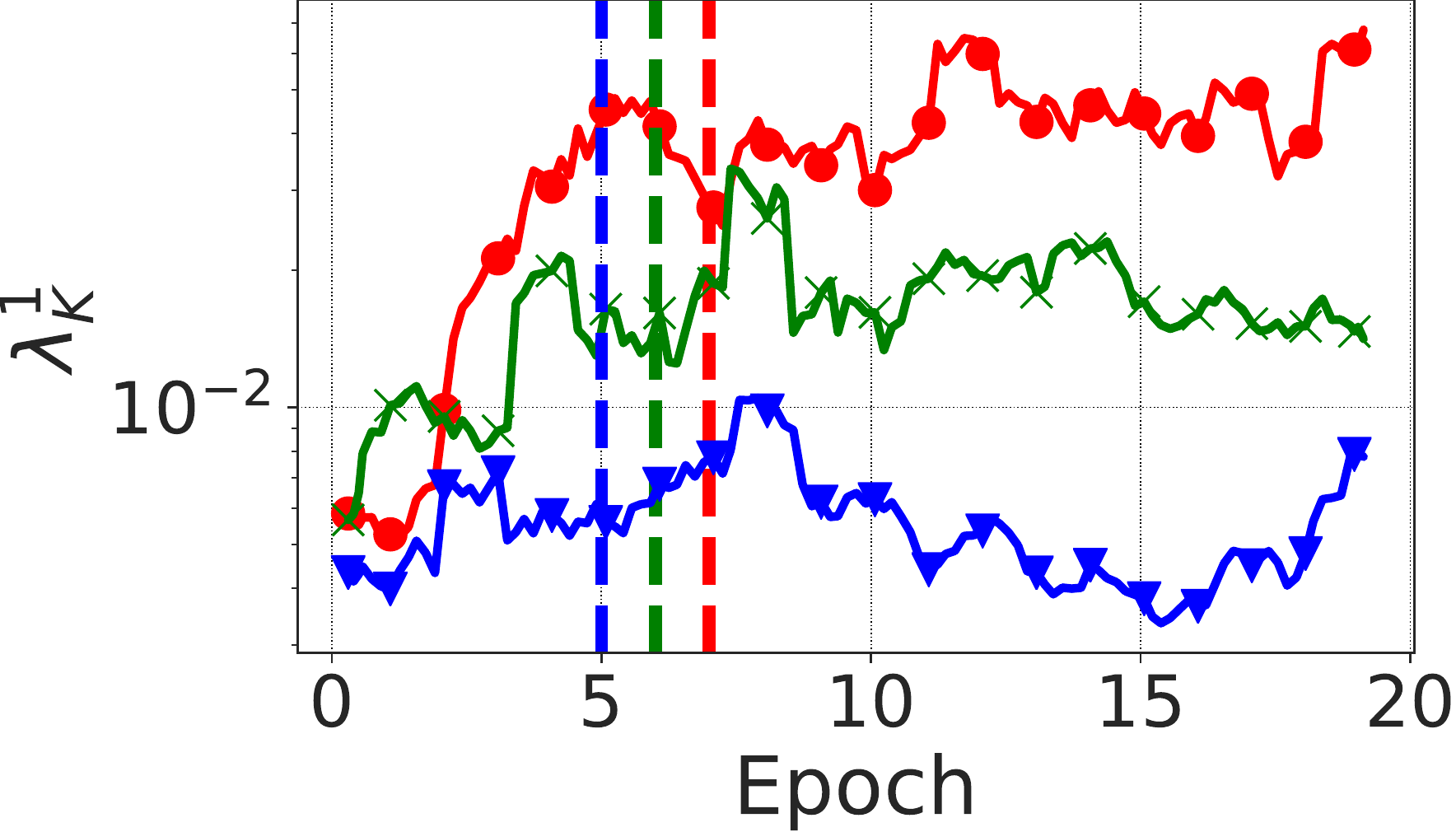}};
     \node[inner sep=0pt] (g2) at (3.4,0)
    {\includegraphics[height=0.14\textwidth]{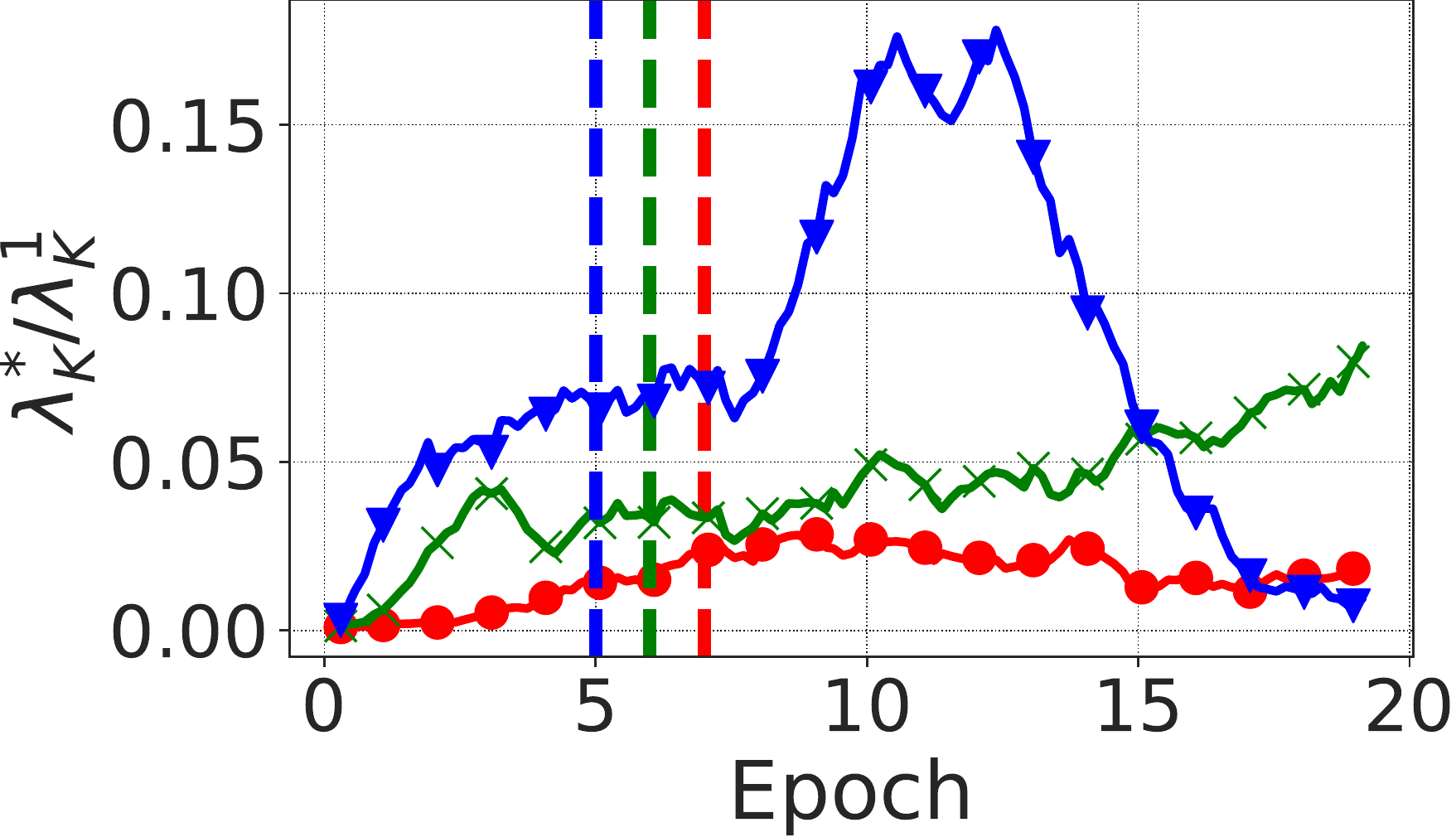}};
     \node[inner sep=0pt] (g3) at (2,-1.2)
   { \includegraphics[height=0.035\textwidth]{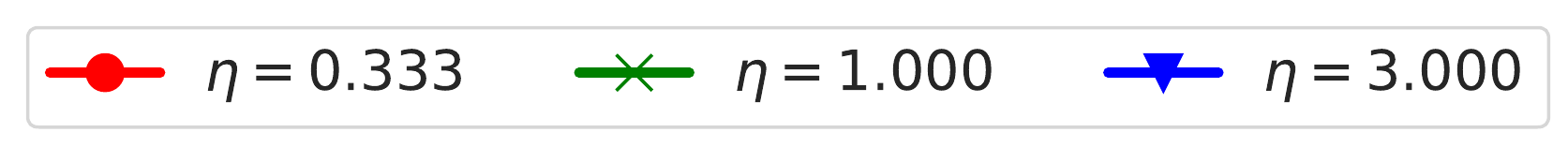}};
    \end{tikzpicture}
    
       \begin{tikzpicture}
     \node[inner sep=0pt] (g1) at (0,0)
    {\includegraphics[height=0.14\textwidth]{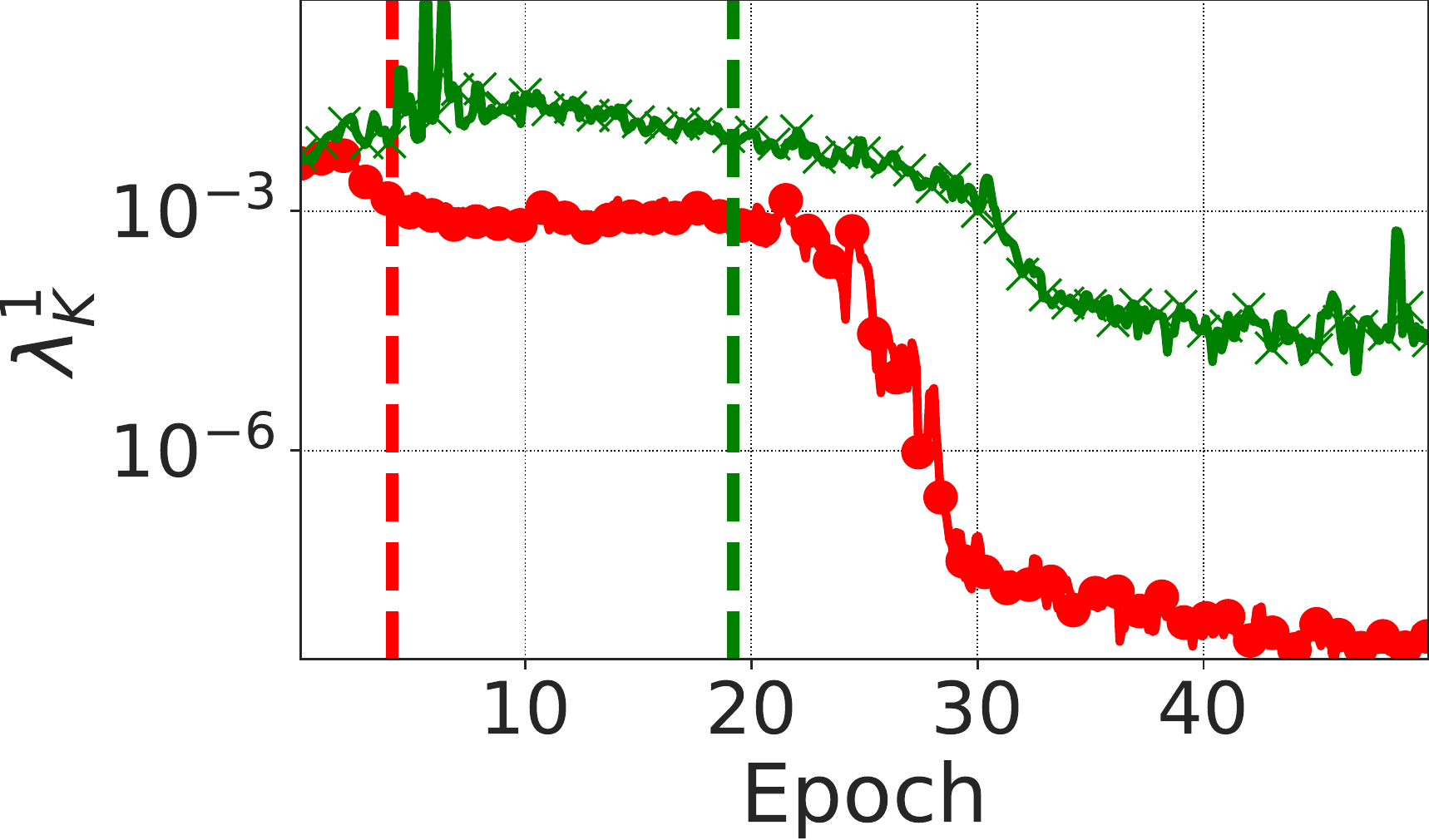}};
     \node[inner sep=0pt] (g2) at (3.4,0)
    {\includegraphics[height=0.14\textwidth]{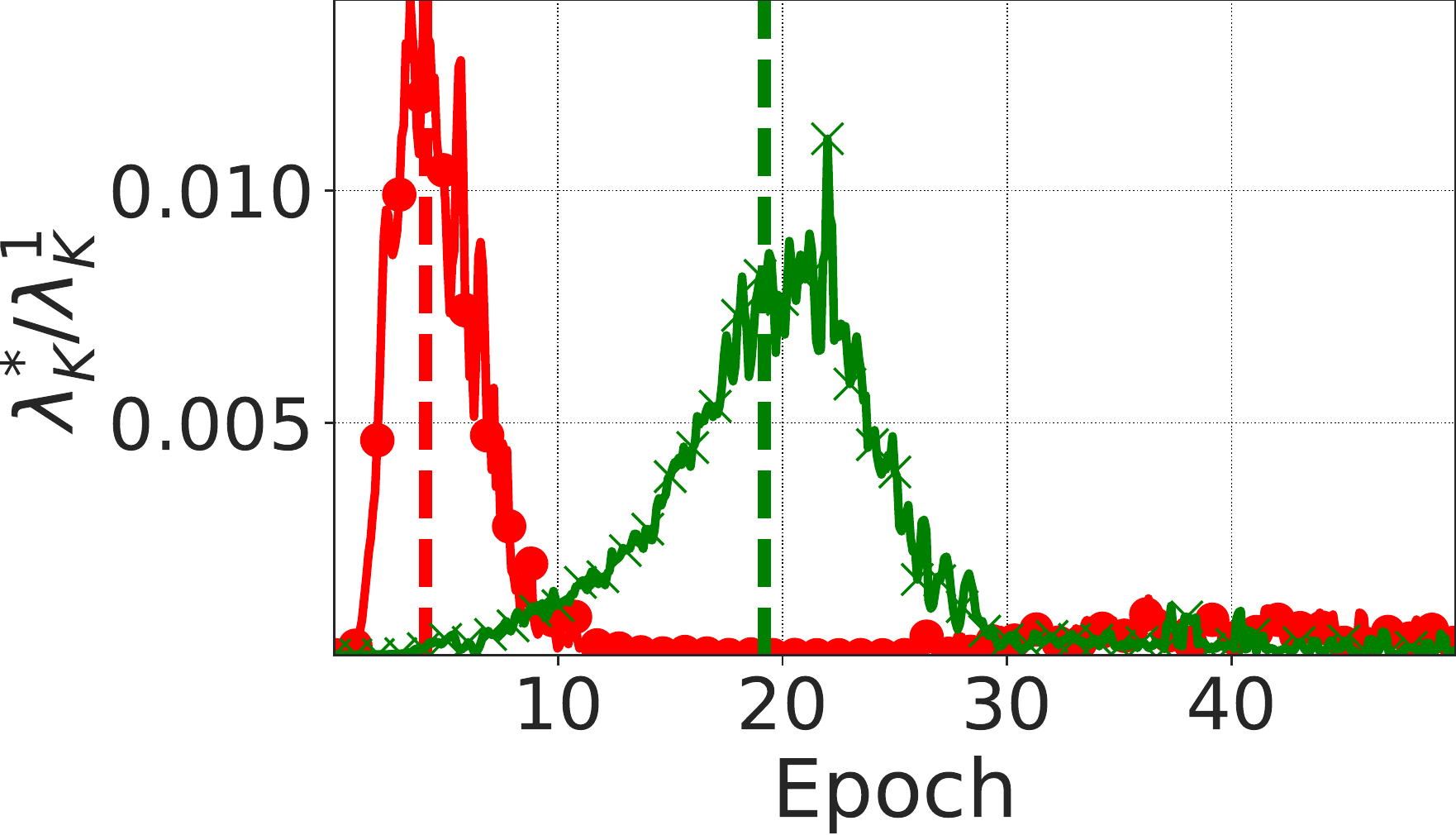}};
     \node[inner sep=0pt] (g3) at (2,-1.2)
   { \includegraphics[height=0.035\textwidth]{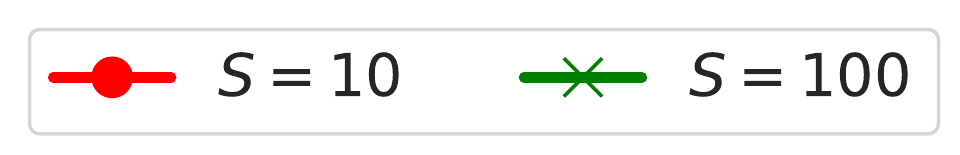}};
    \end{tikzpicture}
    }
    \end{subfigure}
        \caption{The \emph{variance reduction} and the \emph{pre-conditioning} effect of SGD in various settings. The optimization trajectories corresponding to higher learning rates ($\eta$) or lower batch sizes ($S$) are characterized by lower maximum $\lambda_K^1$ (the spectral norm of the covariance of gradients) and larger maximum $\lambda_K^*/\lambda_K^1$ (the condition number of the covariance of gradients). Vertical lines mark epochs at which the training accuracy is larger (for the first time) than a manually picked threshold, which illustrates that the effects are not explained by differences in training speeds. }
    \label{fig:validationexperiments}
\end{figure}

In this section we test empirically Conjecture \ref{prop:conj1} and Conjecture \ref{prop:conj2}. For each model we manually pick a suitable range of learning rates and batch sizes to ensure that the properties of $\mathbf{K}$ and $\mathbf{H}$ that we study have converged under a reasonable computational budget. We mainly focus on studying the covariance of gradients ($\mathbf{K}$), and leave a closer investigation of the Hessian for future work.  We use the batch size of $128$ to compute $\mathbf{K}$ when we vary the batch size for training. When we vary the learning rate instead, we use the same batch size as the one used to train the model. App.~\ref{app:approx} describes the remaining details on how we approximate the eigenspaces of $\mathbf{K}$ and $\mathbf{H}$.

We summarize the results for SimpleCNN, ResNet-32, LSTM, BERT, and DenseNet in Fig.~\ref{fig:validationexperiments}, Fig.~\ref{fig:c10_resnet32_H}, and Fig.~\ref{fig:larger_scale}. Curves are smoothed using moving average for clarity. Training curves and additional experiments are reported in App.~\ref{sec:additional_exp}.

\paragraph{Testing Conjecture 1.} 

To test Conjecture 1, we examine the highest value of $\lambda_K^1$ observed along the optimization trajectory. As visible in Fig.~\ref{fig:validationexperiments}, using a higher $\eta$ results in $\lambda_K^1$ achieving a lower maximum during training. Similarly, we observe that using a higher $S$ in SGD leads to reaching a higher maximum value of $\lambda_K^1$. For instance, for SimpleCNN (top row of Fig.~\ref{fig:validationexperiments}) we observe $\mathrm{max}(\lambda_K^1)=0.68$ and $\mathrm{max}(\lambda_K^1)=3.30$ for $\eta=0.1$ and $\eta=0.01$, respectively.

\paragraph{Testing Conjecture 2.} 

To test Conjecture 2, we compute the maximum value of $\lambda_K^*/\lambda_K^1$ along the optimization trajectory. It is visible in Fig.~\ref{fig:validationexperiments} that using a higher $\eta$ results in reaching a larger maximum value of $\lambda_K^*/\lambda_K^1$ along the trajectory. For instance, in the case of SimpleCNN $\mathrm{max}(\lambda_K^*/\lambda_K^1)=0.37$ and $\mathrm{max}(\lambda_K^*/\lambda_K^1)=0.24$ for $\eta=0.1$ and $\eta=0.01$, respectively. 
\paragraph{A counter-intuitive effect of decreasing the batch size.} 

Consistently with Conjecture \ref{prop:conj2}, we observe that the maximum value of $\mathrm{Tr}(\mathbf{K})$ is \emph{smaller} for the smaller batch size. In the case of SimpleCNN $\mathrm{max}(\mathrm{Tr}(\mathbf{K}))=5.56$ and $\mathrm{max}(\mathrm{Tr}(\mathbf{K}))=10.86$ for $S=10$ and $S=100$, respectively. Due to space constraints we report the effect of $\eta$ and $S$ on $\mathrm{Tr}(\mathbf{K})$ in other settings in App.~\ref{sec:additional_exp}. 

This effect is counter-intuitive because $\mathrm{Tr}(\mathbf{K})$ is proportional to the variance of the mini-batch gradient (see also App.~\ref{app:approx}). Naturally, using a lower batch size generally \emph{increases} the variance of the mini-batch gradient, and $\mathrm{Tr}(\mathbf{K})$. This apparent contradiction is explained by the fact that we measure $\mathrm{Tr}(\mathbf{K})$ using a different batch size ($128$) than the one used to train the model. Hence, decreasing the batch size both increases (due to approximating the gradient using fewer samples) and decreases (as predicted in Conjecture~\ref{prop:conj2}) the variance of the mini-batch gradient along the optimization trajectory. 

\paragraph{How early in training is the break-even point reached?} 

We find that $\lambda_K^1$ and $\lambda_H^1$ reach their highest values early in training, close to reaching $60\%$ training accuracy on CIFAR-10, and $75\%$ training accuracy on IMDB. The training and validation accuracies are reported for all the experiments in App.~\ref{sec:additional_exp}. This suggests that the break-even point is reached early in training. 

\paragraph{The Hessian.} 

In the above, we have focused on the covariance of gradients. In Fig.~\ref{fig:c10_resnet32_H} we report how $\lambda_H^1$ depends on $\eta$ and $S$ for ResNet-32 and SimpleCNN. Consistently with prior work~\citep{keskar2016large,jastrzebski2018}, we observe that using a smaller $\eta$ or using a larger $S$ coincides with a larger maximum value of $\lambda_H^1$. For instance, for SimpleCNN we observe $\mathrm{max}(\lambda_H^1)=26.27$ and $\mathrm{max}(\lambda_H^1)=211.17$ for $\eta=0.1$ and $\eta=0.01$, respectively. We leave testing predictions made in Conjecture \ref{prop:conj2} about the Hessian for future work. 
\begin{figure}
    \centering
        \begin{subfigure}[Varying the learning rate.]{
        
    \begin{tikzpicture}
     \node[inner sep=0pt] (g1) at (0,0)
    {  \includegraphics[height=0.14\textwidth]{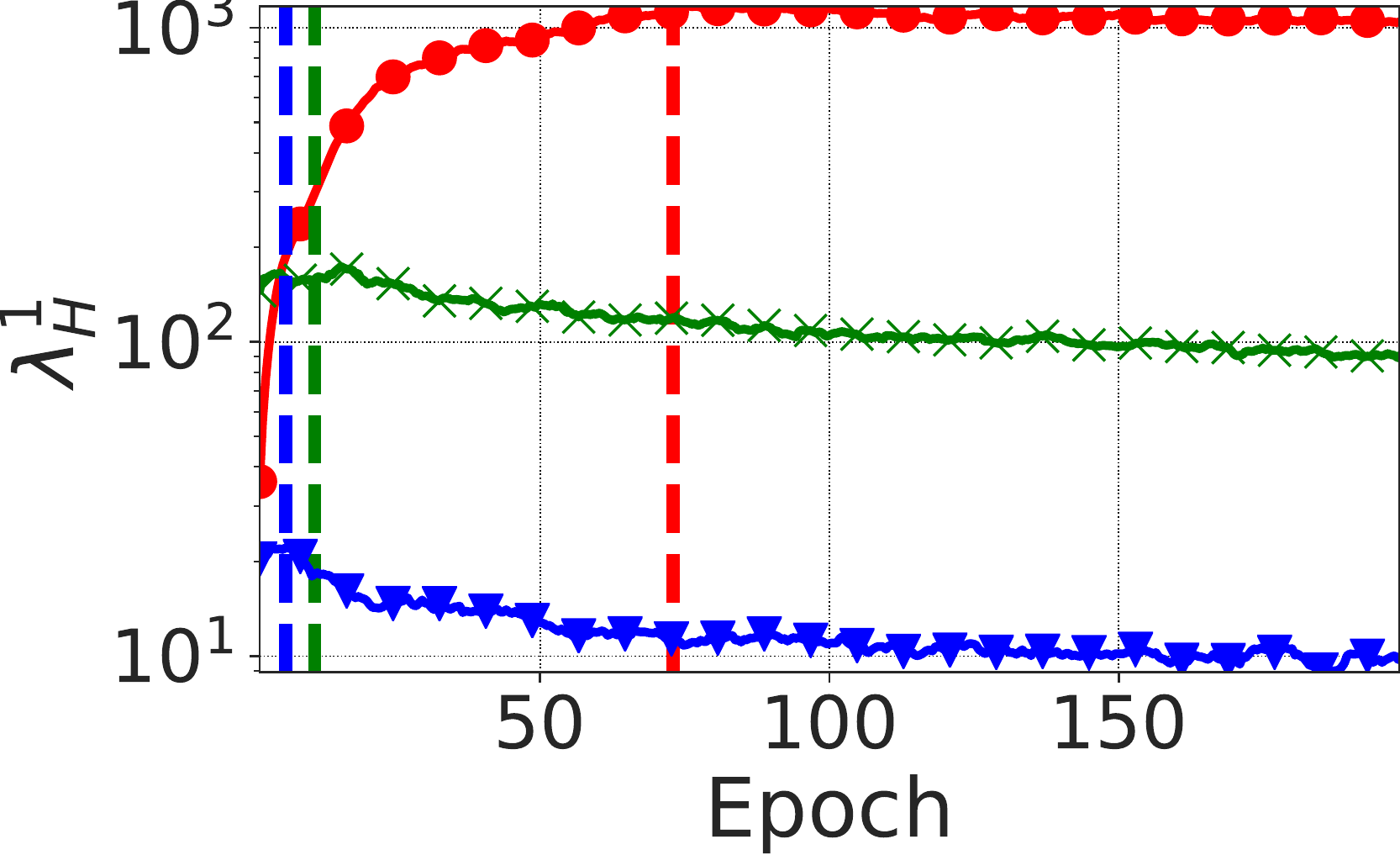}};
     \node[inner sep=0pt] (g2) at (3.4,0)
    {\includegraphics[height=0.14\textwidth]{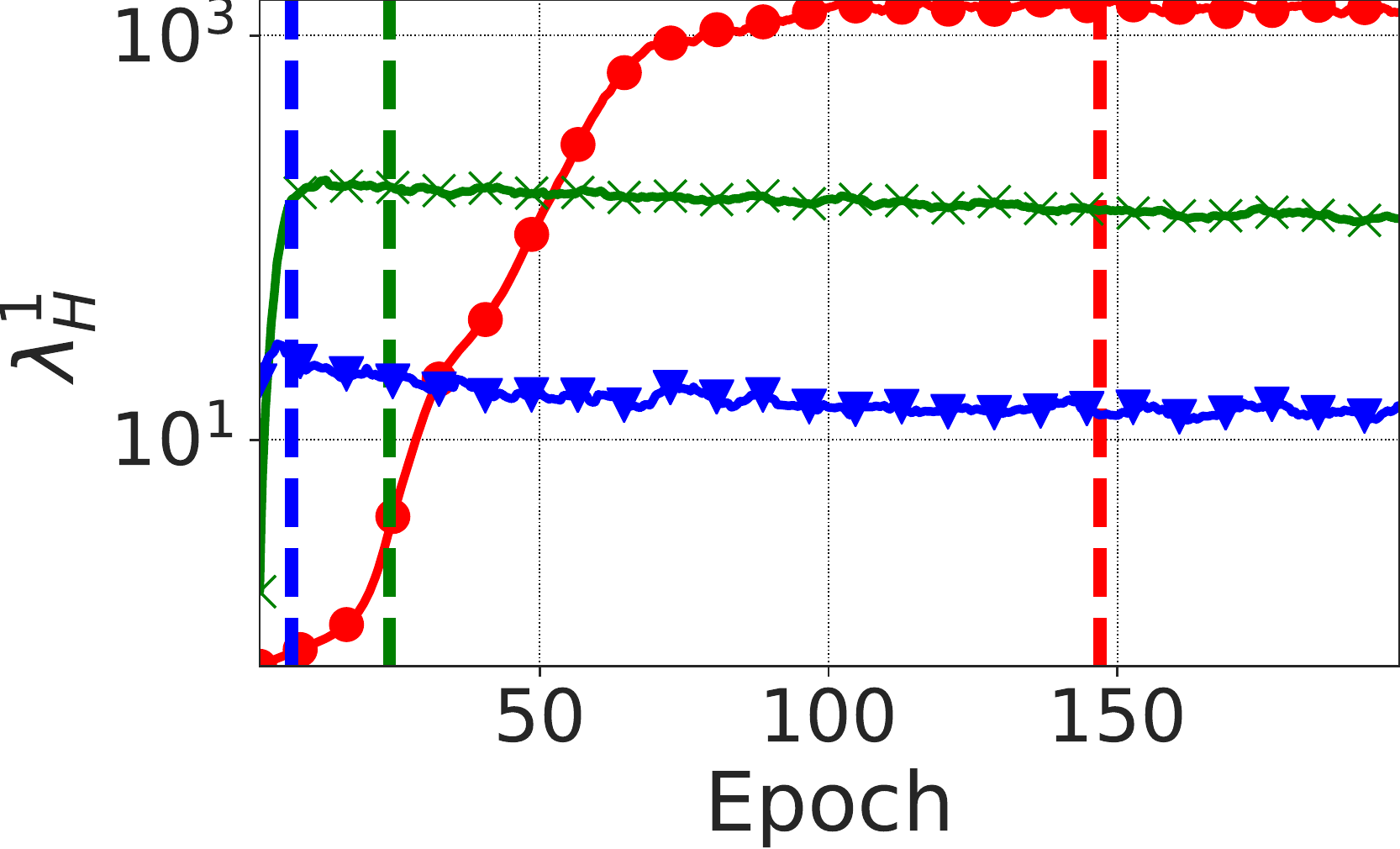}};
     \node[inner sep=0pt] (g3) at (2,-1.2)
   { \includegraphics[height=0.035\textwidth]{fig/1809nobnscnnc10sweeplrlegend.pdf}};
    \end{tikzpicture}
    }
 \end{subfigure}%
 \begin{subfigure}[Varying the batch size.]{
    \begin{tikzpicture}
     \node[inner sep=0pt] (g1) at (0,0)
    {  \includegraphics[height=0.14\textwidth]{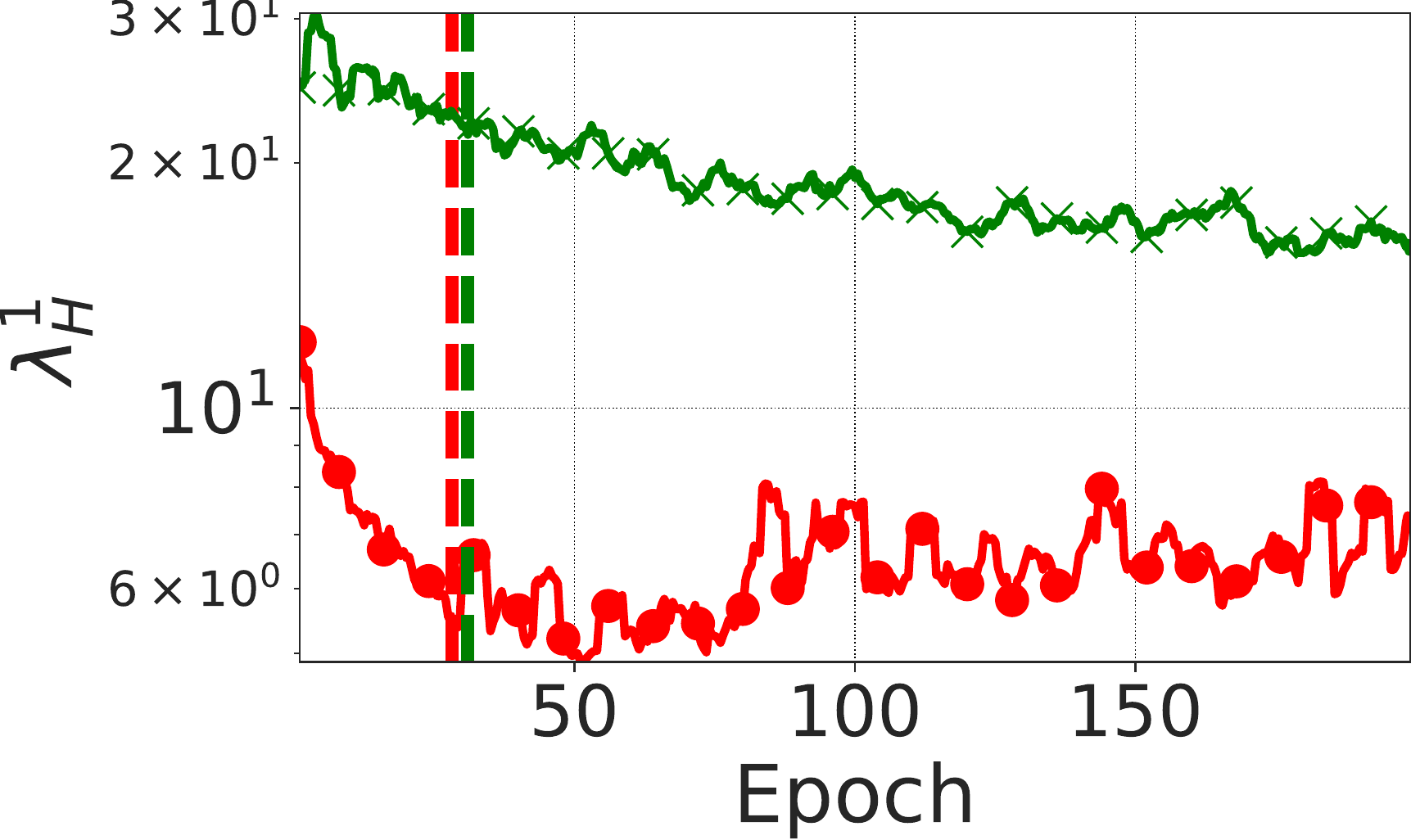}};
     \node[inner sep=0pt] (g2) at (3.4,0)
    {\includegraphics[height=0.14\textwidth]{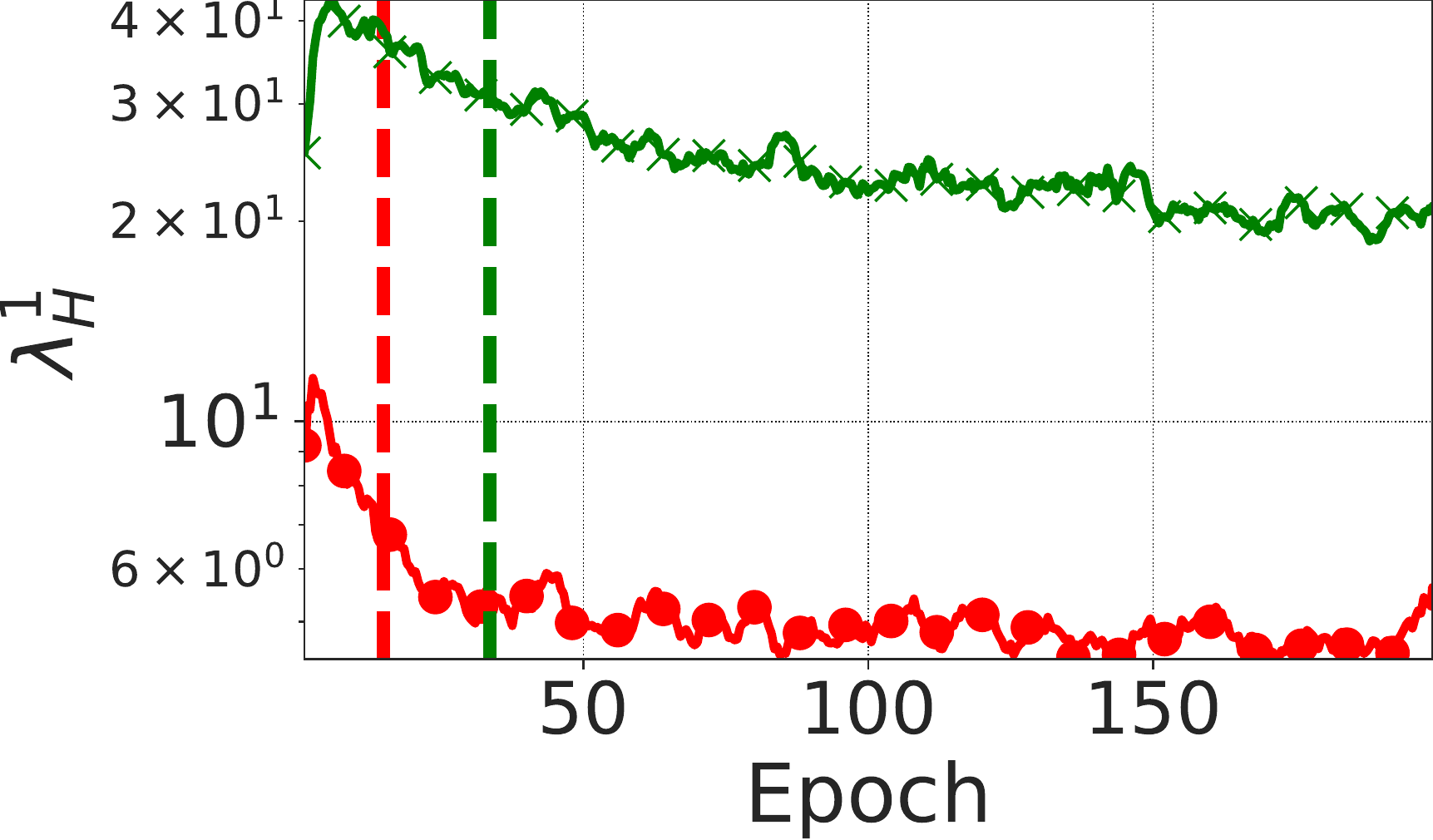}};
     \node[inner sep=0pt] (g3) at (2,-1.2)
   { \includegraphics[height=0.035\textwidth]{fig/1902nobnresnet32-c10sweepbslegend.pdf}};
    \end{tikzpicture}
    }
 \end{subfigure}
    
    \caption{The \emph{variance reduction} effect of SGD, for ResNet-32 and SimpleCNN. Trajectories corresponding to higher learning rates ($\eta$, left) or smaller batch sizes ($S$, right) are characterized by a lower maximum $\lambda_H^1$ (the spectral norm of the Hessian) along the trajectory. Vertical lines mark epochs at which the training accuracy is larger (for the first time) than a manually picked threshold.} 
    \label{fig:c10_resnet32_H}
\end{figure}

\paragraph{Larger scale studies.} 

Finally, we test the two conjectures in two larger scale settings: BERT fine-tuned on the MNLI dataset, and DenseNet trained on the ImageNet dataset. Due to memory constraints, we only vary the learning rate. We report results in Fig.~\ref{fig:larger_scale}. We observe that both conjectures hold in these two settings. It is worth noting that DenseNet uses batch normalization layers. In the next section we  investigate closer batch-normalized networks.

\begin{figure}
    \centering
    \begin{subfigure}[BERT trained on the MNLI dataset.]{
    \begin{tikzpicture}
        \node[inner sep=0pt] (g1) at (0,0)
            {\includegraphics[height=0.14\textwidth]{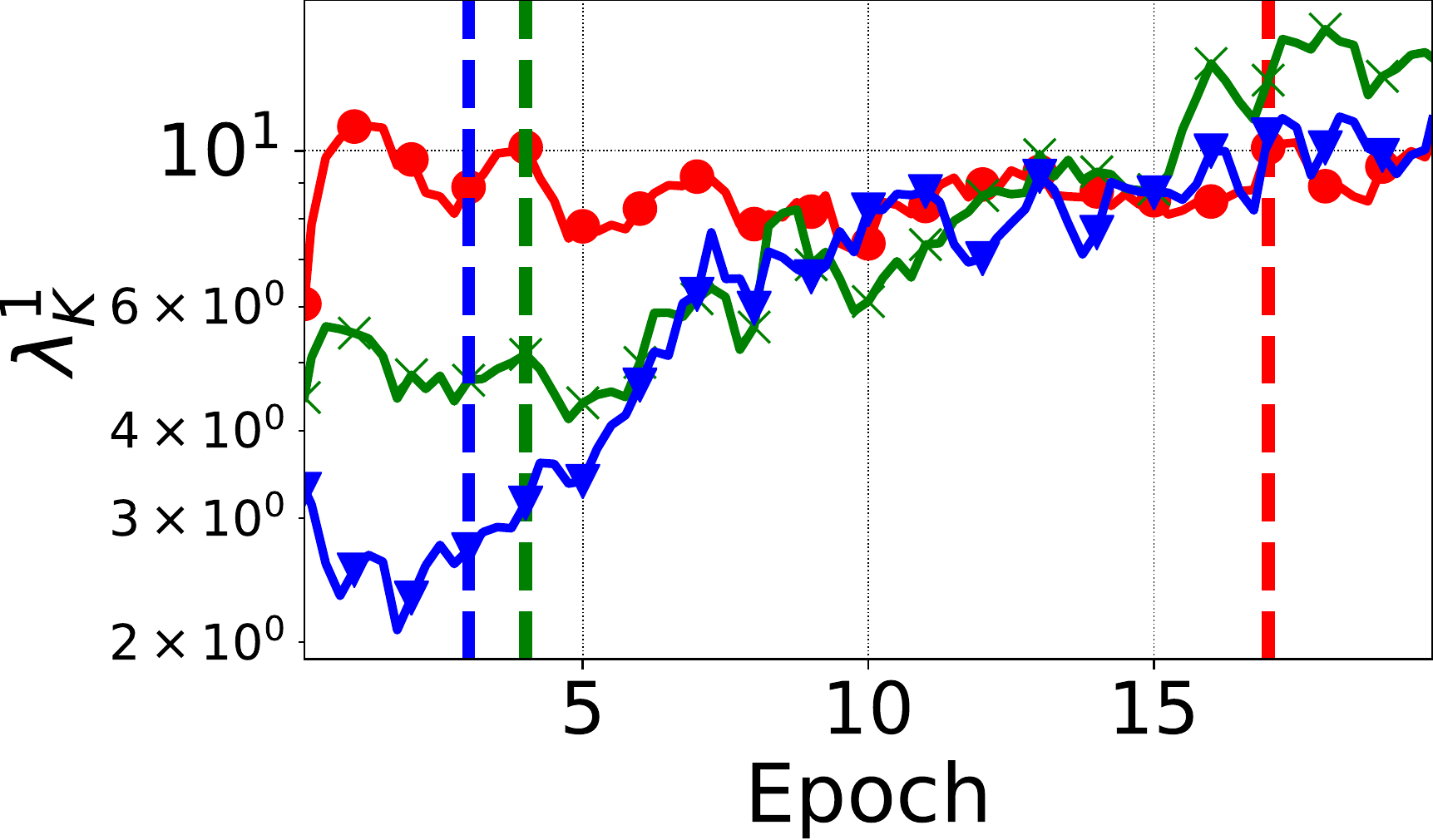}};
        \node[inner sep=0pt] (g2) at (3.4,0)
            {\includegraphics[height=0.14\textwidth]{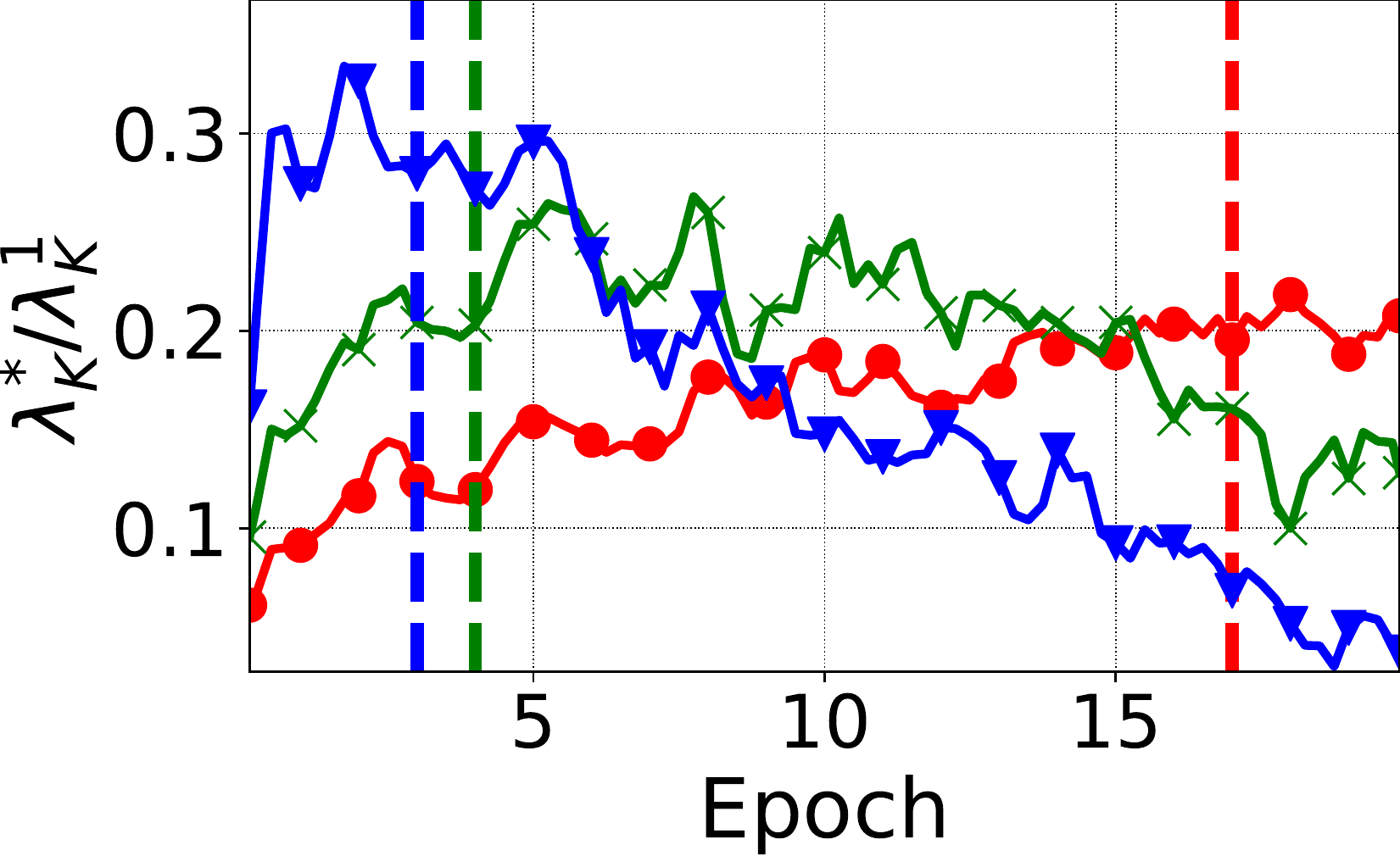}};
        \node[inner sep=0pt] (g3) at (2,-1.2)
            {\includegraphics[height=0.035\textwidth]{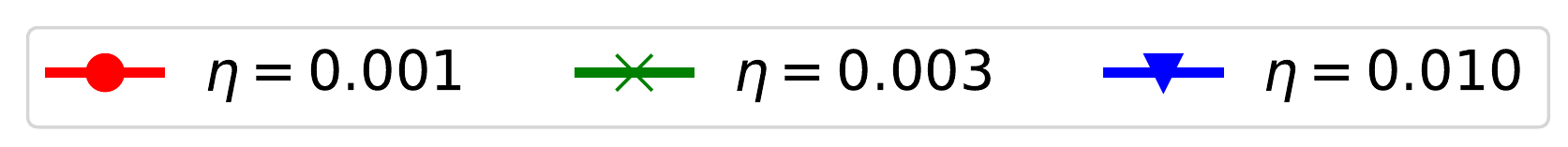}};
    \end{tikzpicture}
    }
    \end{subfigure}
    \begin{subfigure}[DenseNet trained on the ImageNet dataset.]{
   \begin{tikzpicture}
     \node[inner sep=0pt] (g1) at (0,0)
    {\includegraphics[height=0.14\textwidth]{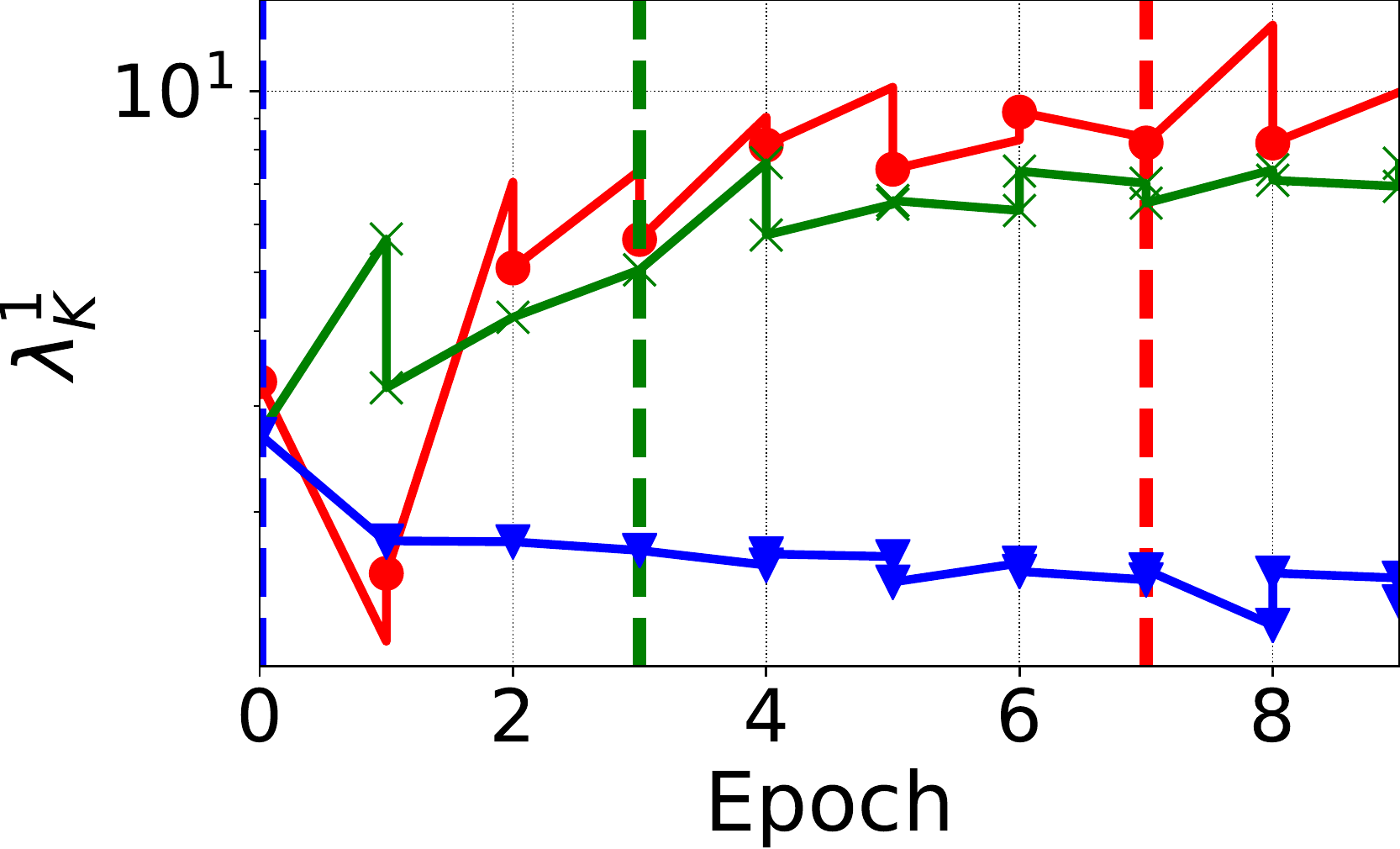}};
     \node[inner sep=0pt] (g2) at (3.4,0)
    {\includegraphics[height=0.14\textwidth]{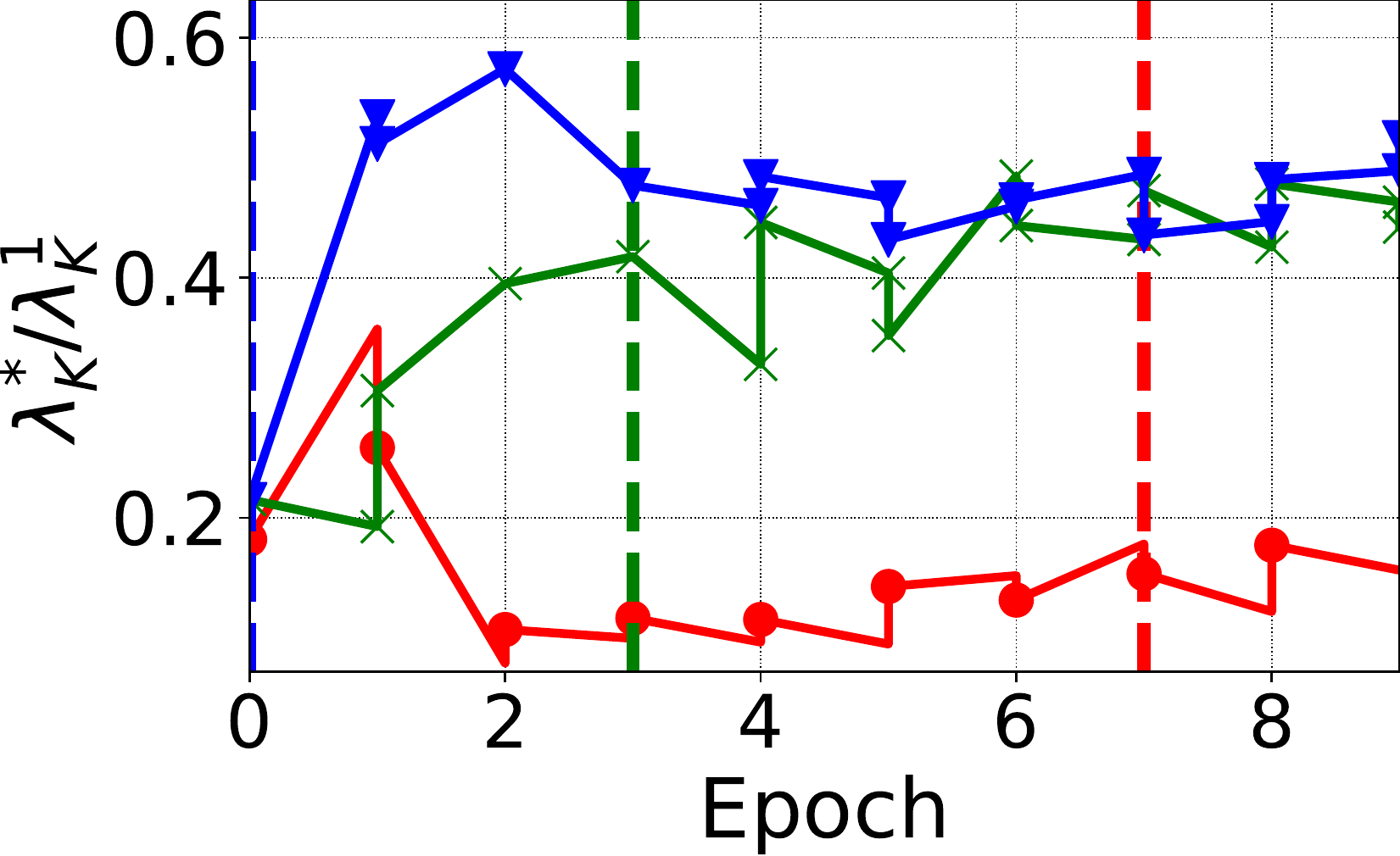}};
     \node[inner sep=0pt] (g3) at (2,-1.2)
   { \includegraphics[height=0.035\textwidth]{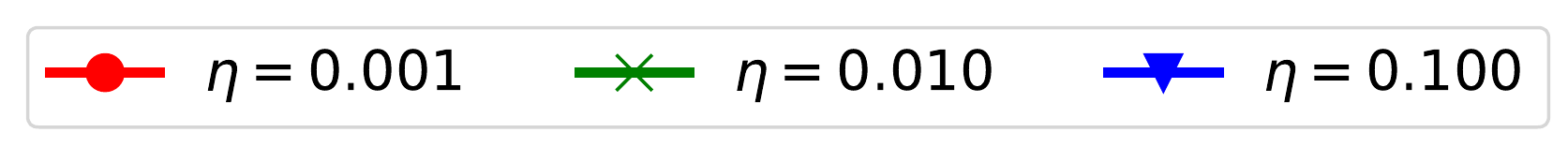}};
    \end{tikzpicture}
    }
    \end{subfigure}
    \caption{The \emph{variance reduction} and the \emph{pre-conditioning} effect of SGD, demonstrated on two larger scale settings: BERT on the MNLI dataset (left), and DenseNet on the ImageNet dataset (right). For each setting we report $\lambda_K^1$ (left) and $\lambda_K^*/\lambda_K^1$ (right). Vertical lines mark epochs at which the training accuracy is larger (for the first time) than a manually picked threshold.}
    \label{fig:larger_scale}
\end{figure}

\paragraph{Summary.} 

In this section we have shown evidence supporting the variance reduction (Conjecture~\ref{prop:conj1}) and the pre-conditioning effect (Conjecture~\ref{prop:conj2}) of SGD in a range of classification tasks. We also found that the above conclusions hold for MLP trained on the Fashion MNIST dataset, SGD with momentum, and SGD with learning rate decay. We include these results in App.~\ref{sec:additional_exp}-\ref{sec:schedule_exp}.

\subsection{Importance of learning rate for conditioning in networks with batch normalization layers}
\label{sec:bn}

\begin{figure}[h!]
    \centering
        \begin{subfigure}[Left to right: $\frac{\|g\|}{\|g_5\|}$ for SimpleCNN-BN, $\frac{\|g\|}{\|g_5\|}$ for SimpleCNN,  $\lambda_H^1$ for SimpleCNN-BN, and $\lambda_K^1$ for SimpleCNN-BN.]{
   \begin{tikzpicture}
     \node[inner sep=0pt] (g1) at (0,0)
    {\includegraphics[height=0.14\textwidth]{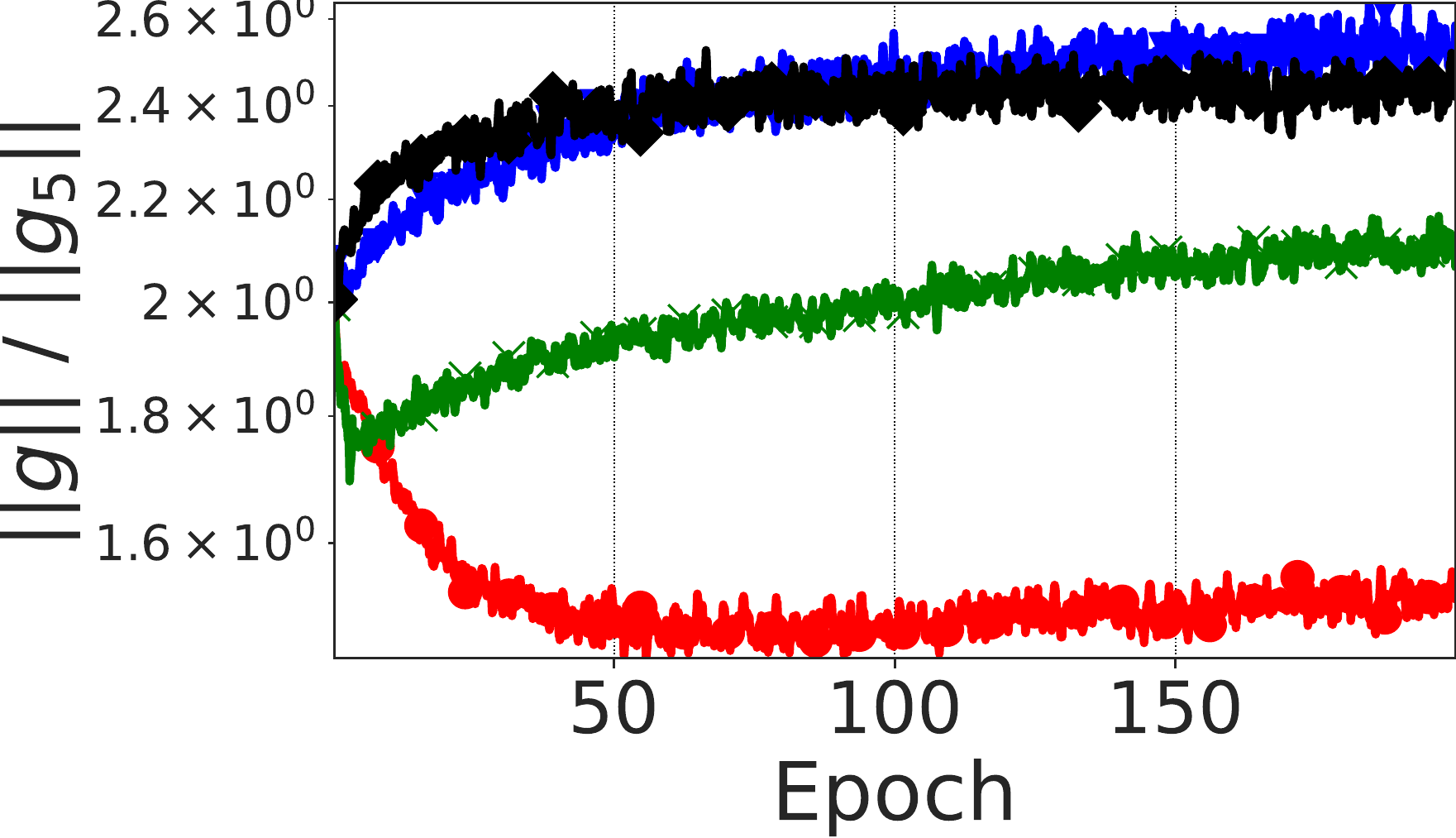}};
     \node[inner sep=0pt] (g2) at (3.55,0)
    {\includegraphics[height=0.14\textwidth]{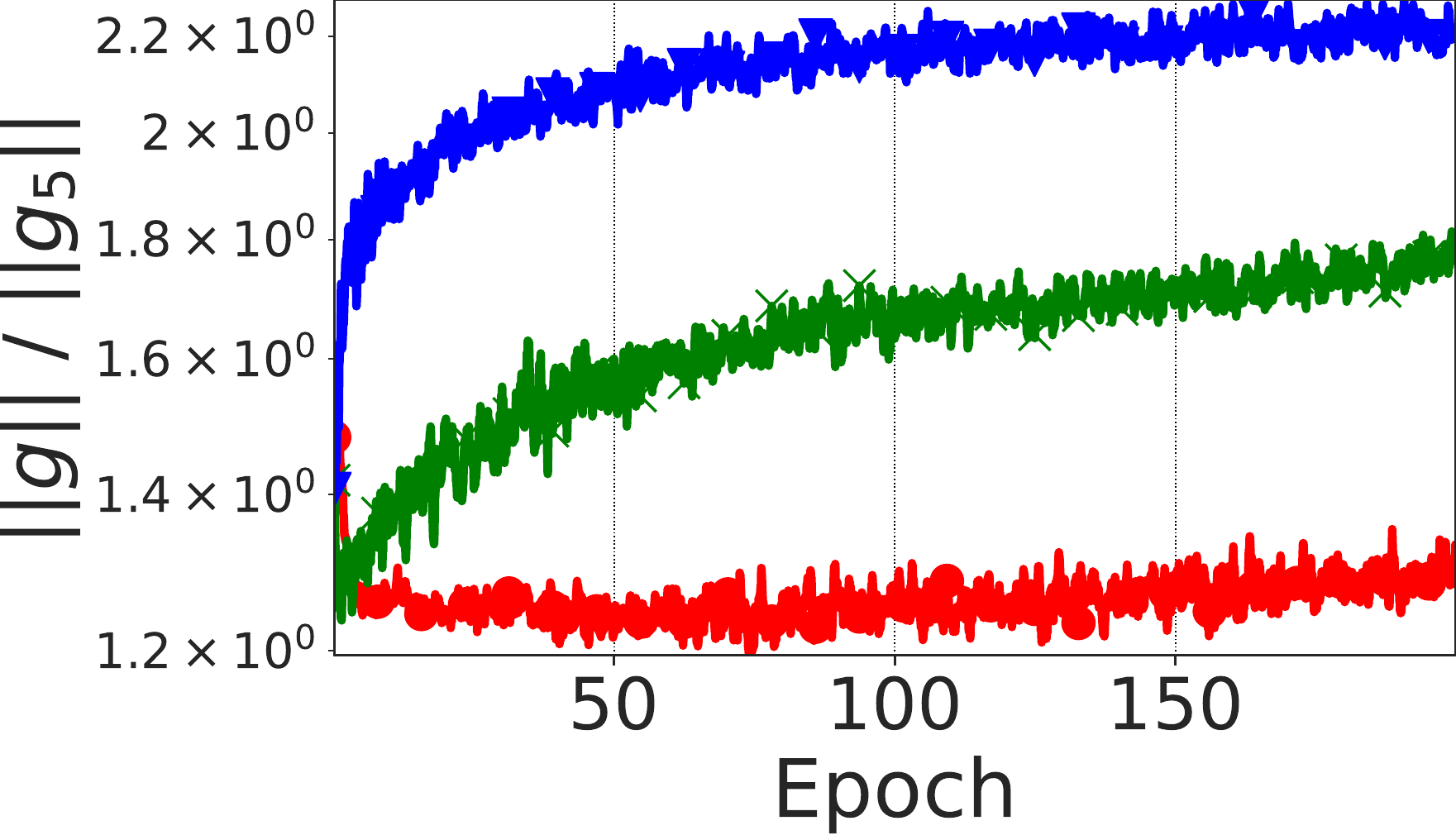}};
     \node[inner sep=0pt] (g3) at (7.0,0)
    {\includegraphics[height=0.14\textwidth]{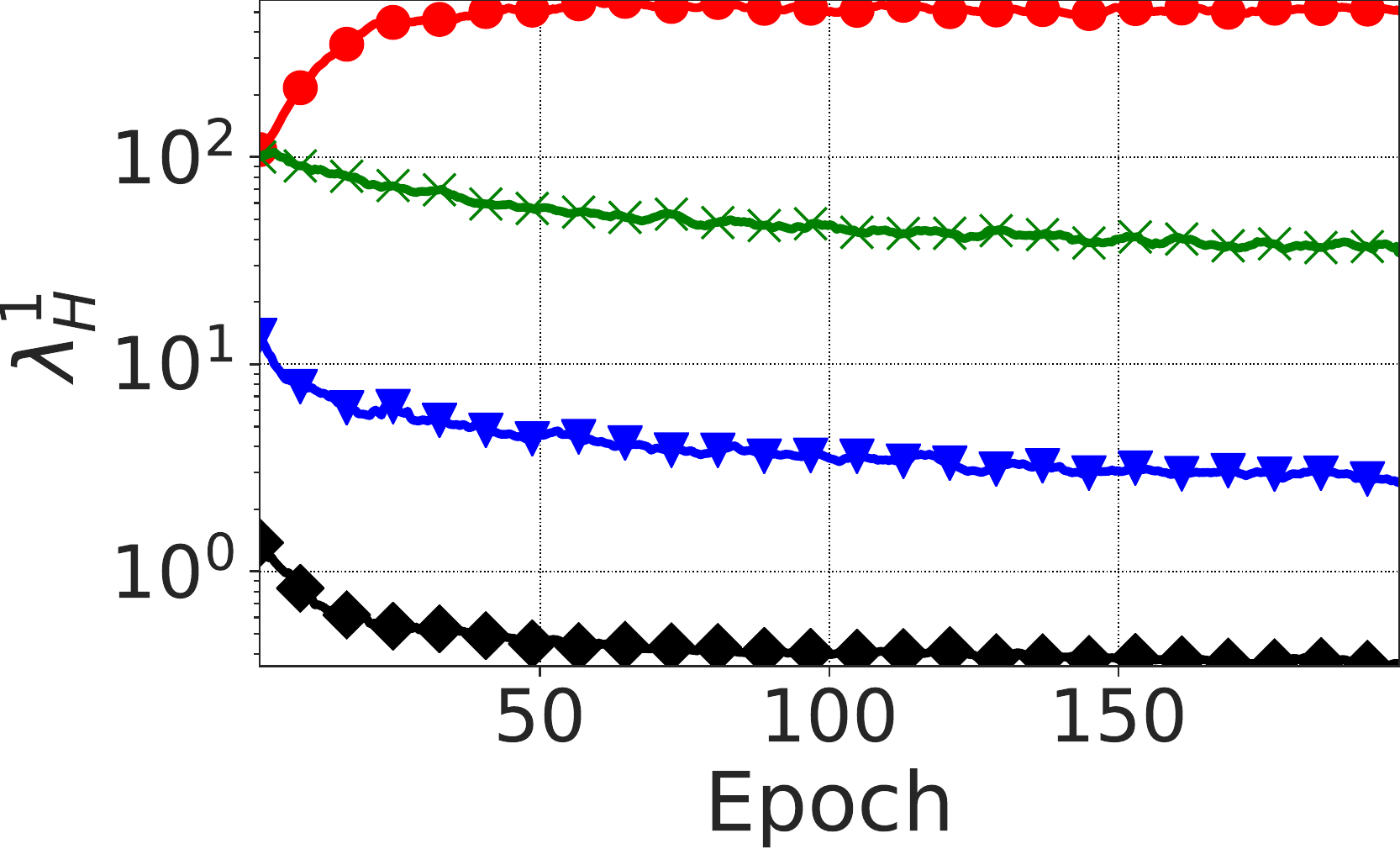}};
     \node[inner sep=0pt] (g3) at (10.35,0)
   {\includegraphics[height=0.14\textwidth]{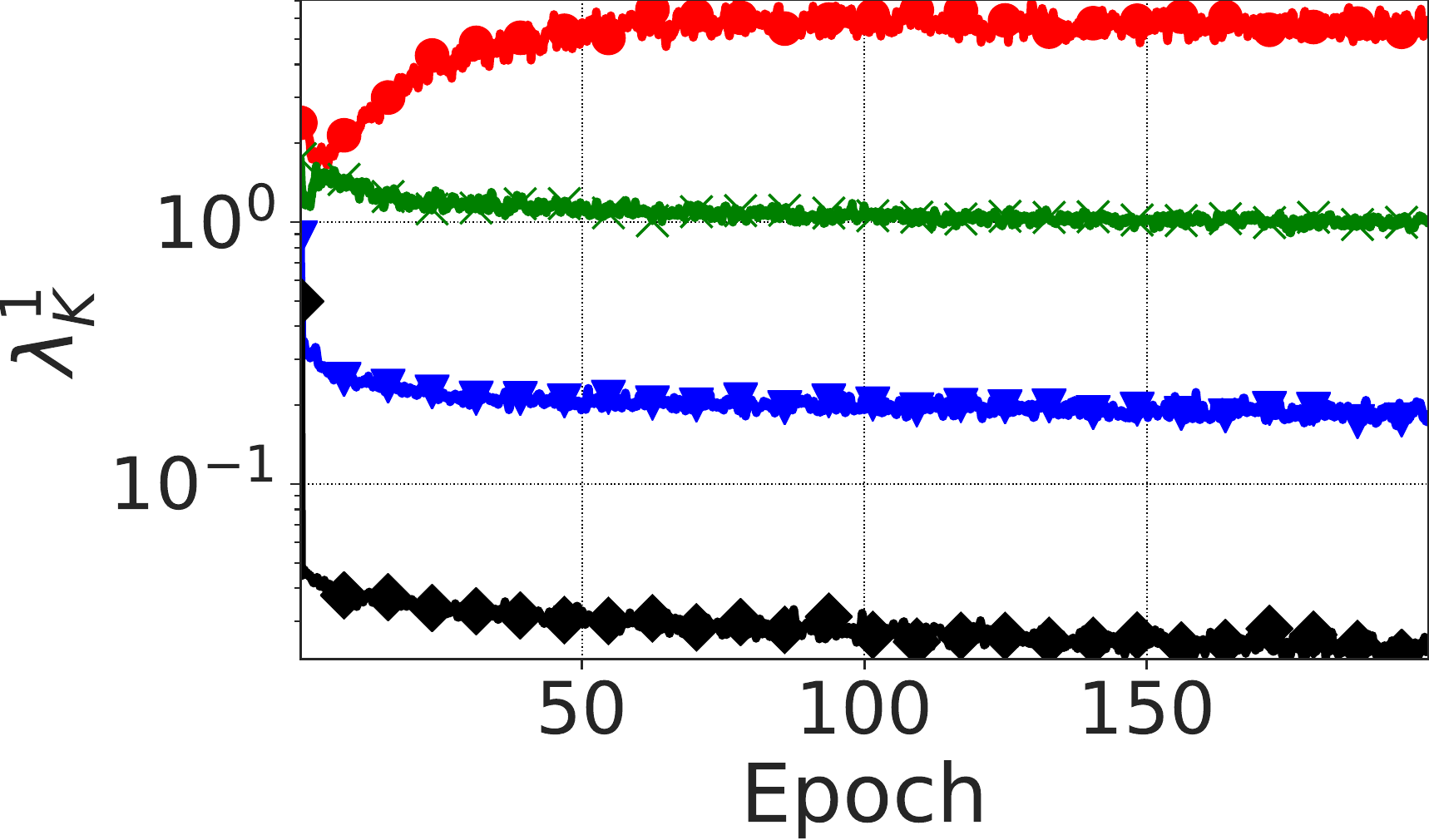}};
     \node[inner sep=0pt] (g3) at (5.3,-1.2)
   { \includegraphics[height=0.035\textwidth]{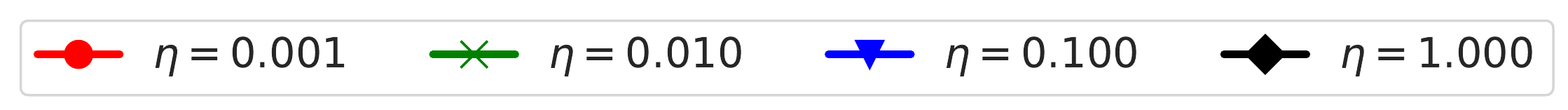}};
    \end{tikzpicture}
    }
    \end{subfigure}
     \begin{subfigure}[Left to right: $\|\gamma\|$ of the last layer, $\lambda_K^1$, and $\lambda_K^*/{\lambda_K^1}$. All for SimpleCNN-BN.]{
   \begin{tikzpicture}
     \node[inner sep=0pt] (g1) at (0,0)
    {\includegraphics[height=0.14\textwidth]{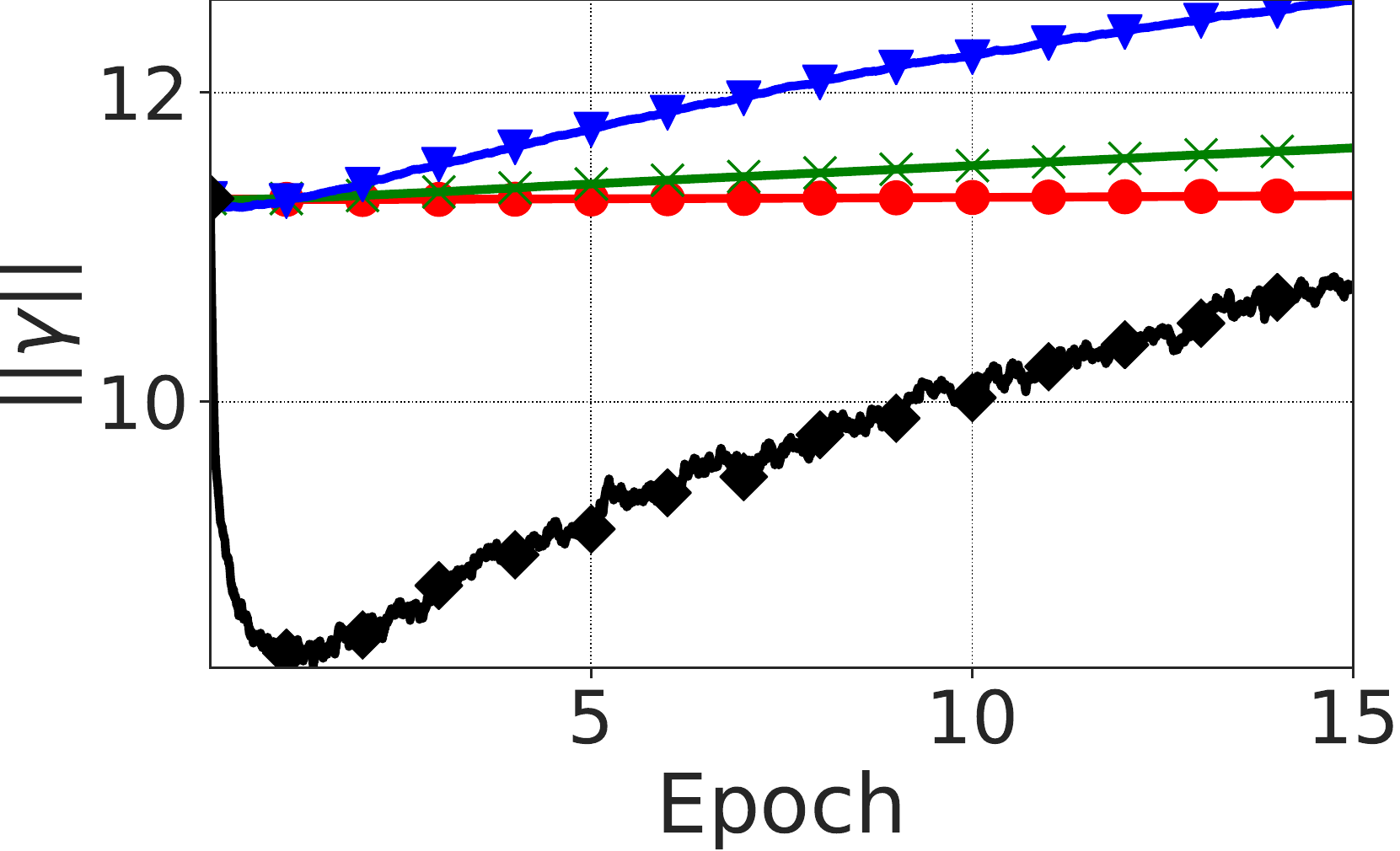}};
     \node[inner sep=0pt] (g2) at (3.4,0)
    {\includegraphics[height=0.14\textwidth]{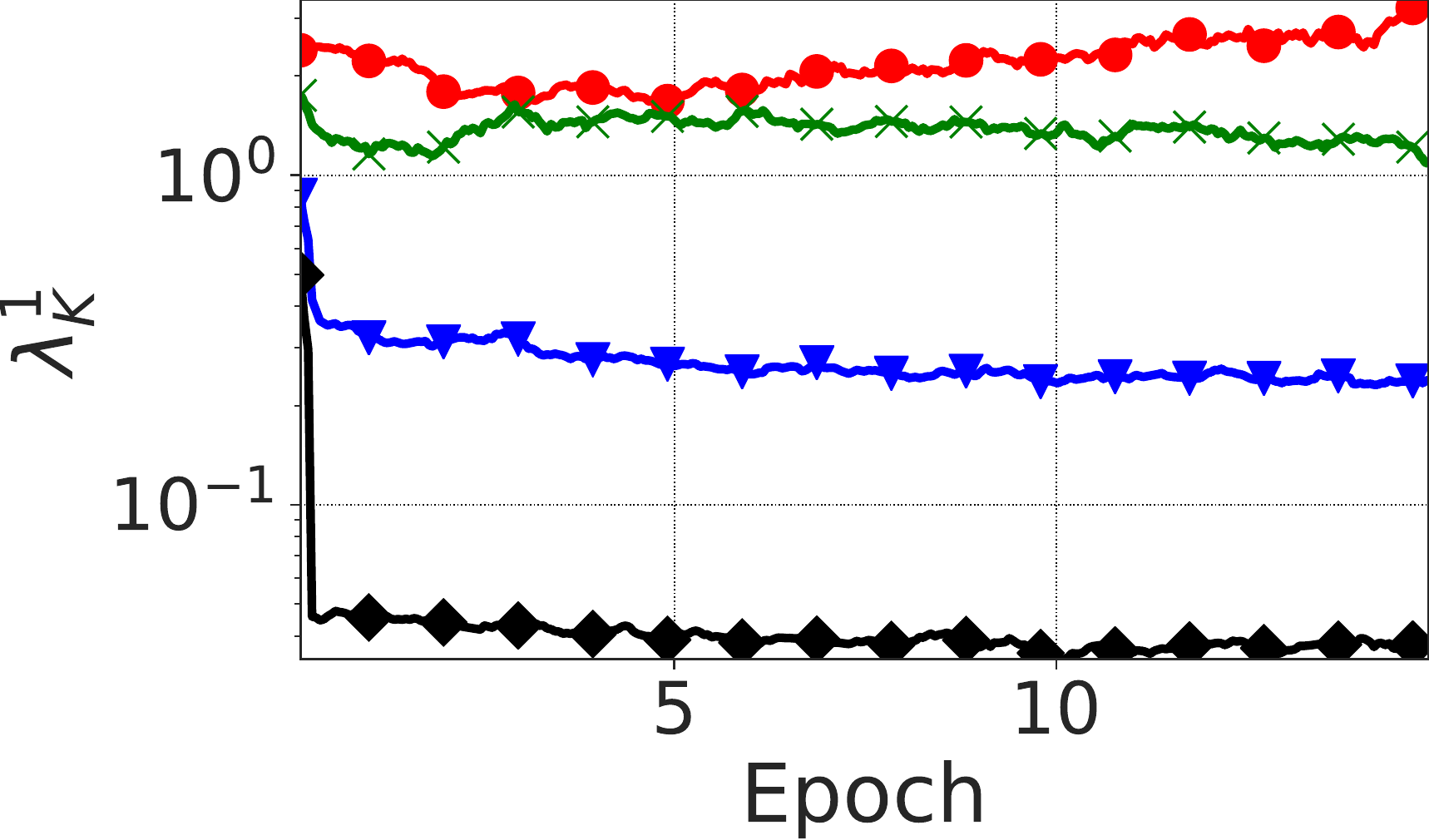}};
     \node[inner sep=0pt] (g3) at (7.1,0)
    {\includegraphics[height=0.14\textwidth]{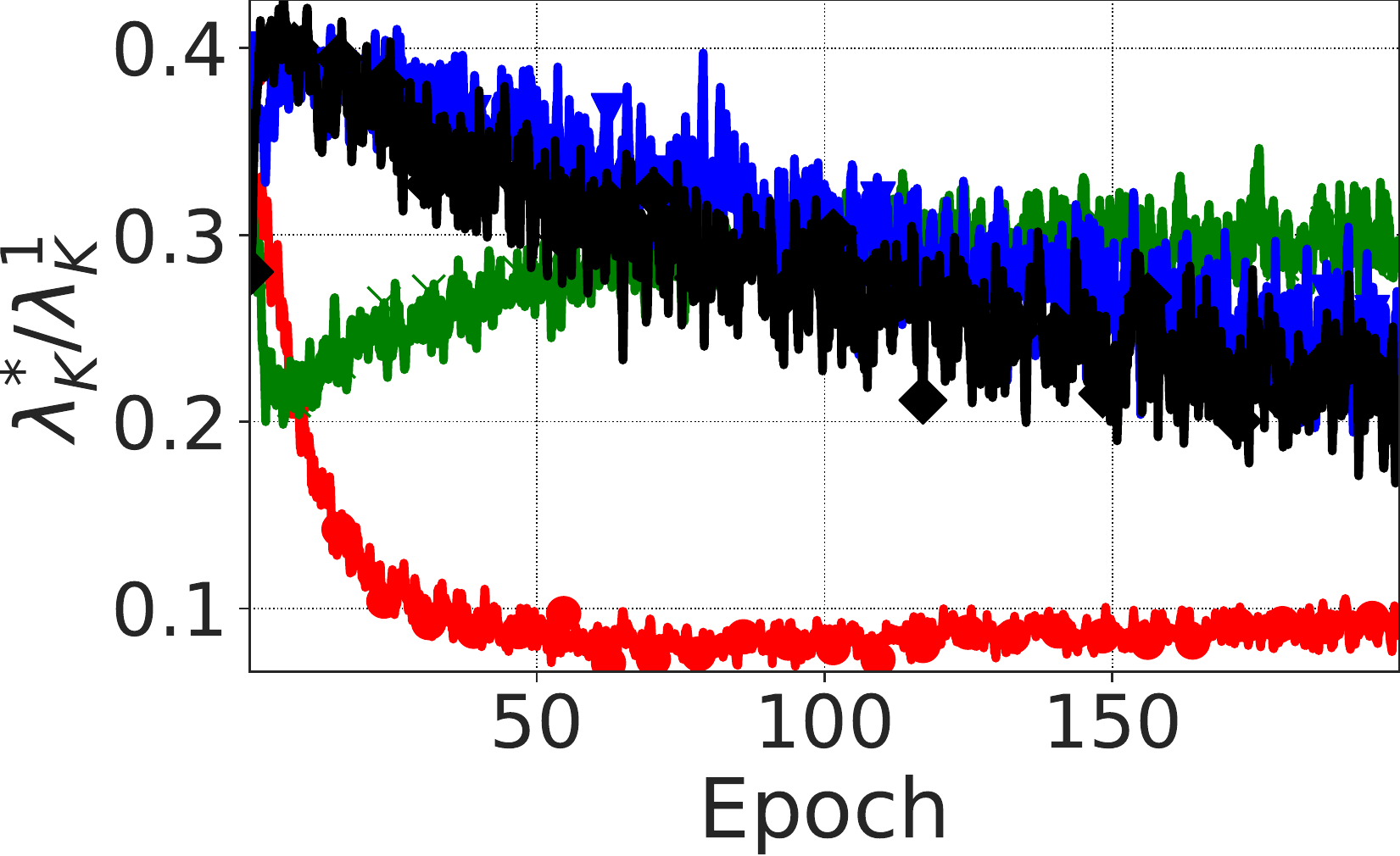}};
     \node[inner sep=0pt] (g3) at (3.6,-1.2)
   { \includegraphics[height=0.035\textwidth]{fig/1809rerunscnnc10sweeplrlegend.pdf}};
    \end{tikzpicture}
    }
    \end{subfigure}
         \caption{The effect of changing the learning rate on various metrics (see text for details) for SimpleCNN with and without batch normalization layers (SimpleCNN-BN and SimpleCNN).}
    \label{fig:tinysubspace}
\end{figure}

The loss surface of deep neural networks has been widely reported to be ill-conditioned~\citep{lecun2012,martens2016second}. Recently, \citet{ghorbani2019,Page2019} argued that the key reason behind the efficacy of batch normalization~\citep{ioeffe2015} is improving conditioning of the loss surface. In Conjecture~\ref{prop:conj2} we suggest that using a high $\eta$ (or a small $S$) results in improving the conditioning of $\mathbf{K}$ and $\mathbf{H}$. A natural question that we investigate in this section is how the two phenomena are related. We study here the effect of learning rate, and report in App.~\ref{app:sec:bn} an analogous study for batch size.

\paragraph{Are the two conjectures valid in networks with batch normalization layers?} 

First, to investigate whether our conjectures hold in networks with batch normalization layers, we run similar experiments as in Sec.~\ref{sec:expvalidate} with a SimpleCNN model with batch normalization layers inserted after each layer (SimpleCNN-BN), on the CIFAR-10 dataset. We test $\eta \in \{0.001, 0.01, 0.1, 1.0\}$ (using $\eta=1.0$ leads to divergence of SimpleCNN without BN). We summarize the results in Fig.~\ref{fig:tinysubspace}. We observe that the evolution of $\lambda_K^*/{\lambda_K^1}$ and $\lambda_K^1$ is consistent with both Conjecture~\ref{prop:conj1} and Conjecture~\ref{prop:conj2}.

\paragraph{A closer look at the early phase of training.} 

To further corroborate that our analysis applies to networks with batch normalization layers, we study the early phase of training of SimpleCNN-BN, complementing the results in Sec.~\ref{sec:earlyphase}. 

We observe in Fig.~\ref{fig:tinysubspace} (bottom) that training of SimpleCNN-BN starts in a region characterized by a relatively high $\lambda_K^1$. This is consistent with prior work showing that networks with batch normalization layers can exhibit gradient explosion in the first iteration~\citep{yang2019}. The value of $\lambda_K^1$ then decays for all but the lowest $\eta$. This behavior is consistent with our theoretical model. We also track the norm of the scaling factor in the batch normalization layers, $\|\gamma\|$, in the last layer of the network in Fig.~\ref{fig:tinysubspace} (bottom). It is visible that $\eta=1.0$ and $\eta=0.1$ initially decrease the value of $\|\gamma\|$, which we hypothesize to be one of the mechanisms due to which high $\eta$ steers optimization towards better conditioned regions of the loss surface in batch-normalized networks. Interestingly, this seems consistent with \citet{luo2018towards} who argue that using mini-batch statistics in batch normalization acts as an implicit regularizer by reducing $\|\gamma\|$.

\paragraph{Using batch normalization requires using a high learning rate.}

As our conjectures hold for SimpleCNN-BN, a natural question is if the loss surface can be ill-conditioned with a low learning rate even when batch normalization is used. \citet{ghorbani2019} show that without batch normalization, mini-batch gradients are largely contained in the subspace spanned by the top eigenvectors of noncentered $\mathbf{K}$. To answer this question we track ${\|g\|}/{\|g_5\|}$, where $g$ denotes the mini-batch gradient, and $g_5$ denotes the mini-batch gradient projected onto the top $5$ eigenvectors of $\mathbf{K}$. A value of ${\|g\|}/{\|g_5\|}$ close to $1$ implies that the mini-batch gradient is mostly contained in the subspace spanned by the top $5$ eigenvectors of $\mathbf{K}$. 

We compare two settings: SimpleCNN-BN optimized with $\eta=0.001$, and SimpleCNN optimized with $\eta=0.01$. We make three observations. First, the maximum and minimum values of ${\|g\|}/{\|g_5\|}$ are $1.90$ ($1.37$) and $2.02$ ($1.09$), respectively. Second, the maximum and minimum values of $\lambda_K^1$ are $12.05$ and $3.30$, respectively. Finally, $\lambda_{K}^*/{\lambda_K^1}$ reaches $0.343$ in the first setting, and $0.24$ in the second setting. Comparing these differences to differences that are induced by using the highest $\eta=1.0$ in SimpleCNN-BN, we can conclude that using a large learning rate is necessary to observe the effect of loss smoothing which was 
previously attributed to batch normalization alone~\citep{ghorbani2019,Page2019,Bjorck2018}. This might be directly related to the result that using a high learning rate is necessary to achieve good generalization when using batch normalization layers~\citep{Bjorck2018}.

\paragraph{Summary.} 

We have shown that the effects of the learning rate predicted in Conjecture \ref{prop:conj1} and Conjecture \ref{prop:conj2} hold for a network with batch normalization layers, and that \emph{using a high learning rate is necessary in a network with batch normalization layers to improve conditioning of the loss surface, compared to conditioning of the loss surface in the same network without batch normalization layers.} 

\section{Conclusion}

Based on our theoretical model, we argued for the existence of the break-even point on the optimization trajectory induced by SGD. We presented evidence that hyperparameters used in the early phase of training control the spectral norm and the conditioning of $\mathbf{K}$ (a matrix describing noise in the mini-batch gradients) and $\mathbf{H}$ (a matrix describing local curvature of the loss surface) after reaching the break-even point. In particular, using a large initial learning rate steers training to better conditioned regions of the loss surface, which is beneficial from the optimization point of view.

A natural direction for the future is connecting our observations to recent studies on the relation of measures, such as gradient variance, to the generalization of deep networks~\citep{Li2019,jiang2020fantastic,fort2019stiffness}. Our work shows that the hyperparameters of SGD control these measures after the break-even point. Another interesting direction is to understand the connection between the existence of the break-even point and the existence of the \emph{critical learning period} in training of DNNs~\citep{soatto2017}.

\subsubsection*{Acknowledgments}

KC thanks NVIDIA and eBay for their support. SJ thanks Amos Storkey and Luke Darlow for fruitful discussions.

\bibliography{misc/iclr2020_conference}
\bibliographystyle{misc/iclr2020_conference}

\newpage
\appendix

\section{Additional visualization of the break-even point}
\label{app:fig_1}

We include here an analogous Figure to Fig.~\ref{fig:eyecandy}, but visualizing the conditioning of the covariance of gradients ($\lambda_K^*/\lambda_K^1$, left), the trace of the covariance of gradients ($\mathrm{Tr}(\mathbf{K})$, middle), and the spectral norm of the Hessian ($\lambda_H^1$, right). 

\begin{figure}[H]
\begin{center}
\begin{minipage}[b]{0.3\linewidth}
\centering
\includegraphics[height=3.8cm]{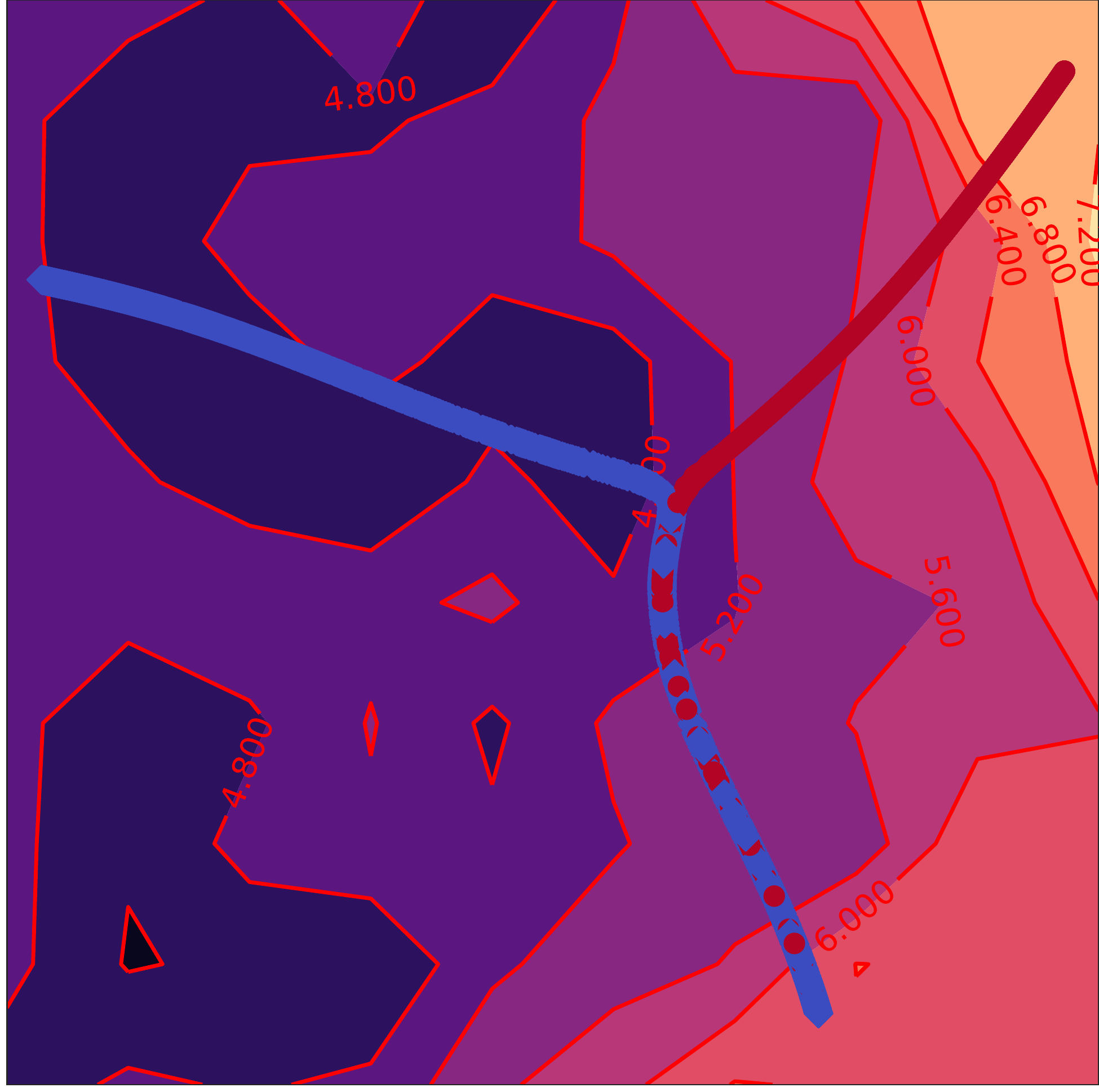} 
\vspace{0.1cm}
\includegraphics[width=3cm]{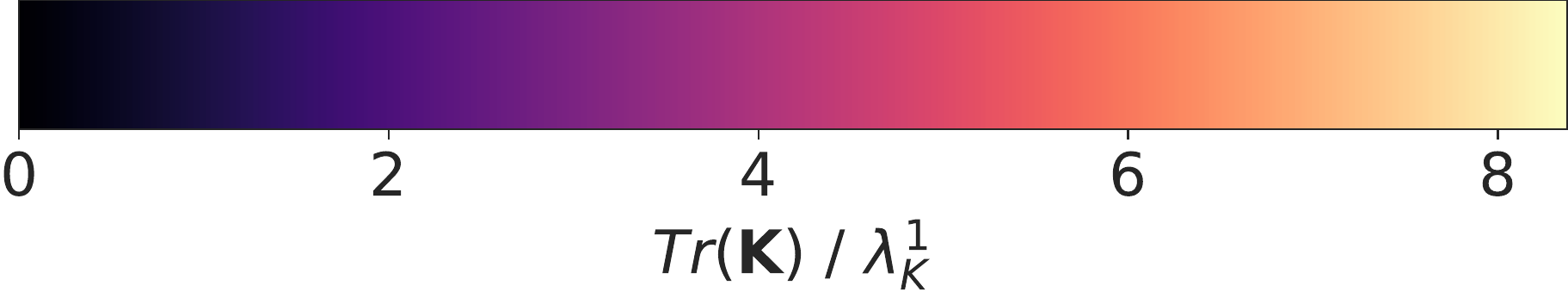}
\end{minipage} 
\begin{minipage}[b]{0.3\linewidth}
\centering
\includegraphics[height=3.8cm]{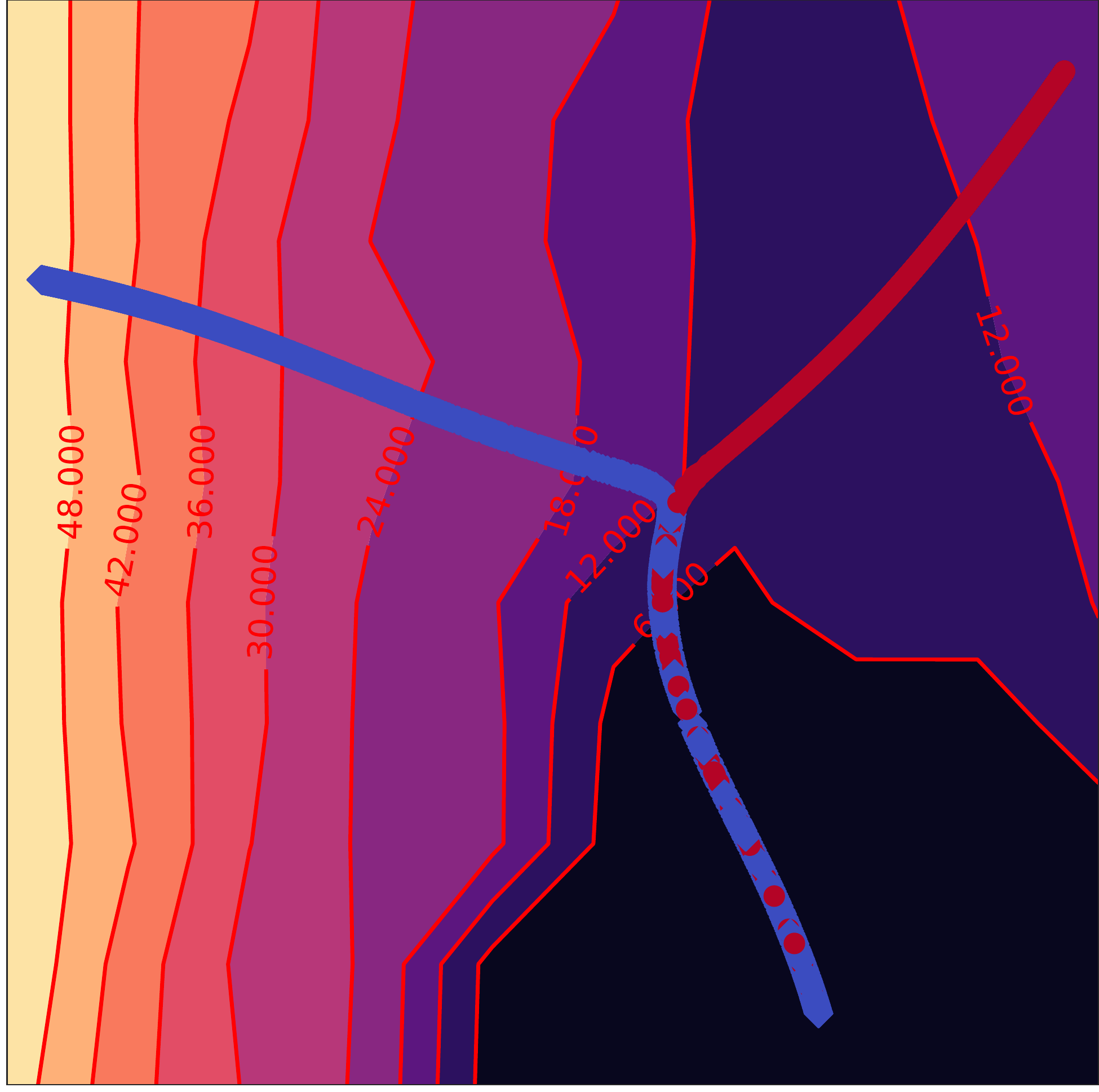} 
\vspace{0.1cm}
\hspace{0.1cm}
\includegraphics[width=3cm]{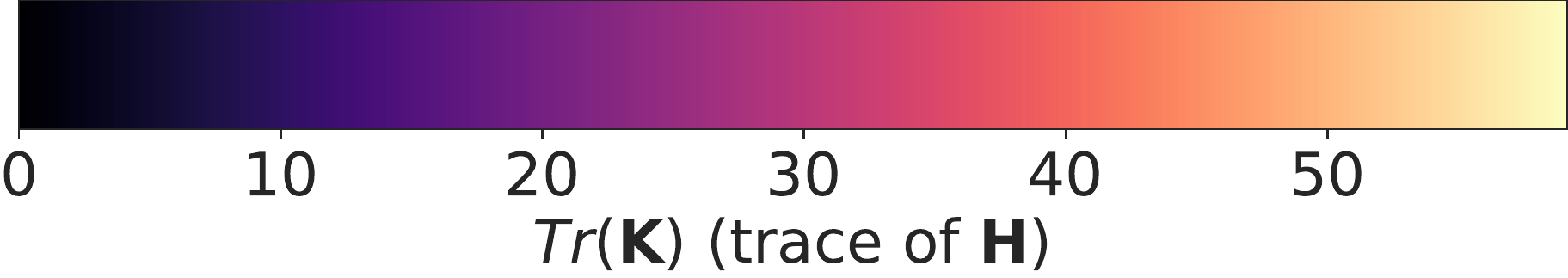}
\vspace{0.05cm} 
\end{minipage}
\begin{minipage}[b]{0.3\linewidth}
\centering
\includegraphics[height=3.8cm]{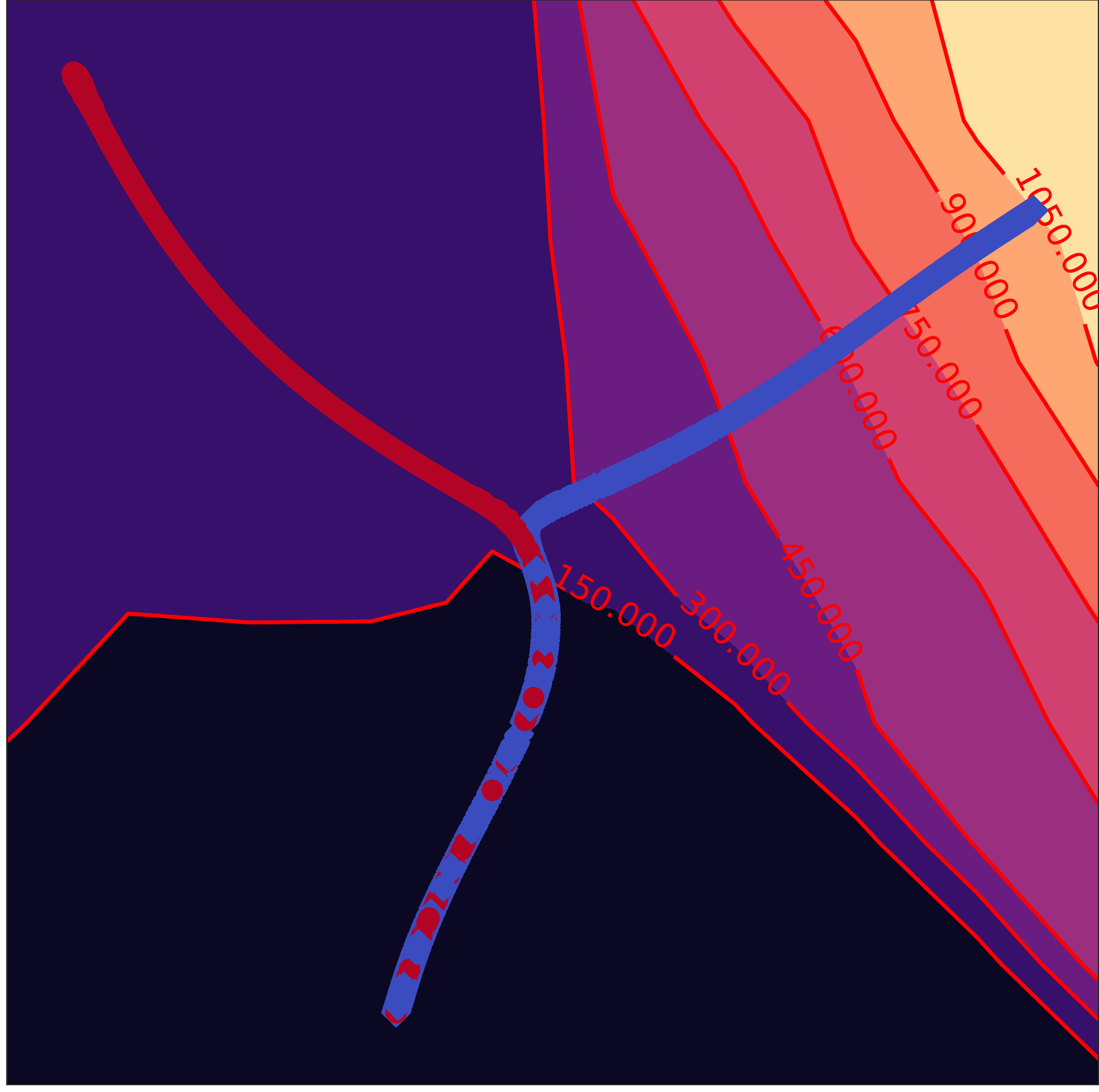} 
\vspace{0.1cm}
\hspace{0.1cm}
\includegraphics[width=3cm]{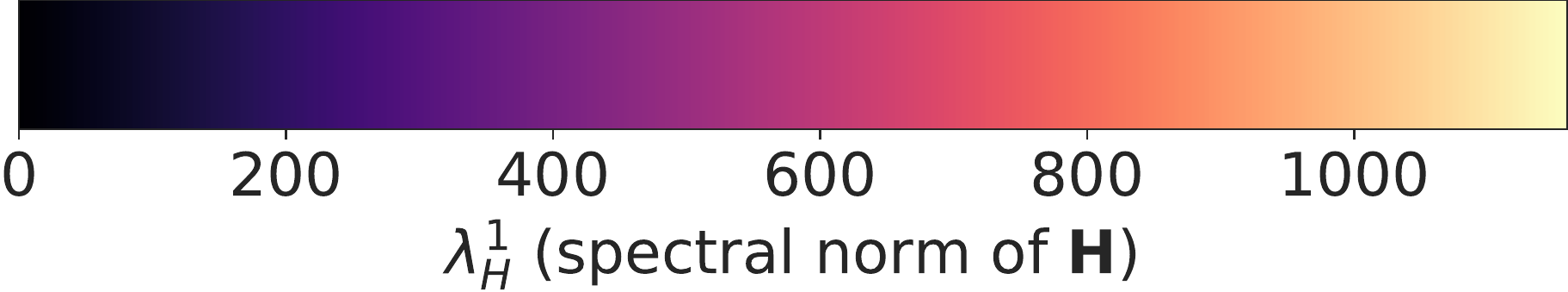}
\vspace{0.05cm} 
\end{minipage}
\end{center}
        \caption{Analogous to Fig.~\ref{fig:eyecandy}. The background color indicates the conditioning of the covariance of gradients $\mathbf{K}$ ($\lambda_K^*/\lambda_K^1$, left), the trace of the covariance of gradients ($\mathrm{Tr}(\mathbf{K})$, middle), and the spectral norm of the Hessian ($\lambda_H^1$, right).}
    \label{app:fig:eyecandy}
\end{figure}

\section{Proofs}
\label{app:proofs}

In this section we formally state and prove the theorem used in Sec.~\ref{sec:theory}. With the definitions introduced in Sec.~\ref{sec:theory} in mind, we state the following:

\begin{theorem}
\label{th:breakeven1}
Assuming that training is stable along $e_H^1(t)$ at $t=0$, then $\lambda_H^1(t)$ and $\lambda_K^1(t)$ at which SGD becomes unstable along $e_H^1(t)$ are smaller for a larger $\eta$ or a smaller $S$.

\end{theorem}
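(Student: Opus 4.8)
The plan is to reduce the entire statement to a monotonicity claim about the upper endpoint of the stability interval cut out by the condition in Eq.~\ref{eq:stab}. First I would dispose of the $\lambda_K^1$ half using Assumption 2: since $\lambda_K^1 = \alpha\,\lambda_H^1$ with $\alpha>0$ in the regime considered (where $\lambda_H^1(t)>0$ and $\lambda_K^1>0$ are top eigenvalues), any monotone dependence of the critical $\lambda_H^1$ on $\eta$ and $S$ transfers verbatim to $\lambda_K^1$. So it suffices to analyze $\lambda_H^1$, and I may take Eq.~\ref{eq:stab} (proved in \citet{Wu2018}) as given.

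Next I would fix $c = c(S) = \frac{N-S}{S(N-1)} \ge 0$ and write the LHS of Eq.~\ref{eq:stab} as $\Phi(\lambda;\eta,S) = (1-\eta\lambda)^2 + c\,s^2\eta^2$, where $\lambda = \lambda_H^1(t)$. For fixed $\eta,S$ the set $\{\lambda : \Phi\le 1\}$ is the interval $[\lambda_-,\lambda_+]$ with $\lambda_\pm = \bigl(1 \pm \sqrt{1 - c\,s^2\eta^2}\,\bigr)/\eta$, nonempty exactly in the convergent regime $c\,s^2\eta^2 \le 1$ flagged in the paper. By hypothesis training is stable at $t=0$, so $\lambda(0)\in[\lambda_-,\lambda_+]$; by Assumption 4, $\lambda_H^1(t)$ increases monotonically until stability is lost. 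Hence $\lambda$ exits the interval through its upper endpoint, and the break-even curvature is $\lambda^{\mathrm{be}} = \lambda_+ = \bigl(1 + \sqrt{1 - c\,s^2\eta^2}\,\bigr)/\eta$.

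It then remains to show $\lambda_+$ is strictly decreasing in $\eta$ and strictly increasing in $S$. For $\eta$ I would differentiate directly: with $r = \sqrt{1 - c\,s^2\eta^2}$ and the identity $c\,s^2\eta^2 = 1-r^2$, the numerator telescopes and yields $\frac{d\lambda_+}{d\eta} = -\frac{1+r}{\eta^2 r} < 0$. For $S$ I would note that $c(S)$ is decreasing in $S$ while $\lambda_+$ is decreasing in $c$ (a larger $c$ shrinks the square root), so $\lambda_+$ is increasing in $S$; equivalently, a smaller $S$ produces a smaller critical $\lambda_H^1$. Combined with the first paragraph this establishes the theorem for both $\lambda_H^1$ and $\lambda_K^1$.

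The main obstacle I anticipate is that $s^2 = \mathrm{Var}[H_i(t)]$ is itself a function of $t$, so the stability boundary drifts as the trajectory advances and the closed form for $\lambda_+$ is exact only when $s^2$ is held fixed. To handle this robustly I would instead apply the implicit function theorem to $F(\lambda,\eta,S) = \Phi(\lambda;\eta,S) - 1$ at the upper crossing. There $\partial F/\partial\lambda > 0$ (the trajectory passes from the stable interior to the unstable exterior) and, because $\lambda > 1/\eta$ at that crossing, $\partial F/\partial\eta = 2\lambda(\eta\lambda-1) + 2c\,s^2\eta > 0$ and $\partial F/\partial S = s^2\eta^2\,c'(S) < 0$, giving $d\lambda^{\mathrm{be}}/d\eta < 0$ and $d\lambda^{\mathrm{be}}/dS > 0$ without assuming $s^2$ constant, provided $s^2(\lambda)$ does not decrease so steeply as to flip the sign of $\partial F/\partial\lambda$. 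The remaining care-point is verifying that the trajectory exits through $\lambda_+$ and not $\lambda_-$, which I would justify from starting inside $[\lambda_-,\lambda_+]$ together with the monotone growth of $\lambda_H^1(t)$ in Assumption 4.
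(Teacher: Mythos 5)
Your core computation is correct and takes a genuinely different route from the paper. You parametrize the stability set of Eq.~\ref{eq:stab} as an explicit interval $[\lambda_-,\lambda_+]$ with $\lambda_\pm=(1\pm\sqrt{1-c\,s^2\eta^2})/\eta$, argue exit through the upper endpoint, and differentiate $\lambda_+$ in $\eta$ and $S$; the derivative $-\tfrac{1+r}{\eta^2 r}$ and the monotonicity of $c(S)$ are both right, and reducing the $\lambda_K^1$ claim to the $\lambda_H^1$ claim via Assumption~2 matches what the paper implicitly does. The paper instead substitutes $s^2(t)=\mathrm{Var}[H_i(t)]$ using Assumptions~1 and~2, writing $\mathrm{Var}[g_i(\psi)]=\psi^2 s^2(t)=\lambda_K^1(t)=\alpha\lambda_H^1(t)$, which makes the critical curvature explicit as $\lambda_H^1(t^*)=\bigl(2-\tfrac{\alpha}{\psi(t^*)}\tfrac{(n-S)}{(n-1)}\tfrac{\eta}{S}\bigr)/\eta$; it then handles the trajectory dependence by contradiction, using Assumption~4's coupling that $\psi$ decreases monotonically as $\lambda_H^1$ increases before the break-even point. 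This is exactly the device that closes the gap you honestly flag at the end: your closed form for $\lambda_+$ is exact only for fixed $s^2$, and your implicit-function-theorem patch still carries the undischarged proviso that $s^2$ not vary too steeply along the trajectory, plus the unaddressed fact that the two trajectories being compared reach their break-even points with \emph{different} values of $s^2$ (since $\lambda_+$ is also decreasing in $s^2$, the $\eta$- and $s^2$-effects can compete). To finish your argument you should replace the free parameter $s^2$ by $\alpha\lambda_H^1/\psi^2$ as the paper does and invoke the monotone relation between $\psi$ and $\lambda_H^1$ from Assumption~4, rather than leaving the behavior of $s^2(\lambda)$ as a hypothesis. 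What your approach buys is a cleaner and more quantitative picture (an explicit stability interval and explicit derivatives, including the special case $S=N$ giving $2/\eta$ for free); what the paper's approach buys is that the nuisance quantity is controlled by the stated assumptions rather than by an extra regularity condition. Your care-point about exiting through $\lambda_+$ rather than $\lambda_-$ is handled at the same level of informality as the paper's own proof, so it is not a defect relative to the source.
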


\begin{proof}

This theorem is a straightforward application of Theorem 1 from \citet{Wu2018} to the early phase of training. First, let us consider two optimization trajectories corresponding to using two different learning rates $\eta_1$ and $\eta_2$ ($\eta_1 > \eta_2$). For both trajectories, Theorem 1 of \citet{Wu2018} states that SGD is stable at an iteration $t$ if the following inequality is satisfied:

\begin{equation}
\label{eq:wu}
  (1 - \eta \lambda_H^1(t))^2 + s^2(t) \frac{\eta^2 (n - S)}{S(n - 1)} \leq 1 ,
\end{equation}

where $s(t)=\mathrm{Var[H_i(t)]}$. Using Assumption 1 we get $\mathrm{Var}[g_i(\psi)]=\mathrm{Var}[\psi H_i] = \psi^2 s(t)^2$. Using this we can rewrite inequality (\ref{eq:wu}) as:

\begin{equation}
\label{eq:instability}
(1 - \eta \lambda_H^1(t))^2 + \frac{{\lambda_K^1(t)}}{\psi(t)^2 } \frac{\eta^2 (n - S)}{S(n - 1)} \leq 1.
\end{equation}

To calculate the spectral norm of the Hessian at iteration $t^*$ at which training becomes unstable we equate the two sides of inequality (\ref{eq:instability}) and solve for $\lambda_H^1(t^*)$, which gives

\begin{equation}
\label{eq:bestlambda}
\lambda_H^1(t^*) = \frac{2 - \frac{\alpha}{\psi(t^*)} \frac{(n - S)}{(n - 1)}  \frac{\eta}{S}}{\eta},
\end{equation}

where $\alpha$ denotes the proportionality constant from Assumption 2. Note that if $n=S$ the right hand side degenerates to $\frac{2}{\eta}$, which completes the proof for $n=S$. 

To prove the general case, let denote by $\lambda_H^1(t_1^*)$ and $\lambda_H^1(t_2^*)$ the value of $\lambda_H^1$ at which training becomes unstable along $e_H^1$ for $\eta_1$ and $\eta_2$, respectively. Similarly, let us denote by $\psi(t_1^*)$ and $\psi(t_2^*)$ the corresponding values of $\psi$.

Let us assume by contradiction that $\lambda_H^1(t^*_1) > \lambda_H^1(t^*_2)$. A necessary condition for this inequality to hold is that $\psi(t^*_1) > \psi(t^*_2)$. However, Assumption 4 implies that prior to reaching the break-even point $\psi(t)$ decreases monotonically with increasing $\lambda_H^1(t^*_1)$, which corresponds to reducing distance to the minimum along $e_H^1$, which contradicts $\lambda_H^1(t^*_1) > \lambda_H^1(t^*_2)$. Repeating the same argument for two trajectories optimized using two different batch sizes completes the proof.

\end{proof}

It is also straightforward to extend the argument to the case when training is initialized at an unstable region along $e_H^1(0)$, which we formalize as follows.

\begin{theorem}
\label{th}
 If training is unstable along $e_H^1(t)$ at $t=0$, then $\lambda_H^1(t)$ and $\lambda_K^1(t)$ at which SGD becomes for the first time stable along $e_H^1(t)$ are smaller for a larger $\eta$ or a smaller $S$.
\end{theorem}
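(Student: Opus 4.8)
The plan is to mirror the proof of Theorem~\ref{th:breakeven1} almost verbatim, since the stability boundary itself does not care about the direction in which it is crossed. The necessary-and-sufficient condition~(\ref{eq:instability}) and the resulting expression~(\ref{eq:bestlambda}) for the value $\lambda_H^1(t^*)$ at which equality holds depend only on $\eta$, $S$, $\psi(t^*)$, and the constant $\alpha$ from Assumption~2; they characterize the boundary regardless of whether the trajectory passes from the stable region into the unstable one (Theorem~\ref{th:breakeven1}) or, as here, from the unstable region into the stable one. In particular, when $N=S$ the second term in~(\ref{eq:bestlambda}) vanishes and the boundary is attained at $\lambda_H^1(t^*)=2/\eta$, which is strictly smaller for larger $\eta$; this settles the case $N=S$ immediately, and $\lambda_K^1(t^*)=\alpha\lambda_H^1(t^*)$ transfers the ordering to $\mathbf{K}$.

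For the general case I would again argue by contradiction. Fix $\eta_1>\eta_2$ with both trajectories initialized at the same $\theta_0$ (now in a region where SGD is unstable along $e_H^1$), let $t_1^*$ and $t_2^*$ be the first iterations at which each trajectory becomes stable, and suppose $\lambda_H^1(t_1^*)>\lambda_H^1(t_2^*)$. Substituting the two boundary identities from~(\ref{eq:bestlambda}) and using $\eta_1>\eta_2$ to control the $2/\eta$ terms yields, by exactly the rearrangement used for Theorem~\ref{th:breakeven1}, the necessary condition $\psi(t_1^*)>\psi(t_2^*)$: the only way a larger learning rate can meet the boundary at a \emph{larger} curvature is for its distance to the minimum along $e_H^1$ to be larger there.

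The one place where the present statement differs is the monotonicity input, and here I would replace Assumption~4 by Assumption~3. While training is unstable along $e_H^1$, Assumption~3 states that $\lambda_H^1(t)$ decreases and the distance $\psi(t)$ increases at each step; hence along the common curve traced out from $\theta_0$, $\psi$ is again a \emph{decreasing} function of $\lambda_H^1$, merely traversed in the opposite temporal order compared with the stable phase. Consequently $\lambda_H^1(t_1^*)>\lambda_H^1(t_2^*)$ forces $\psi(t_1^*)<\psi(t_2^*)$, directly contradicting the necessary condition above. This gives $\lambda_H^1(t_1^*)\le\lambda_H^1(t_2^*)$, and $\lambda_K^1(t_i^*)=\alpha\lambda_H^1(t_i^*)$ transfers the conclusion to $\mathbf{K}$; the identical contradiction run with $S_1<S_2$ in place of $\eta_1>\eta_2$ (and $N\gg S$) handles the batch-size claim.

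The step I expect to be the genuine obstacle is the same one that is glossed over in Theorem~\ref{th:breakeven1}: justifying that both learning rates trace out a \emph{common} monotone relation between $\psi$ and $\lambda_H^1$, so that comparing the two exit points $(\psi(t_i^*),\lambda_H^1(t_i^*))$ on a single curve is legitimate. Formally this requires that the $\psi$--$\lambda_H^1$ dependence be a property of the loss surface along $e_H^1$ rather than of the step size, with $\eta$ affecting only how fast one moves along that curve; I would fold this into the content of Assumption~3 and note additionally that the first stable iterate is well defined, since the decreasing $\lambda_H^1(t)$ and the threshold $\lambda_H^1(t^*)$ of~(\ref{eq:bestlambda}) (which increases toward $2/\eta$ as $\psi$ grows) must eventually meet.
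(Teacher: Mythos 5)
Your proposal is correct and follows essentially the same route as the paper's own proof: both reuse the stability-boundary identity~(\ref{eq:bestlambda}) from Theorem~\ref{th:breakeven1}, derive the necessary condition $\psi(t_1^*) > \psi(t_2^*)$ (under $\frac{n-S}{n-1}\approx 1$), and reach a contradiction via Assumption~3, which makes $\psi$ increase while $\lambda_H^1$ decreases during the unstable phase. The only difference is presentational: you explicitly flag the implicit reliance on a common monotone $\psi$--$\lambda_H^1$ relation across the two trajectories, which the paper also uses without comment.
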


\begin{proof}

We will use a similar argument as in the proof of Th.~\ref{th:breakeven1}. Let us consider two optimization trajectories corresponding to using two different learning rates $\eta_1$ and $\eta_2$ ($\eta_1 > \eta_2$). 

Let $\lambda_H^1(t^*_1)$ and $\lambda_H^1(t^*_2)$ denote the spectral norm of $\mathbf{H}$ at the iteration at which training is for the first time stable along $e_H^1$ for $\eta_1$ and $\eta_2$, respectively. Following the steps in the proof of Th.~\ref{th:breakeven1} we get that 

\begin{equation}
\label{eq:bestlambda}
\lambda_H^1(t^*) = \frac{2 - \frac{\alpha}{\psi(t^*)} \frac{(n - S)}{(n - 1)}  \frac{\eta}{S}}{\eta}.
\end{equation}

Using the same notation as in the proof of Th.~\ref{th:breakeven1}, let us assume by contradiction that $\lambda_H^1(t^*_1) > \lambda_H^1(t^*_2)$. Assumption 3 implies that $\psi(t)$ increases with decreasing $\lambda_H^1(t)$. Hence, if $\lambda_H^1(t^*_2)$ is smaller, it means $\psi(t^*_2)$ increased to a larger value, i.e. $\psi(t^*_2) > \psi(t^*_1)$. However, if $\frac{(n - S)}{(n - 1)} \approx 1$, a necessary condition for $\lambda_H^1(t^*_1) > \lambda_H^1(t^*_2)$ inequality to hold is that $\psi(t^*_1) > \psi(t^*_2)$, which leads to a  contradiction. Repeating the same argument for batch size completes the proof.

\end{proof}

\section{Approximating the eigenspace of $\mathbf{K}$ and $\mathbf{H}$}
\label{app:approx}

Using a small subset of the training set suffices to approximate well the largest eigenvalues of the Hessian on the CIFAR-10 dataset~\citep{alain2019}.\footnote{This might be due to the \emph{hierarchical} structure of the Hessian~\citep{papayan2019,fort2019emergent}. In particular, they show that the largest eigenvalues of the Hessian are connected to the class structure in the data.} Following \citet{alain2019} we use approximately the same $5\%$ fraction of the dataset in CIFAR-10 experiments, and the \textsc{scipy} Python package. We use the same setting on the other datasets as well.

Analysing the eigenspace of $\mathbf{K}$ is less common in deep learning. For the purposes of this paper, we are primarily interested in estimating ${\lambda_K}^*$ and $\lambda_K$. We also estimate $\mathrm{Tr}({\mathbf{K}})$. It is worth noting that $\mathrm{Tr}({\mathbf{K}})$ is related to the variance in gradients as $\frac{1}{D}\mathrm{Tr}({\mathbf{K}}) = \frac{1}{D} \frac{1}{N} \sum_{i=1}^N || g - g_i ||^2, $ where $g$ is the full-batch gradient, $D$ is the number of parameters and $N$ is the number of datapoints. 

The naive computation of the eigenspace of $\mathbf{K}$ is infeasible for realistically large deep networks due to the quadratic cost in the number of parameters. Instead, we compute $\mathbf{K}$ using mini-batches. To avoid storing a $D\times D$ matrix in memory, we first sample $L$ mini-batch gradient of size $M$ and compute the corresponding Gram matrix $\mathbf{K}^M$ that has entries ${\mathbf{{K}}^M_{ij}} = \frac{1}{L} \langle g_i - g, g_j - g \rangle$, where $g$ is the full-batch gradient, which we estimate based on the $L$ mini-batches. To compute the eigenspace of $\mathbf{K}^M$ we use \textsc{svd} routine from the NumPy package. We take the $(L-1)^{th}$ smallest eigenvalue as the smallest non-zero eigenvalue of $\mathbf{K}^M$ (covariance matrix computed using $L$ observations has by definition $L-1$ non-zero eigenvalues).

\citet{papayan2019,fort2019emergent} show that the top eigenvalues of $\mathbf{H}$ emerge due to clustering of gradients of the logits. See also \citet{fort2019emergent}. Based on the proximity of the largest eigenvalues of $\mathbf{H}$ and $\mathbf{K}$, this observation suggests that the top eigenvalue of $\mathbf{K}$ might be similar to that of $\mathbf{K}^M$. To investigate this, we run the following experiment on the CIFAR-10 dataset using SimpleCNN. We estimate $\lambda_K^1$ using $M=1$ and $M=128$, in both cases using the whole training set. We subsample the dataset to $10\%$ to speed up the computation. Fig.~\ref{app:fig:stabilityK} shows a strong correlation between $\lambda_K^1$ computed with $M=1$ and with $M=128$, for three different learning rates.

\begin{figure}[H]
    \centering
    \includegraphics[width=0.32\textwidth]{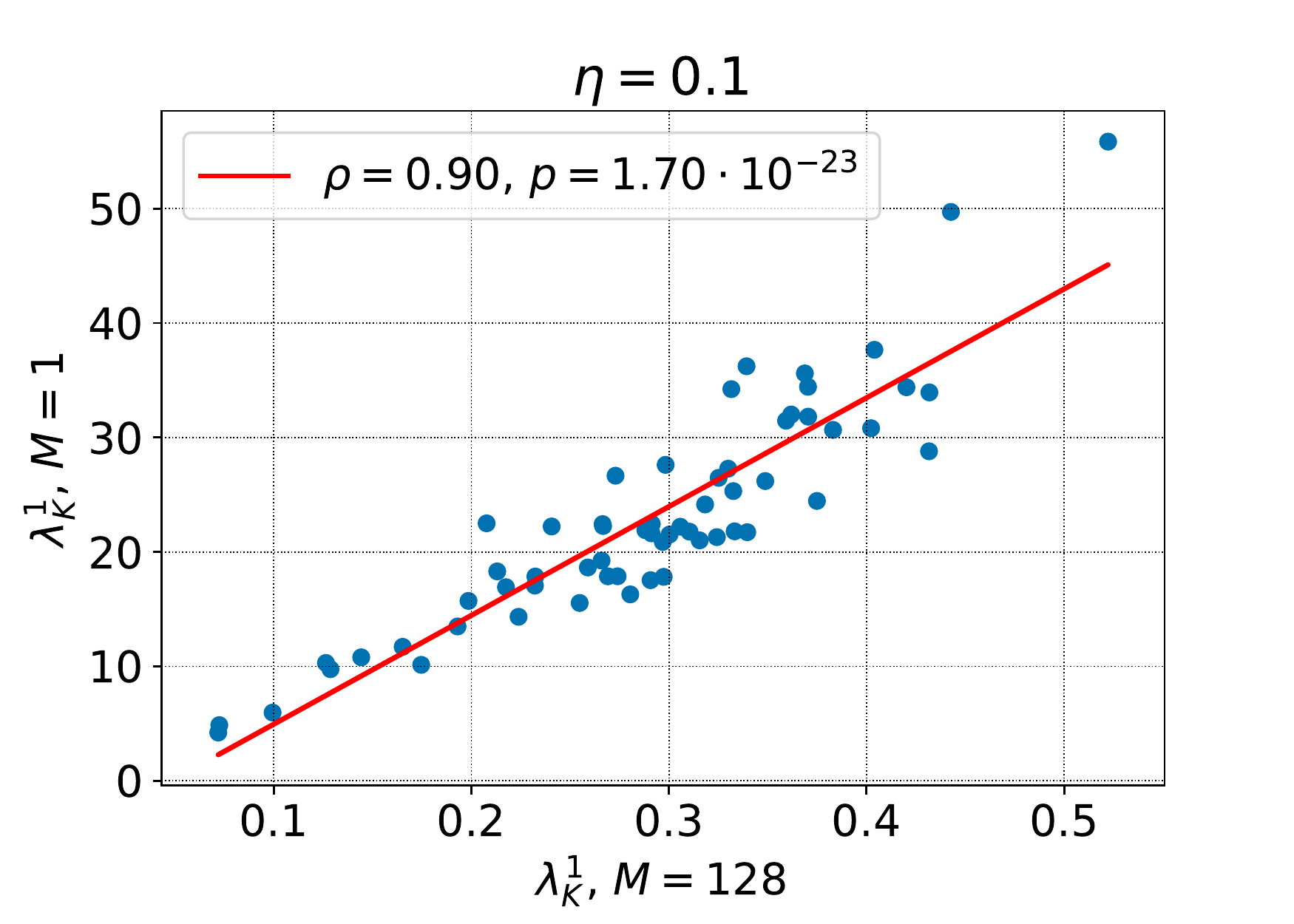}%
        \includegraphics[width=0.32\textwidth]{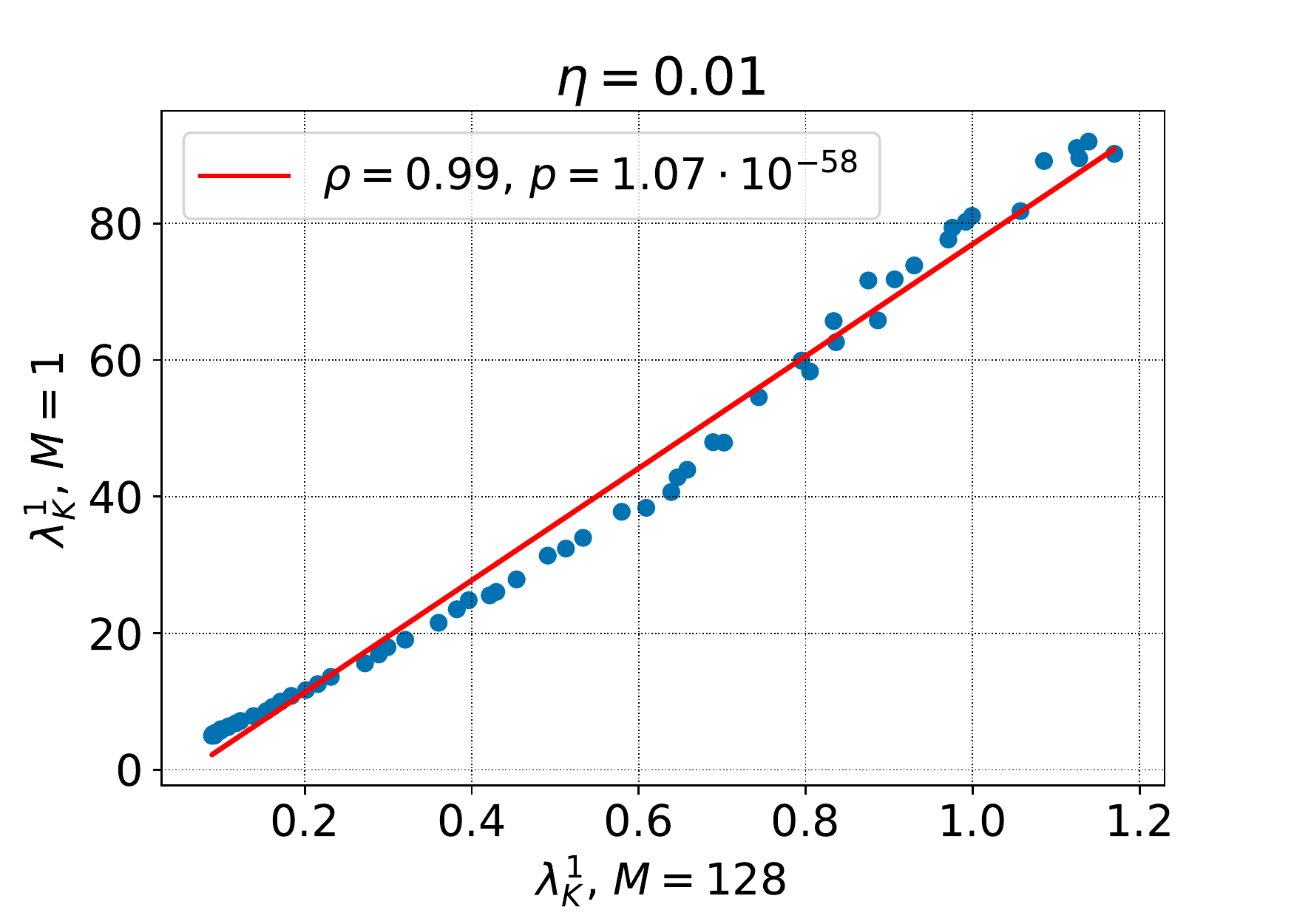}%
            \includegraphics[width=0.32\textwidth]{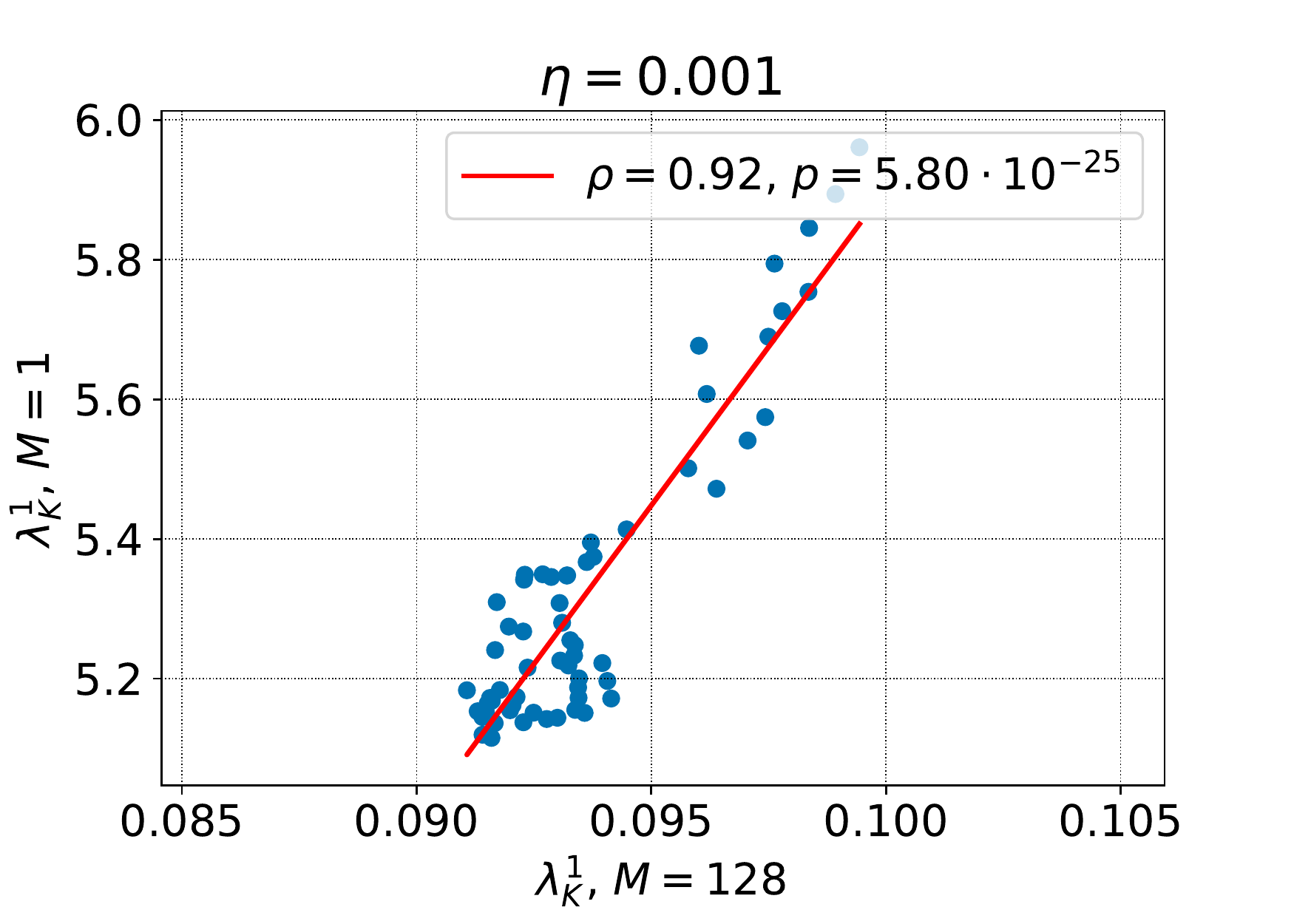}%
        \caption{Pearson correlation between $\lambda_K^1$ calculated using either $M=128$ or $M=1$ for three different values of $\eta = 0.1, 0.01$ and $0.001$.}
    \label{app:fig:stabilityK}
\end{figure}

In all experiments we use $L=25$, and for the IMDB dataset we increase the number of mini-batch gradients to $L=200$ to reduce noise (same conclusion hold for $L=25$). For instance, on the CIFAR-10 dataset this amounts to using approximately $5\%$ of the training set. In experiments that vary the batch size we use $M=128$. Otherwise, we use the value $M$ as the batch size used to train the model.

\section{Experimental details for Sec.~\ref{sec:expvalidate}}
\label{app:sec:expdetails}

In this section we describe all the details for experiments in Sec.~\ref{sec:expvalidate}.

\paragraph{ResNet-32 on CIFAR-10.}

ResNet-32~\citep{He2016} is trained for $200$ epochs with a batch size equal to $128$ on the CIFAR-10 dataset. Standard data augmentation and preprocessing is applied. Following \citet{He2016}, we regularize the model using weight decay $0.0001$. We apply weight decay to all convolutional kernels. When varying the batch size, we use learning rate of $0.05$. When varying the learning rate, we use batch size of $128$.

\paragraph{SimpleCNN on CIFAR-10.}

SimpleCNN is a simple convolutional network with four convolutional layers based on Keras examples repository~\citep{chollet2015}. The architecture is as follows. The first two convolutional layers have $32$ filters, and the last two convolutional layers have $64$ filters. After each pair of convolutional layers, we include a max pooling layer with a window size of 2. After each layer we include a ReLU nonlinearity.  The output of the final convolutional layer is processed by a densely connected layer with 128 units and ReLU nonlinearity. When varying the batch size, we use a learning rate of $0.05$. When varying the learning rate, we use a batch size of $128$.

\paragraph{BERT on MNLI.}

The model used in this experiment is the BERT-base from \citet{devlin2018}, pretrained on multilingual data\footnote{The model weights used can be found at \url{https://tfhub.dev/google/bert_multi_cased_L-12_H-768_A-12/1}.}. The model is trained on the MultiNLI dataset~\citep{Williams2017} with the maximum sentence length equal to 40. The network is trained for 20 epochs using a batch size of 32.  Experiments are repeated with three different seeds that control initialization and data shuffling.

\paragraph{MLP on FashionMNIST.}

This experiment is using a multi-layer perceptron with two hidden layers of size 300 and 100, both with ReLU activations. The data is normalized to the $[0, 1]$ range. The network is trained with a batch size of 64 for 200 epochs.

\paragraph{LSTM on IMDB.}

The network used in this experiment consists of an embedding layer followed by an LSTM with 100 hidden units. We use vocabulary size of 20000 words and the maximum length of the sequence equal to 80. The model is trained for 100 epochs. When varying the learning rate, we use batch size of 128. When varying the batch size, we use learning rate of 1.0. Experiments are repeated with two different seeds that control initialization and data shuffling.

\paragraph{DenseNet on ImageNet.}

The network used is the DenseNet-121 from \citet{Huang2016}. The dataset used is the ILSVRC 2012 ~\citep{ILSVRC15}. The images are centered and normalized. No data augmentation is used. Due to large computational cost, the network is trained only for 10 epochs using a batch size of 32. 

\section{Additional experiments for Sec.~\ref{sec:expvalidate}.}
\label{sec:additional_exp}

In this section we include additional data for experiments in Sec.~\ref{sec:expvalidate}, as well as include experiments using MLP trained on the Fashion MNIST dataset.

\paragraph{SimpleCNN on CIFAR-10.} 

In Fig.~\ref{app:fig:resnet32c10lradditional} and Fig.~\ref{app:fig:resnet32c10bsadditional} we report accuracy on the training set and the validation set, $\lambda_H^1$, and $\mathrm{Tr}(\mathbf{K})$ for all experiments with SimpleCNN model on the CIFAR-10 dataset .

    \begin{figure}[H]
    \begin{center}
    
\begin{tikzpicture}
     \node[inner sep=0pt] (g1) at (0,0)
    {   \includegraphics[height=0.14\textwidth]{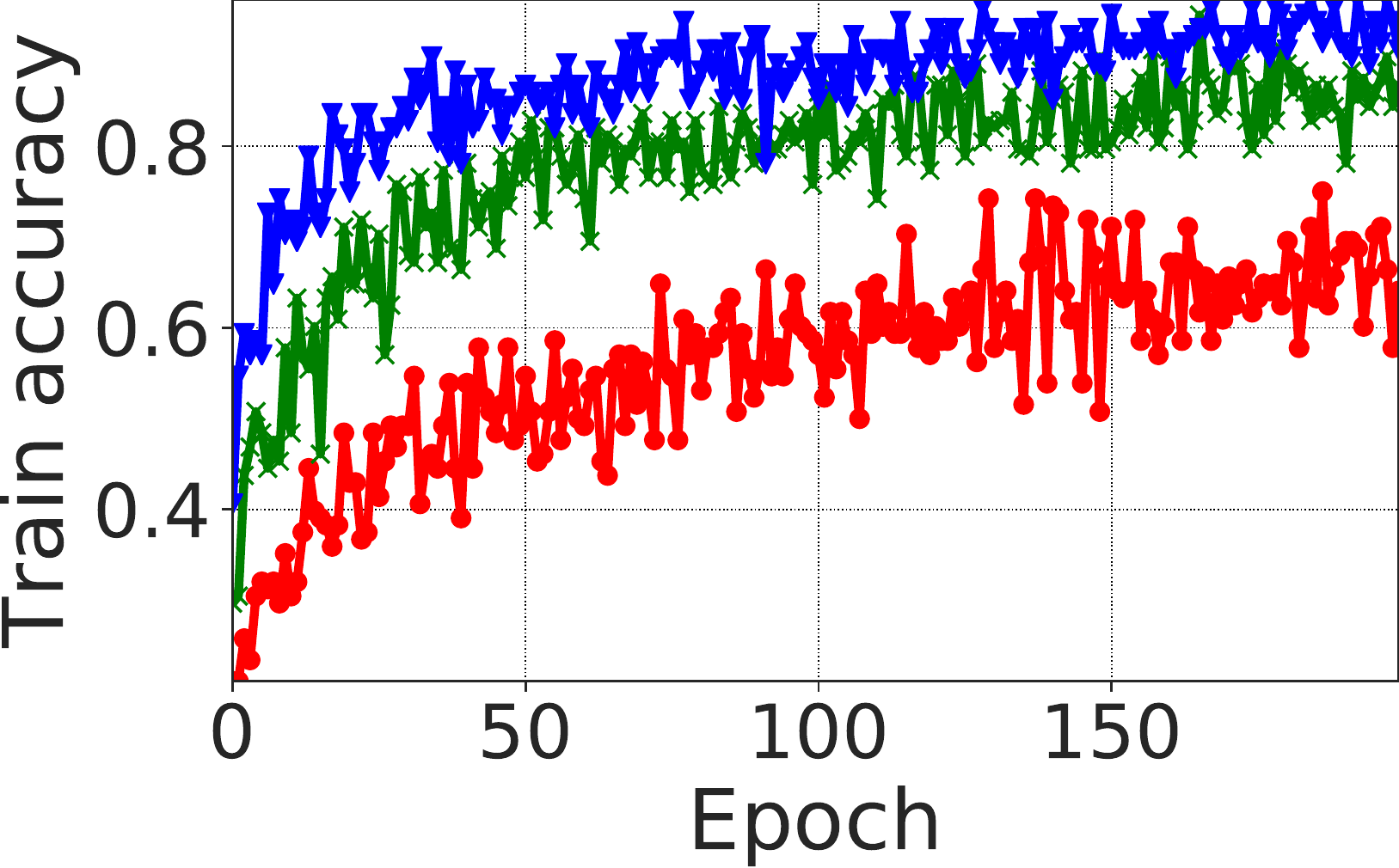}};
     \node[inner sep=0pt] (g2) at (3.4,0)
    {\includegraphics[height=0.14\textwidth]{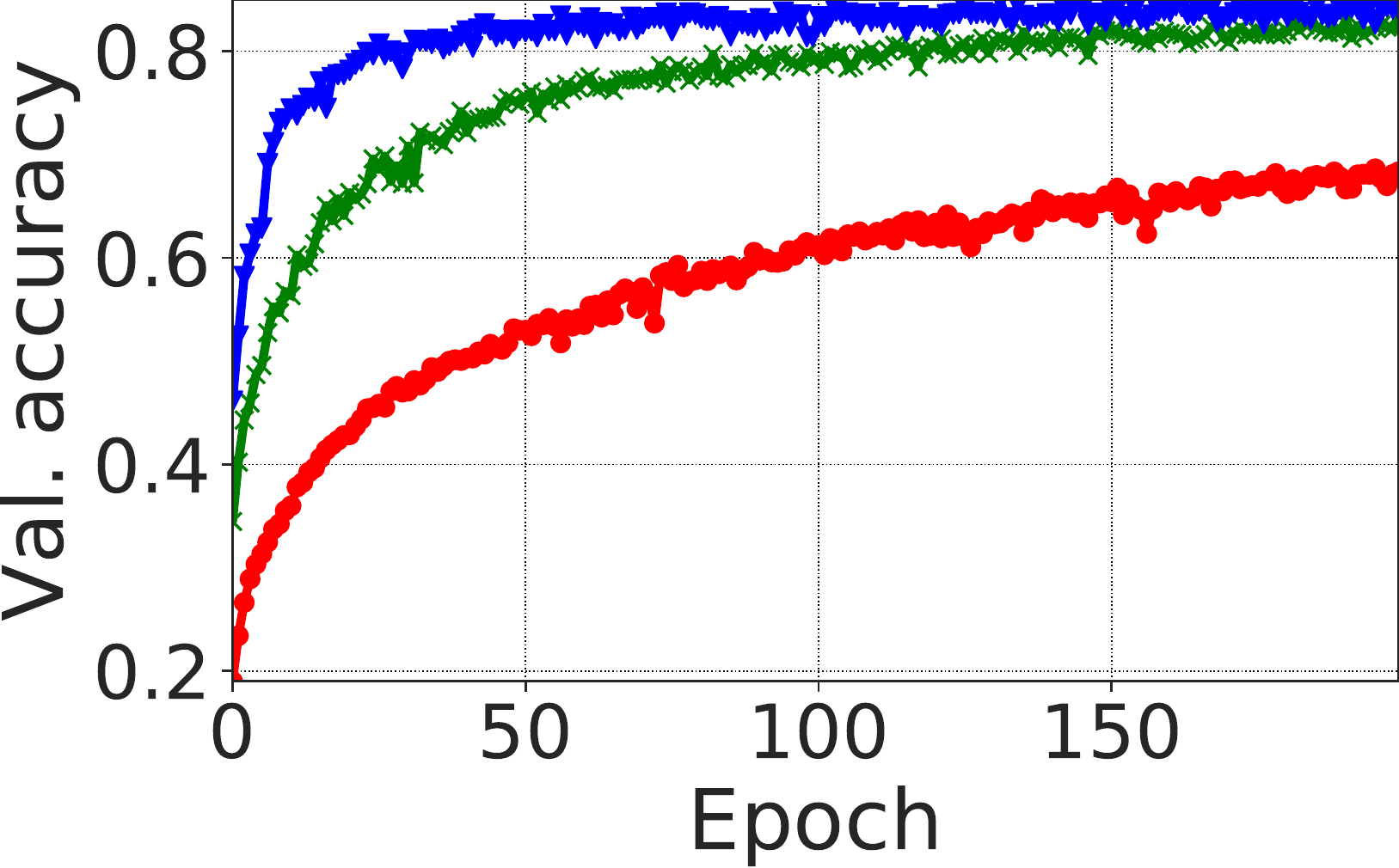}};
    \node[inner sep=0pt] (g3) at (6.8,0)
    { \includegraphics[height=0.14\textwidth]{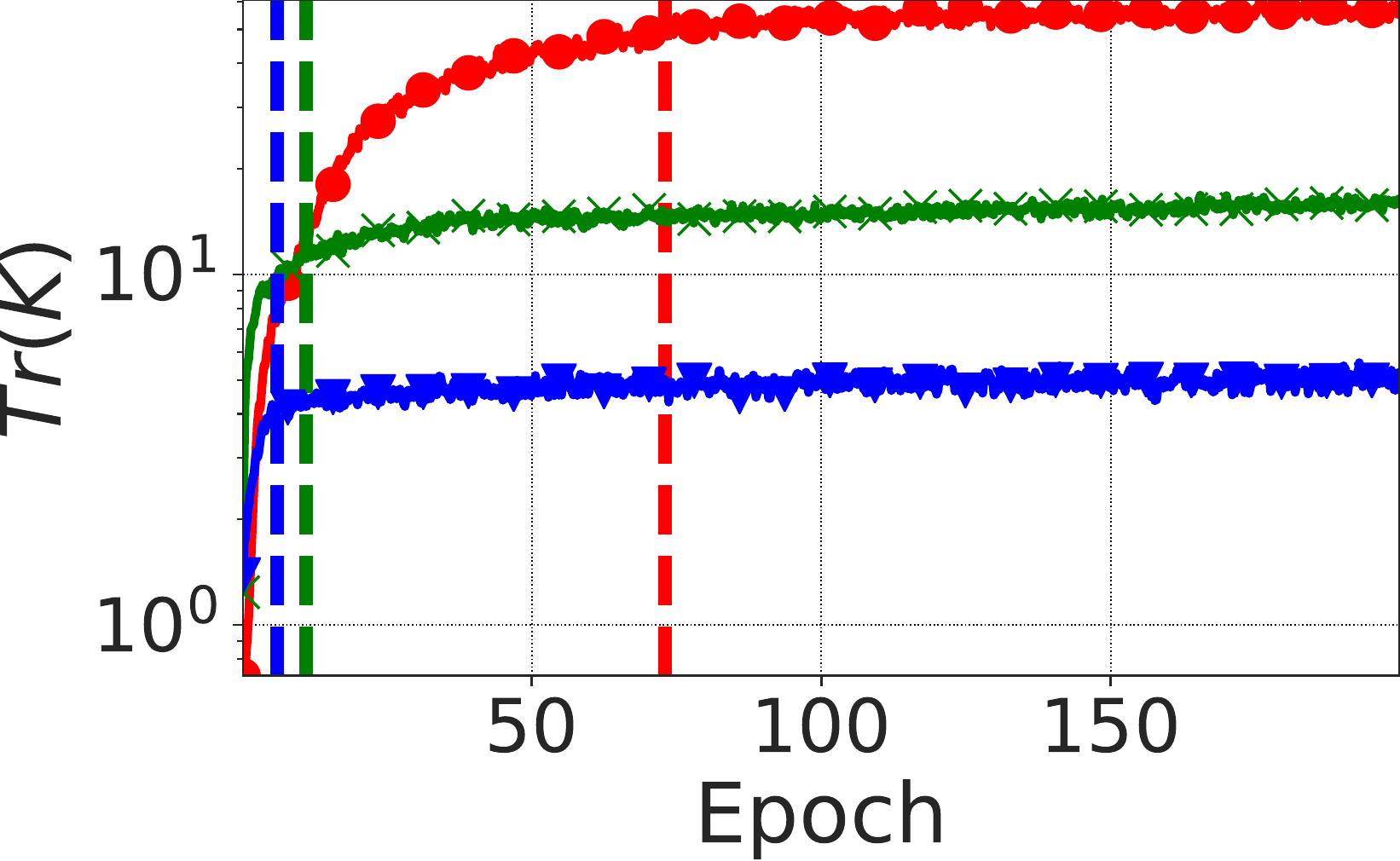}};
     \node[inner sep=0pt] (g4) at (3.65,-1.2)
   { \includegraphics[height=0.035\textwidth]{fig/1809nobnscnnc10sweeplrlegend.pdf}};
\end{tikzpicture}
         \end{center}
         \caption{Additional metrics for the experiments using SimpleCNN on the CIFAR-10 dataset with different learning rates. From left to right: training accuracy, validation accuracy, $\mathrm{Tr}(\mathbf{K})$.}
        \label{app:fig:scnn10lradditional}
    \end{figure}

\begin{figure}[H]
\begin{center}
\begin{tikzpicture}
     \node[inner sep=0pt] (g1) at (0,0)
    {   \includegraphics[height=0.14\textwidth]{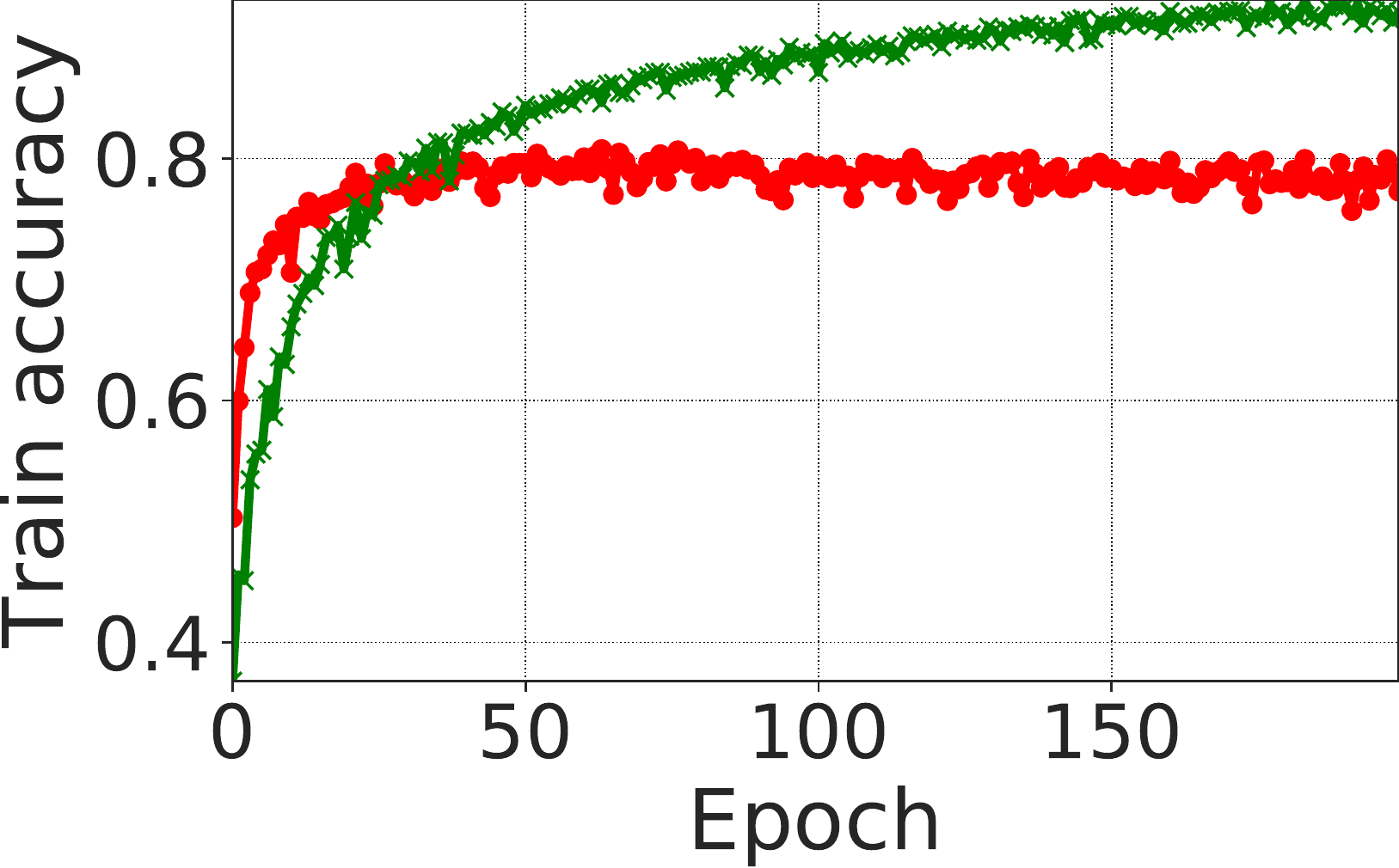}};
     \node[inner sep=0pt] (g2) at (3.4,0)
    {\includegraphics[height=0.14\textwidth]{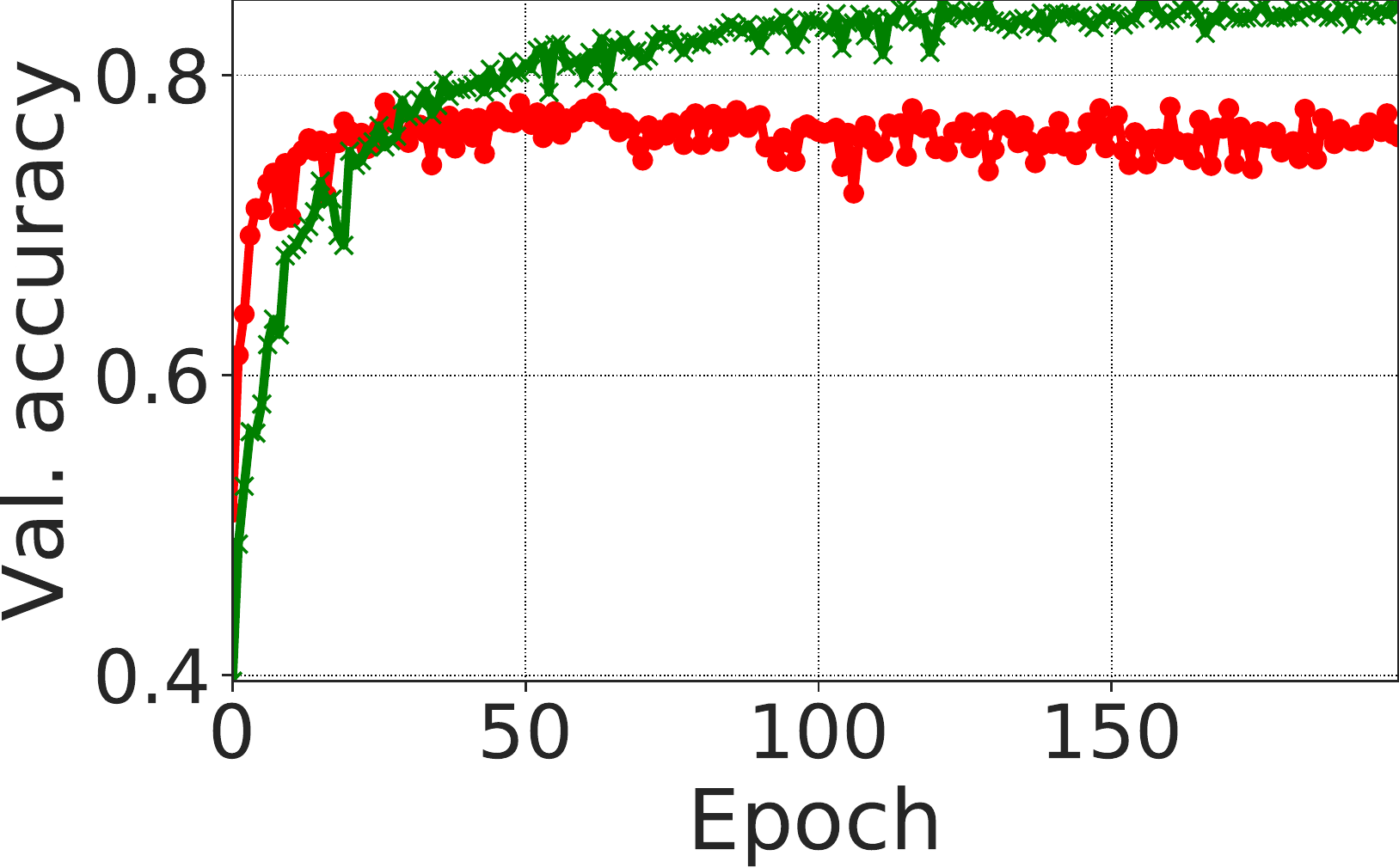}};
    \node[inner sep=0pt] (g3) at (6.8,0)
    { \includegraphics[height=0.14\textwidth]{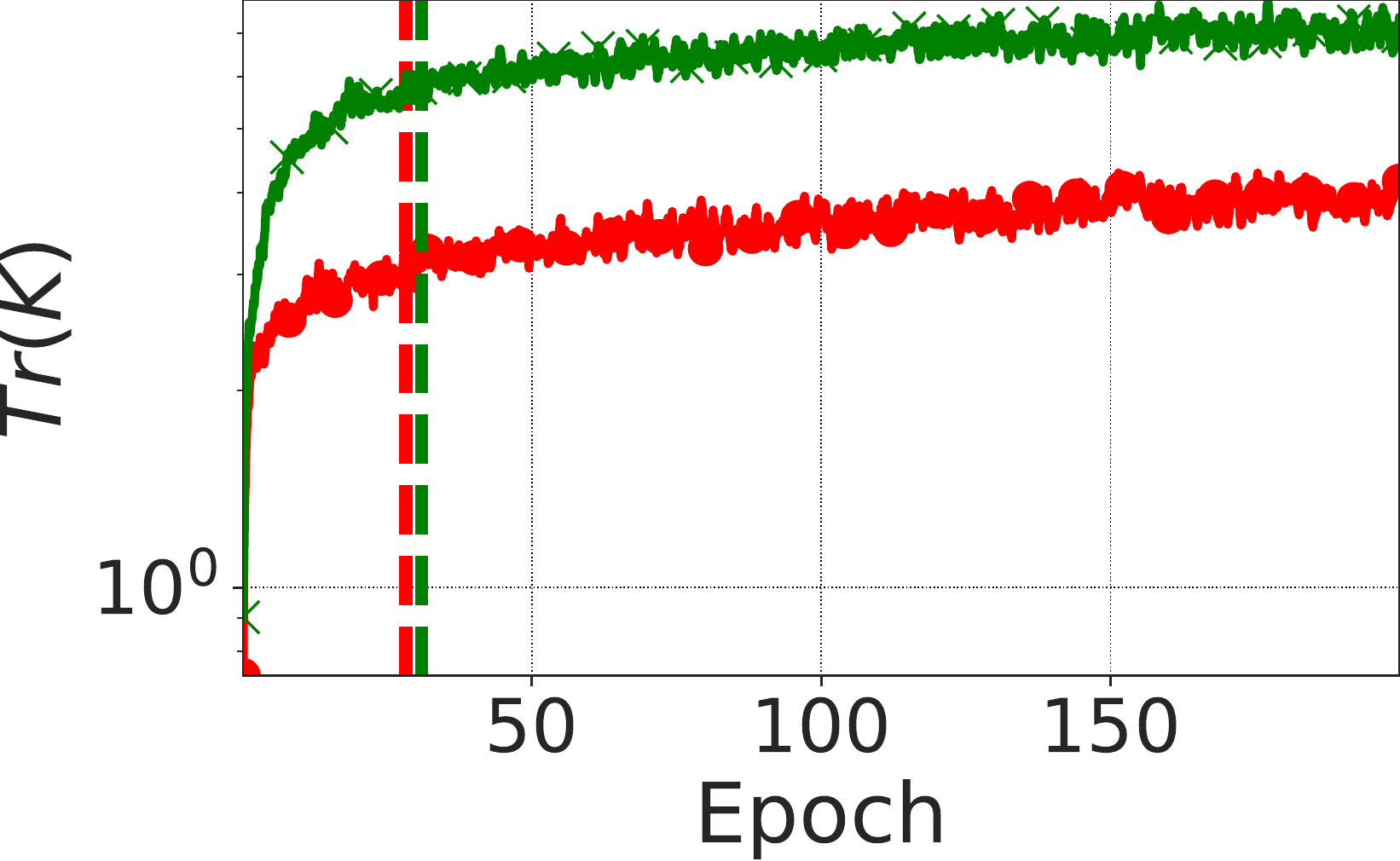}};
     \node[inner sep=0pt] (g4) at (3.65,-1.2)
   { \includegraphics[height=0.035\textwidth]{fig/1902nobnscnnc10sweepbslegend.pdf}};
\end{tikzpicture}
     \end{center}
     \caption{Additional metrics for the experiments using SimpleCNN on the CIFAR-10 dataset with different batch sizes. From left to right: training accuracy, validation accuracy, and $\mathrm{Tr}(\mathbf{K})$.}
    \label{app:fig:scnnc10bsadditional}
\end{figure}

\paragraph{ResNet-32 on CIFAR-10.} 

In Fig.~\ref{app:fig:resnet32c10lradditional} and Fig.~\ref{app:fig:resnet32c10bsadditional} we report accuracy on the training set and the validation set, $\lambda_H^1$, and $\mathrm{Tr}(\mathbf{K})$ for all experiments with ResNet-32 on the CIFAR-10 dataset.

    \begin{figure}[H]
    \begin{center}
    \begin{tikzpicture}
     \node[inner sep=0pt] (g1) at (0,0)
    {   \includegraphics[height=0.14\textwidth]{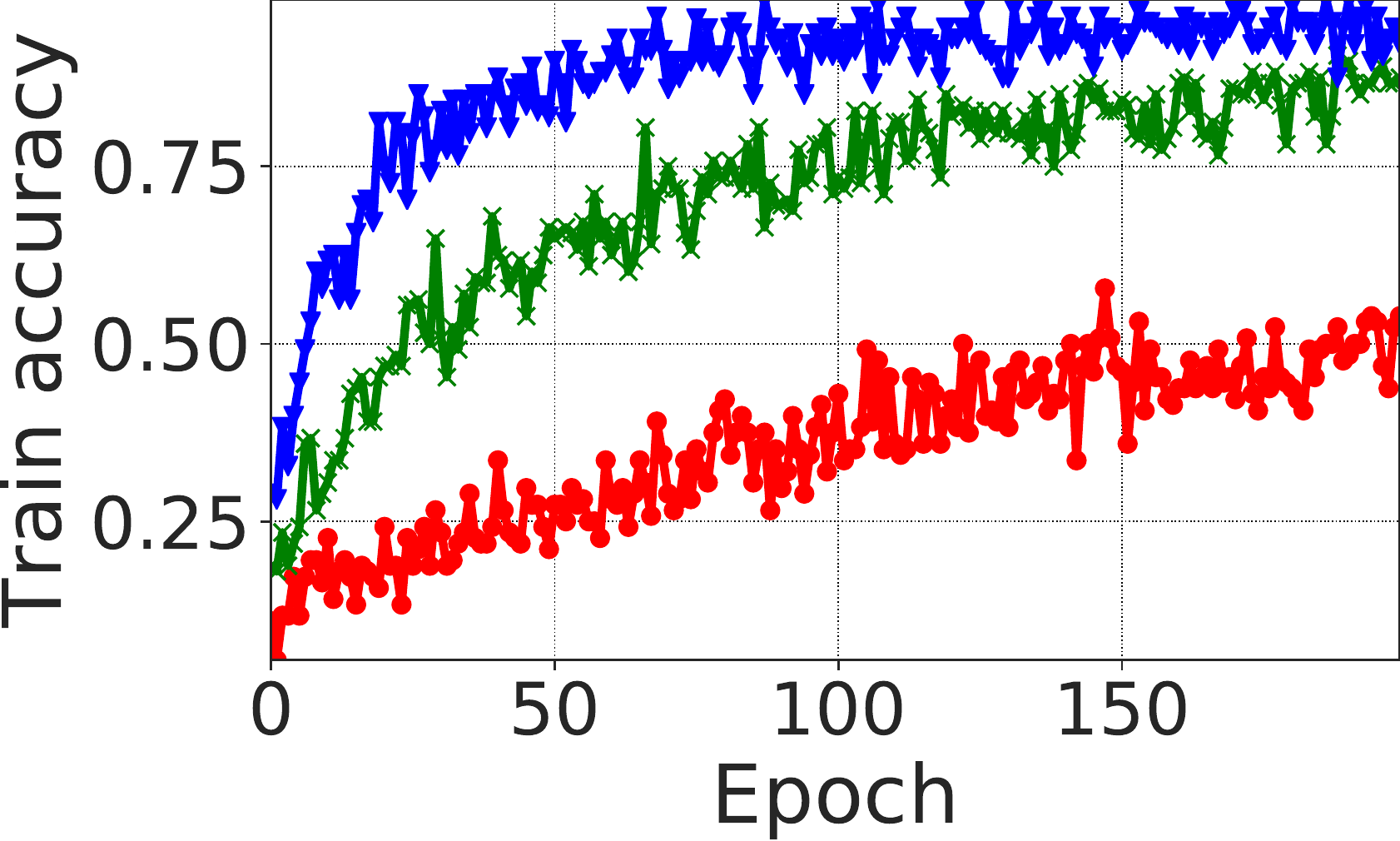}};
     \node[inner sep=0pt] (g2) at (3.4,0)
    {\includegraphics[height=0.14\textwidth]{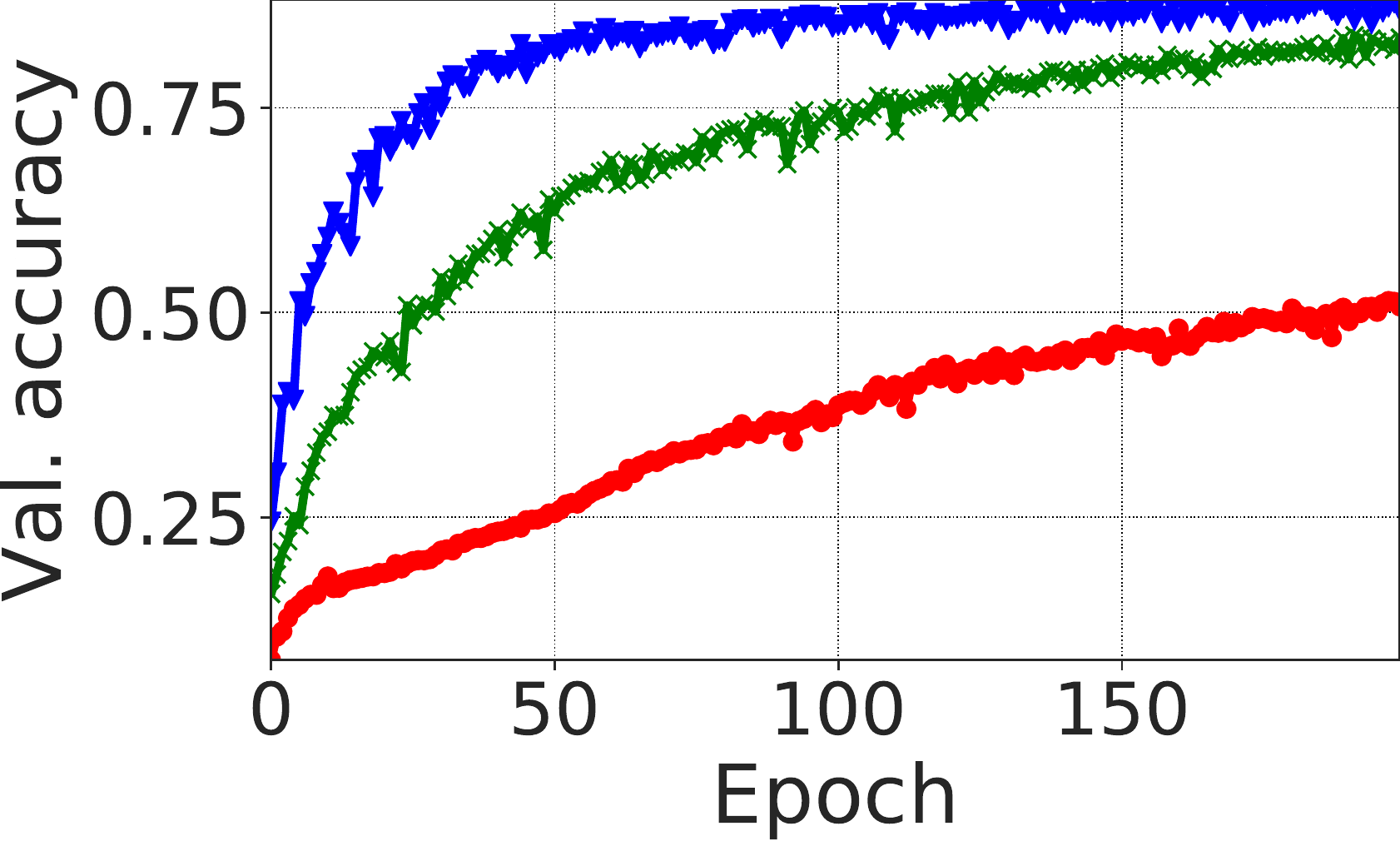}};
    \node[inner sep=0pt] (g3) at (6.8,0)
    { \includegraphics[height=0.14\textwidth]{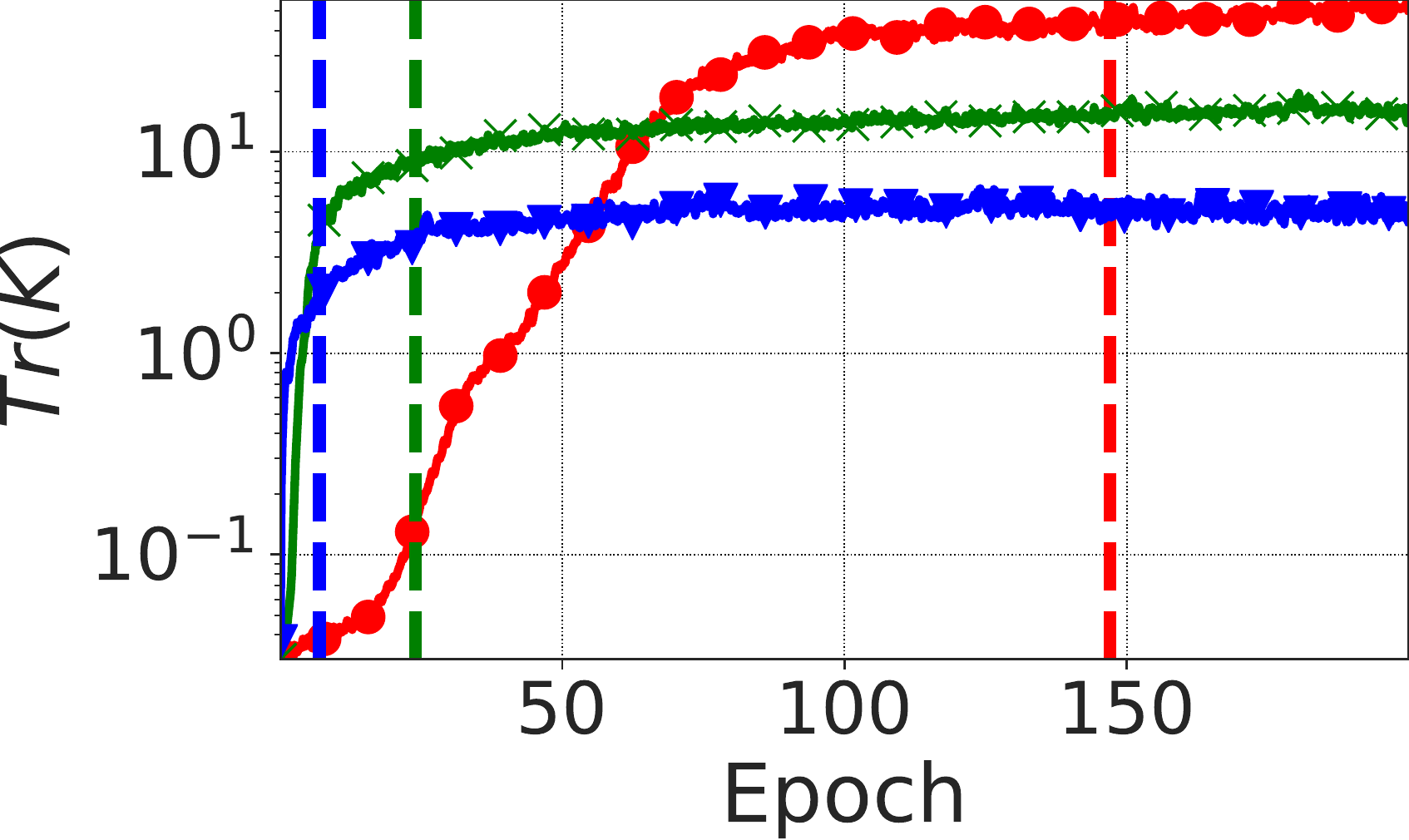}};
     \node[inner sep=0pt] (g4) at (3.65,-1.2)
   { \includegraphics[height=0.035\textwidth]{fig/1809nobnresnet32c10sweeplrlegend.pdf}};
\end{tikzpicture}
         \end{center}
                     \caption{Additional figures for the experiments using ResNet-32 on the CIFAR-10 dataset with different learning rates. From left to right: the evolution of accuracy, validation accuracy, $\mathrm{Tr}(\mathbf{K})$.}
        \label{app:fig:resnet32c10lradditional}
    \end{figure}
    \begin{figure}[H]
    \begin{center}
    
   \begin{tikzpicture}
     \node[inner sep=0pt] (g1) at (0,0)
    {   \includegraphics[height=0.14\textwidth]{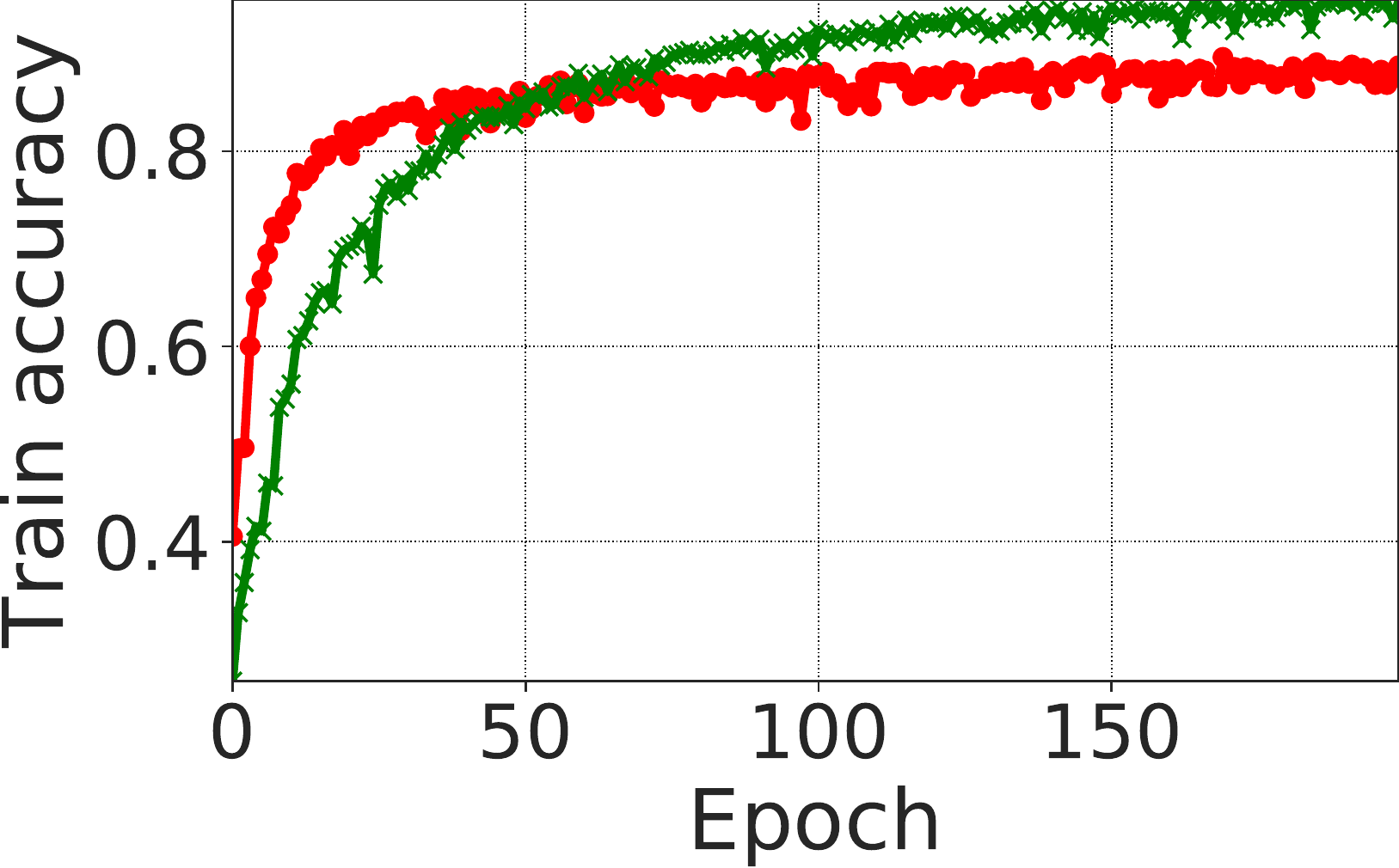}};
     \node[inner sep=0pt] (g2) at (3.4,0)
    {\includegraphics[height=0.14\textwidth]{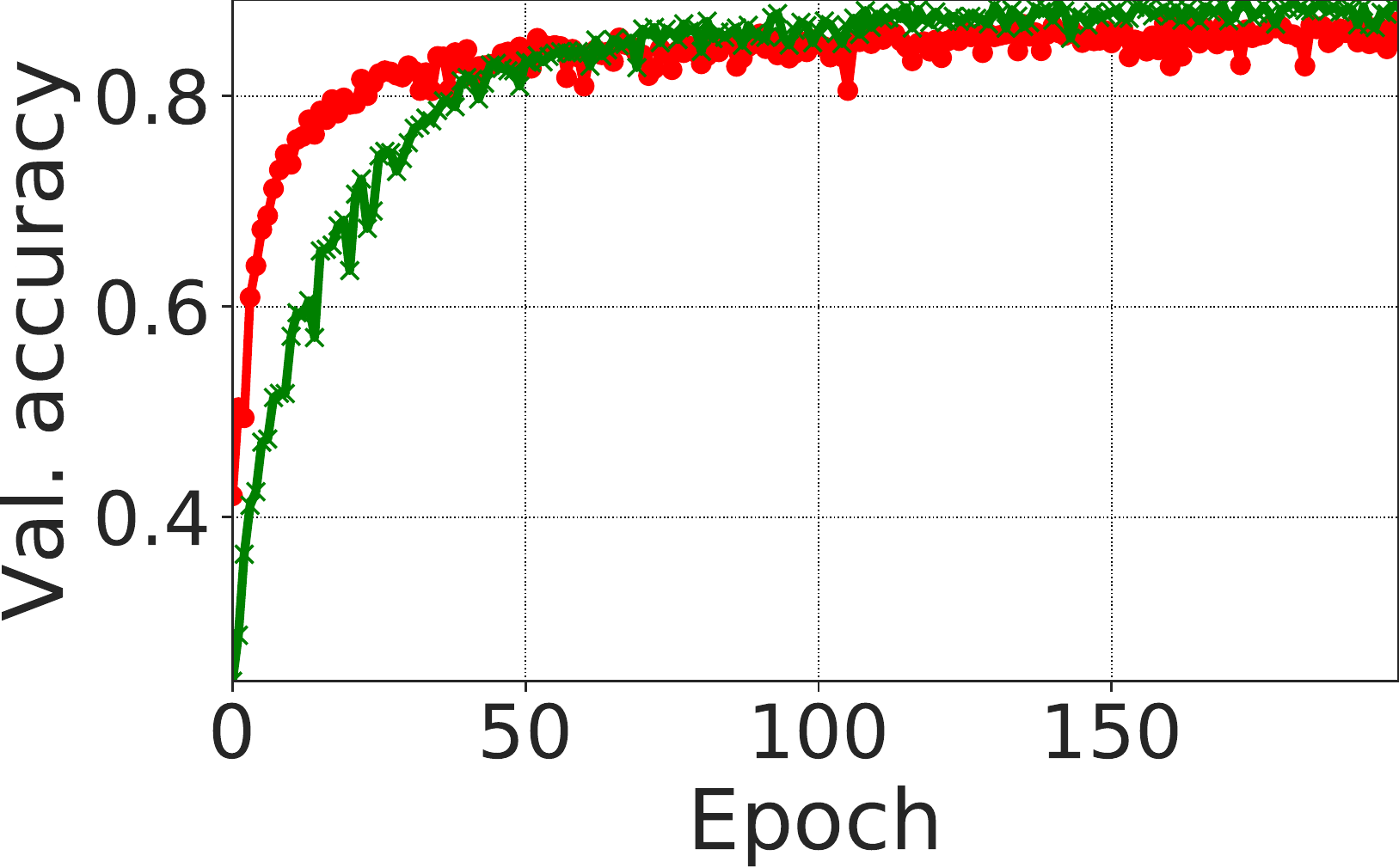}};
    \node[inner sep=0pt] (g3) at (6.8,0)
    { \includegraphics[height=0.14\textwidth]{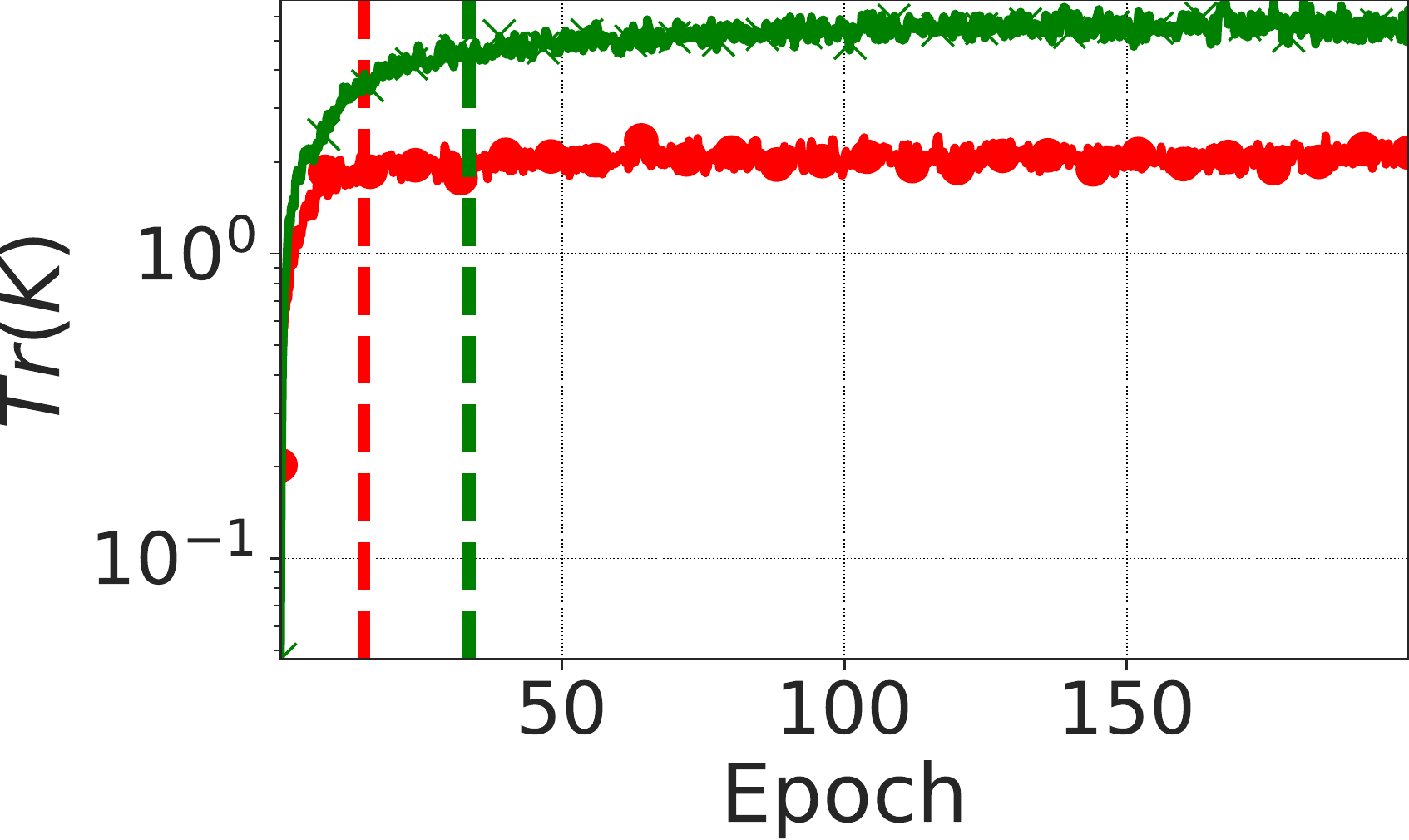}};
     \node[inner sep=0pt] (g4) at (3.65,-1.2)
   { \includegraphics[height=0.035\textwidth]{fig/1902nobnresnet32-c10sweepbslegend.pdf}};
\end{tikzpicture} 
    
         \end{center}
                     \caption{Additional figures for the experiments using ResNet-32 on the CIFAR-10 dataset with different batch sizes. From left to right: training accuracy, validation accuracy, $\mathrm{Tr}(\mathbf{K})$.}
        \label{app:fig:resnet32c10bsadditional}
    \end{figure}

\paragraph{LSTM on IMDB.} 

In Fig.~\ref{app:fig:lstmimdblradditional} and Fig.~\ref{app:fig:lstmimdbsbadditional} we report accuracy on the training set and the validation set, $\lambda_H^1$, and $\mathrm{Tr}(\mathbf{K})$ for all experiments with LSTM on the IMDB dataset.

    \begin{figure}[H]
    \begin{center}
    \begin{tikzpicture}
     \node[inner sep=0pt] (g1) at (0,0)
    {   \includegraphics[height=0.14\textwidth]{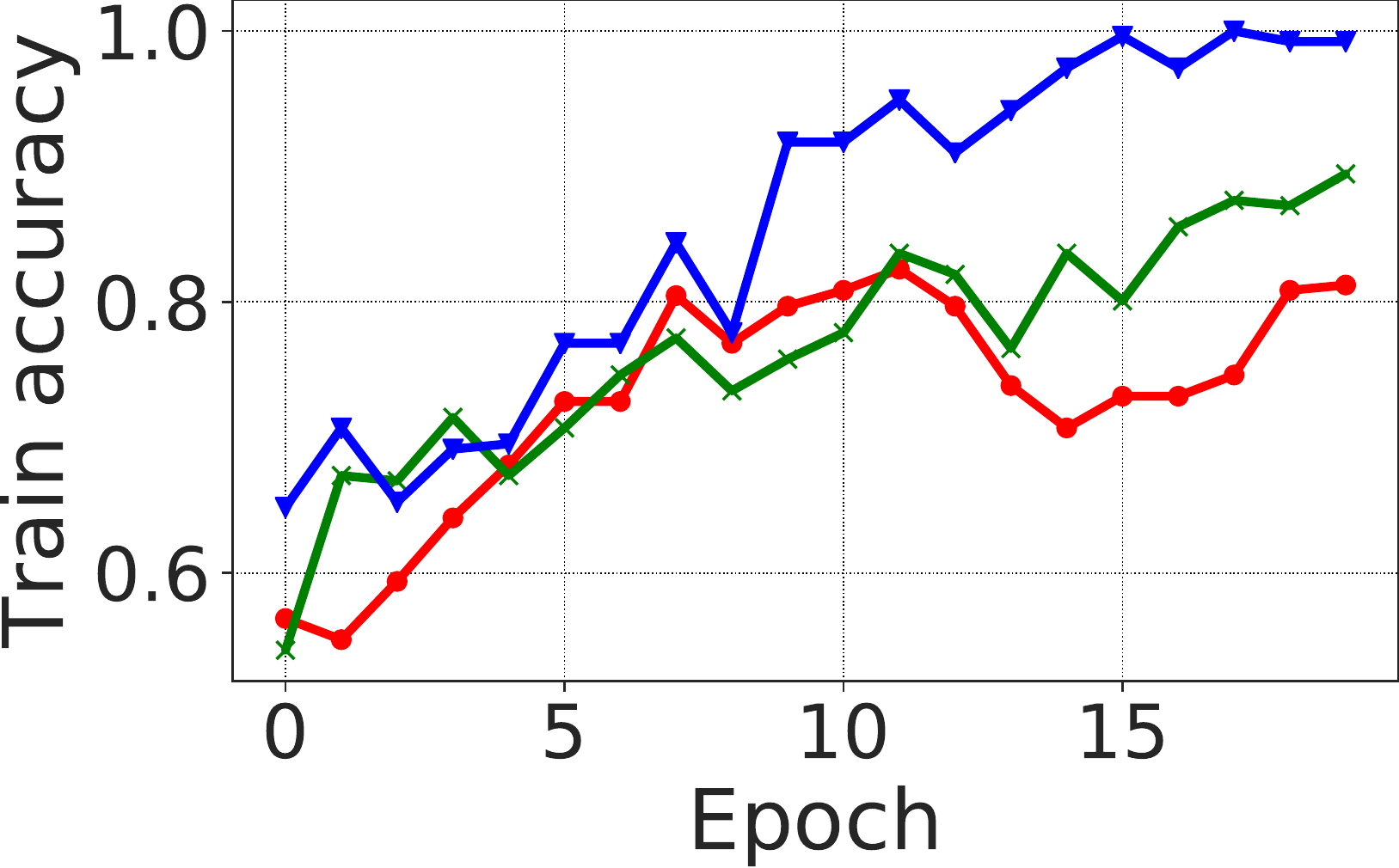}};
     \node[inner sep=0pt] (g2) at (3.4,0)
    {\includegraphics[height=0.14\textwidth]{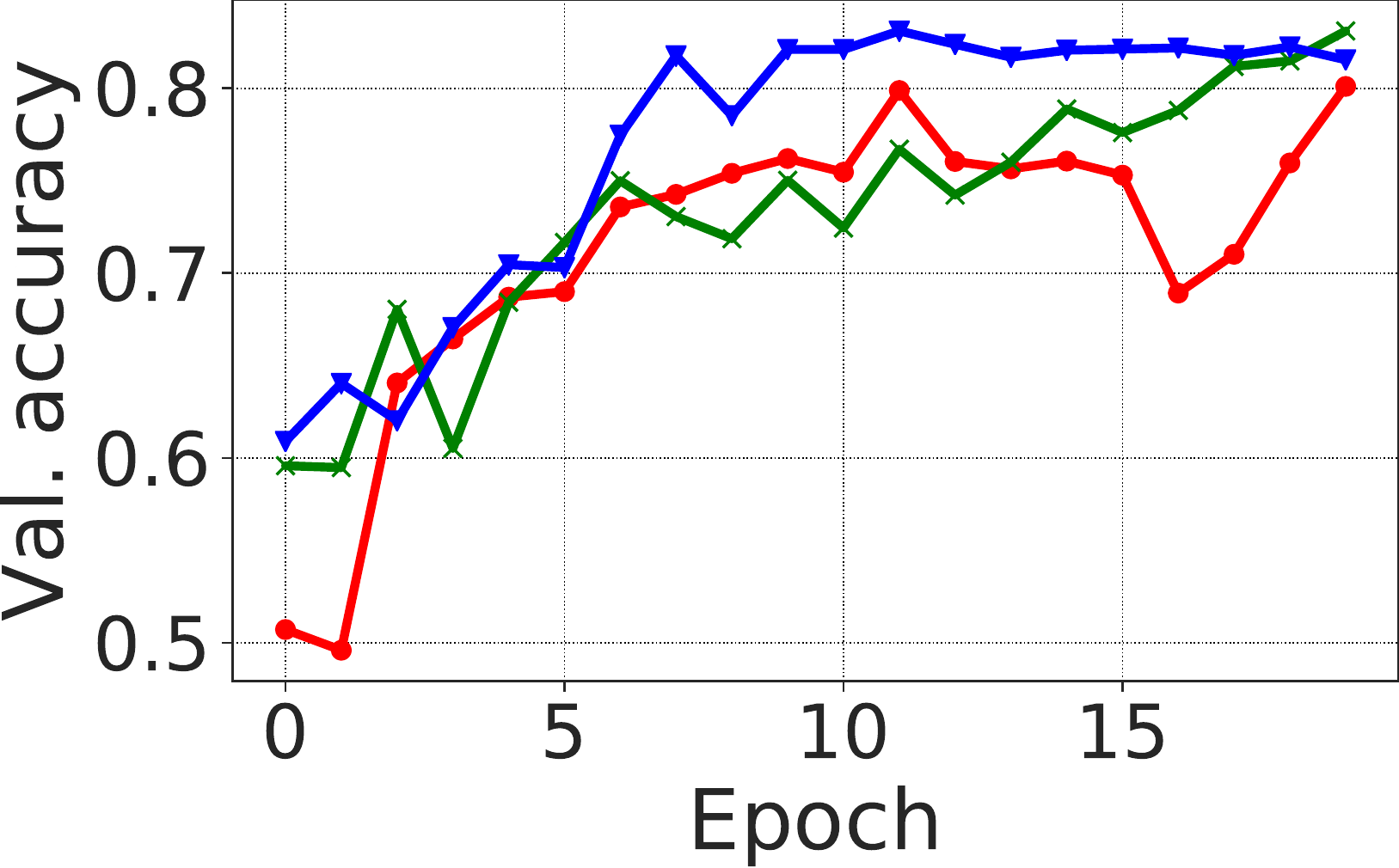}};
    \node[inner sep=0pt] (g3) at (6.8,0)
    { \includegraphics[height=0.14\textwidth]{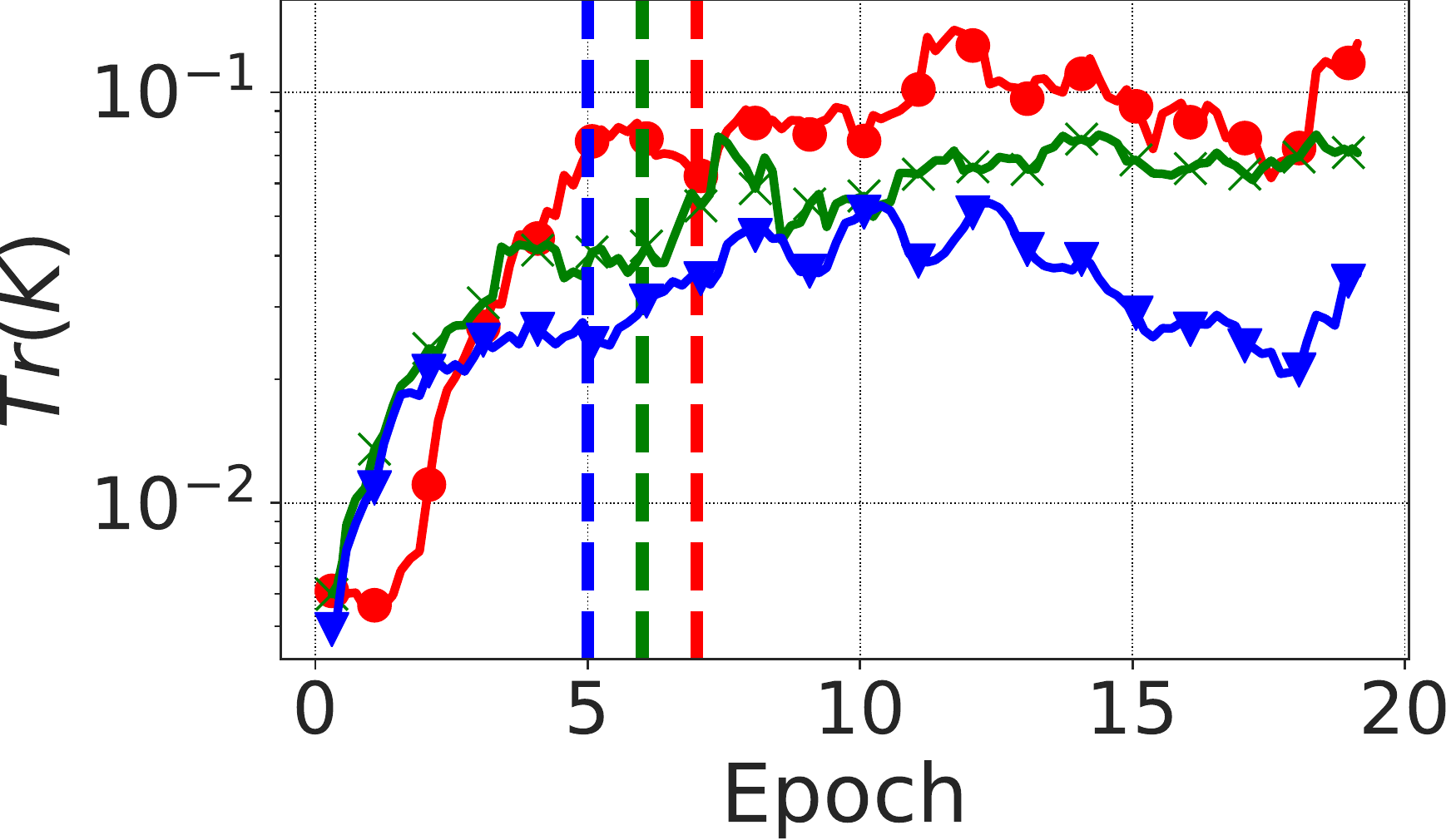}};
     \node[inner sep=0pt] (g4) at (3.65,-1.2)
   { \includegraphics[height=0.035\textwidth]{fig/2009lstmimdbv2sweeplrlegend.pdf}};
\end{tikzpicture}
    
         \end{center}
                     \caption{Additional figures for the experiments using LSTM on the IMDB dataset with different learning rates. From left to right: the evolution of accuracy, validation accuracy, and $\mathrm{Tr}(\mathbf{K})$.}
        \label{app:fig:lstmimdblradditional}
    \end{figure}
    \begin{figure}[H]
    \begin{center}
    \begin{tikzpicture}
     \node[inner sep=0pt] (g1) at (0,0)
    {   \includegraphics[height=0.14\textwidth]{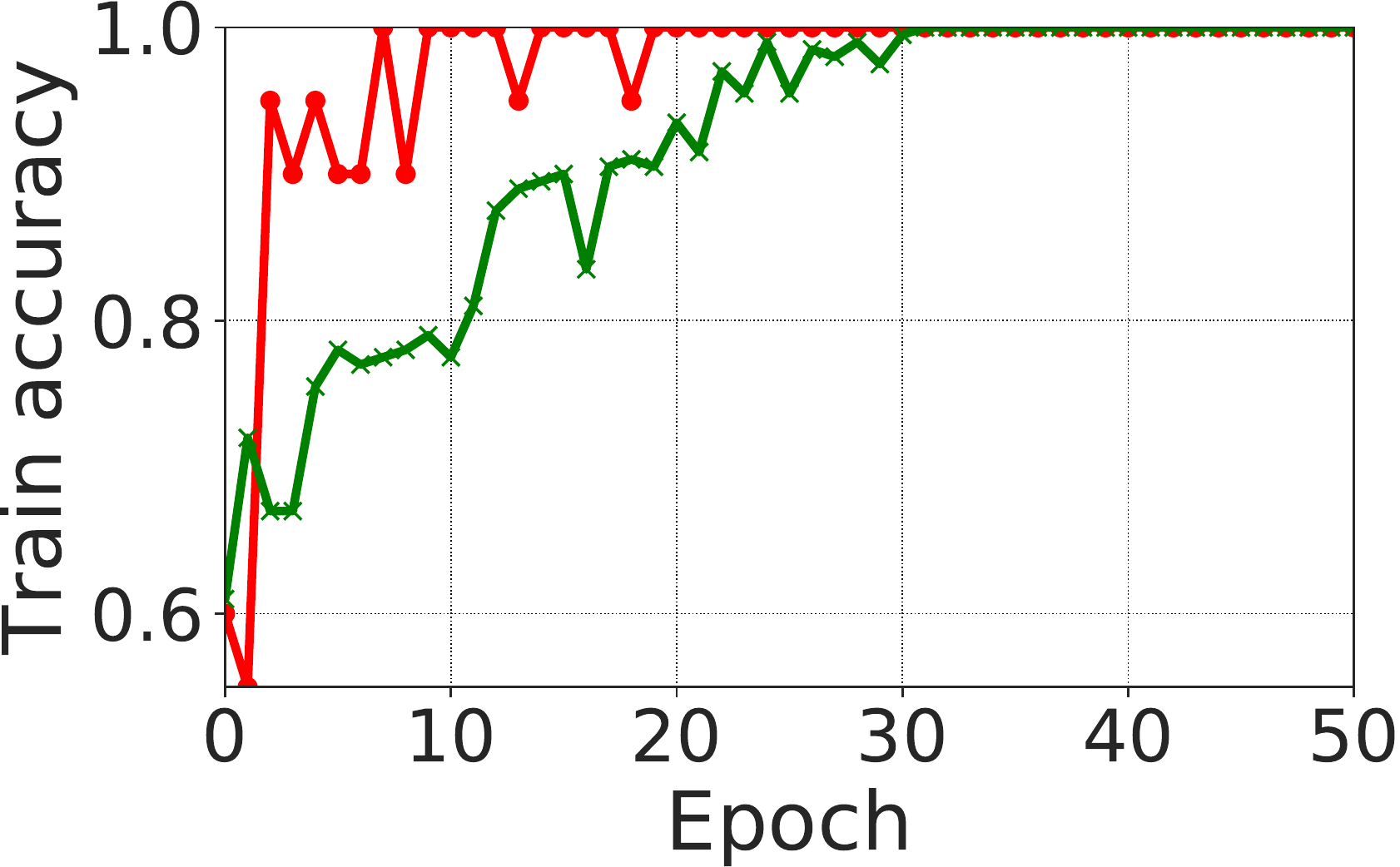}};
     \node[inner sep=0pt] (g2) at (3.4,0)
    {\includegraphics[height=0.14\textwidth]{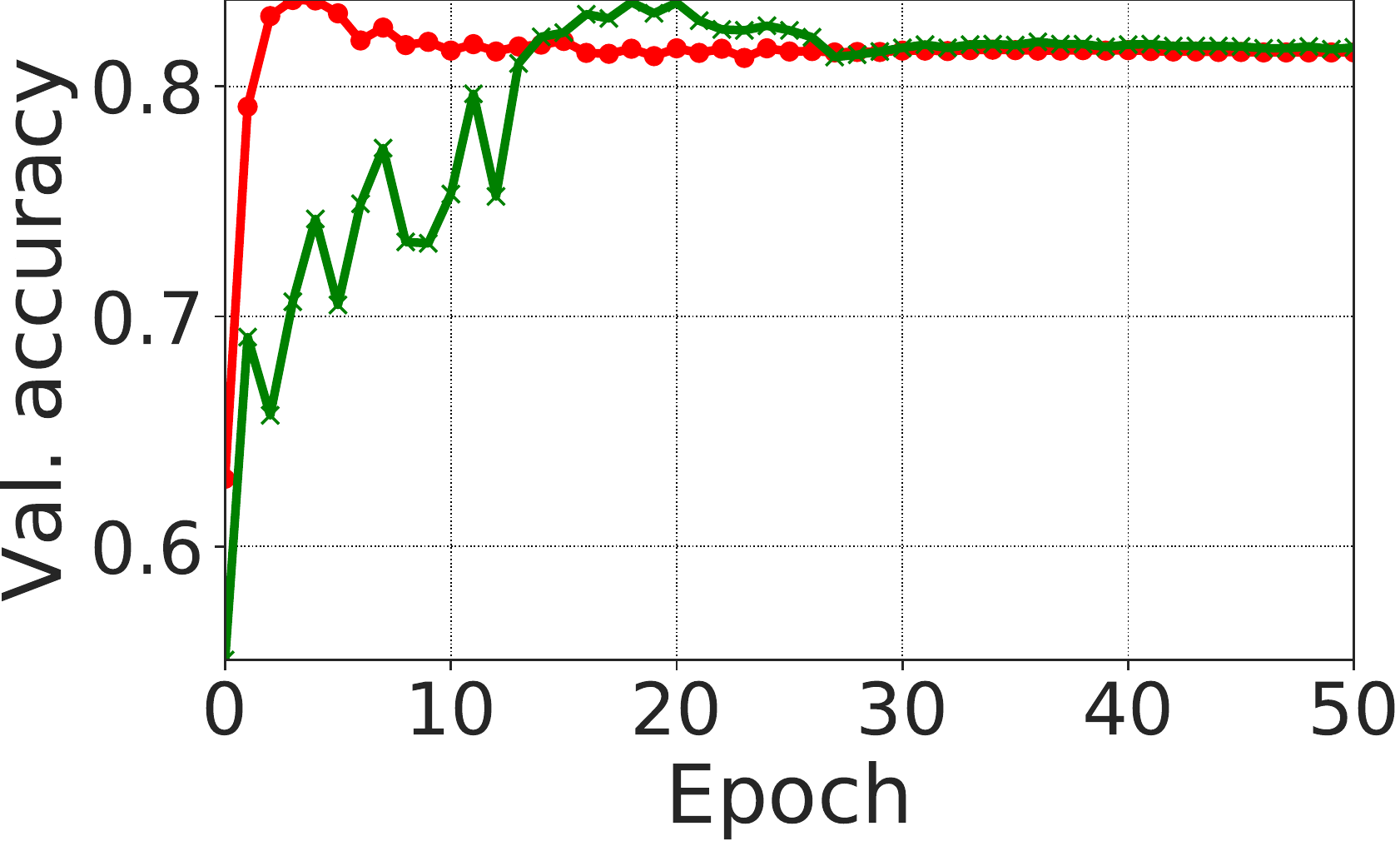}};
    \node[inner sep=0pt] (g3) at (6.8,0)
    { \includegraphics[height=0.14\textwidth]{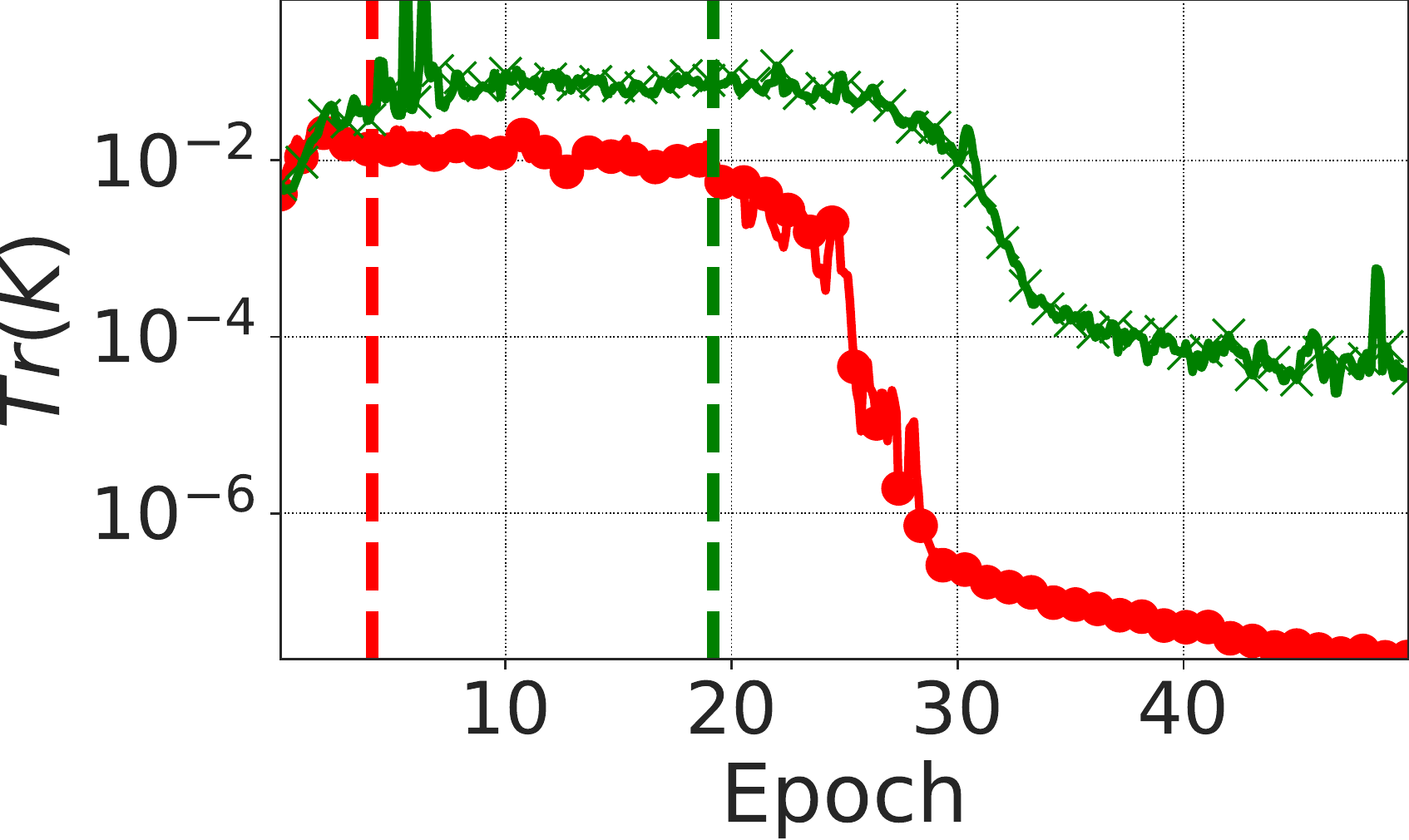}};
     \node[inner sep=0pt] (g4) at (3.65,-1.2)
   { \includegraphics[height=0.035\textwidth]{fig/1102lstmimdbsweepbslegend.pdf}};
\end{tikzpicture}
         \end{center}
                     \caption{Additional metrics for the experiments using LSTM on the IMDB dataset with different batch sizes. From left to right: training accuracy, validation accuracy, $\lambda_H^1$ and $\mathrm{Tr}(\mathbf{K})$.}
        \label{app:fig:lstmimdbsbadditional}
    \end{figure}

\paragraph{BERT on MNLI.} 

In Fig.~\ref{app:fig:bertadditional} we report accuracy on the training set and the validation set and $\mathrm{Tr}(\mathbf{K})$ for BERT model on the MNLI dataset.

\begin{figure}[H]
\begin{center}

\begin{tikzpicture}
     \node[inner sep=0pt] (g1) at (0,0)
    {   \includegraphics[height=0.14\textwidth]{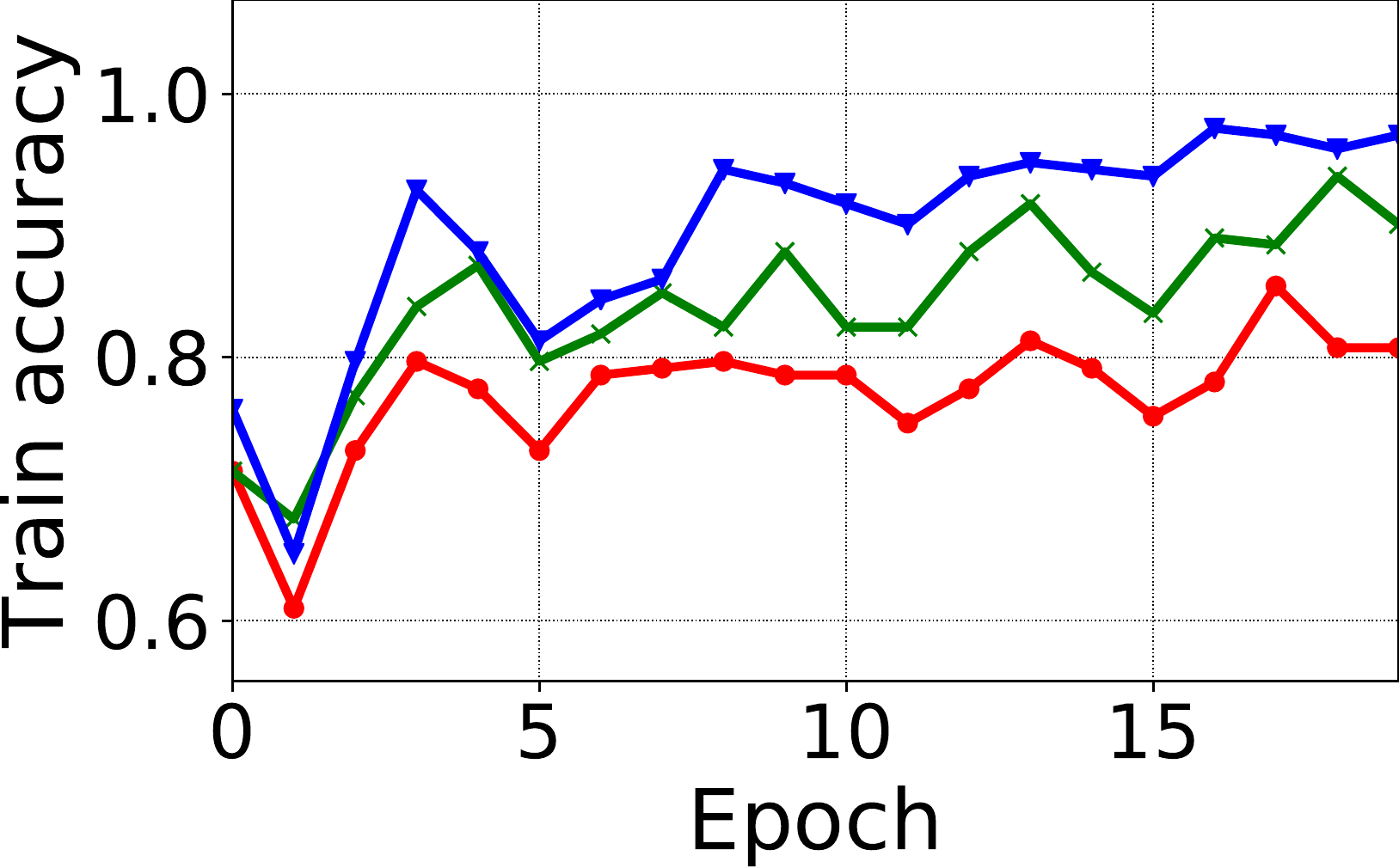}};
     \node[inner sep=0pt] (g2) at (3.4,0)
    {\includegraphics[height=0.14\textwidth]{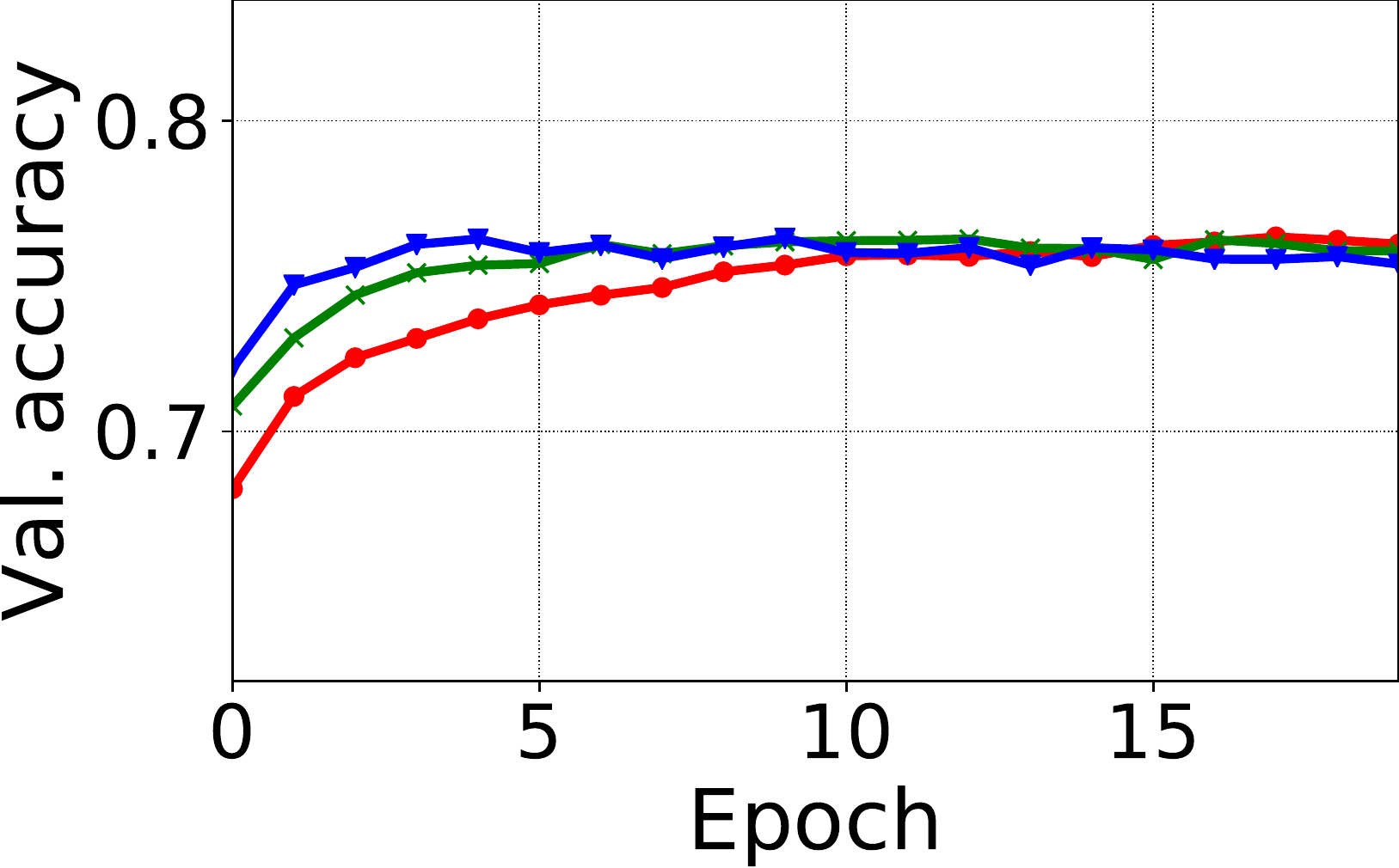}};
    \node[inner sep=0pt] (g3) at (6.8,0)
    { \includegraphics[height=0.14\textwidth]{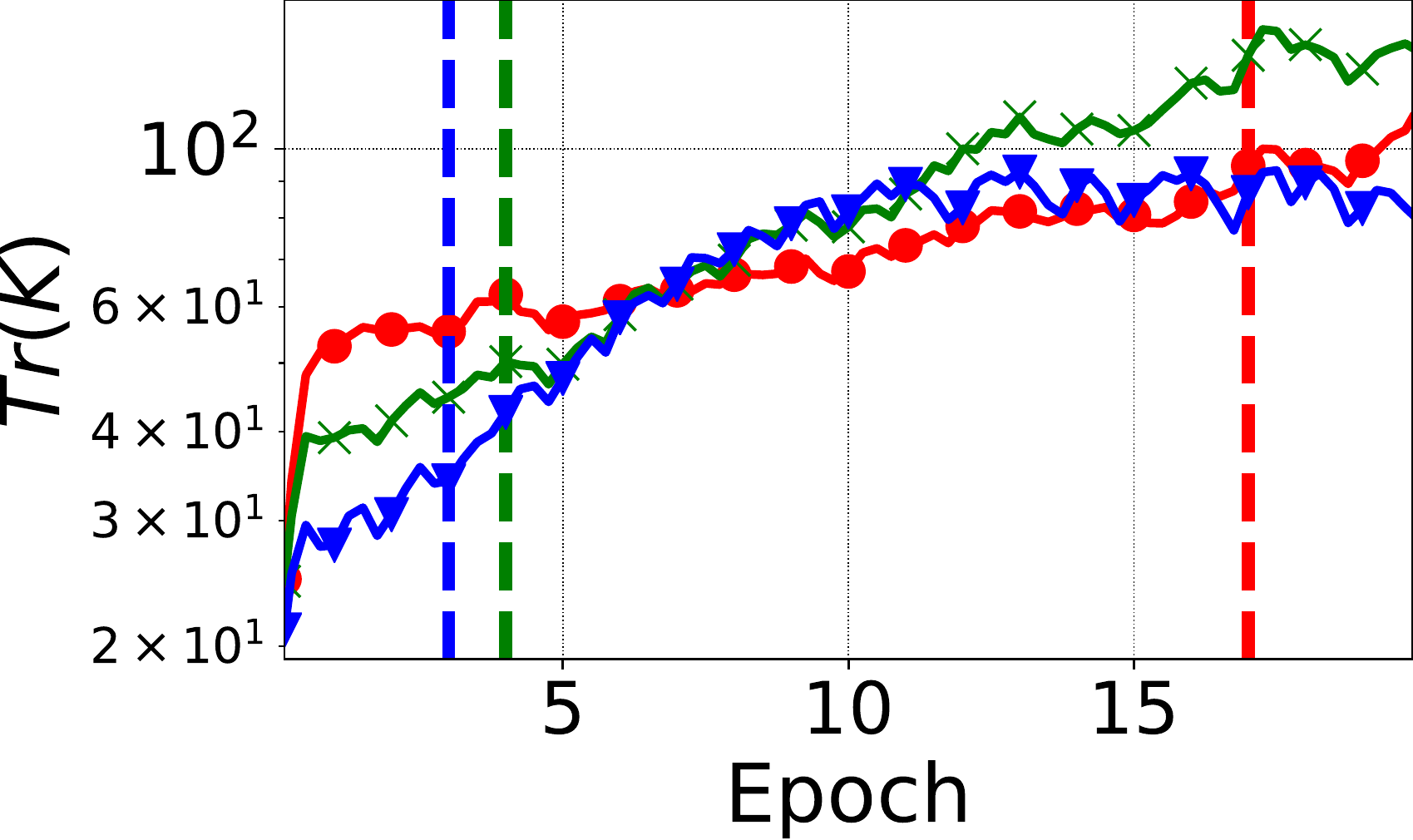}};
     \node[inner sep=0pt] (g4) at (3.65,-1.2)
   { \includegraphics[height=0.035\textwidth]{fig/0302bertmnlisweeplrlegend.pdf}};
\end{tikzpicture}
     \end{center}
          \caption{Additional metrics for the experiments using BERT on the MNLI dataset with different learning rates. From left to right: training accuracy, validation accuracy, and $\mathrm{Tr}(\mathbf{K})$.}
    \label{app:fig:bertadditional}
\end{figure}

\paragraph{DenseNet on ImageNet.} 

In Fig.~\ref{app:fig:densenetadditional} we report accuracy on the training set and the validation set and $\mathrm{Tr}(\mathbf{K})$ for DenseNet on the ImageNet dataset.

\begin{figure}[H]
\begin{center}
\begin{tikzpicture}
     \node[inner sep=0pt] (g1) at (0,0)
    {  \includegraphics[height=0.14\textwidth]{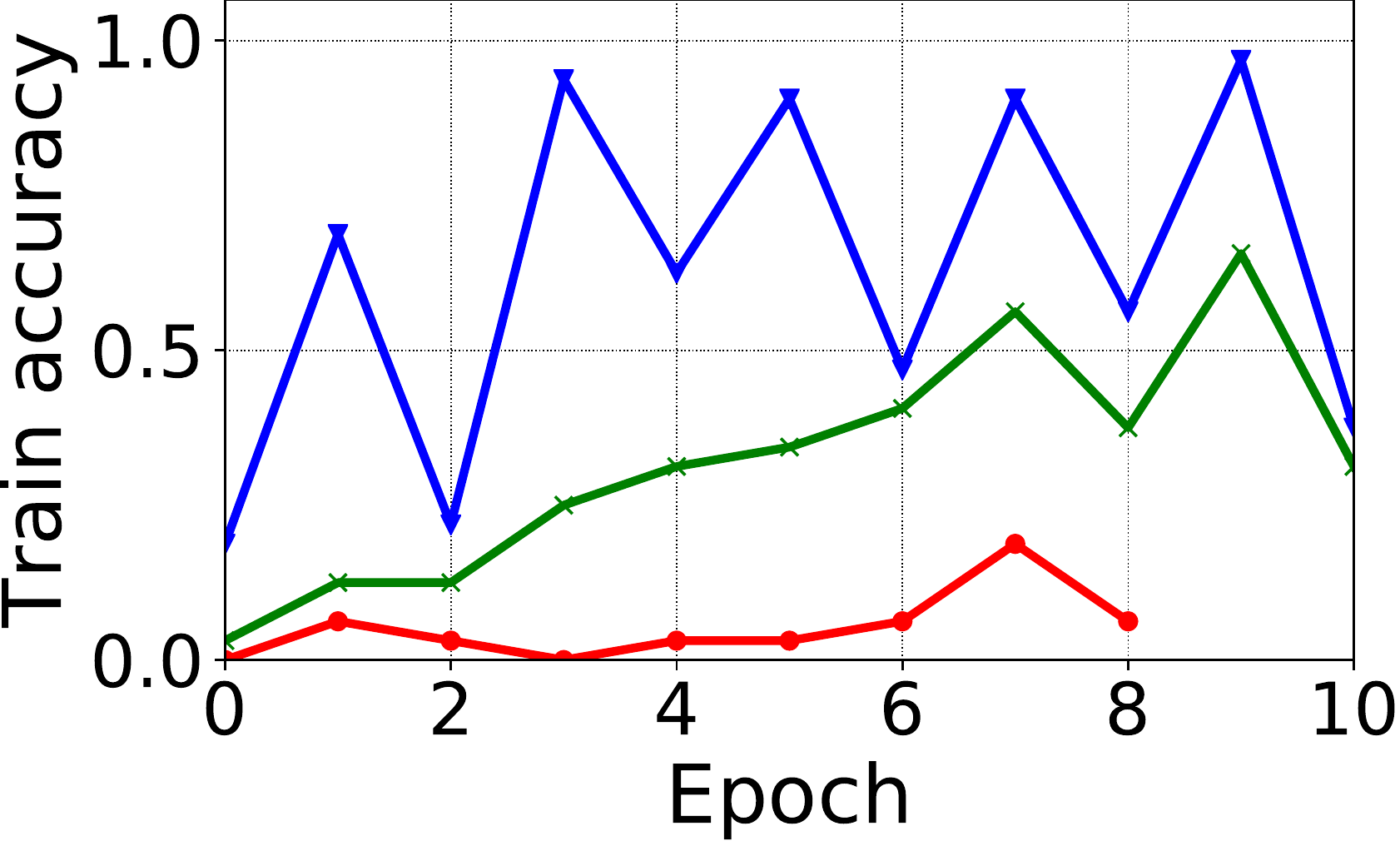}};
     \node[inner sep=0pt] (g2) at (3.4,0)
    {\includegraphics[height=0.14\textwidth]{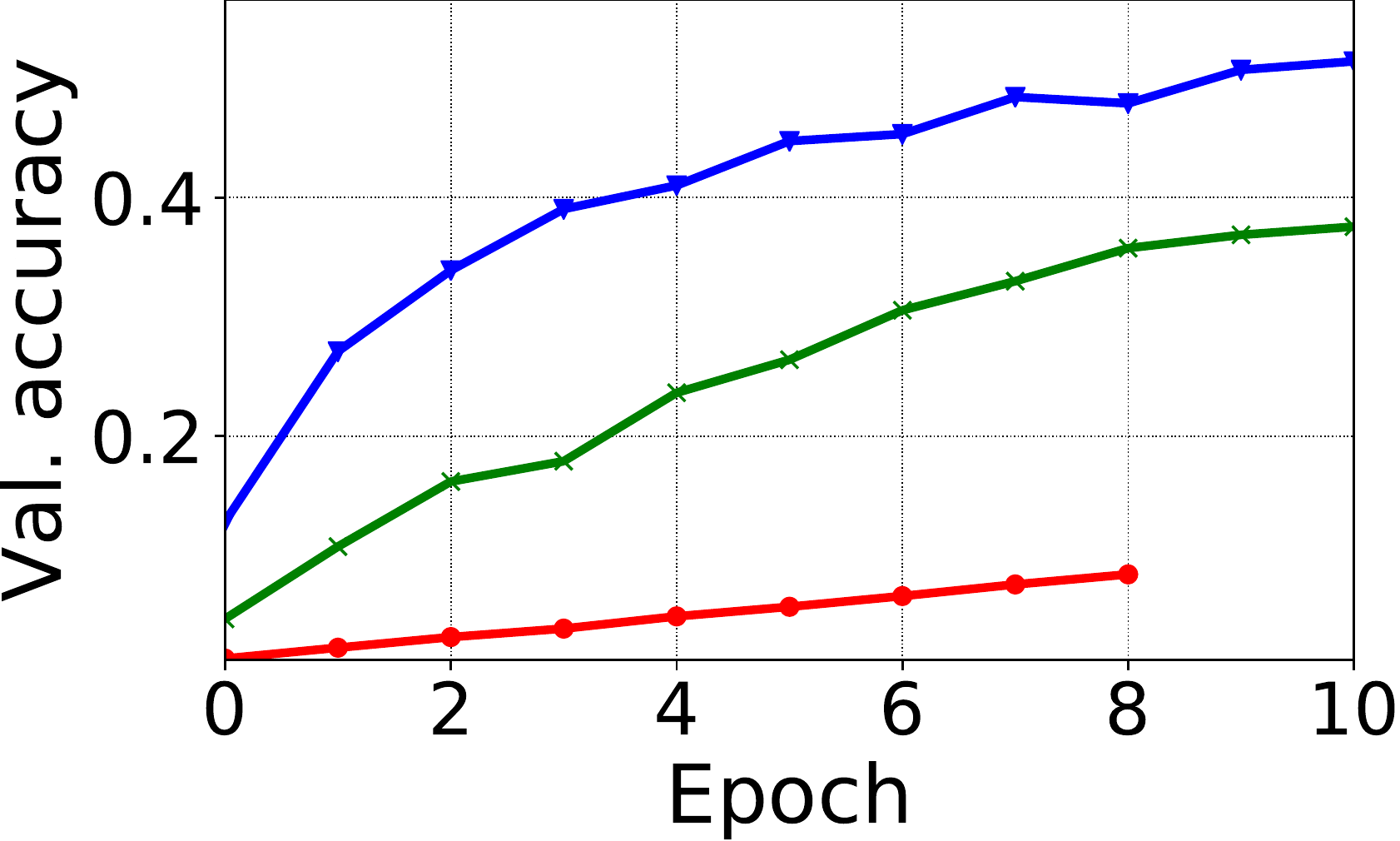}};
    \node[inner sep=0pt] (g3) at (6.8,0)
    { \includegraphics[height=0.14\textwidth]{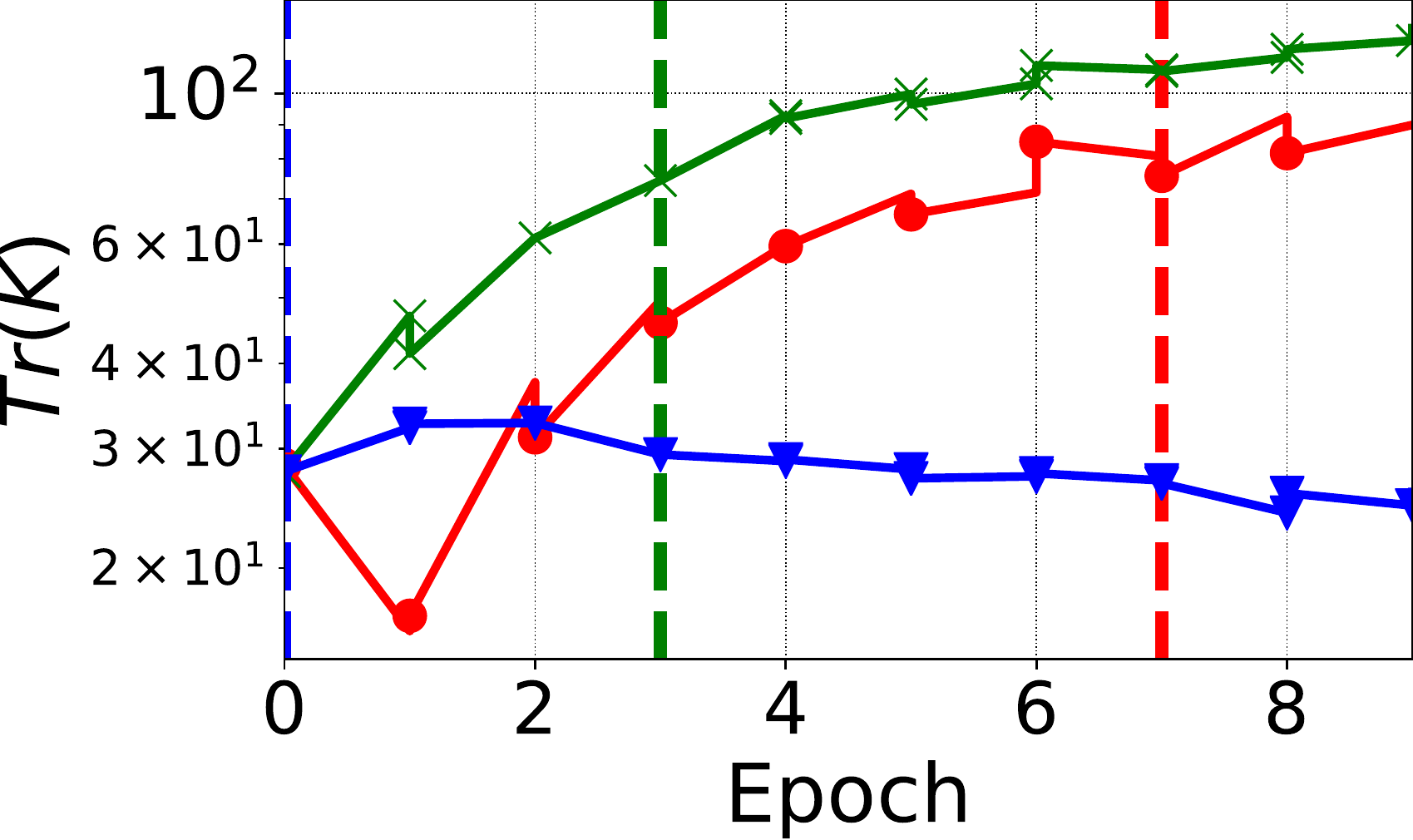}};
     \node[inner sep=0pt] (g4) at (3.65,-1.2)
   { \includegraphics[height=0.035\textwidth]{fig/densenet121imagenetsweeplrlegend.pdf}};
\end{tikzpicture}
   
     \end{center}
          \caption{Additional metrics for the experiments using DenseNet on the ImageNet dataset with different learning rates. From left to right: training accuracy, validation accuracy, and $\mathrm{Tr}(\mathbf{K})$.}
    \label{app:fig:densenetadditional}
\end{figure}

\paragraph{MLP on FashionMNIST.} 

In Fig.~\ref{app:fig:mlp_fmnist_K} and Fig.~\ref{app:fig:mlp_fmnist_acc} we report results for MLP model on the FashionMNIST dataset. We observe that all conclusions carry over to this setting.

\begin{figure}[H]
    \begin{center}
    
  \begin{tikzpicture}
     \node[inner sep=0pt] (g1) at (0,0)
    {   \includegraphics[height=0.14\textwidth]{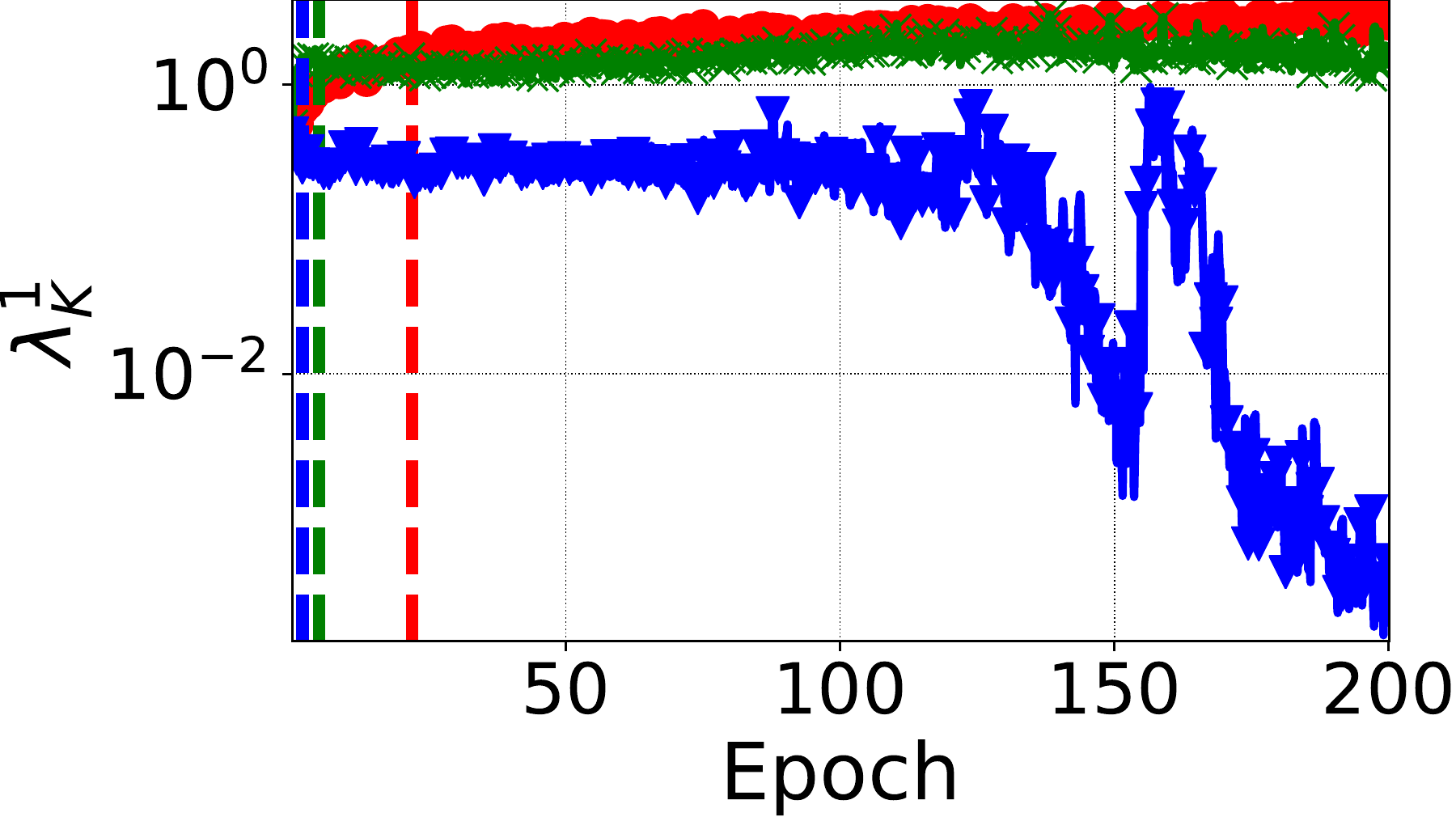}};
     \node[inner sep=0pt] (g2) at (3.4,0)
    {\includegraphics[height=0.14\textwidth]{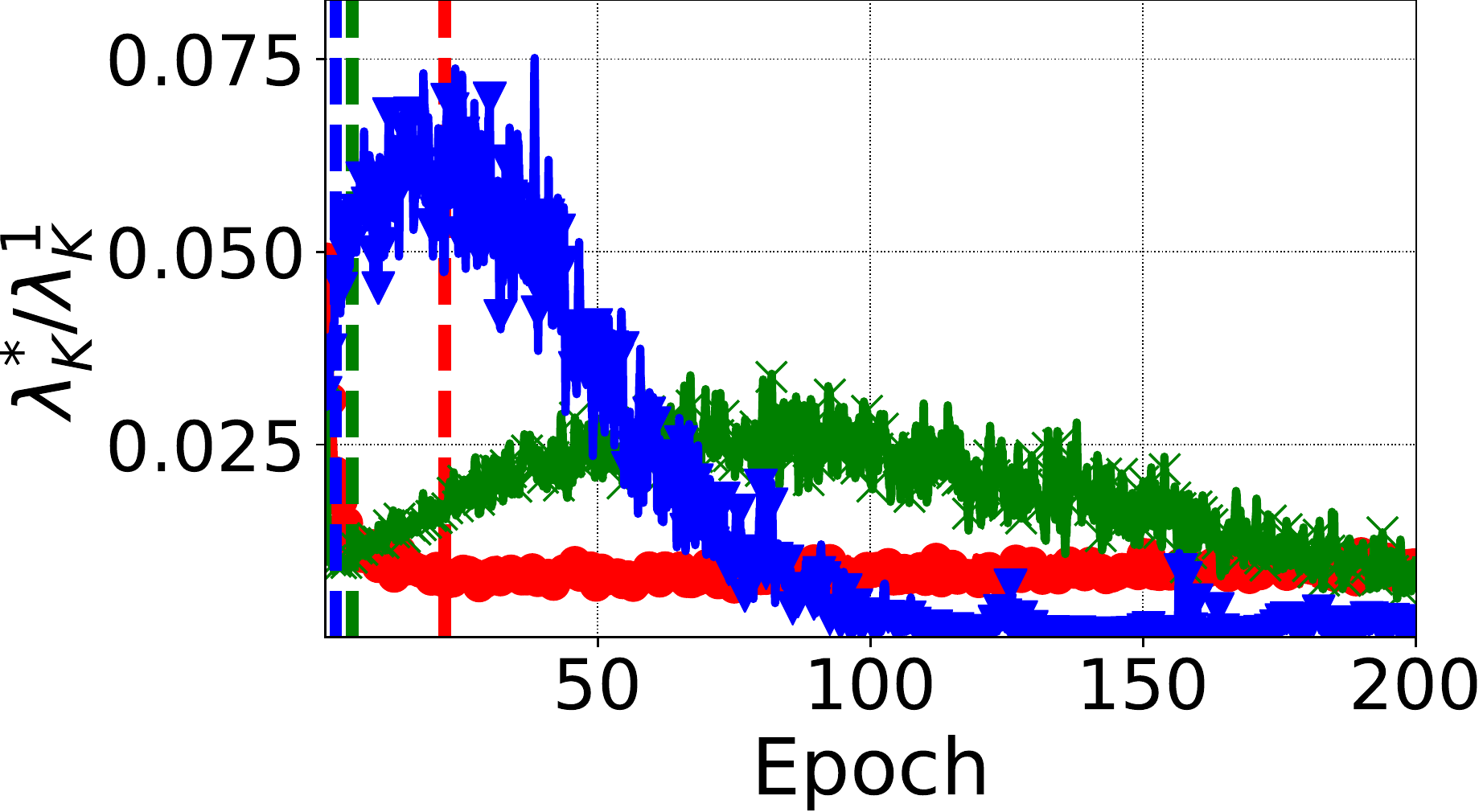}};
     \node[inner sep=0pt] (g3) at (2,-1.2)
   {  \includegraphics[height=0.035\textwidth]{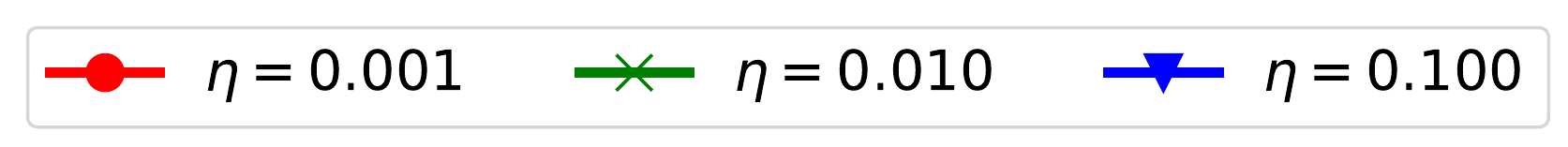}};
\end{tikzpicture}
  \begin{tikzpicture}
     \node[inner sep=0pt] (g1) at (0,0)
    {  \includegraphics[height=0.14\textwidth]{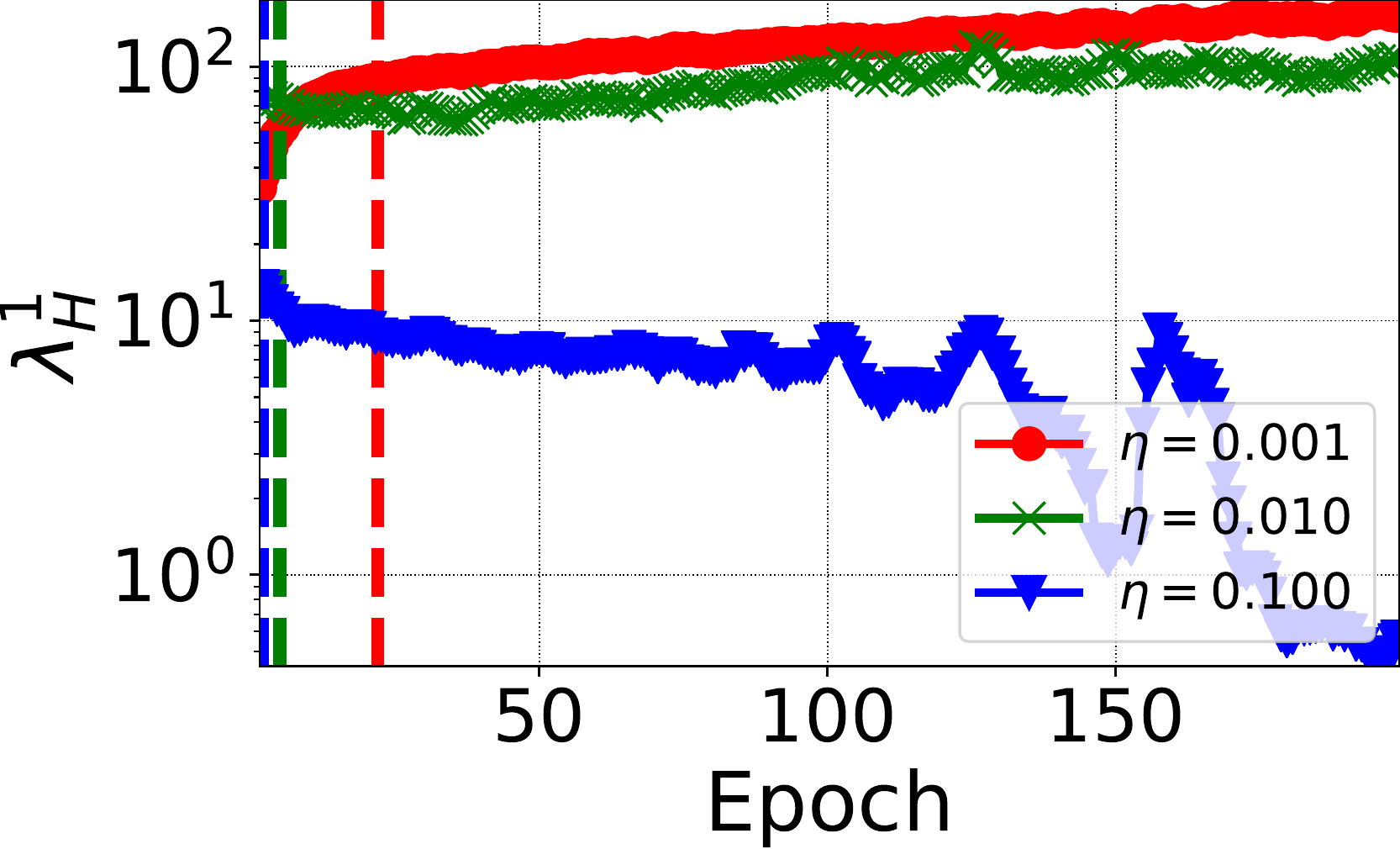}};
     \node[inner sep=0pt] (g2) at (3.4,0)
    { \includegraphics[height=0.14\textwidth]{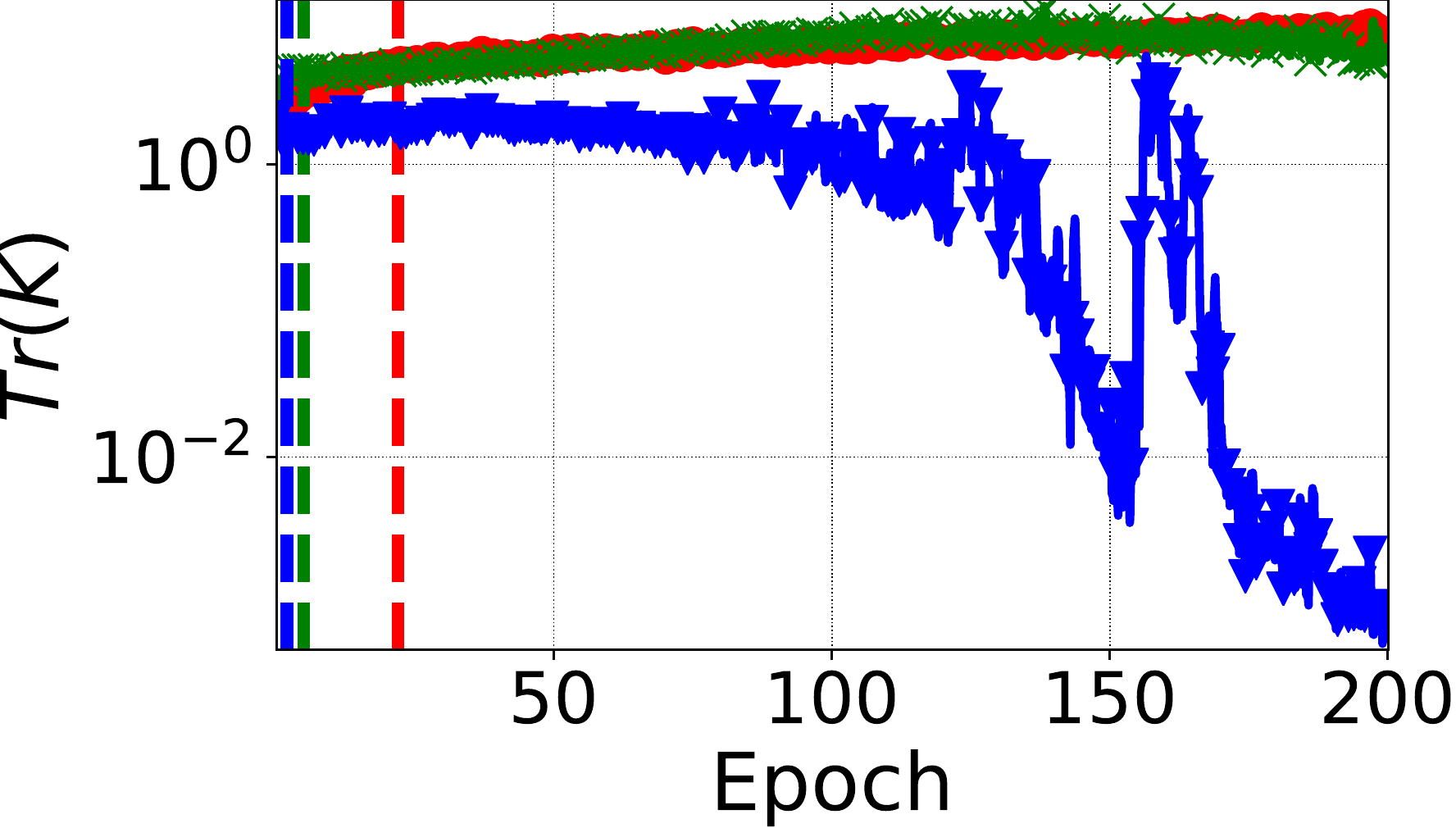}};
     \node[inner sep=0pt] (g3) at (2,-1.2)
   {  \includegraphics[height=0.035\textwidth]{fig/appndx/plotting/mlpfmnistsweeplrrerunlegend.pdf}};
\end{tikzpicture}
     \end{center}

        \caption{Results of experiment using the MLP model on the FashionMNIST dataset for different learning rates. From left to right: $\lambda^1_K$, $\lambda^*_K/\lambda^1_K$, $\lambda^1_H$ and $\mathrm{Tr}(\mathbf{K})$.}
    \label{app:fig:mlp_fmnist_K}
\end{figure}

\begin{figure}[H]
    \begin{center}
       \begin{tikzpicture}
     \node[inner sep=0pt] (g1) at (0,0)
    {   \includegraphics[height=0.14\textwidth]{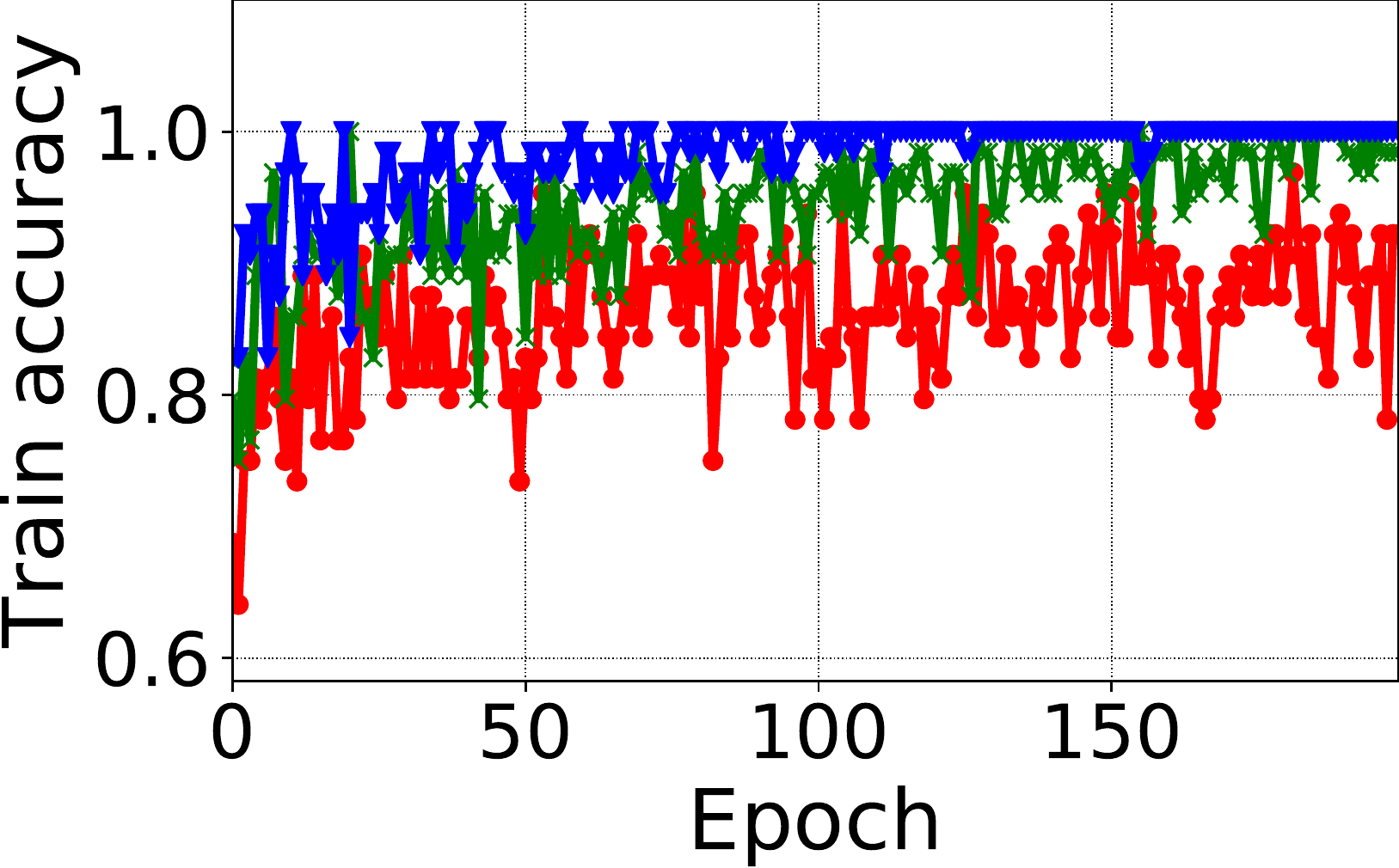}};
     \node[inner sep=0pt] (g2) at (3.4,0)
    {\includegraphics[height=0.14\textwidth]{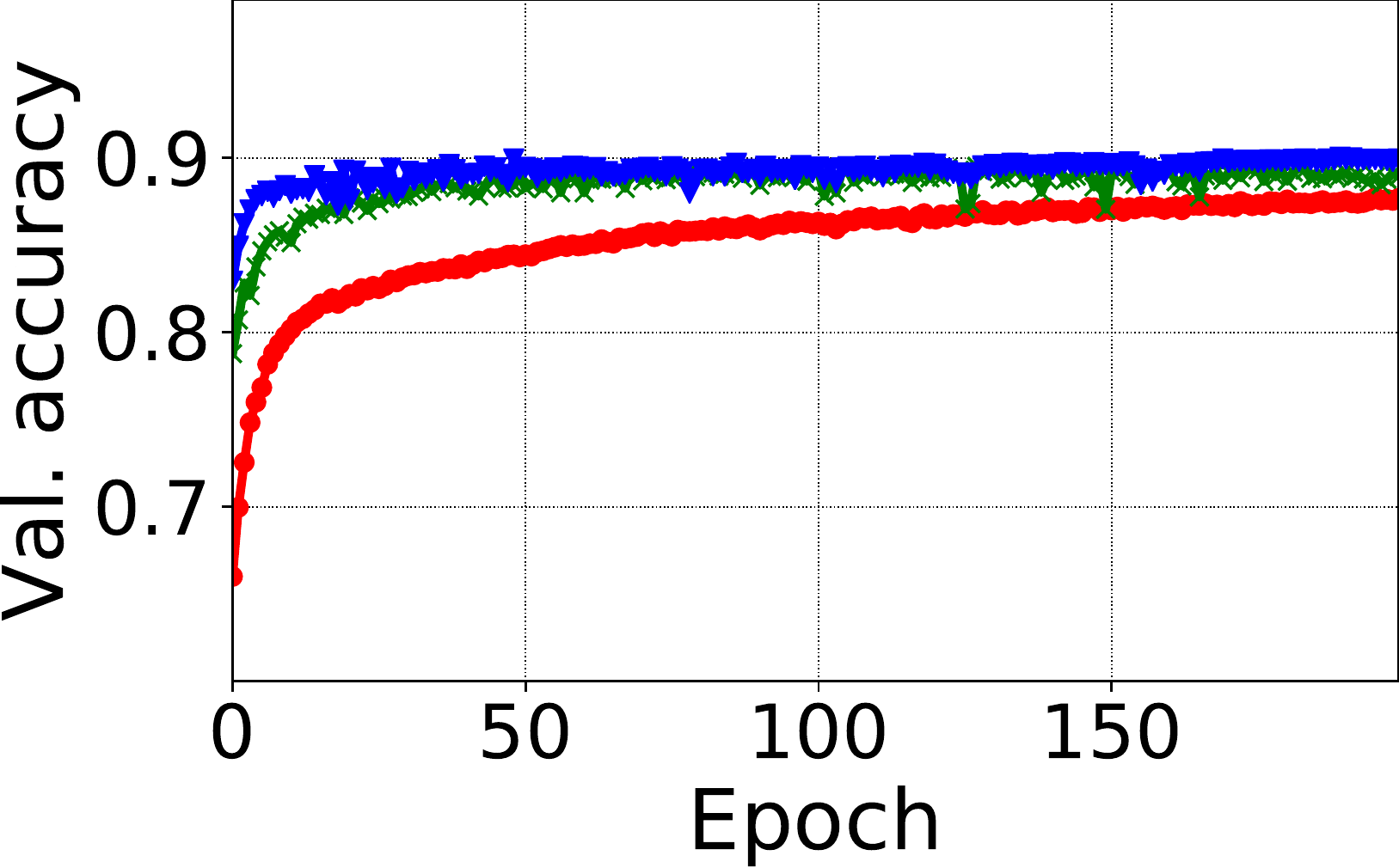}};
     \node[inner sep=0pt] (g3) at (2,-1.2)
   {   \includegraphics[height=0.035\textwidth]{fig/appndx/plotting/mlpfmnistsweeplrrerunlegend.pdf}};
\end{tikzpicture}

     \end{center}
          \caption{Training accuracy and validation accuracy for experiment in Fig.~\ref{app:fig:mlp_fmnist_K}. 
        }
    \label{app:fig:mlp_fmnist_acc}
\end{figure}

\section{Additional experiments for SGD with momentum}
\label{sec:momentum_exp}

Here, we run the same experiment as in Sec.~\ref{sec:expvalidate} using SimpleCNN on the CIFAR-10 dataset. Instead
of varying the learning rate, we test different values of the momentum $\beta$ parameter in the
range of $0.1$, $0.5$ and $0.9$. We can observe that Conjecture \ref{prop:conj1} and Conjecture \ref{prop:conj2} generalize to momentum in the sense
that using a higher momentum has an analogous effect to using a higher learning rate, or using a smaller
batch size in SGD. We report the results in Fig.~\ref{app:fig:sweepmomentum_K} and Fig.~\ref{app:fig:sweepmomentum_acc}.

\begin{figure}[H]
\begin{center}
\begin{tikzpicture}
     \node[inner sep=0pt] (g1) at (0,0)
    {  \includegraphics[height=0.14\textwidth]{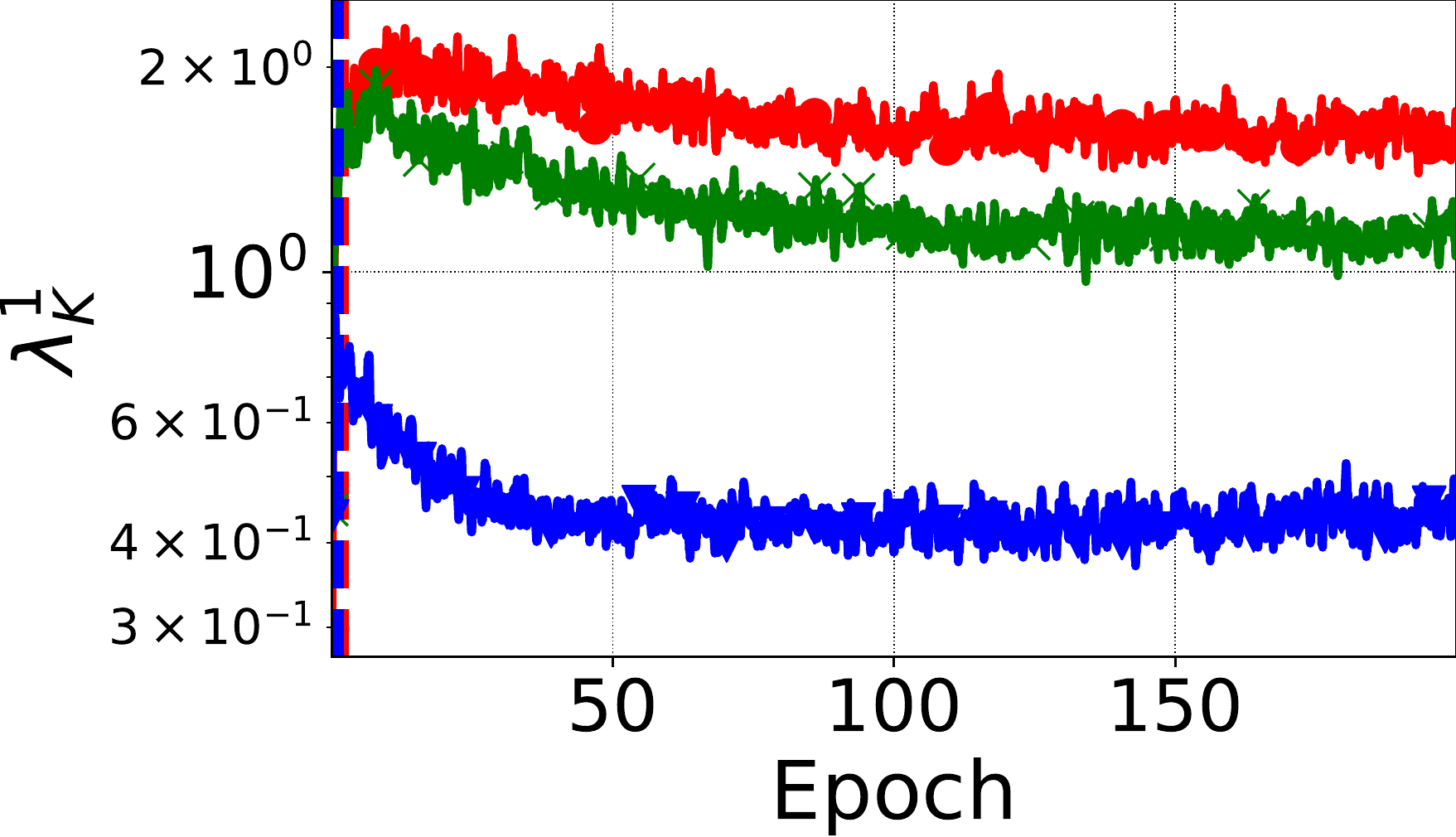}};
     \node[inner sep=0pt] (g2) at (3.4,0)
    {  \includegraphics[height=0.14\textwidth]{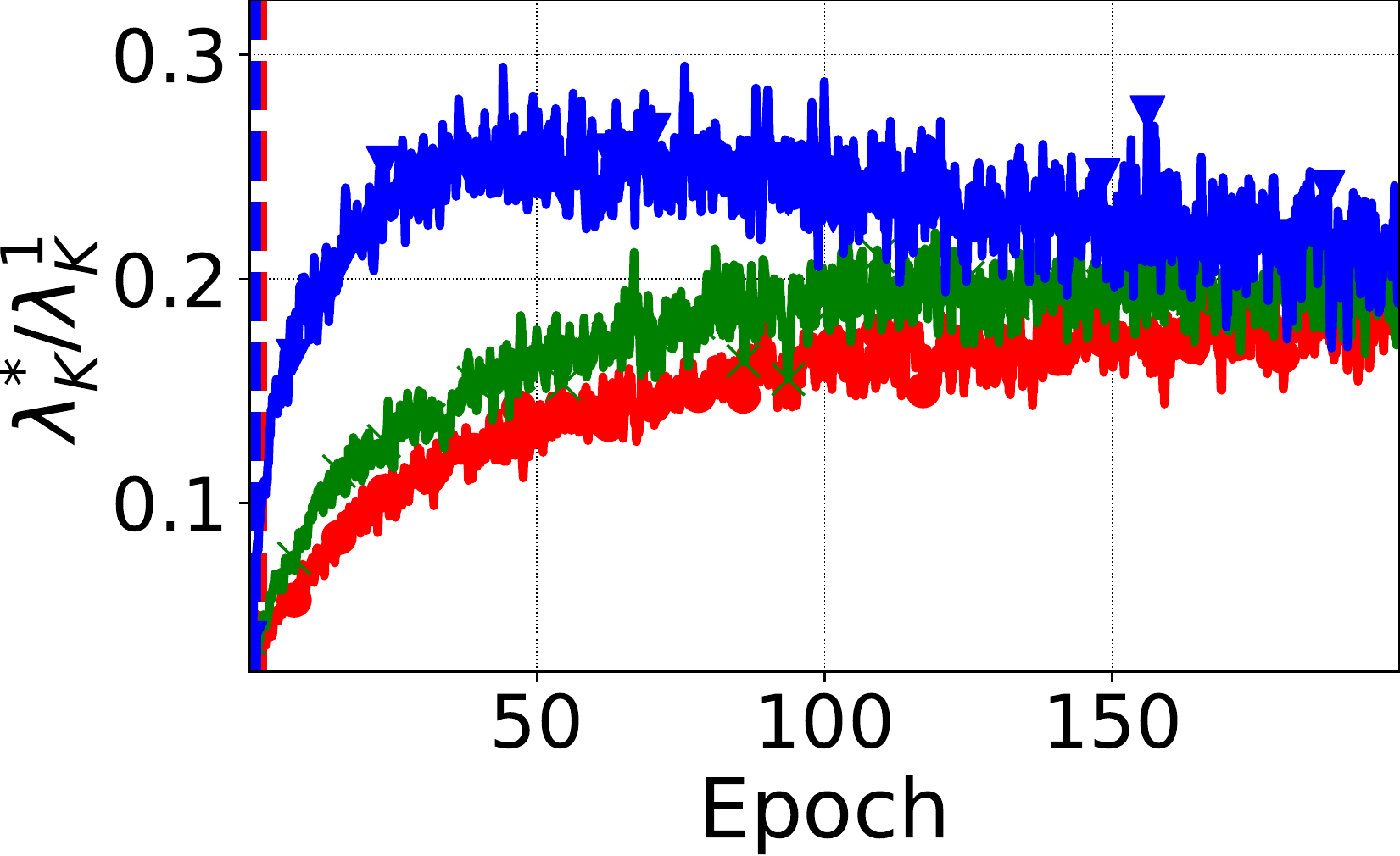}};
    \node[inner sep=0pt] (g3) at (6.8,0)
    { \includegraphics[height=0.14\textwidth]{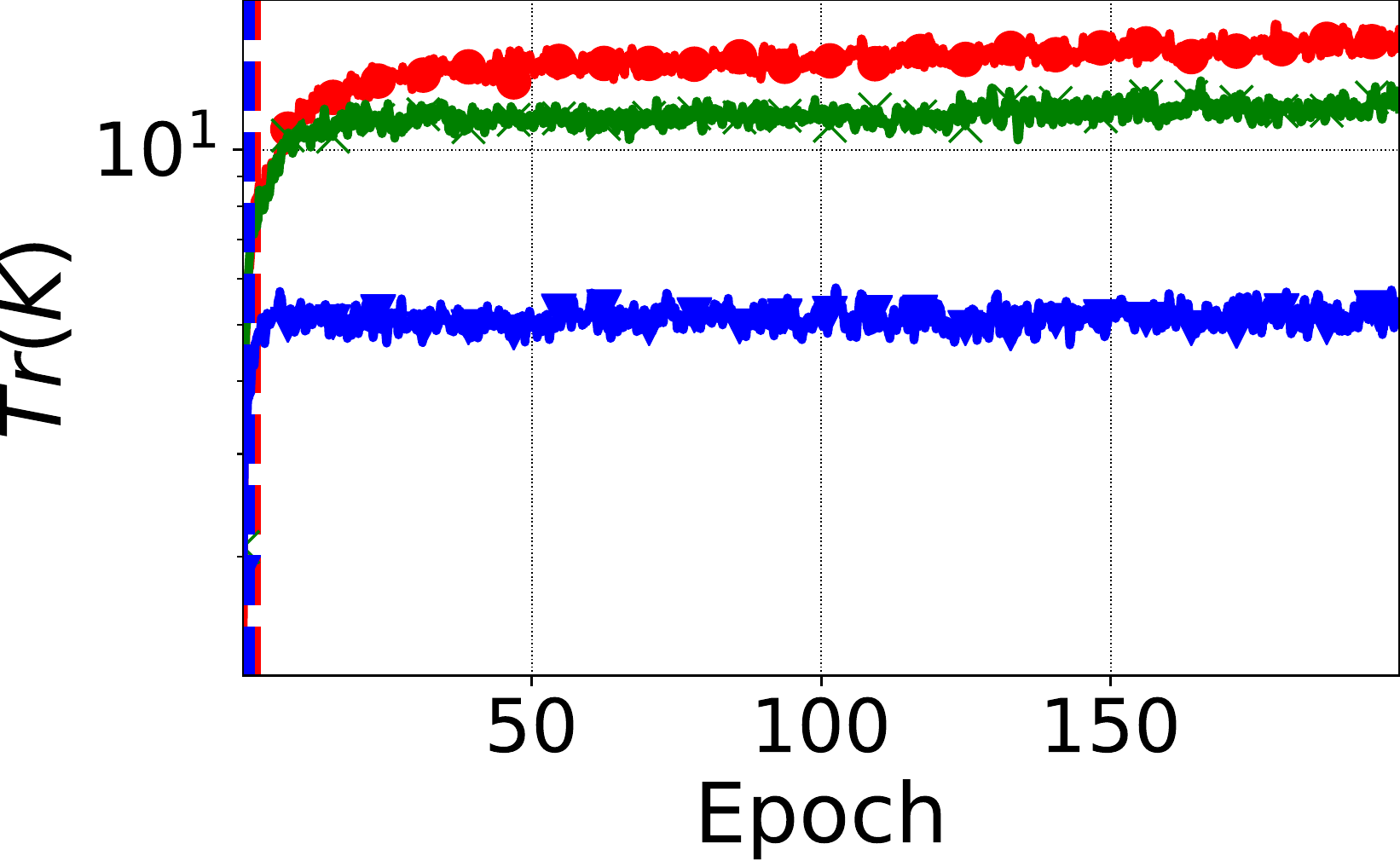}};
     \node[inner sep=0pt] (g4) at (3.65,-1.2)
   {  \includegraphics[height=0.035\textwidth]{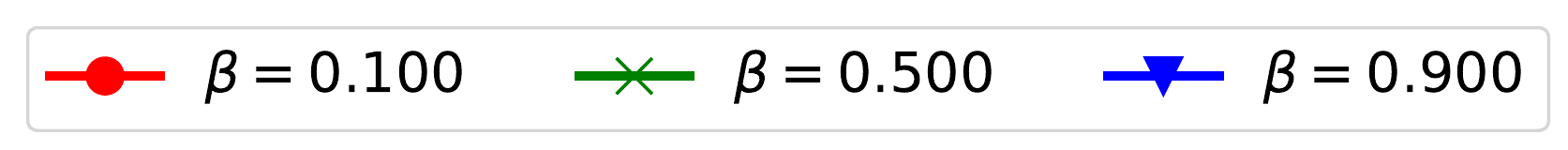}};
\end{tikzpicture}
     \end{center}
         \caption{The variance reduction and the pre-conditioning effect for SimpleCNN trained using SGD with momentum. From left to right: $\lambda^1_K$, $\lambda^*_K/\lambda^1_K$ and $\mathrm{Tr}(\mathbf{K})$.}
    \label{app:fig:sweepmomentum_K}
\end{figure}

\begin{figure}[H]
 \begin{center}
   \begin{tikzpicture}
     \node[inner sep=0pt] (g1) at (0,0)
    {  \includegraphics[height=0.14\textwidth]{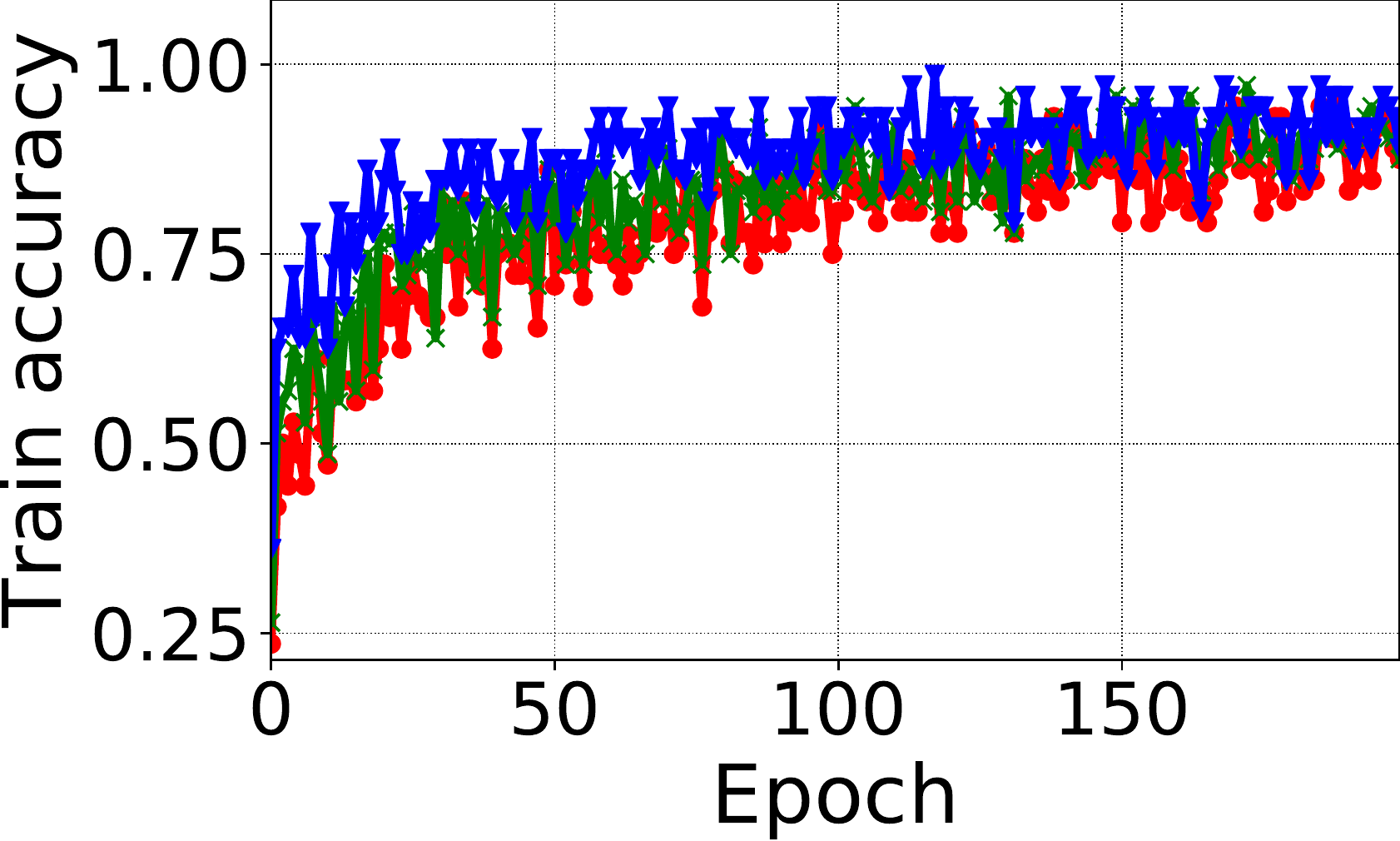}};
     \node[inner sep=0pt] (g2) at (3.4,0)
    {\includegraphics[height=0.14\textwidth]{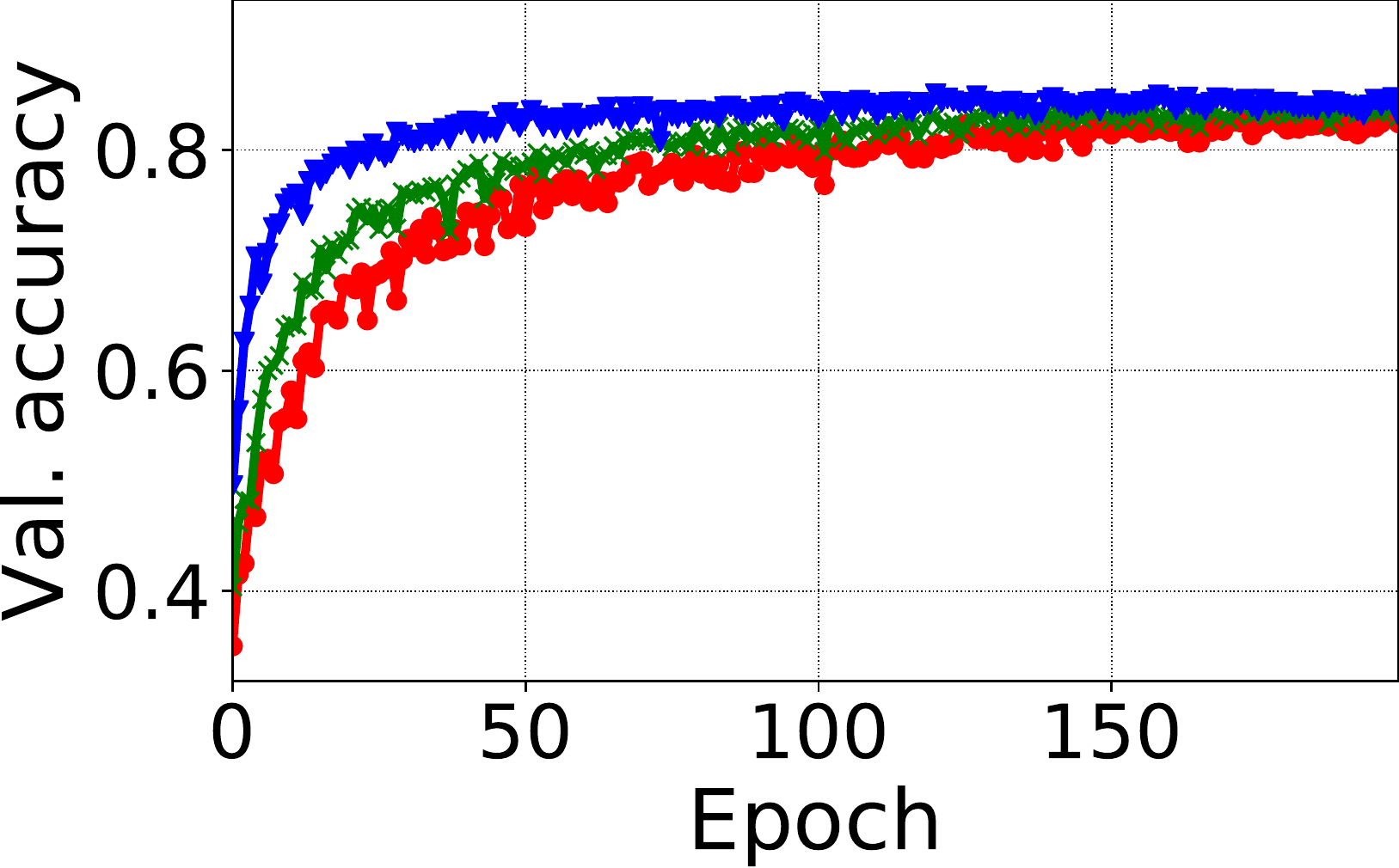}};
     \node[inner sep=0pt] (g3) at (2,-1.2)
   { \includegraphics[height=0.035\textwidth]{fig/appndx/plotting/0611nobnscnnc10sweepmomentumlegend.pdf}};
\end{tikzpicture}
      \end{center}
          \caption{Training accuracy and validation accuracy for the experiment in Fig.~\ref{app:fig:sweepmomentum_K}. 
        }
        \label{app:fig:sweepmomentum_acc}
\end{figure}

Next, to study whether our Conjectures are also valid for SGD with momentum, but held constant, we run the same experiment as in Sec.~\ref{sec:expvalidate} using SimpleCNN on the CIFAR-10 dataset. For all
runs, we set momentum to $0.9$. Learning rate $0.1$ diverged training, so we include only $0.01$
and $0.001$. We can observe that both Conjectures generalize to this setting. We report the results in Fig.~\ref{app:fig:momentum_sweeplr_K} and Fig.~\ref{app:fig:momentum_sweeplr_acc}.

\begin{figure}[H]
\begin{center}

\begin{tikzpicture}
     \node[inner sep=0pt] (g1) at (0,0)
    { \includegraphics[height=0.14\textwidth]{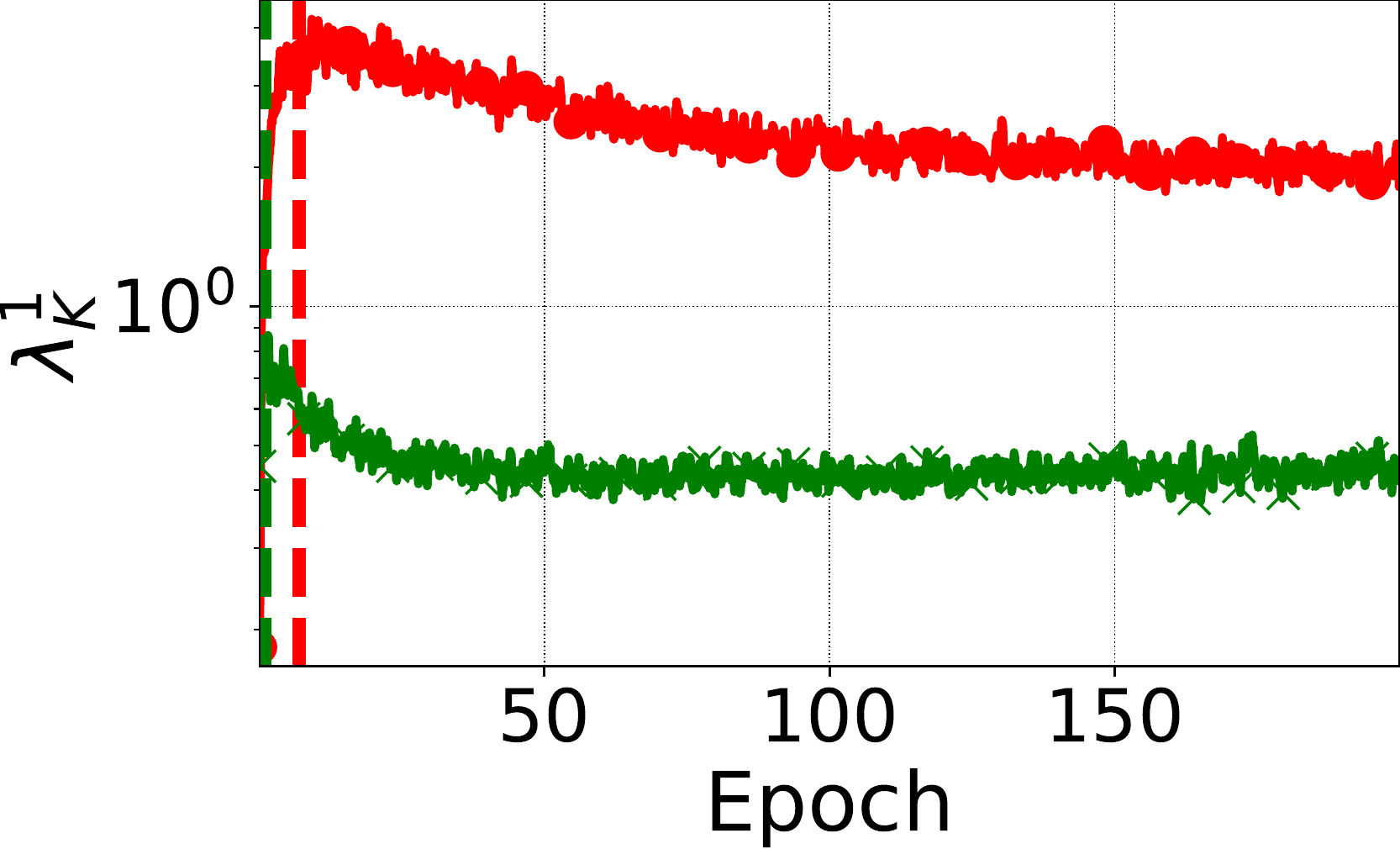}};
     \node[inner sep=0pt] (g2) at (3.4,0)
    {  \includegraphics[height=0.14\textwidth]{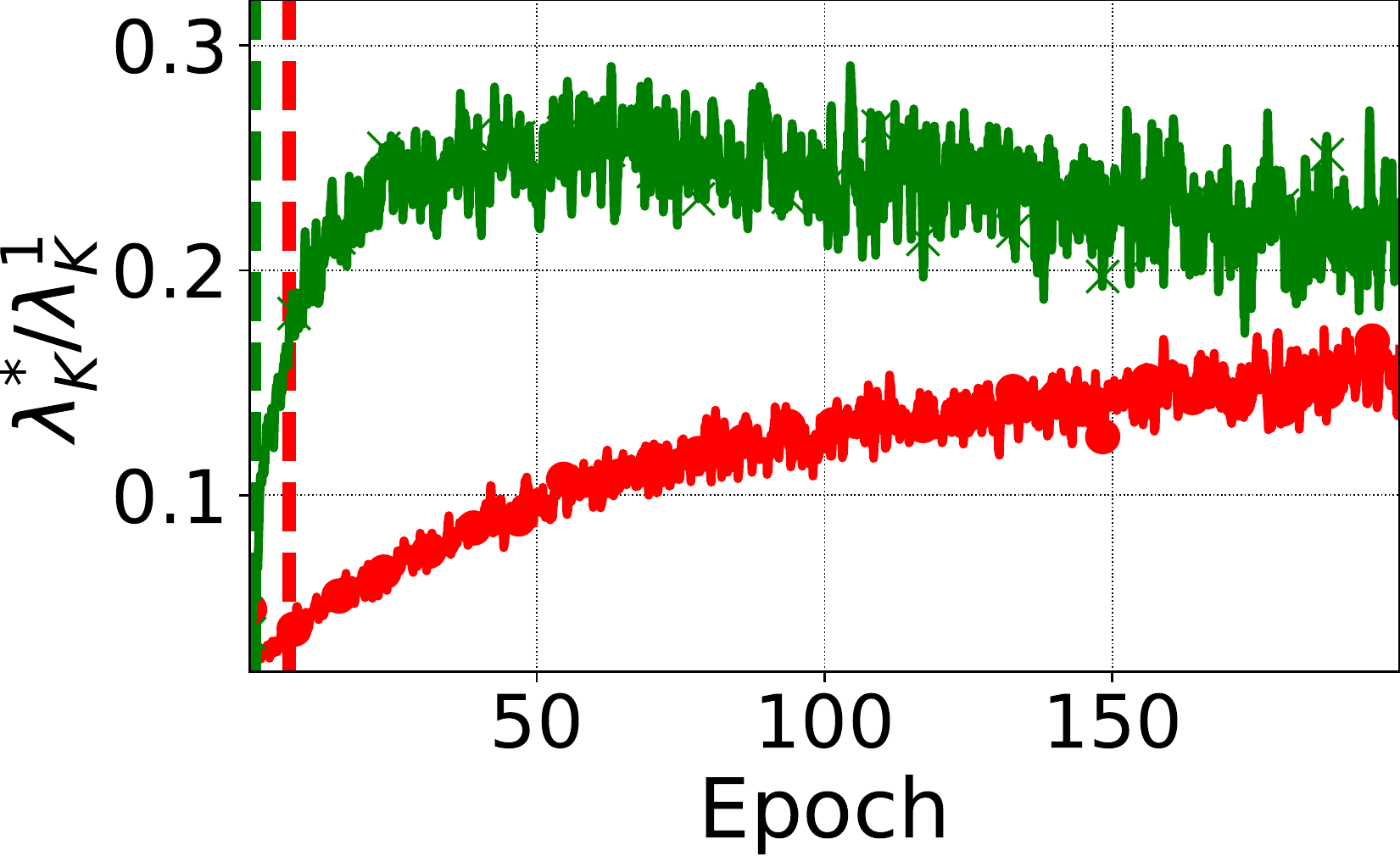}};
    \node[inner sep=0pt] (g3) at (6.8,0)
    {\includegraphics[height=0.14\textwidth]{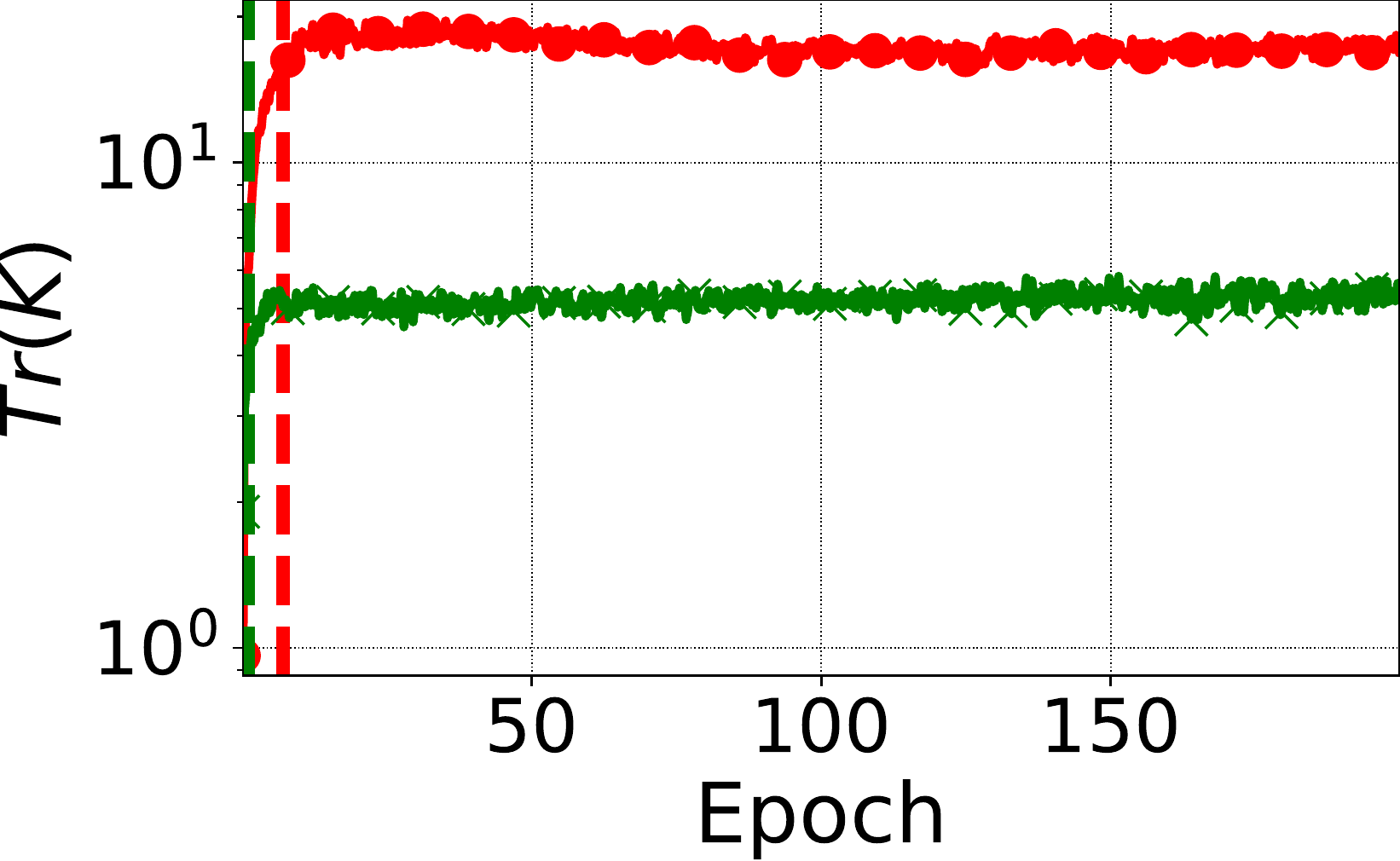}};
     \node[inner sep=0pt] (g4) at (3.65,-1.2)
   {   \includegraphics[height=0.035\textwidth]{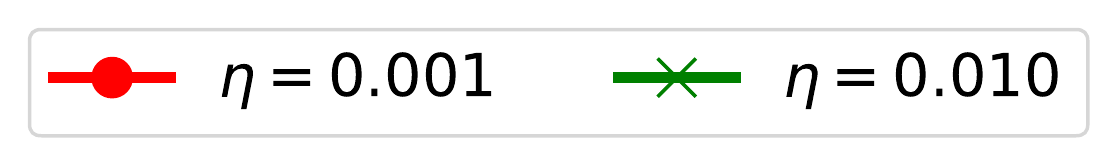}};
\end{tikzpicture}

     \end{center}
        \caption{The variance reduction and the pre-conditioning effect for SimpleCNN trained using SGD with momentum. From left to right: $\lambda^1_K$, $\lambda^*_K/\lambda^1_K$ and $\mathrm{Tr}(\mathbf{K})$.}
            \label{app:fig:momentum_sweeplr_K}
\end{figure}

\begin{figure}[H]
 \begin{center}
 
  \begin{tikzpicture}
     \node[inner sep=0pt] (g1) at (0,0)
    { \includegraphics[height=0.14\textwidth]{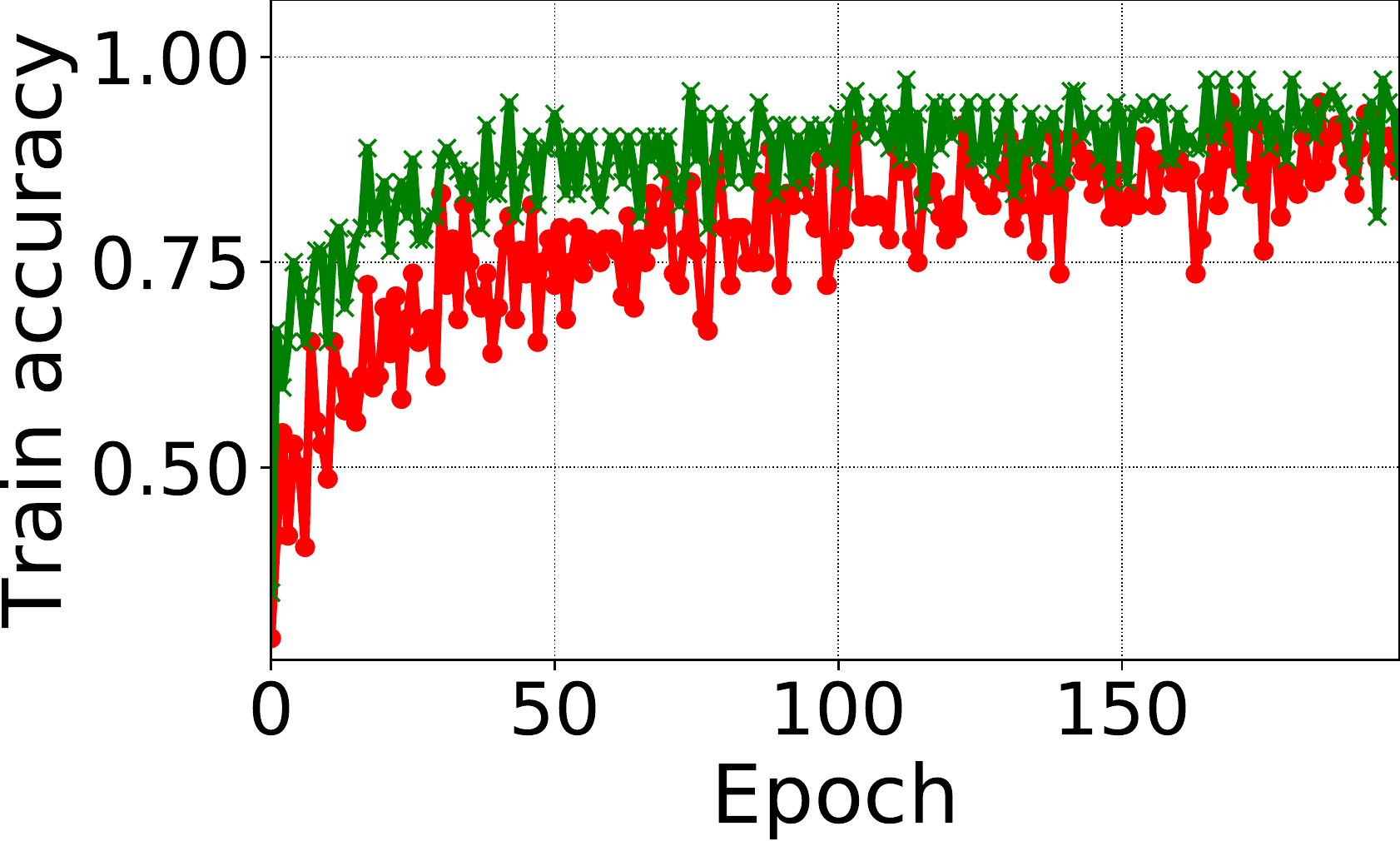}};
     \node[inner sep=0pt] (g2) at (3.4,0)
    { \includegraphics[height=0.14\textwidth]{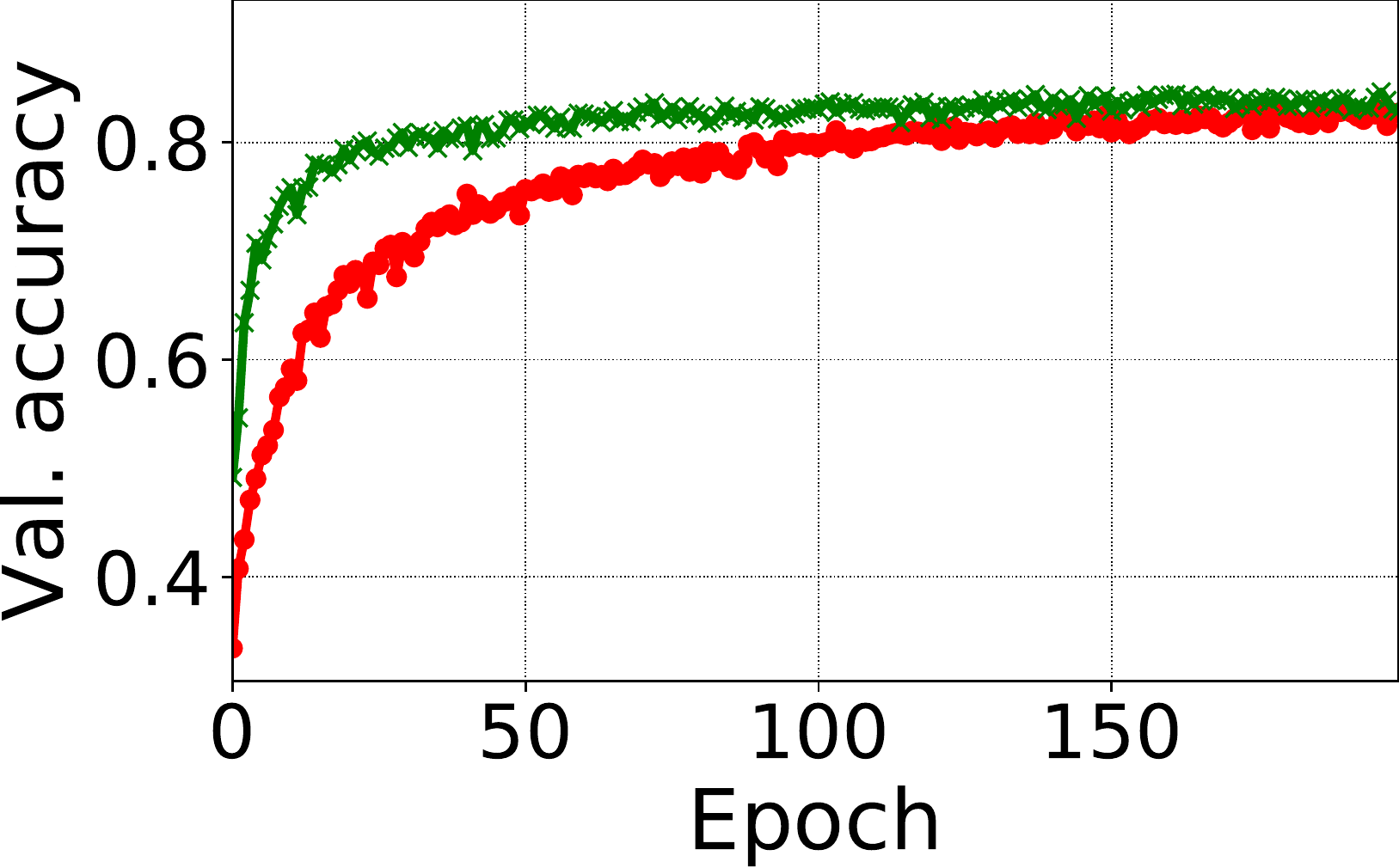}};
     \node[inner sep=0pt] (g3) at (2,-1.2)
   {\includegraphics[height=0.035\textwidth]{fig/appndx/plotting/0611nobnscnnc10momentumsweeplrlegend.pdf}};
\end{tikzpicture}

      \end{center}
          \caption{Training accuracy and validation accuracy for the experiment in Fig.~\ref{app:fig:momentum_sweeplr_K}. 
        }
        \label{app:fig:momentum_sweeplr_acc}
\end{figure}

\section{Additional experiments for SGD with learning rate decay}
\label{sec:schedule_exp}

To understand the effect of learning rate decay, we run the same experiment as in Sec.~\ref{sec:expvalidate} using SimpleCNN on the CIFAR-10 dataset. Additionally, we divide the learning rate by the
factor of $10$ after 100th epoch. We can observe that Conjecture \ref{prop:conj1} and Conjecture \ref{prop:conj2} generalize to
scenario with learning rate schedule in the sense that changing learning rate does not change
the relative ordering of the maximum $\lambda_K^1$ and $\lambda_K^*/\lambda_K^1$.  We report the results in Fig.~\ref{app:fig:schedule_sweeplr_K} and Fig.~\ref{app:fig:schedule_sweeplr_acc}.

\begin{figure}[H]
\begin{center}

\begin{tikzpicture}
     \node[inner sep=0pt] (g1) at (0,0)
    { \includegraphics[height=0.14\textwidth]{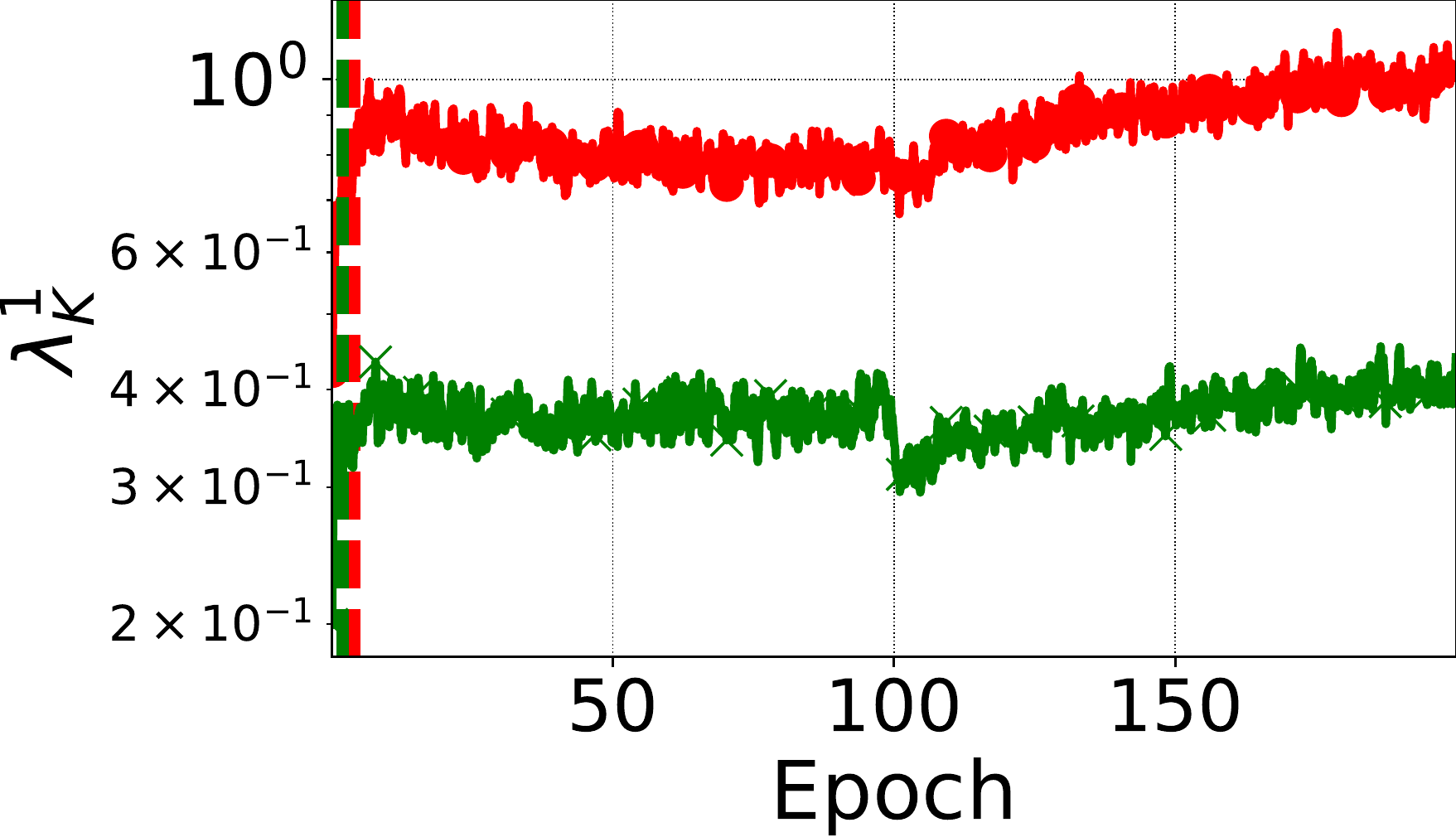}};
     \node[inner sep=0pt] (g2) at (3.4,0)
    { \includegraphics[height=0.14\textwidth]{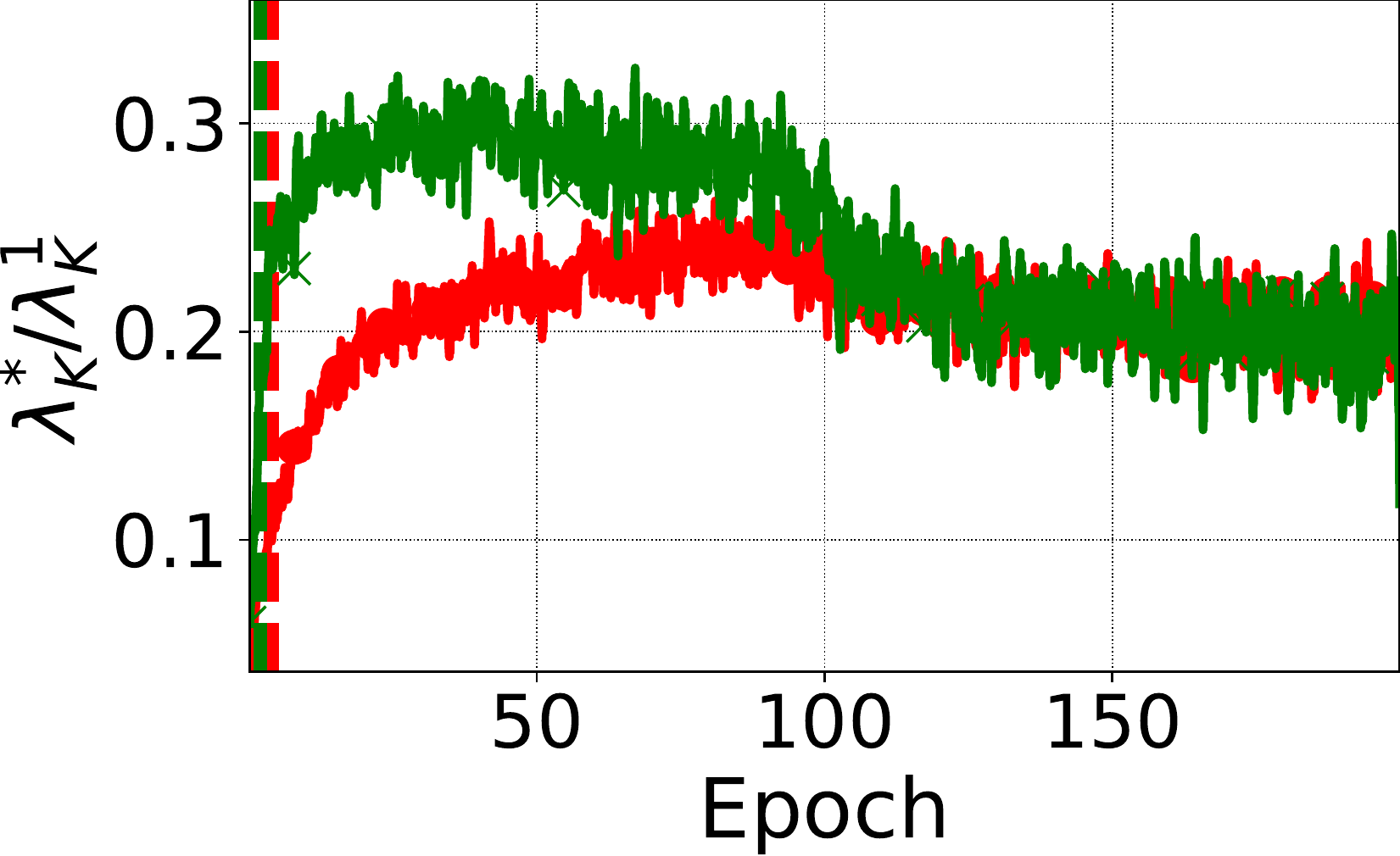}};
    \node[inner sep=0pt] (g3) at (6.8,0)
    {\includegraphics[height=0.14\textwidth]{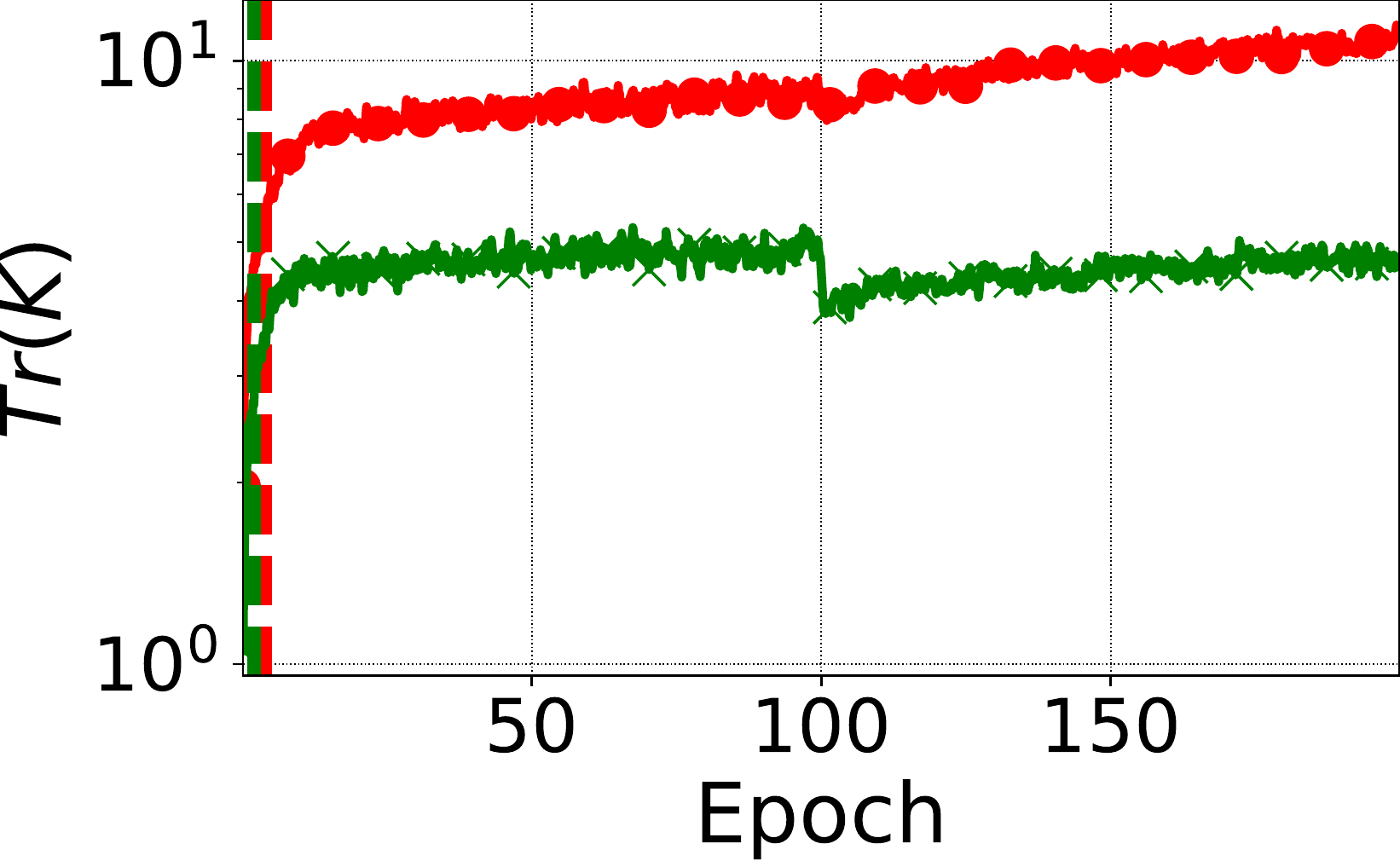}};
     \node[inner sep=0pt] (g4) at (3.65,-1.2)
   { \includegraphics[height=0.035\textwidth]{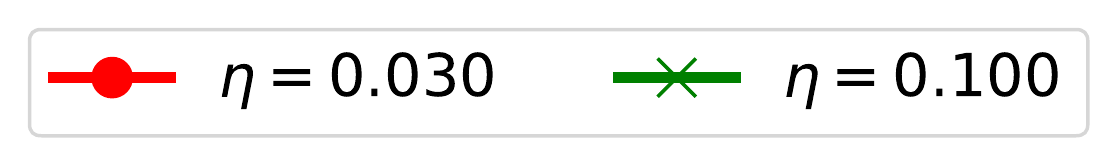}};
\end{tikzpicture}
     \end{center}
        \caption{The variance reduction and the pre-conditioning effect for SimpleCNN trained using SGD with learning rate schedule. From left to right: $\lambda^1_K$, $\lambda^*_K/\lambda^1_K$ and $\mathrm{Tr}(\mathbf{K})$.}
       \label{app:fig:schedule_sweeplr_K}
\end{figure}

\begin{figure}[H]
 \begin{center}
 
 \begin{tikzpicture}
     \node[inner sep=0pt] (g1) at (0,0)
    {\includegraphics[height=0.14\textwidth]{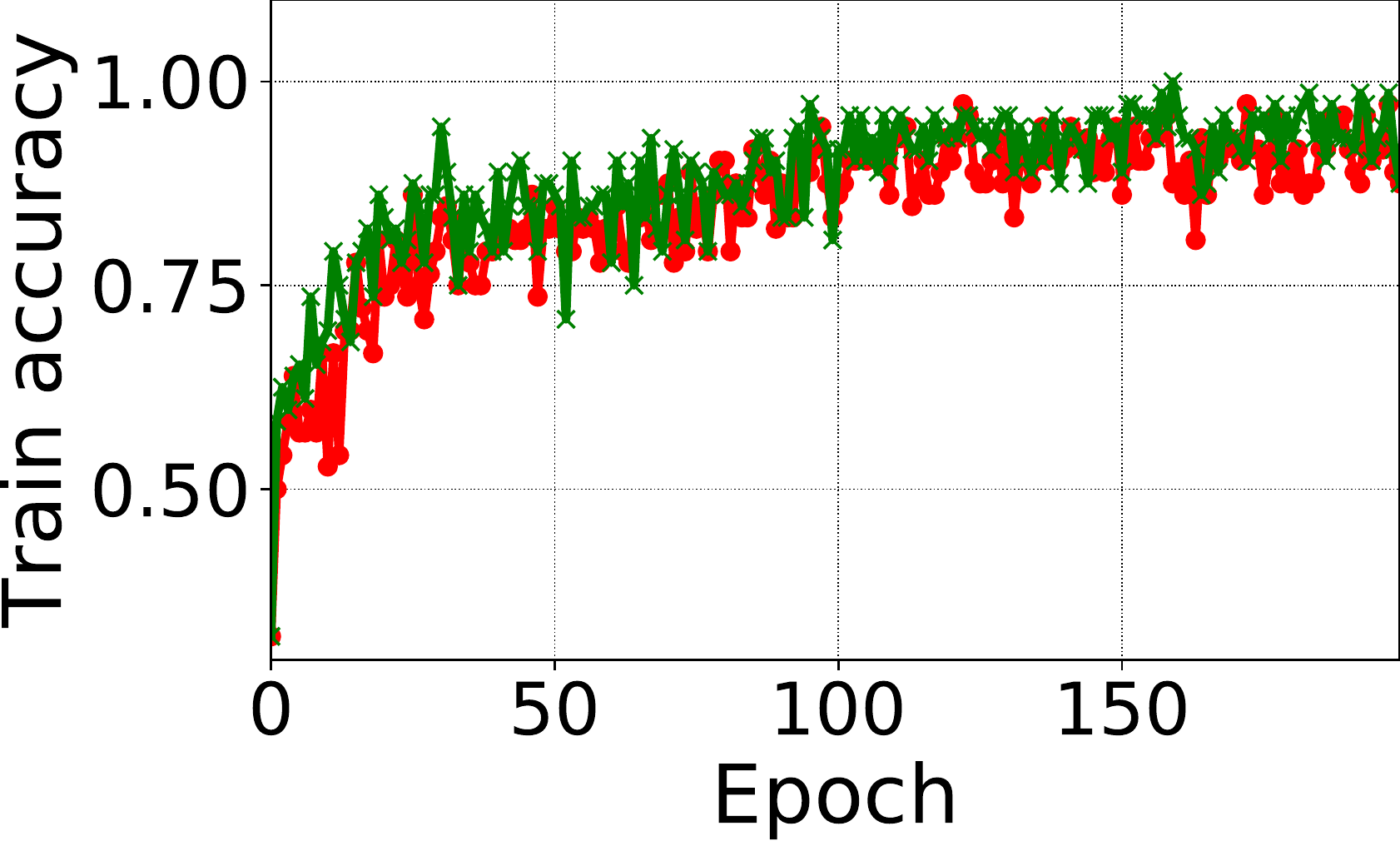}};
     \node[inner sep=0pt] (g2) at (3.4,0)
    {\includegraphics[height=0.14\textwidth]{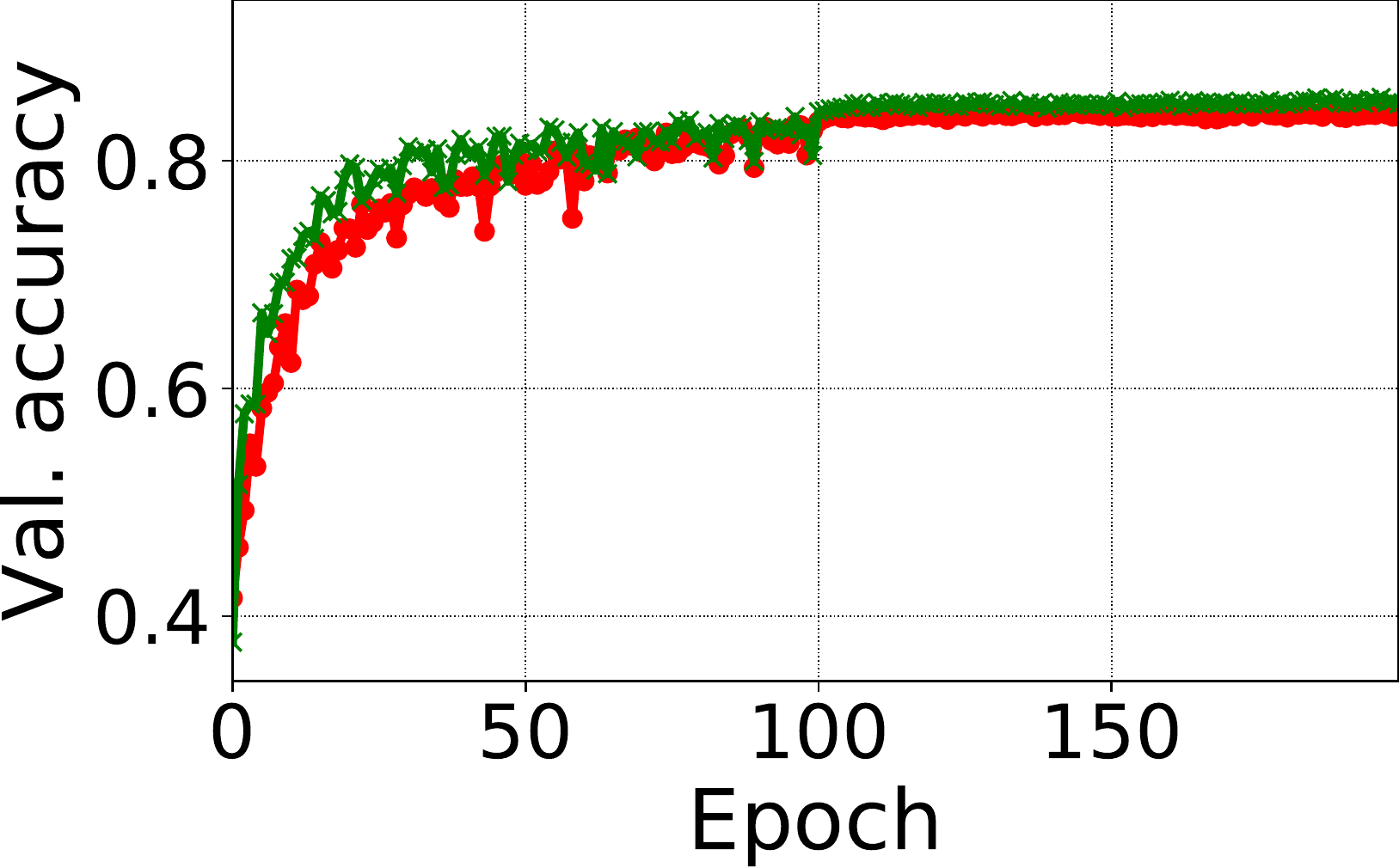}};
     \node[inner sep=0pt] (g3) at (2,-1.2)
   {\includegraphics[height=0.035\textwidth]{fig/appndx/plotting/0611nobnscnnc10schedulesweeplrlegend.pdf}};
\end{tikzpicture}

      \end{center}
          \caption{Training accuracy and validation accuracy for the experiment in Fig.~\ref{app:fig:schedule_sweeplr_K}. 
        }
        \label{app:fig:schedule_sweeplr_acc}
\end{figure}

\section{Additional experiments for SimpleCNN-BN}
\label{app:sec:bn}

To further explore the connection between our conjectures and the effects of batch-normalization layers on the conditioning of the loss surface, we repeat here experiments from Sec.~\ref{sec:bn}, but varying the batch size. Fig.~\ref{app:fig:tinysubspace_bs} summarizes the results.

On the whole, conclusions from Sec.~\ref{sec:bn} carry over to this setting in the sense that decreasing the batch size has a similar effect on the studied metrics as increasing the learning rate. One exception is the experiment using the smallest batch size of $10$. In this case, the maximum values of $\frac{\|g\|}{\|g_5\|}$ and $\lambda_K^*/{\lambda_K^1}$ are smaller than in the experiments using larger batch sizes. 

\begin{figure}[h!]
    \centering
        \begin{subfigure}[Left to right: $\frac{\|g\|}{\|g_5\|}$, $\lambda_H^1$ for SimpleCNN-BN, and $\lambda_K^1$ for SimpleCNN-BN.]{
   \begin{tikzpicture}
     \node[inner sep=0pt] (g1) at (0,0)
    {\includegraphics[height=0.14\textwidth]{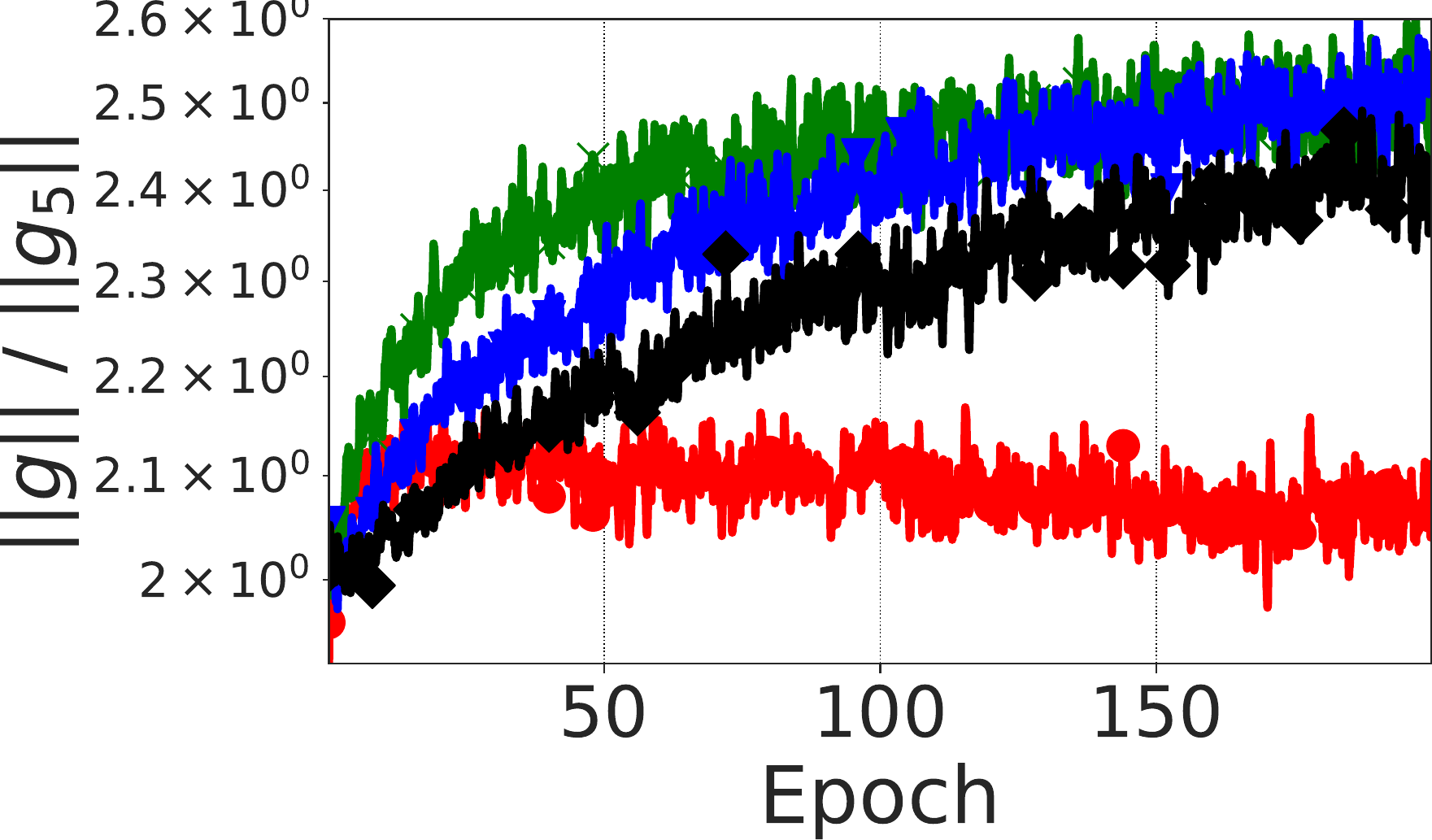}};
     \node[inner sep=0pt] (g3) at (3.4,0)
    {\includegraphics[height=0.14\textwidth]{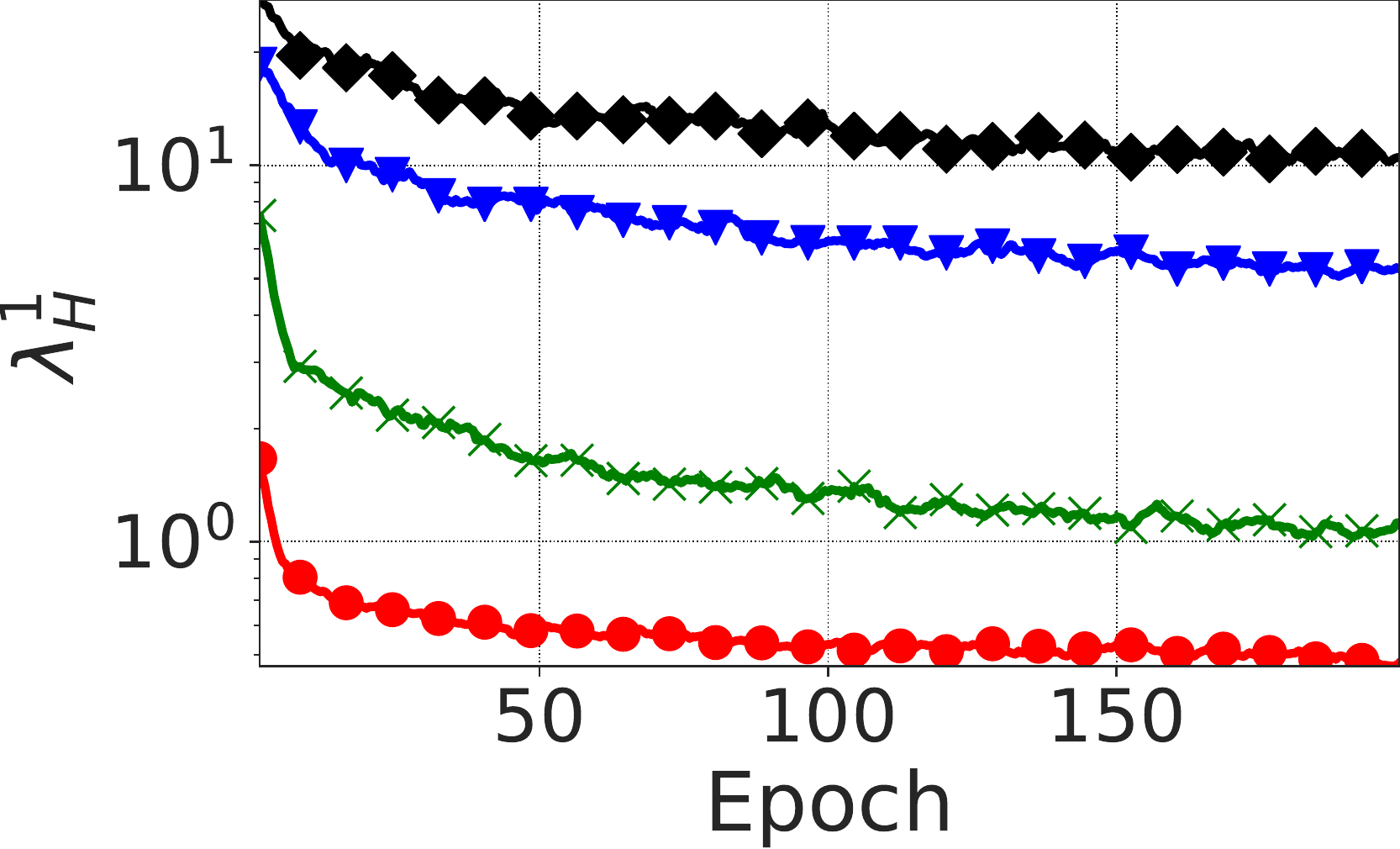}};
     \node[inner sep=0pt] (g3) at (6.8,0)
   {\includegraphics[height=0.14\textwidth]{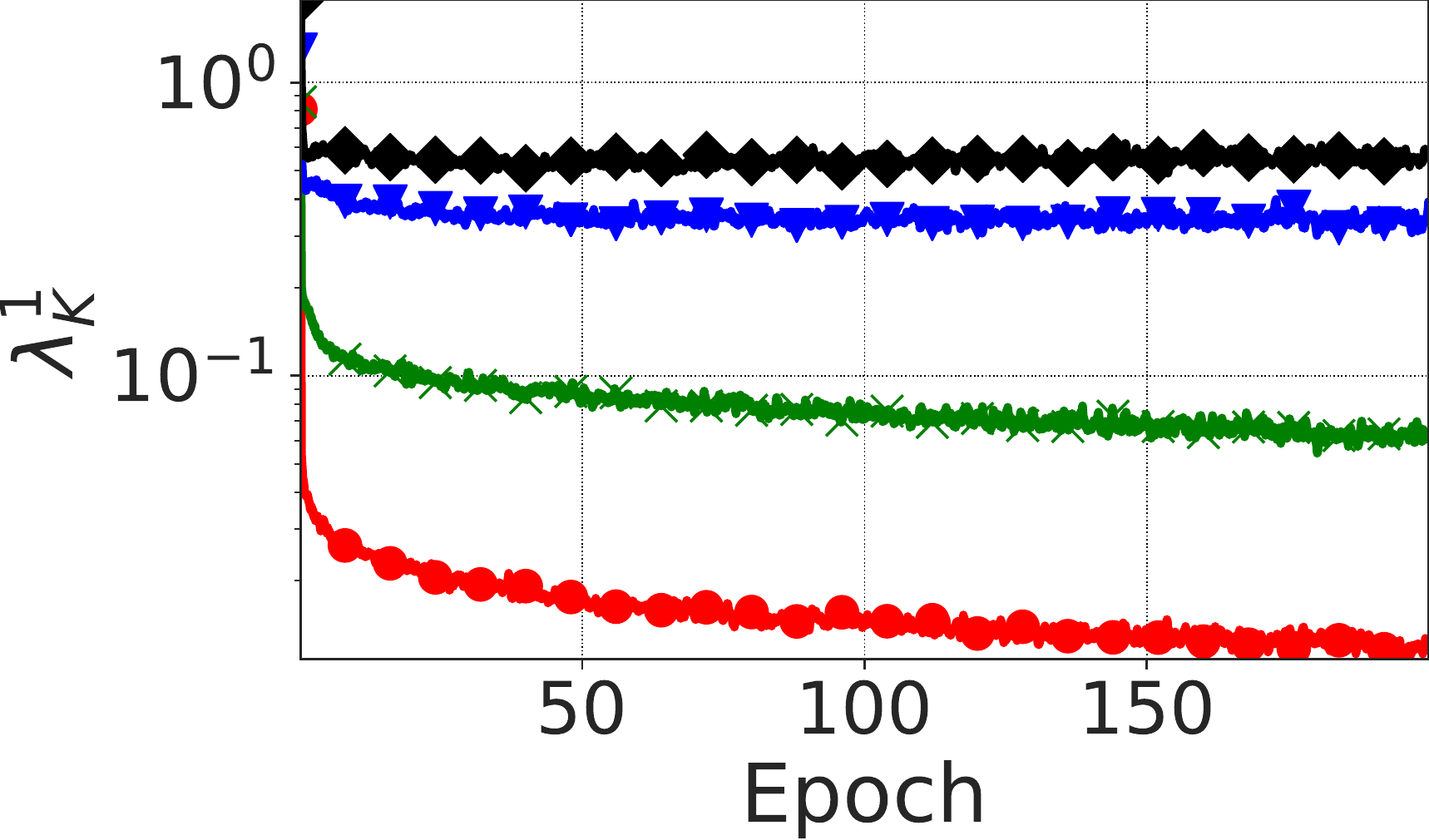}};
     \node[inner sep=0pt] (g3) at (3.6,-1.2)
   { \includegraphics[height=0.035\textwidth]{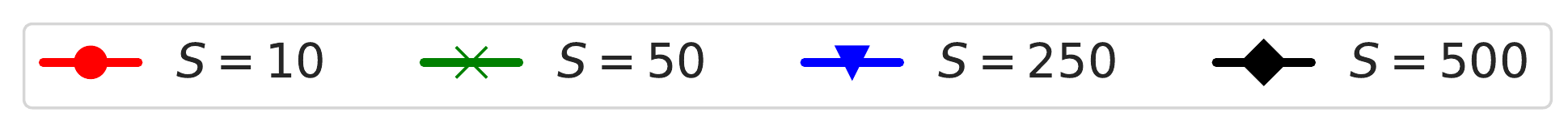}};
    \end{tikzpicture}
    }
    \end{subfigure}
     \begin{subfigure}[Left to right: $\|\gamma\|$ of the last BN layer early in training, $\lambda_K^1$ early in training, $\lambda_K^*/{\lambda_K^1}$ for SimpleCNN-BN.]{
   \begin{tikzpicture}
     \node[inner sep=0pt] (g1) at (0,0)
    {\includegraphics[height=0.14\textwidth]{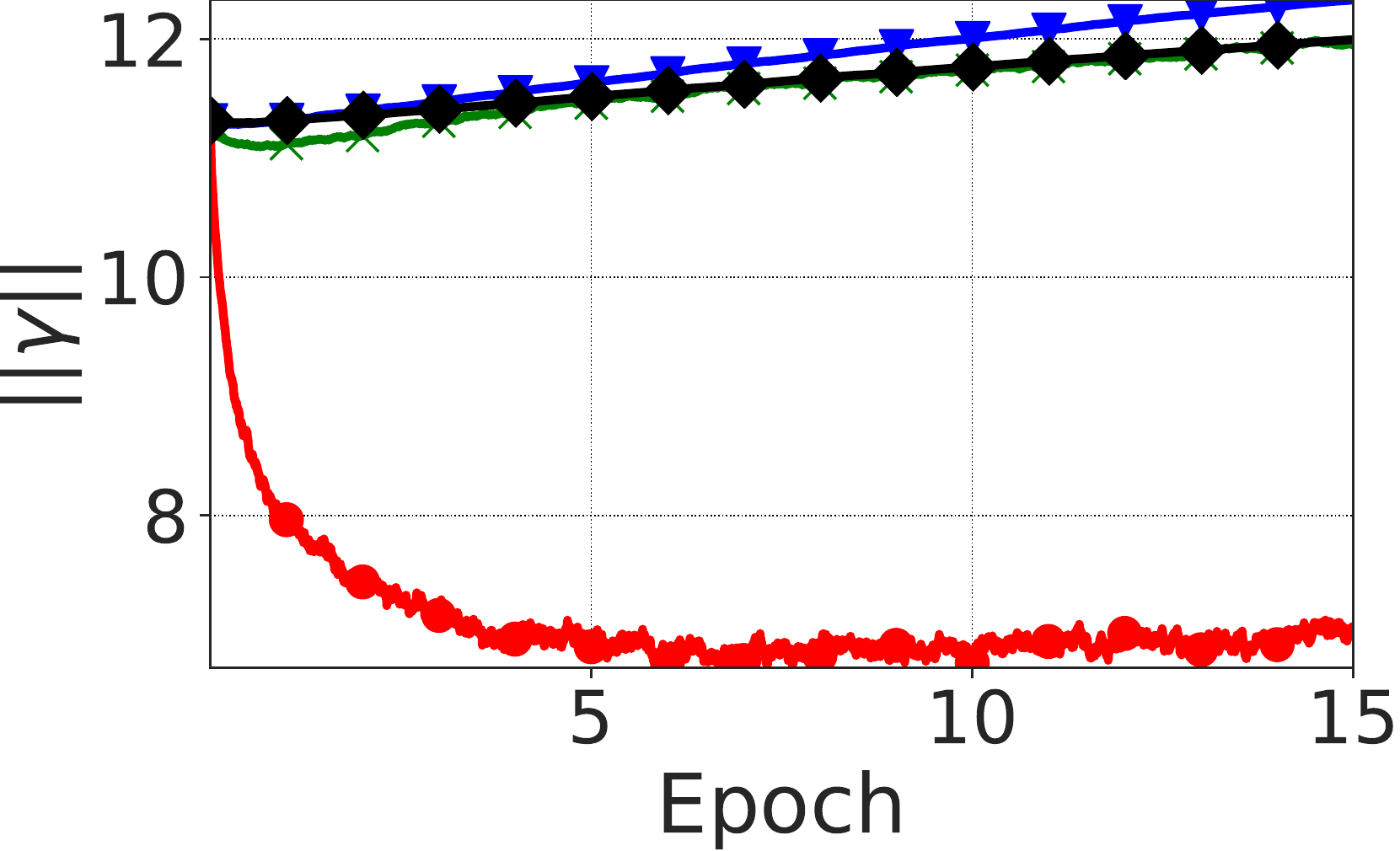}};
     \node[inner sep=0pt] (g2) at (3.4,0)
    {\includegraphics[height=0.14\textwidth]{fig/711scnnc10sweepbs2samebsG_SN.pdf}};
     \node[inner sep=0pt] (g3) at (7.1,0)
    {\includegraphics[height=0.14\textwidth]{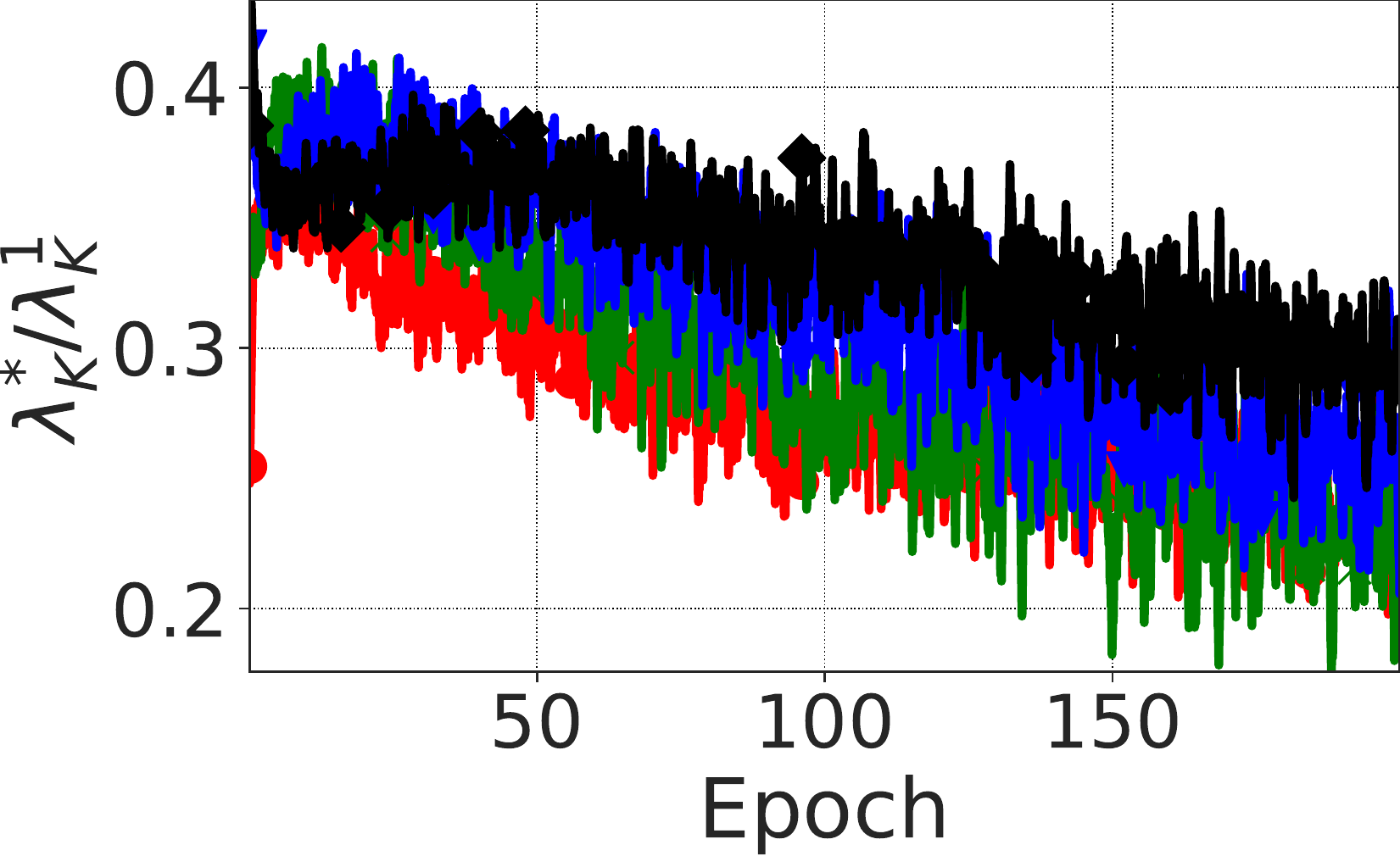}};
     \node[inner sep=0pt] (g3) at (3.6,-1.2)
   { \includegraphics[height=0.035\textwidth]{fig/711scnnc10sweepbs2legend.pdf}};
    \end{tikzpicture}
    }
    \end{subfigure}
     \caption{Evolution of various metrics that quantify conditioning of the loss surface for SimpleCNN with with batch normalization layers (SimpleCNN-BN), for different batch sizes.}
    \label{app:fig:tinysubspace_bs}
\end{figure}

\end{document}